\def\eqref#1{equation~\ref{#1}}
\def\Eqref#1{Equation~\ref{#1}}
\def\1{\bm{1}}
\def\vb{{\bm{b}}}
\def\vx{{\bm{x}}}
\DeclareMathAlphabet{\mathsfit}{\encodingdefault}{\sfdefault}{m}{sl}
\SetMathAlphabet{\mathsfit}{bold}{\encodingdefault}{\sfdefault}{bx}{n}
\DeclareMathOperator*{\argmin}{arg\,min}
\definecolor{gold}{rgb}{1.0, 0.84, 0.0}
\definecolor{silver}{rgb}{0.75, 0.75, 0.75}
\definecolor{bronze}{rgb}{0.8, 0.5, 0.2}
\title{PolySHAP: Extending KernelSHAP with Interaction-Informed Polynomial Regression}
\author{Fabian Fumagalli \\
Bielefeld University\\
\texttt{ffumagalli@techfak.de} \\
\And
R. Teal Witter \\
Claremont McKenna College \\
\texttt{rtealwitter@cmc.edu} \\
\\
\And
Christopher Musco \\
New York University \\
\texttt{cmusco@nyu.edu} \\
}
\newtheorem{theorem}{Theorem}[section]
\newtheorem{lemma}[theorem]{Lemma}
\newtheorem{corollary}[theorem]{Corollary}
\newtheorem{definition}[theorem]{Definition}
\newtheorem{remark}[theorem]{Remark}
\newcommand{\frontier}{interaction frontier\xspace}
\newcommand{\frontiers}{interaction frontiers\xspace}
\begin{document}

\maketitle

\begin{abstract}
Shapley values have emerged as a central game-theoretic tool in explainable AI (XAI). However, computing Shapley values exactly requires $2^d$ game evaluations for a model with $d$ features. Lundberg and Lee's KernelSHAP algorithm has emerged as a leading method for avoiding this exponential cost. KernelSHAP approximates Shapley values by approximating the game as a linear function, which is fit using a small number of game evaluations for random feature subsets.

In this work, we extend KernelSHAP by approximating the game via higher degree polynomials, which capture non-linear interactions between features. Our resulting PolySHAP method yields empirically better Shapley value estimates for various benchmark datasets, and we prove that these estimates are consistent.

Moreover, we connect our approach to \emph{paired sampling} (antithetic sampling), a ubiquitous modification to KernelSHAP that improves empirical accuracy. We prove that paired sampling outputs \emph{exactly} the same Shapley value approximations as second-order PolySHAP, \emph{without ever fitting a degree 2 polynomial.} To the best of our knowledge, this finding provides the first strong theoretical justification for the excellent practical performance of the paired sampling heuristic.
\end{abstract}

\section{Introduction}
\label{sec_introduction}

Understanding the contribution of individual features to a model’s prediction is a central goal in explainable artificial intelligence (XAI) \citep{Covert.2021}. 
Among the most influential approaches are those grounded in cooperative game theory, where the \emph{Shapley value} \citep{Shapley.1953} provides a principled way to distribute a model's output to its $d$ inputs.

The intuition behind the use of Shapley values is to attribute larger values to the players of a cooperative game with the most effect on the game's value. In XAI applications, players are typically features or training data points and the game value is typically a prediction or model loss. 

Formally, we represent a cooperative game involving players $D = \{1,\ldots, d\}$ via a value function $\nu: 2^D \to \mathbb{R}$ that maps subsets of players to values ($2^D$ denotes the powerset of $D$). 
Shapley values are then defined\footnote{Without loss of generality, we assume $\nu(\emptyset)=0$. Otherwise, we could consider the centered game $\nu(S) - \nu(\emptyset)$ which has the same Shapley values.} via the best linear approximation to the game $\nu$. Concretely, for a subset $S \subseteq D$, $\nu(S)$ is approximated by a linear function in the binary features $\mathbbm{1}[i \in S]$ for $i \in D$. The Shapley values are the coefficients of the linear approximation minimizing a specific weighted $\ell_2$ loss:
\begin{align*}
    \bm{\phi}^{\text{SV}}[\nu] := \argmin_{\bm\phi \in \mathbb{R}^{d}: \langle \bm{\phi}, \mathbf{1} \rangle = \nu(D)} \sum_{S \subseteq D} \mu(S)
    \left(
        \nu(S) - \sum_{i=1}^d \phi_i \mathbbm{1}[i \in S]
    \right)^2, 
\end{align*}
where the non-negative Shapley weights $\mu(S)$ are given in  \cref{eq:shapley_weights}. 
The constraint that the Shapley values sum to $\nu(D)$ enforces what is known as the  ``efficiency property'', one of four axiomatic properties that motivate the original definition of Shapley values (see e.g, \citep{Molnar.2024}).

Since the sum above involves $2^d$ terms, exact minimization of the linear approximation to obtain $\bm{\phi}^{\text{SV}}[\nu]$ is infeasible for most practical games.
Over the past several years, substantial research has focused on making the computation of Shapley values feasible in practice \citep{Covert.2020,Covert.2021,Mitchell.2022,Musco.2025,Witter.2025}, with \emph{KernelSHAP} \citep{Lundberg.2017} emerging as one of the most widely used model-agnostic methods.

From the least squares definition of Shapley values,
KernelSHAP can be viewed as a two step process:
First, approximate the game $\nu$ with a linear function fit from a sample of game evaluations $\nu(S)$ on randomly selected subsets $S$.
Second, return the Shapley values of the approximation, which, for linear functions, are simply the coefficients of each input.

A natural idea is to adapt this framework to fit $\nu$ with a richer function class that still admits fast Shapley value computation. One such class is tree-based models like XGBoost, which \cite{Witter.2025} recently leveraged to approximate the game.
When the tree-based approximation is accurate, their RegressionMSR estimator produces more accurate Shapley value estimates than KernelSHAP.
\cite{Butler.2025} also use tree-based models to learn an approximation to the game, but extract Fourier coefficients which can be used to estimate more general attribution values.
In the small sample regime with budgets less than $5d$, their ProxySPEX estimator outperforms KernelSHAP but achieves comparable performance to KernelSHAP for higher budgets.

In this work, we introduce an alternative approach called PolySHAP, where we approximate $\nu$ via a higher degree polynomial in the features $\mathbbm{1}[i \in S]$ for $i\in D$,  illustrated in \cref{fig:intro}. For a degree $k$ polynomial, let $d' = O(d^k)$ be the number of terms.
We show that, after fitting an approximation with $m$ samples, we can recover the Shapley values of the approximation in just $O(dd')$ time.
Across various experiments, we find that higher degree PolySHAP approximations result in more accurate Shapley value estimates (see e.g., \cref{fig_exp_standard}).
Moreover, we prove that the PolySHAP estimates are consistent, concretely that we obtain the Shapley values exactly as $m$ goes to $2^d$.
This is in contrast to RegressionMSR, which needs an additional ``regression adjustment'' step to obtain a consistent estimator for tree-based approximations \citep{Witter.2025}.

\begin{figure}
    \centering
    \includegraphics[width=0.9\linewidth]{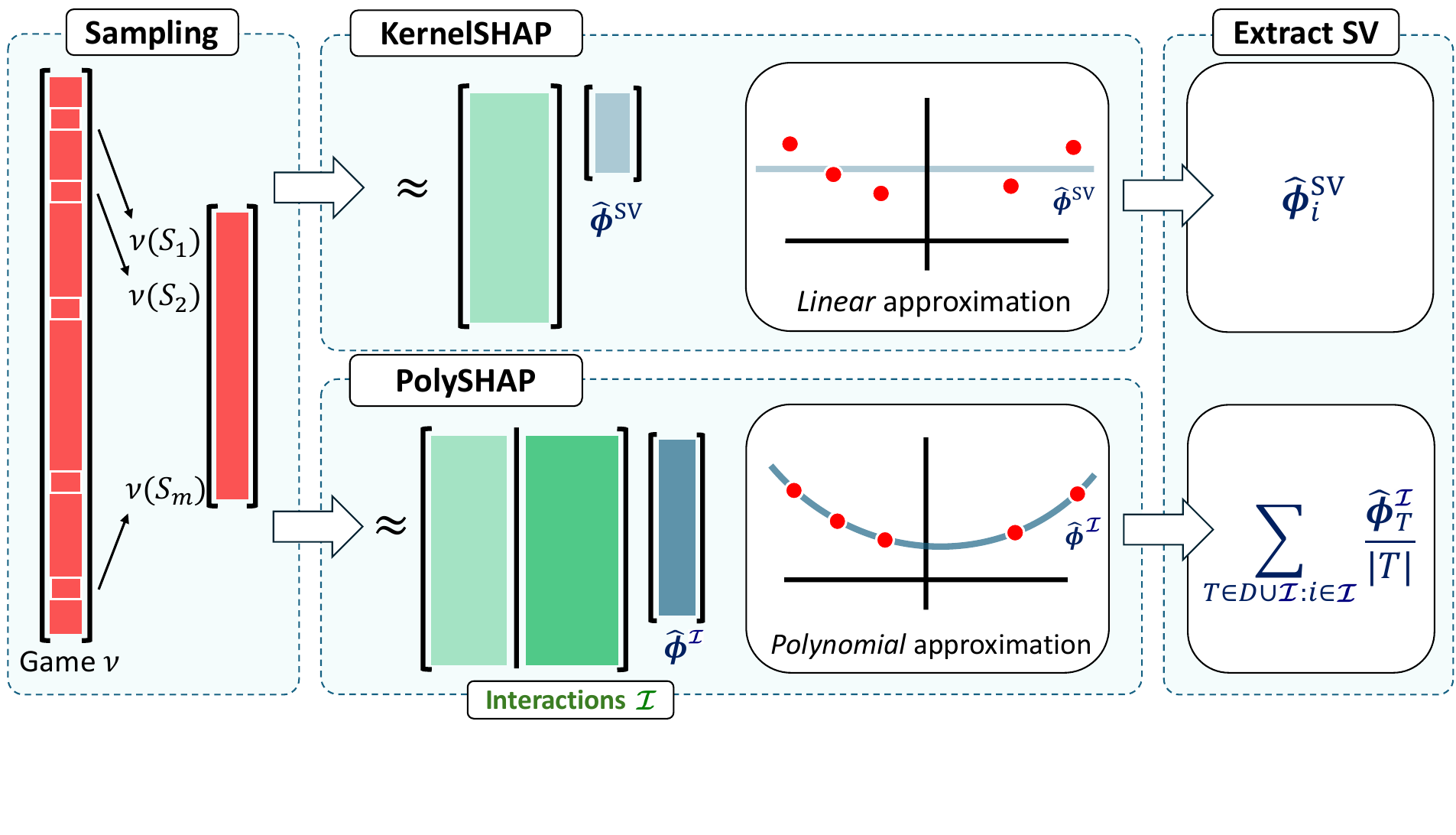}
    \vspace{-2em}
    \caption{Both KernelSHAP and PolySHAP fit a function to approximate a sample of game evaluations. While KernelSHAP uses a linear approximation, PolySHAP uses a more expressive polynomial approximation. Finally, both algorithms return the Shapley values (SV) of their respective approximations (trivial for KernelSHAP, see Theorem \ref{thm_polyshap_consistency} for PolySHAP).}
    \label{fig:intro}
\end{figure}

As a second main contribution of our work, we provide theoretical grounding for a seemingly unrelated sampling strategy called \emph{paired sampling}, which is known to significantly improve the accuracy of KernelSHAP estimates \citep{Covert.2021,Mitchell.2022,Olsen.2024}.
In paired sampling, subsets are sampled in paired complements $S$ and $D \setminus S$.
While used in all state-of-the-art Shapley value estimators, the reason for paired sampling's superior performance is not well understood.
Surprisingly, we prove that KernelSHAP with paired sampling outputs \textit{exactly} the same Shapley value approximations as second-order PolySHAP \textit{without ever fitting a degree 2 polynomial}.
This theoretical finding generalizes a very recent result of \cite{Mayer.2025}, who showed that KernelSHAP with paired sampling exactly recovers Shapley values when the game has interactions of at most degree 2.
Because the second-order PolySHAP will exactly fit a degree 2 game, their result follows immediately from a special case of ours.
However, our finding is more general because it explains why paired sampling is effective for \textit{all} games, not just those with at most degree 2 interactions.

\textbf{Contributions.}
The main contributions of our work can be summarized as follows:
\begin{itemize}  
    \item We propose \emph{PolySHAP}, an extension of KernelSHAP that models higher-order interaction terms to approximate $\nu$, and prove it returns the Shapley values as the number of samples $m$ goes to $2^d$ (\cref{thm_polyshap_consistency}). Moreover, we empirically show that PolySHAP results in more accurate Shapley value estimates than KernelSHAP and Permutation sampling.
    \item We establish a theoretical equivalence between paired KernelSHAP and  second-order PolySHAP (\cref{thm_paired_shap}), thereby explaining the practical benefits of paired sampling.  
\end{itemize}

\section{Related Work}
\textbf{KernelSHAP Sampling Strategies.}
Prior work on improving KernelSHAP has focused on refining the subset sampling procedure, aiming to reduce variance and improve computational efficiency
\citep{Kelodjou.2024,Olsen.2024b,Musco.2025}. 
Among these enhancements, paired sampling produces the largest improvement in accuracy \citep{Covert.2021}, yet, until the present work, it was not understood why beyond limited special cases.
Another notable enhancement is in the sampling distribution.
While it is intuitive to sample subsets proportional to their Shapley weights (\Eqref{eq:shapley_weights}), it turns out that sampling proportional to the \textit{leverage scores} can be more effective \citep{Musco.2025}. Paired sampling has also been observed to improve LeverageSHAP (KernelSHAP with leverage score sampling).

\textbf{Other Shapley Value Estimators.}
Beyond the regression-based approach of KernelSHAP, prior Shapley value estimators are generally based on direct Monte Carlo approximation \citep{Kwon.2022b,Castro.2009,Kwon.2022, Kolpaczki.2024a,Li.2024}. These methods estimate the $i$-th Shapley value based on the following equivalent definition:
\begin{align}
    \phi^\text{SV}_i[\nu] =
    \frac1d \sum_{S \subseteq D \setminus \{i\}} 
    \frac{\nu(S \cup \{i\}) - \nu(S)}{\binom{d-1}{|S|}}.
\end{align}
Permutation sampling, where subsets are sampled from a permutation, is a particularly effective approach \citep{Castro.2009,Mitchell.2022}.
However, in direct Monte Carlo methods, each game evaluation is used to estimate at most two Shapley values.
MSR methods reuse game evaluations in the estimate of every Shapley value, but at the cost of higher variance \citep{Li.2024, Witter.2025}.
A recent benchmark finds that RegressionMSR with tree-based approximations, LeverageSHAP, KernelSHAP, and Permutation sampling are the most accurate \citep{Witter.2025}.

\textbf{Higher-Order Explanations.}
Another line of work seeks to improve approximations by explicitly modeling higher-order interactions.
$k_{\text{ADD}}$-SHAP \citep{Pelegrina.2023} solves a least-squares problem over all interactions up to order $k$, and converges to the Shapley value for $k=2$ and $k=3$ \citep{Pelegrina.2025}.
With our results, we simplify $k_{\text{ADD}}$-SHAP and prove general convergence, where the practical differences are discussed in \cref{appx_sec_proofs_kaddshap}.
Relatedly, \citet{Mohammadi.2025} propose a regularized least squares method based on the Möbius transform \citep{Rota.1964}, which converges only when all higher-order interactions are included.
By contrast, PolySHAP converges for any chosen set of interaction terms.
Beyond approximation of the Shapley value, \citet{Kang.2024} leverage the Fourier representation of games to detect and quantify higher-order interactions.

\section{Preliminaries on Explainable AI and Cooperative Games}

\textbf{Notation.}
We use boldface letters to denote vectors, e.g., $\vx$, with entries $x_i$, and the corresponding random variable $\tilde\vx$. 
The all-one vector is denoted by $\mathbf{1}$, and $\langle \cdot, \cdot \rangle$ is the standard inner product.

Given the prediction of a machine learning model $f: \mathbb{R}^d \to \mathbb{R}$, post-hoc feature-based explanations aim to quantify the contribution of features $D$ to the model output.
Such explanations are defined by (i) the choice of an explanation game $\nu: 2^D \to \mathbb{R}$ and (ii) a game-theoretic attribution measure, such as the Shapley value \citep{Covert.2021b}.
For a given instance $\vx \in \mathbb{R}^d$, the \emph{local explanation game} $\nu_\vx$ describes the model’s prediction when restricted to subsets of features, with the remaining features replaced through perturbation.
The perturbation is carried out using different imputation strategies, as summarized in \cref{tab_removal}.

\begin{wraptable}{l}{0.6\textwidth}
    \caption{Local explanation games $\nu_\vx$ for instance $\vx$.}\label{tab_removal}
    \begin{center}
    \resizebox{\linewidth}{!}{
    \begin{tabular}{llll}
        $\textbf{Method}$ & \textbf{Game} & $\nu_\vx(S)$  & $\nu_\vx(\emptyset)$ \\
        \midrule
         \textbf{Baseline} & $\nu_\vx^{(b)}$ & $f(\vx_S,\vb_{D \setminus S})$  & $f(\vb)$ \\
         \textbf{Marginal} & $\nu_\vx^{(m)}$& $\mathbb{E}[f(\vx_S,\tilde\vx_{D \setminus S})]$  & $\mathbb{E}[f(\tilde\vx)]$\\
         \textbf{Conditional} & $\nu_\vx^{(c)}$ & $\mathbb{E}[f(\tilde\vx) \mid \tilde\vx_S = \vx_S]$  & $\mathbb{E}[f(\tilde\vx)]$
    \end{tabular}
    }
    \end{center}
\end{wraptable}

Similarly, \emph{global explanation games} are constructed from $\nu_\vx$ by evaluating measures such as variance or risk \citep{Fumagalli.2024a}.
Beyond analyzing features, other variants have been proposed, for instance to characterize properties of individual data points \citep{Ghorbani.2019}.

Like most Shapley value estimators, PolySHAP is agnostic to how the game $\nu$ is defined.

\textbf{KernelSHAP.}
Given a budget of $m$ game evaluations, KernelSHAP solves the approximate least squares problem:
\begin{align*}
\bm{\hat\phi}^{\text{SV}}[\nu] := \argmin_{\bm\phi \in \mathbb{R}^{d}: \langle \bm{\phi}, \mathbf{1} \rangle = \nu(D)} \sum_{\ell = 1}^m \frac{\mu(S_\ell)}{p(S_\ell)}
    \left(
        \nu(S_\ell) - \sum_{i=1}^d \phi_i \mathbbm{1}[i \in S_\ell]
    \right)^2  
    \quad\text{with } S_1,\dots,S_m \sim p
\end{align*}
where the Shapley weight, for subset $S \subseteq D$ is given by
\begin{align}
    \label{eq:shapley_weights}
    \mu(S) :=  \frac1{\binom{d-2}{\vert S \vert-1}}
    \quad \text{ if } 0 <|S|<d \quad \text{ and $0$ otherwise.}
\end{align}

While effective, KernelSHAP is inherently limited to a linear (additive) approximation of $\nu$ based on the sampled coalitions.

\section{Interaction-Informed Approximation of Shapley Values}\label{sec_main_KernelSHAP}

We now present \emph{PolySHAP}, our core contribution that extends KernelSHAP by approximating the game via polynomial regression. Rather than fitting a linear function, PolySHAP captures higher-order feature interactions, leading to more accurate Shapley value estimates. We first introduce the PolySHAP representation, then discuss sampling strategies and interaction frontier construction.

\subsection{PolySHAP Interaction Representation}\label{sec_main_PolySHAP}

We introduce PolySHAP, a method for producing Shapley value estimates from a polynomial approximation of $\nu$.
Let the \textbf{\frontier} $\mathcal I$ be a subset of interaction terms 
$$\mathcal I \subseteq \{ T \subseteq D: \vert T\vert \geq 2\}
.$$

We then extend the linear approximation of $\nu$ by defining an \emph{interaction-based polynomial representation} restricted to interactions in $\mathcal{I}$.

\begin{definition}\label{def_polyshap_representation}
The \textbf{PolySHAP representation} $\bm{\phi}^{\mathcal{I}} \in \mathbb{R}^{d'}$ with $d' = d + \vert\mathcal I\vert$ is given by
\begin{align*}
    \bm{\phi}^{\mathcal{I}}[\nu] := \argmin_{\bm\phi \in \mathbb{R}^{d'}: \langle \bm{\phi}, \mathbf{1} \rangle = \nu(D)} \sum_{S \subseteq D} \mu(S)
    \left(
        \nu(S) - \sum_{T \in D \cup \mathcal {I}}
        \phi_T 
        \prod_{j \in T} \mathbbm{1}[j \in S]
    \right)^2.
\end{align*}
Here, and in the following, we abuse notation with $\phi_i := \phi_{\{i\}}$ and $\mathbbm{1}[j \in i] := \mathbbm{1}[j = i]$ for $i,j \in D$.\end{definition}
The PolySHAP representation generalizes the least squares formulation of the Shapley value to arbitrary \frontiers $\mathcal{I}$.  
For each interaction set $T \in \mathcal I$, the approximation contributes a coefficient $\phi_T$ only if all features in $T$ are present in $S$.

\begin{remark}
    The PolySHAP representation directly extends the Faithful Shapley interaction index \citep{Tsai.2022} to arbitrary \frontiers.
\end{remark}

In the theorem below, we show how to recover Shapley values from the PolySHAP representation.

\begin{restatable}{theorem}{polyshapconsistency}\label{thm_polyshap_consistency}
    The Shapley values of $\nu$ are recovered from the PolySHAP representation as
    \begin{align}
        \phi_i^{\text{SV}}[\nu] =  \phi^{\mathcal I}_i + \sum_{S \in \mathcal{I}: i \in S} \frac{\phi^{\mathcal{I}}_{S}}{\vert S \vert} \quad \text{for } i \in D.
        \label{eq:PolySHAP2SV}
    \end{align}
\end{restatable}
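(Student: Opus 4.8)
The plan is to recover the Shapley values of $\nu$ by routing through its PolySHAP approximation. Write $u_T(S) := \prod_{j\in T}\mathbbm{1}[j\in S] = \mathbbm{1}[T\subseteq S]$ for the unanimity game on $T\subseteq D$, and let $g := \sum_{T\in D\cup\mathcal{I}} \phi^{\mathcal{I}}_T\, u_T$ be the fitted polynomial attaining the minimum in \cref{def_polyshap_representation}. I will show (i) that $\nu$ and $g$ have the same Shapley values, $\bm{\phi}^{\text{SV}}[\nu] = \bm{\phi}^{\text{SV}}[g]$, and (ii) that $\bm{\phi}^{\text{SV}}[g]$ equals the right-hand side of the claimed identity; combining the two finishes the proof.

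Step (ii) is the short computation. Since the $\mu$-weighted least-squares characterization of the Shapley value coincides with the classical combinatorial one, $\bm{\phi}^{\text{SV}}[\cdot]$ is linear in the game, so $\phi_i^{\text{SV}}[g] = \sum_{T\in D\cup\mathcal{I}} \phi^{\mathcal{I}}_T\, \phi_i^{\text{SV}}[u_T]$. For a unanimity game $u_T$, every player outside $T$ is a null player and all players in $T$ are interchangeable, so by the null-player, symmetry and efficiency properties $\phi_i^{\text{SV}}[u_T] = \mathbbm{1}[i\in T]/|T|$ (alternatively this drops straight out of the summation formula for $\phi^{\text{SV}}$). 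Hence $\phi_i^{\text{SV}}[g] = \phi^{\mathcal{I}}_{\{i\}} + \sum_{S\in\mathcal{I}:\, i\in S} \phi^{\mathcal{I}}_S/|S|$, which under the convention $\phi_i := \phi_{\{i\}}$ is exactly the asserted formula.

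Step (i) is the substance. Both $\bm{\phi}^{\text{SV}}[\nu]$ and $\bm{\phi}^{\mathcal{I}}[\nu]$ are $\mu$-weighted least-squares fits, and the additive model used for the Shapley value sits inside the PolySHAP model ($\mathcal{I}$ only adjoins columns $u_T$, $|T|\ge 2$); this is a tower-property situation, and I would make it rigorous through first-order optimality. Put $\langle a,b\rangle_\mu := \sum_S \mu(S)\, a(S)\, b(S)$. For any coefficient vector the fitted function vanishes at $\emptyset$ and takes the value $\langle\bm{\phi},\mathbf{1}\rangle$ at $D$, and since $\mu(\emptyset) = \mu(D) = 0$, the efficiency constraint in both problems is equivalent to pinning that value to $\nu(D)$; thus feasibility of $\bm{\phi}$ amounts to the fit hitting $\nu(D)$ at $D$, and the feasible directions form the linear spaces $W := \{\sum_i \psi_i u_{\{i\}} : \sum_i \psi_i = 0\}$ and $W' := \{\sum_{T\in D\cup\mathcal{I}}\psi_T u_T : \sum_T \psi_T = 0\}$ respectively, with $W\subseteq W'$. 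Optimality of $\bm{\phi}^{\mathcal{I}}[\nu]$ gives $\langle \nu - g, w'\rangle_\mu = 0$ for all $w'\in W'$, hence for all $w\in W$. Now set $\bm{\psi}^\star := \bm{\phi}^{\text{SV}}[g]$: it is feasible for the $\nu$-problem because $g(D) = \langle\bm{\phi}^{\mathcal{I}},\mathbf{1}\rangle = \nu(D)$, and for every $w\in W$,
\[
\langle \nu - \sum_i \psi^\star_i u_{\{i\}},\ w\rangle_\mu = \langle \nu - g,\ w\rangle_\mu + \langle g - \sum_i \psi^\star_i u_{\{i\}},\ w\rangle_\mu = 0,
\]
the second term vanishing by optimality of $\bm{\psi}^\star$ for the projection of $g$. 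These are the (convex, hence sufficient) optimality conditions characterizing $\bm{\phi}^{\text{SV}}[\nu]$, so by uniqueness of the Shapley least-squares solution $\bm{\phi}^{\text{SV}}[\nu] = \bm{\psi}^\star = \bm{\phi}^{\text{SV}}[g]$, and step (ii) then yields the theorem.

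I expect the only real obstacle to be the bookkeeping in step (i): one must handle that the efficiency constraint is affine rather than linear and that $\mu$ is degenerate (zero on $\emptyset$ and $D$), so $\langle\cdot,\cdot\rangle_\mu$ is only a semi-inner product. The observation that defuses both issues is that the constraint in each problem is equivalent to fixing the fitted value at $D$, so the relevant tangent spaces consist of functions vanishing at $\emptyset$ and $D$ and satisfy $W\subseteq W'$ — precisely what the tower-property argument needs. It is also worth recording that the $\mu$-weighted Shapley least-squares problem has a unique minimizer (and that the PolySHAP representation is well-defined under a mild non-degeneracy condition on $\mathcal{I}$), which is what licenses the conclusion at the level of coefficient vectors rather than merely of fitted functions.
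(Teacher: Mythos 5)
Your proposal is correct, and it follows the same two-step decomposition as the paper: treat the fitted polynomial $g=\sum_{T\in D\cup\mathcal I}\phi^{\mathcal I}_T u_T$ as a surrogate game, show $\bm\phi^{\text{SV}}[\nu]=\bm\phi^{\text{SV}}[g]$, and then read off the Shapley values of $g$ from its unanimity (M\"obius) coefficients via $\phi_i^{\text{SV}}[u_T]=\mathbbm{1}[i\in T]/|T|$, exactly as the paper does by citing the M\"obius-to-Shapley conversion. The only substantive difference is how the first step is justified: the paper isolates it as the Projection Lemma (\cref{lemma:projection}) and proves it by converting the constrained problem to an unconstrained one and manipulating pseudoinverses and projection matrices (using that the column space of $\mathbf{XP}_d$ sits inside that of $\mathbf{X}_+\mathbf{P}_{d_+}$), whereas you argue variationally: the affine efficiency constraint pins the fitted value at $D$, the feasible direction spaces are nested ($W\subseteq W'$), and the first-order orthogonality conditions in the $\mu$-weighted semi-inner product transfer from the large model to the small one, so $\bm\phi^{\text{SV}}[g]$ satisfies the (sufficient, by convexity) optimality conditions of the $\nu$-problem and equals $\bm\phi^{\text{SV}}[\nu]$ by uniqueness of the $d$-dimensional solution. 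Your route is coordinate-free, avoids the pseudoinverse bookkeeping and the explicit rank hypotheses of the matrix formulation, and deals cleanly with the degeneracy $\mu(\emptyset)=\mu(D)=0$; the paper's matrix version has the advantage that the same lemma and the explicit closed-form expressions it produces are reused verbatim in the proof of \cref{thm_paired_shap}. Two small notes: uniqueness of the $d$-variable Shapley least-squares solution, which your conclusion leans on, does hold (for $d\ge 2$ the singleton rows already force injectivity on the constraint's direction space), and your hedge about a non-degeneracy condition on $\mathcal I$ is not actually needed for the argument, since you only require \emph{some} minimizer $\bm\phi^{\mathcal I}$ and uniqueness on the small problem.
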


In other words, consistent estimation of the PolySHAP representation directly implies consistent estimation of the Shapley value.

\subsection{PolySHAP Algorithm}\label{sec_main_polyshap_algorithm}

A natural approximation strategy is to first estimate the PolySHAP representation and then map the result back to Shapley values using \cref{thm_polyshap_consistency}.
Concretely, we approximate the PolySHAP representation by solving
\begin{align}\label{eq_approx_faithgax}
\bm{\hat\phi}^{\mathcal I}[\nu] := \argmin_{\bm{\phi} \in \mathbb{R}^{d+\vert \mathcal I \vert}: \langle \bm{\phi}, \mathbf{1} \rangle = \nu(D)}\sum_{\ell = 1}^m \frac{\mu(S_\ell)}{p(S_\ell)} \left(\nu(S_\ell) - \sum_{T \in D \cup \mathcal I}
        \phi_T 
        \prod_{j \in T} \mathbbm{1}[j \in S_\ell]\right)^2
\end{align}
with $m$ samples $S_1,\dots,S_m$ drawn from some distribution $p$, where $d' < m \leq 2^d$.
(When $\nu$ is clear from context, we write $\bm{\hat\phi}^{\mathcal I}$ for $\bm{\hat\phi}^{\mathcal I}[\nu]$.)

We then convert $\bm{\hat\phi}^{\mathcal{I}}$ into Shapley value estimates via \cref{thm_polyshap_consistency}.
The rationale behind this approach is that the more expressive PolySHAP representation more accurately represents $\nu$, which in turn yields more accurate Shapley value estimates.
We refer to this interaction-aware extension of KernelSHAP as \emph{PolySHAP}.

In order to produce the PolySHAP solution in practice, we use the matrix representation of the regression problem.
Define the sampled design matrix $\tilde{\mathbf{X}} \in \mathbb{R}^{m \times d'}$ and the sampled target vector $\tilde{\mathbf{y}} \in \mathbb{R}^{m}$.
The rows are indexed by $\ell \in [m]$, and the columns of $\tilde{\mathbf{X}}$ are indexed by interactions $T \in D \cup \mathcal{I}$.
The entries of the sampled design matrix and sampled target vector are given by
\begin{align}
    [\tilde{\mathbf{X}}]_{\ell,T} = \sqrt{\frac{\mu(S_\ell)}{p(S_\ell)}} \cdot \mathbbm{1}[T \subseteq S_\ell] 
    \quad \text{ and } \quad
    [\tilde{\mathbf{y}}]_{\ell} =  \sqrt{\frac{\mu(S_\ell)}{p(S_\ell)}} \cdot \nu(S_\ell).
    \label{eq:sampledXandy}
\end{align}
In this notation, we may write
\begin{align}
    \hat{\bm{\phi}}^\mathcal{I}
    = \argmin_{\bm{\phi} \in \mathbb{R}^{d'}: 
    \langle \mathbf{1}, \bm{\phi} \rangle = \nu(D)}
    \| \tilde{\mathbf{X}} \bm{\phi} - \tilde{\mathbf{y}}\|_2^2.
\end{align}
We would like to apply standard regression tools when solving the problem, so we convert from the constrained problem to an unconstrained reformulation.
Let $\mathbf{P}_{d'}$ be the matrix that projects \textit{off} the all ones vector in $d'$ dimensions i.e., $\mathbf{P}_{d'} = \mathbf{I} - \frac1{d'} {\mathbf{1}_{d'}} {\mathbf{1}_{d'}}^\top$.
We have
\begin{align}
    \argmin_{\bm{\phi} \in \mathbb{R}^{d'}: 
    \langle \mathbf{1}, \bm{\phi} \rangle = \nu(D)}
    \| \tilde{\mathbf{X}} \bm{\phi} - \tilde{\mathbf{y}}\|_2^2
   &= \argmin_{\bm{\phi} \in \mathbb{R}^{d'}
   : \langle \bm{\phi}, \mathbf{1} \rangle = 0} 
   \left\| \tilde{\mathbf{X}} \bm{\phi} + \tilde{\mathbf{X}} \mathbf{1} \frac{\nu(D)}{d'}
   - \tilde{\mathbf{y}}\right \|_2^2 
   + \mathbf{1} \frac{\nu(D)}{d'}
   \nonumber \\&=
   \mathbf{P}_{d'}
   \argmin_{\bm{\phi} \in \mathbb{R}^{d'}} 
   \left\| \tilde{\mathbf{X}} \mathbf{P}_{d'} \bm{\phi} + \tilde{\mathbf{X}} \mathbf{1} \frac{\nu(D)}{d'}
   - \tilde{\mathbf{y}} \right \|_2^2 
   + \mathbf{1} \frac{\nu(D)}{d'}.
   \label{eq:lstsq_unconstrained}
\end{align} 

PolySHAP is described in pseudocode in \cref{alg_PolySHAP}.
\begin{algorithm}[H]
    \caption{PolySHAP}
    \label{alg_PolySHAP}
    \begin{algorithmic}[1]
    \REQUIRE game $\nu_\mathbf{x}$, \frontier $\mathcal{I}$, sampling distribution $p$, sampling budget $m > d'$.
    \STATE Define $\nu(S) := \nu_\mathbf{x}(S) - \nu_\mathbf{x}(\emptyset)$ \hfill \COMMENT{Center for notational simplicity}
    \STATE $\{S_\ell\}_{\ell=1}^m \gets \textsc{Sample}(m,p)$
    \STATE Construct $\tilde{\mathbf{X}}$ and $\tilde{\mathbf{y}}$
    \hfill \COMMENT{\cref{eq:sampledXandy}}
    \STATE $\bm{\hat{\phi}}^{\mathcal{I}} \gets \mathbf{P}_{d'}\textsc{SolveLeastSquares}(\tilde{\mathbf{X}}\mathbf{P}_{d'}, \tilde{\mathbf{y}}-\tilde{\mathbf{X}}\mathbf{1}\frac{\nu(D)}{d'}) + \mathbf{1}\frac{\nu(D)}{d'}$ 
    \STATE $\bm{\hat{\phi}}^{\mathrm{SV}} \gets \textsc{PolySHAPtoSV}(\bm{\hat{\phi}}^{\mathcal{I}})$ \hfill \COMMENT{Equation (\ref{eq:PolySHAP2SV})}
    \STATE \textbf{return} $\nu_\vx(\emptyset),\bm{\hat{\phi}}^{\mathrm{SV}}$
\end{algorithmic}
\end{algorithm}

\textbf{Computational Complexity.}
The computational complexity of PolySHAP can be divided into two components: evaluating the game for the sampled coalitions, and solving the regression problem followed by extraction of the Shapley values.
Evaluating the game requires at least one model call for local explanation games, and highly depends on the application setting.
Solving the regression problem scales with $\mathcal O(m \cdot (d')^2 + (d')^3)$, whereas transforming the PolySHAP representation to Shapley values is of order $\mathcal O(d \cdot d')$.
Importantly, this complexity scales \emph{linearly} with the budget $m$, and \emph{quadratically} with the number of regression variables $d'$.
In practice, the dominant factor in computational cost is usually the game evaluations, i.e., the model predictions.
However, for smaller model architectures, the runtime can be influenced by the number of regression variables.

\begin{figure}[t]
    \centering
    \includegraphics[width=.9\linewidth]{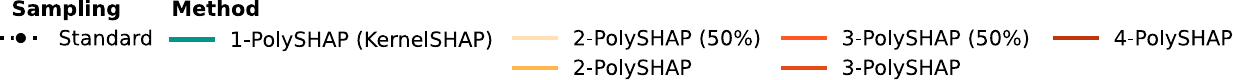}
    \begin{minipage}{0.245\linewidth}
        \includegraphics[width=\linewidth]{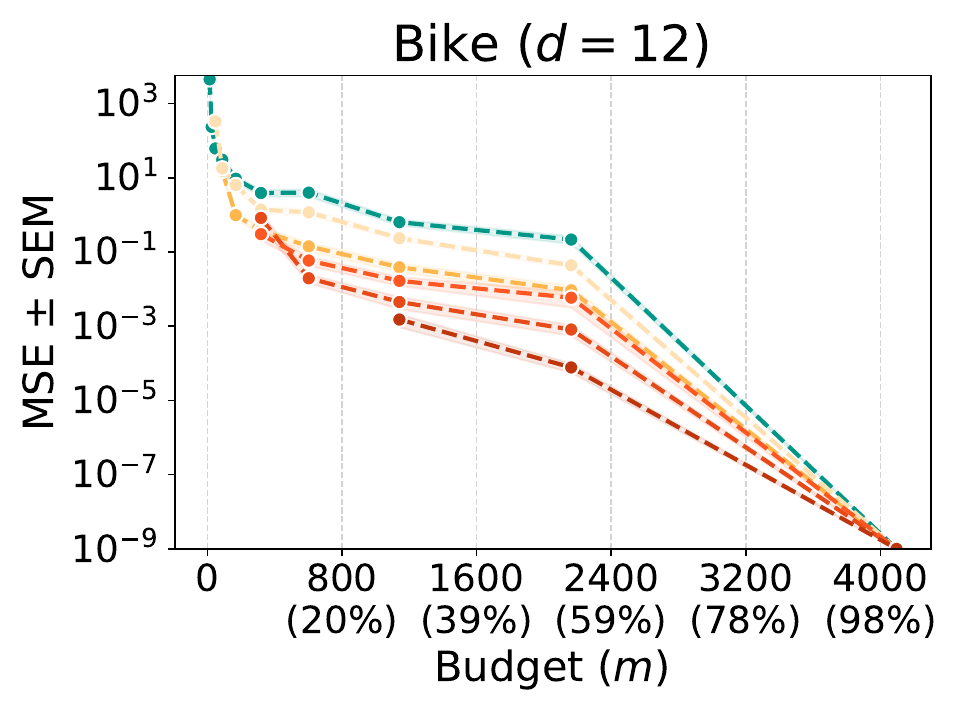}
    \end{minipage}
        \hfill
    \begin{minipage}{0.245\linewidth}
        \includegraphics[width=\linewidth]{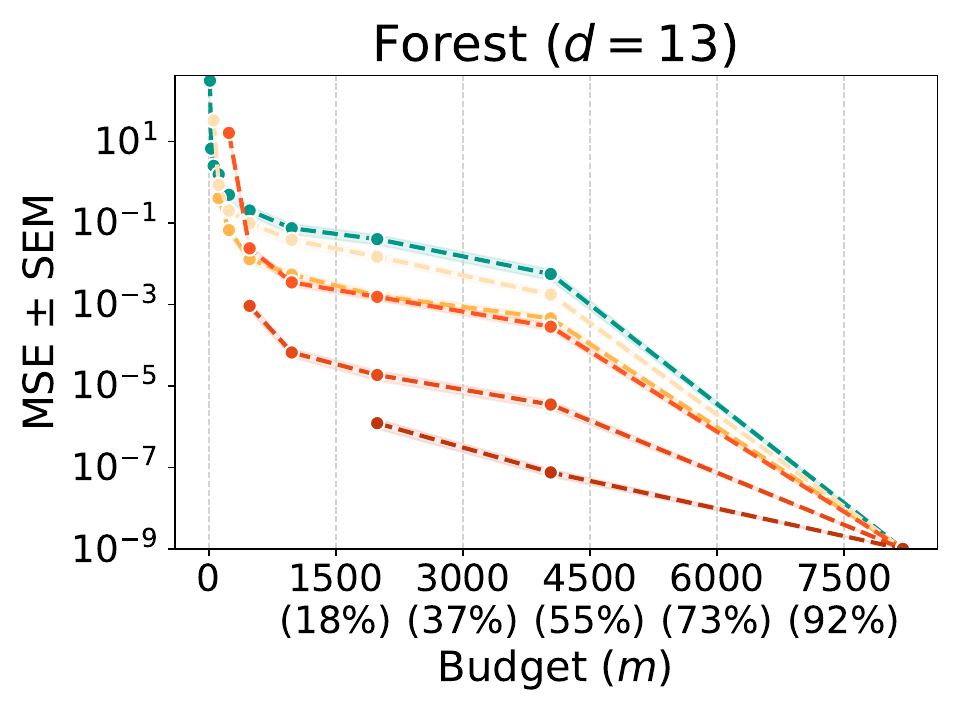}
    \end{minipage}
        \hfill
    \begin{minipage}{0.245\linewidth}
        \includegraphics[width=\linewidth]{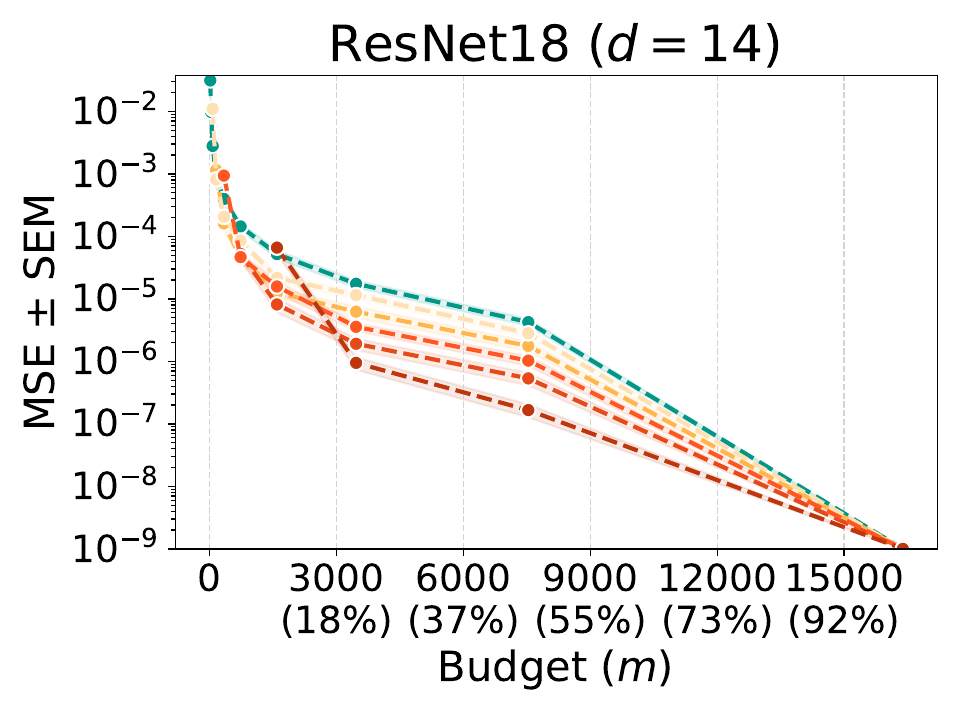}
    \end{minipage}
        \hfill
    \begin{minipage}{0.245\linewidth}
        \includegraphics[width=\linewidth]{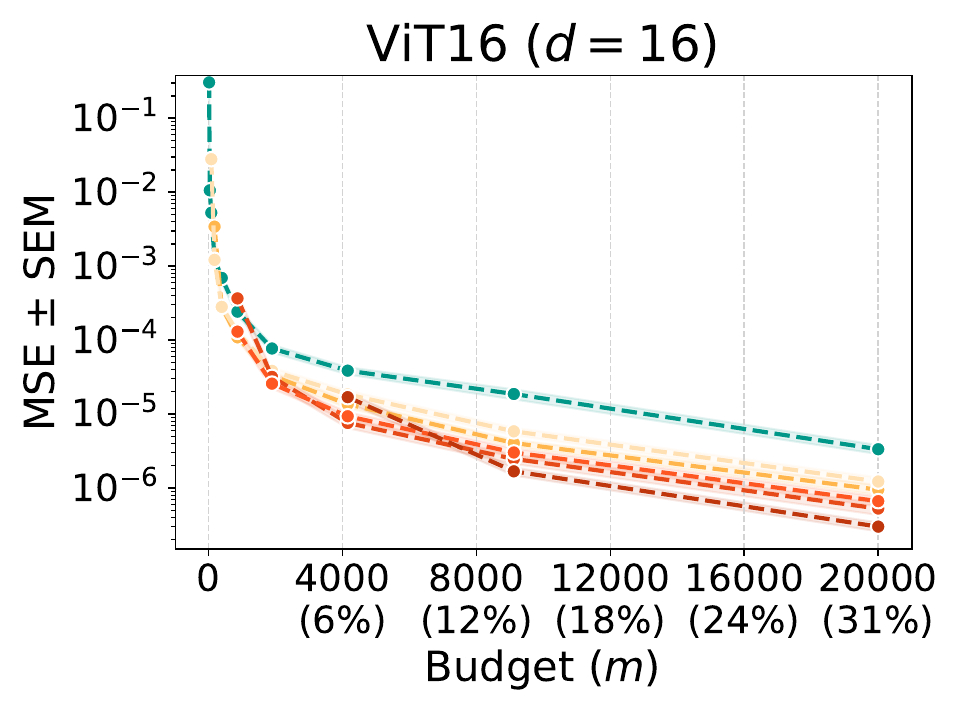}
    \end{minipage}
        \hfill
    \begin{minipage}{0.245\linewidth}
        \includegraphics[width=\linewidth]{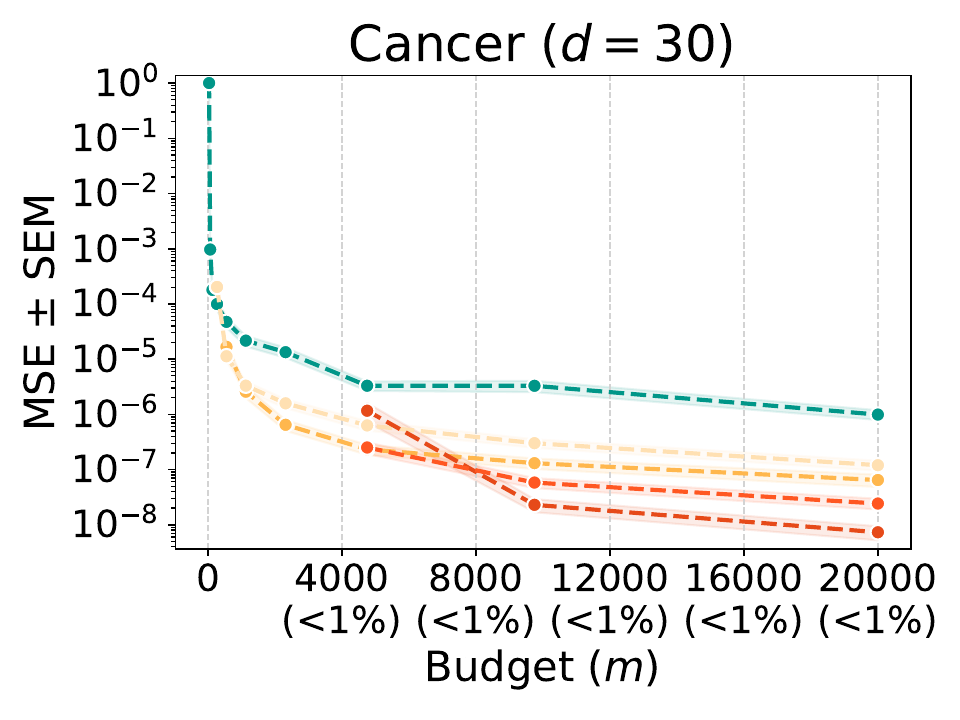}
    \end{minipage}
        \hfill
    \begin{minipage}{0.245\linewidth}
        \includegraphics[width=\linewidth]{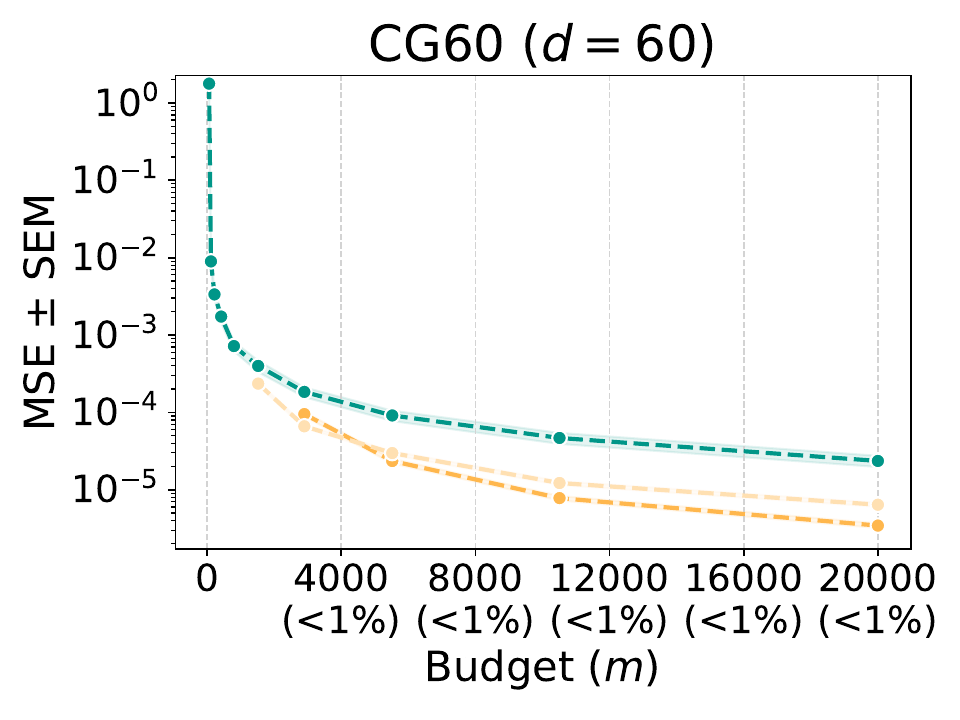}
    \end{minipage}
        \hfill
    \begin{minipage}{0.245\linewidth}
        \includegraphics[width=\linewidth]{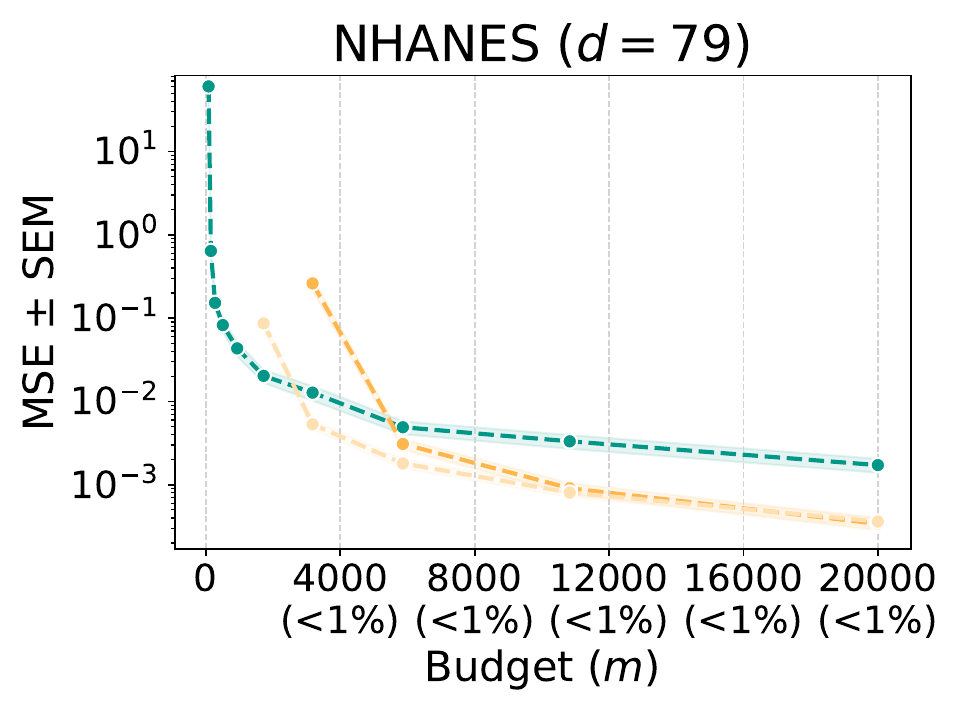}
    \end{minipage}
        \hfill
    \begin{minipage}{0.245\linewidth}
        \includegraphics[width=\linewidth]{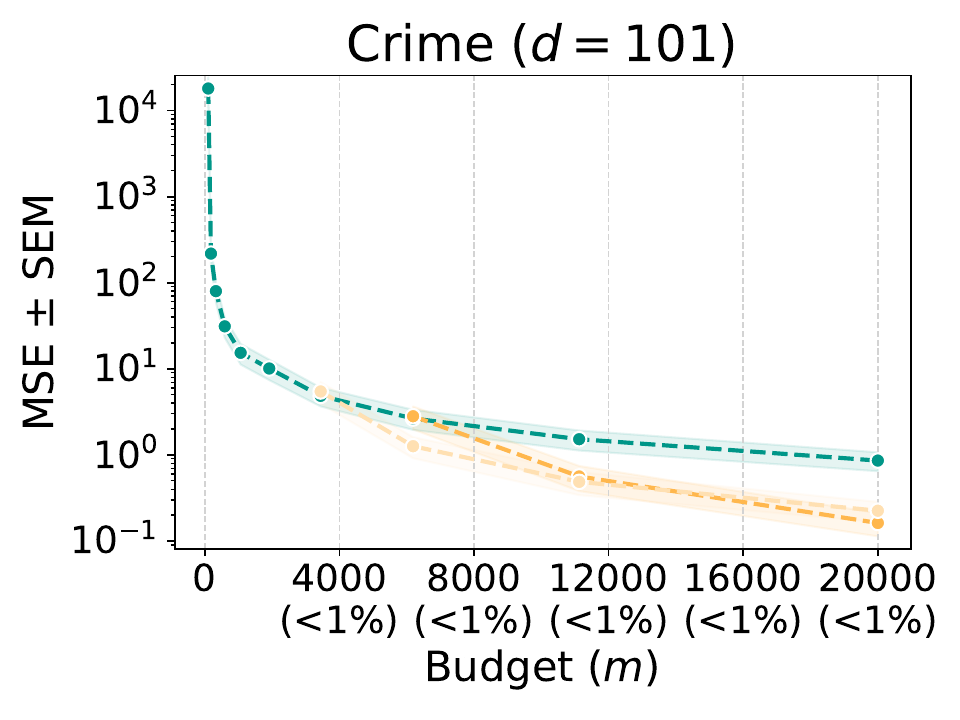}
    \end{minipage}
    \caption{Approximation quality measured by MSE ($\pm$ SEM) for various sampling budgets $m$ on different games. Adding any number of interactions in PolySHAP improves approximation quality.}
    \label{fig_exp_standard}
\end{figure}

\subsection{Sampling Strategies for PolySHAP}\label{sec_main_sampling}
PolySHAP uses a distribution $p$ to sample $m$ game evaluations for approximating the least squares objective.
Previous work \citep{Lundberg.2017,Covert.2021} chose $p$ proportional to $\mu(S)$, which cancels the multiplicative correction term in \cref{eq_approx_faithgax}.

However, sampling proportionally to \emph{leverage scores} offers improved estimation quality, and is supported by theoretical guarantees \citep{Musco.2025}.
Let $\mathbf{X} \in \mathbb{R}^{2^d \times d'}$ be the full deterministic matrix (each subset is sampled exactly once with probability $1$).
The leverage score for the row corresponding to subset $S$ is given by
\begin{align}
    \ell_S = [\mathbf{XP}_{d'}]_S^\top \left(\mathbf{P}_{d'} \mathbf{X^\top X P}_{d'} \right)^\dagger [\mathbf{XP}_{d'}]_S
    \label{eq:leveragescores}
\end{align}
where $(\cdot)^\dagger$ denotes the pseudoinverse, and $[\mathbf{XP}_{d'}]_S$ is the $S$-th row of $\mathbf{XP}_{d'}$.

\begin{theorem}[Leverage Score Sampling Guarantee \citep{Musco.2025}]
    \label{thm:leveragescores}
    Let $\epsilon, \delta > 0$.
    When $m = O(d' \log \frac{d'}{\delta} + d' \frac1{\epsilon \delta})$ subsets are sampled proportionally to their leverage scores (with or without replacement and with or without paired sampling), $\hat{\bm{\phi}}^\mathcal{I}$ satisfies, with probability $1-\delta$,
    \begin{align}
        &\sum_{S \subseteq D} \mu(S)
        \left(
            \nu(S) - \sum_{T \in D \cup \mathcal {I}}
            \hat{\phi}^\mathcal{I}_T 
            \prod_{j \in T} \mathbbm{1}[j \in S]
        \right)^2 \nonumber
        \\
        &\leq (1+\epsilon)
        \sum_{S \subseteq D} \mu(S)
        \left(
            \nu(S) - \sum_{T \in D \cup \mathcal {I}}
            \phi^\mathcal{I}_T 
            \prod_{j \in T} \mathbbm{1}[j \in S]
        \right)^2
        \nonumber.
    \end{align}
\end{theorem}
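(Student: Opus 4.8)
The plan is to reduce the constrained problem to an unconstrained least-squares problem and then invoke the standard ``leverage-score sketching'' machinery (in the style of Drineas--Mahoney--Muntz--Sarl\'os; see also Woodruff's monograph). After the reparametrization in \cref{eq:lstsq_unconstrained}, both $\bm{\phi}^{\mathcal I}$ and $\hat{\bm{\phi}}^{\mathcal I}$ lie in the affine set $\{\bm\phi : \langle \bm\phi, \mathbf 1\rangle = \nu(D)\}$, and on that set the weighted objective $\bm\phi \mapsto \sum_S \mu(S)\bigl(\nu(S) - \sum_{T} \phi_T \prod_{j\in T}\mathbbm 1[j\in S]\bigr)^2$ equals, up to an additive constant, $\bm\phi \mapsto \|\mathbf X \mathbf P_{d'} \bm\phi - \mathbf c\|_2^2$ for a fixed target $\mathbf c$, where $\mathbf X \in \R^{2^d\times d'}$ is the full deterministic design matrix with $[\mathbf X]_{S,T} = \sqrt{\mu(S)}\,\mathbbm 1[T\subseteq S]$. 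Write $\mathbf A := \mathbf X \mathbf P_{d'}$; by \cref{eq:leveragescores} the $\ell_S$ are exactly the row leverage scores of $\mathbf A$, so $\sum_S \ell_S = \operatorname{rank}(\mathbf A) < d'$. Sampling $S$ with probability $p(S) \propto \ell_S$ and rescaling each sampled row by $1/\sqrt{p(S)}$ --- precisely what the $\sqrt{\mu(S_\ell)/p(S_\ell)}$ weights in \cref{eq:sampledXandy} encode --- defines a random sampling-and-rescaling operator $\mathbf \Pi$ with $\tilde{\mathbf X} = \mathbf \Pi \mathbf X$, $\tilde{\mathbf y} = \mathbf \Pi \mathbf y$; the target claim is then the usual relative-error guarantee $\|\mathbf A \hat{\bm\phi}^{\mathcal I} - \mathbf c\|_2^2 \le (1+\epsilon)\|\mathbf A \bm\phi^{\mathcal I} - \mathbf c\|_2^2$, the $(1+\epsilon)$ factor being absorbed into the $\epsilon$ appearing in the sample complexity.

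The argument has two probabilistic ingredients. First (subspace embedding), I would show that $m = O(d'\log(d'/\delta))$ samples make $\mathbf \Pi$ a $(1\pm \tfrac12)$-embedding for $\operatorname{range}(\mathbf A)$: letting $\mathbf U$ be an orthonormal basis of $\operatorname{range}(\mathbf A)$, write $\mathbf U^\top \mathbf \Pi^\top \mathbf \Pi \mathbf U = \tfrac1m \sum_{\ell} \tfrac{1}{p(S_\ell)}\, \mathbf u_{S_\ell} \mathbf u_{S_\ell}^\top$, note each summand is an unbiased estimator of $\tfrac1m\mathbf I$ with operator norm $\tfrac1m\,\|\mathbf u_{S_\ell}\|_2^2/p(S_\ell) = \tfrac1m\sum_{S'}\ell_{S'} \le d'/m$ by the leverage-score choice of $p$, and apply a matrix Chernoff/Bernstein bound. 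Second (approximate matrix multiplication), with $\mathbf b^\perp := \mathbf c - \mathbf A \bm\phi^{\mathcal I}$ the optimal residual (orthogonal to $\operatorname{range}(\mathbf U)$), I would bound $\E\|\mathbf U^\top \mathbf \Pi^\top \mathbf \Pi \mathbf b^\perp\|_2^2 \le \tfrac1m \bigl(\sum_S \ell_S\bigr)\|\mathbf b^\perp\|_2^2 \le \tfrac{d'}{m}\|\mathbf b^\perp\|_2^2$ (each sampled term is mean zero, and its second moment is controlled by the leverage scores), then apply Markov to get failure probability $\delta$ at the price of a $1/\delta$ factor; this is exactly why the term $d'/(\epsilon\delta)$ --- rather than $d'\log(1/\delta)/\epsilon$ --- appears and dominates. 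Conditioning on both events, the conclusion follows from the textbook manipulation: decompose $\|\mathbf A\hat{\bm\phi}^{\mathcal I}-\mathbf c\|_2^2 = \|\mathbf b^\perp\|_2^2 + \|\mathbf A(\hat{\bm\phi}^{\mathcal I}-\bm\phi^{\mathcal I})\|_2^2$, then use the normal equations for $\hat{\bm\phi}^{\mathcal I}$ in the \emph{sketched} problem together with the embedding and the cross-term bound to get $\|\mathbf A(\hat{\bm\phi}^{\mathcal I}-\bm\phi^{\mathcal I})\|_2^2 = O(\epsilon)\|\mathbf b^\perp\|_2^2$, and rescale $\epsilon$.

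Finally, I would check insensitivity to sampling with versus without replacement and with versus without paired sampling. For sampling without replacement, negative association of the indicators only tightens both bounds, so the with-replacement analysis carries over. For paired sampling, each draw deterministically appends the complement $D\setminus S$; I would argue this is equivalent to sampling from the coarser partition of $2^D$ into complement-pairs with probability proportional to $\ell_S + \ell_{D\setminus S}$, whose induced ``leverage scores'' are again bounded by $\sum_{S'}\ell_{S'}\le d'$ and whose variance in Step 2 is no larger (splitting residual energy across a pair can only help), so Steps 1--2 go through verbatim. The main obstacle I anticipate is executing Step 1 \emph{with the efficiency constraint in place}: one must track that the relevant subspace is $\operatorname{range}(\mathbf X \mathbf P_{d'})$ rather than $\operatorname{range}(\mathbf X)$, that the scores in \cref{eq:leveragescores} are the correct ones for that subspace, and --- in the paired case --- that appending complements does not break the uniform bound $\|\mathbf u_S\|_2^2/p(S) = O(d')$ on which the matrix Chernoff step rests.
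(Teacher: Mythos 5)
This theorem is imported verbatim from \citep{Musco.2025}; the paper contains no proof of it, so the only meaningful comparison is with the cited source. Your proposal reconstructs essentially that source's argument: eliminate the efficiency constraint by projecting off the all-ones vector so the relevant matrix is $\mathbf{X}\mathbf{P}_{d'}$ with the leverage scores of \eqref{eq:leveragescores}, then run the standard leverage-score active-regression analysis (matrix-Chernoff subspace embedding plus a Markov-bounded cross term, which is exactly what yields the $d'/(\epsilon\delta)$ rather than $d'\log(1/\delta)/\epsilon$ term), correctly reading the displayed inequality as the intended $(1+\epsilon)$ relative-error guarantee that the printed statement omits. The only thin spot is the with/without replacement and paired-sampling extension—for a general \nfrontier $\mathcal{I}$ you cannot rely on $\ell_S=\ell_{D\setminus S}$, so the pairing step needs the extra care you yourself flag—but this matches how the cited work treats its sampling variants and does not change the route.
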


\citet{Musco.2025} show that $\ell_S = 1/\binom{d}{|S|}$ for KernelSHAP, i.e., leverage score sampling is equivalent to sampling subsets uniformly by their size.
For the $k$-additive \frontier (introduced below), we can directly compute the leverage scores using symmetry and \cref{eq:leveragescores}, although a closed-form solution remains unknown.
In practice, we observed little variation between leverage scores of order $1$ and those of higher orders, which is why we recommend using order-$1$ leverage scores.

\subsection{Construction of Interaction Frontiers}
\label{sec_main_basis}

The \frontier $\mathcal I$ determines the number of additional variables (columns) in the linear regression problem.
Its size must be balanced against the budget $m$ (rows).
Since lower-order interaction terms occur more frequently and are thus less sensitive to noise, it is natural to expand these terms first. 
To this end, we define the \textbf{$k$-additive \frontier} for $k=2,\dots,d$ as
\begin{align*}
    \mathcal{I}_{\leq k} := \{ S \subseteq D:  2\leq \vert S \vert \leq k\} \quad \text{with } \vert \mathcal I_{\leq k} \vert = \sum_{i=2}^k \binom{d}{i}.
\end{align*}
The $k$-additive \frontier includes all interactions up to order $k$ by sequentially extending the $(k-1)$-additive \frontier with $\binom{d}{k}$ sets.
It is widely used in Shapley-based interaction indices \citep{Sundararajan.2020,Tsai.2022,Bord.2023}.
In the following, we refer to PolySHAP using $\mathcal I_{\leq k}$ as \textbf{k-PolySHAP}.

\begin{corollary}
    The $k$-PolySHAP representation is equal to order-$k$ Faith-SHAP \citep{Tsai.2022}.    
\end{corollary}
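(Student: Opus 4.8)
The plan is to observe that, once both are written out explicitly, the optimization problem defining order-$k$ Faith-SHAP and the one defining the $k$-PolySHAP representation are the \emph{same} constrained weighted least-squares problem, so their (unique) minimizers coincide.

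First I would recall the definition of the order-$k$ Faithful Shapley interaction index from \citet{Tsai.2022}: it is the vector of coefficients $(\phi_T)_{1 \le |T| \le k}$ minimizing the Shapley-kernel-weighted squared reconstruction error $\sum_{\emptyset \subsetneq S \subsetneq D} \mu(S)\big(\nu(S) - \sum_{T \subseteq S,\, |T| \le k} \phi_T\big)^2$ subject to the two boundary constraints $\hat\nu_\phi(\emptyset) = \nu(\emptyset)$ and $\hat\nu_\phi(D) = \nu(D)$, where $\hat\nu_\phi(S) := \sum_{T \subseteq S,\, |T| \le k} \phi_T$. Then I would unwind this: the constraint at $\emptyset$ pins the (otherwise free) constant term to $\nu(\emptyset)$, which under the paper's centering convention $\nu(\emptyset) = 0$ simply removes it; the constraint at $D$ reads $\sum_{1 \le |T| \le k} \phi_T = \nu(D)$, i.e.\ $\langle \bm\phi, \mathbf 1 \rangle = \nu(D)$; and $\sum_{T \subseteq S} \phi_T = \sum_T \phi_T \prod_{j \in T} \mathbbm 1[j \in S]$. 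Since $D \cup \mathcal I_{\le k} = \{T \subseteq D : 1 \le |T| \le k\}$, this is exactly \cref{def_polyshap_representation} with $\mathcal I = \mathcal I_{\le k}$, provided the weight functions agree. For that I would check that the Faith-SHAP kernel weights equal $\mu(S)$ of \cref{eq:shapley_weights} up to a global positive constant (indeed $1/\binom{d-2}{|S|-1}$ differs from $\frac{d-1}{\binom{d}{|S|}\,|S|\,(d-|S|)}$ only by the factor $\frac{1}{d(d-1)}$), and a global rescaling of the weights does not change the minimizer.

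Finally, to pass from ``same program'' to ``same solution'' I would note the objective is a strictly convex quadratic on the affine constraint set — the non-degeneracy of the relevant design matrix, already used for \cref{thm_polyshap_consistency}, guarantees this — so the minimizer is unique on each side, hence equal. The only real work is the notational reconciliation in the second step: \citet{Tsai.2022} normalize the kernel weights differently and phrase the boundary conditions via $\hat\nu_\phi$ rather than via $\langle \bm\phi, \mathbf 1 \rangle$, so the main obstacle is simply building the dictionary between the two conventions; once that is done the corollary is immediate and no new estimates are needed.
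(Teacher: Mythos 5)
Your proposal is correct and follows essentially the same route as the paper, which offers no separate proof and treats the corollary as immediate from \cref{def_polyshap_representation} specialized to $\mathcal I_{\leq k}$, exactly the definition-matching you carry out (centering removes the constant term, the boundary constraint at $D$ becomes $\langle \bm\phi, \mathbf 1\rangle = \nu(D)$, and the kernels agree up to a positive constant). The only nit is your stated constant: $\frac{d-1}{\binom{d}{|S|}|S|(d-|S|)} = \frac{1}{d}\cdot\frac{1}{\binom{d-2}{|S|-1}}$, so the factor is $\frac1d$ rather than $\frac{1}{d(d-1)}$ — immaterial, since only proportionality is needed for the argmin to be unchanged.
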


A notable special case of $k$-PolySHAP is the \frontier without interactions:
1-PolySHAP, i.e., without interactions ($\mathcal I = \emptyset$), is equivalent to KernelSHAP.

We further show convergence for $k_{\text{ADD}}$-SHAP, extending Theorem 4.2 in \citet{Pelegrina.2025}.

\begin{restatable}{proposition}{kaddshap}\label{prop_equivalence_kaddshap}
    $k_{\text{ADD}}$-SHAP converges to the Shapley value for $k=1,\dots,d$.
\end{restatable}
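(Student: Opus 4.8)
The plan is to show that $k_{\text{ADD}}$-SHAP \citep{Pelegrina.2023} is merely a reparameterization of $k$-PolySHAP, so that its convergence follows immediately from \cref{thm_polyshap_consistency}. First I would recall the definition: $k_{\text{ADD}}$-SHAP fits a \emph{$k$-additive game} to $\nu$, i.e.\ a game $g$ whose Möbius (Harsanyi) coefficients $m(T)$ vanish for $\vert T\vert > k$, by minimizing the Shapley-kernel-weighted squared residual $\sum_{S\subseteq D}\mu(S)(\nu(S)-g(S))^2$ subject to the efficiency constraint $g(D)=\sum_{\vert T\vert\le k}m(T)=\nu(D)$, and then returns the Shapley values of $g$. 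The classical identity that a $k$-additive game has Shapley values $\phi_i=\sum_{T\ni i}m(T)/\vert T\vert$ shows that this last step is exactly the map in \cref{eq:PolySHAP2SV}.

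Next I would make the identification explicit. In its Möbius basis, $g(S)=\sum_{1\le\vert T\vert\le k}m(T)\,\mathbbm{1}[T\subseteq S]$, and since $\mathbbm{1}[T\subseteq S]=\prod_{j\in T}\mathbbm{1}[j\in S]$ (and $\nu(\emptyset)=0$ lets us drop $T=\emptyset$), this is precisely the polynomial model of \cref{def_polyshap_representation} with $\mathcal{I}=\mathcal{I}_{\leq k}$ and coefficient vector $\bm\phi=(m(T))_{T\in D\cup\mathcal{I}_{\leq k}}$. Hence the two weighted least-squares problems have the same design matrix (up to a reordering of columns) and the same target, and the constraints coincide because $\sum_{\vert T\vert\le k}m(T)=\langle\bm\phi,\mathbf{1}\rangle$. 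Therefore the full-coalition $k_{\text{ADD}}$-SHAP problem is identical to the $k$-PolySHAP representation problem of \cref{def_polyshap_representation}, and likewise their sampled versions when the same subsets and weights are used; combined with the fact that the output maps agree term for term, \cref{thm_polyshap_consistency} gives that applying $k_{\text{ADD}}$-SHAP's readout to the full-coalition optimum returns exactly $\bm\phi^{\text{SV}}[\nu]$. Consistency then follows by letting the sampling budget $m\to 2^d$, so that the sampled solution converges to the full-coalition optimum $\bm\phi^{\mathcal{I}_{\leq k}}[\nu]$. The case $k=1$ is just KernelSHAP.

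The conceptual point worth emphasizing is that \cref{thm_polyshap_consistency} is doing the real work: truncating the Möbius expansion at order $k$ biases the fitted \emph{game}, yet not the Shapley values read off from it, because the Möbius-to-Shapley map is unbiased for \emph{every} truncation order — this is why $k_{\text{ADD}}$-SHAP converges for all $k$, not only $k\in\{2,3\}$ as in \citet{Pelegrina.2025}. The main obstacle in the write-up is reconciling the implementation conventions of \citet{Pelegrina.2023,Pelegrina.2025} with the clean formulation above: checking that their objective uses the Shapley kernel weights up to a positive global constant (so the minimizer is unchanged), that the efficiency constraint is imposed in a form equivalent to $\langle\bm\phi,\mathbf{1}\rangle=\nu(D)$, and that no extra normalization of the Möbius parameters alters the feasible set. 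These are exactly the cosmetic discrepancies to be collected in \cref{appx_sec_proofs_kaddshap}; for each I would argue that it either leaves the minimizer invariant or washes out as $m\to 2^d$. A secondary, routine point is the limiting argument itself — that with inverse-probability weights $1/p(S)$ the sampled normal equations converge to the full normal equations once every subset is eventually sampled — which is the same mechanism underlying PolySHAP's own consistency.
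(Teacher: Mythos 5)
Your high-level skeleton matches the paper's intent — reduce $k_{\text{ADD}}$-SHAP to the $k$-PolySHAP representation and invoke \cref{thm_polyshap_consistency} — but your central identification is wrong, and the step you wave off as cosmetic is exactly where the paper's proof does its real work. $k_{\text{ADD}}$-SHAP is \emph{not} parameterized in the Möbius basis: following \citet{Pelegrina.2023,Pelegrina.2025}, it fits coefficients $I_T$ in the interaction representation $\nu(S)\approx\sum_{\vert T\vert\le k}\gamma^{\vert T\vert}_{\vert S\cap T\vert}I_T$, where $\gamma^{t}_{r}=\sum_{\ell=0}^{r}\binom{r}{\ell}B_{t-\ell}$ involves Bernoulli numbers, the columns include $T=\emptyset$, and the design entries are dense and non-binary. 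So your claim that ``the two weighted least-squares problems have the same design matrix (up to a reordering of columns)'' is false, as is your description of the readout: $k_{\text{ADD}}$-SHAP returns the order-one coefficients $I_i$ directly rather than applying the map in \cref{eq:PolySHAP2SV}, and its efficiency constraint is stated as $\nu(D)-\nu(\emptyset)=\sum_{\vert T\vert\le k}\bigl(\gamma^{\vert T\vert}_{\vert T\vert}-\gamma^{\vert T\vert}_{0}\bigr)I_T$, not as $\langle\bm\phi,\mathbf{1}\rangle=\nu(D)$.

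What the paper actually proves in \cref{appx_sec_proofs_kaddshap} is a change-of-basis equivalence: using the conversion formulas $I_{\text{Sh}}(S)=\sum_{T\supseteq S}\tfrac{1}{\vert T\vert-\vert S\vert+1}m(T)$ and $m(S)=\sum_{T\supseteq S}B_{\vert T\vert-\vert S\vert}I_{\text{Sh}}(T)$, one order-$k$ truncation vanishes exactly when the other does, so the $\gamma$-parameterized class and the truncated Möbius class coincide (\cref{appx_lemma_equivalent_kaddshap}); the Bernoulli recursion gives $\gamma^{s}_{s}-\gamma^{s}_{0}=\mathbbm{1}[s=1]$, which converts the $k_{\text{ADD}}$ constraint into $\sum_{i\in D}I_i=\nu(D)$ and matches the PolySHAP constraint on the fitted game; and the free empty-set coefficient $I_\emptyset$ has to be argued away as an additive shift that leaves Shapley values unchanged. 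Only after these steps do the fitted games coincide for the full-coalition problem, so that $I_i$, being the Shapley value of the fitted game, equals $\phi^{\text{SV}}_i[\nu]$ by \cref{thm_polyshap_consistency}. Your checklist (kernel weights up to a constant, constraint form, normalization) omits the basis change, the Bernoulli-number constraint conversion, and the empty-set coefficient — none of which is cosmetic; without them the identification fails. Also note the paper explicitly observes that the two parameterizations \emph{diverge} under sampling, so your suggestion that any residual discrepancy ``washes out as $m\to 2^d$'' is only valid for the exact limit, which is all the proposition asserts.
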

$k_{\text{ADD}}$-SHAP is linked to $k$-PolySHAP, but we recommend PolySHAP in practice, see \cref{appx_sec_proofs_kaddshap}.

\textbf{Partial Interaction Frontiers.}
In high dimensions, the $k$-additive \frontier grows combinatorially with $\binom{d}{k}$. 
With a limited evaluation budget $m$, including all interaction terms of a given order may yield an underdetermined least-squares system. 
To address this, we introduce the \textbf{partial \frontier} $\mathcal I_\ell$ with exactly $\ell$ elements:
\begin{align*}
\mathcal I_\ell := \mathcal I_{\leq k_\ell} \cup \mathcal R,
\quad \text{with } \vert \mathcal I_{\ell} \vert = \ell,
\end{align*}
where $k_\ell$ is the largest order such that $\vert \mathcal I_{\leq k_\ell}\vert \leq \ell$, and $\mathcal R \subseteq \mathcal I_{\leq k_{\ell+1}}\setminus \mathcal I_{\leq k_\ell}$ denotes a set of $\ell - \vert\mathcal I_{\leq k_\ell}\vert$ interaction terms of order $k_\ell+1$.
In words, $\mathcal I_\ell$ sequentially covers the $k$-additive \frontier up to $k_\ell$, and supplements it with a selected subset of the subsequent higher-order interactions.
In our experiments, we demonstrate that partially including higher-order interactions improves approximation quality, whereas using the full $k$-additive \frontier provides the largest gains.

\section{Paired KernelSHAP is Paired 2-PolySHAP}
A common heuristic when estimating Shapley values is to sample subsets in pairs $S$ and $D \setminus S$.
A kind of antithetic sampling \citep{Glasserman.2004}, paired sampling substantially improves the approximation of estimators \citep{Covert.2021,Mitchell.2022,Olsen.2024b}.
Adding higher-order interactions to PolySHAP improves Shapley value estimates, provided we have enough samples (see \cref{fig_exp_standard}):
$3$-PolySHAP outperforms $2$-PolySHAP, which outperforms KernelSHAP ($1$-PolySHAP).
Surprisingly, paired sampling partially collapses this hierarchy (see \cref{fig_exp_paired_vs_standard}).

\begin{theorem}[Paired KernelSHAP is Paired 2-PolySHAP]\label{thm_paired_shap}
    Suppose that subsets are sampled in pairs i.e., if $S$ is sampled then so is its complement $D\setminus S$, and, the matrix $\tilde{\mathbf{X}}$ has full column rank for \frontier $D$ and $\mathcal{I}_{\leq 2}$.
    Then 
    \begin{align*}
        \hat{\bm \phi}^\text{SV}
        = \textsc{PolySHAPtoSV}(
        \hat{\bm \phi}^{\mathcal{I}_{\leq 2}})
    \end{align*}
    In words, Shapley values approximated by 2-PolySHAP are precisely the KernelSHAP estimates.
\end{theorem}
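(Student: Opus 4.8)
The plan is to pass to $\pm 1$ ``sign'' coordinates, in which paired sampling becomes an even/odd symmetry, and then to show that the \emph{odd} part of the degree-$2$ regression is, after the change of variables of \cref{thm_polyshap_consistency}, literally the paired KernelSHAP regression.

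First I would introduce, for each sampled subset $S$, the vector $\vs^S \in \{-1,+1\}^d$ with $s^S_i = 2\,\mathbbm 1[i\in S]-1$, so that complementation $S\mapsto D\setminus S$ is exactly negation $\vs \mapsto -\vs$. Since $\mu(S)=\mu(D\setminus S)$ always, and $p(S)=p(D\setminus S)$ for every standard sampling distribution (Shapley-weight or leverage-score, using that leverage scores depend only on $|S|$ and are symmetric under $|S|\mapsto d-|S|$), a paired sample $\{S_\ell, D\setminus S_\ell\}$ contributes two residual terms with a common weight $w_\ell := \mu(S_\ell)/p(S_\ell)$. Applying the identity $a^2+b^2 = 2\big[(\tfrac{a+b}{2})^2+(\tfrac{a-b}{2})^2\big]$ to the residuals at $S_\ell$ and $D\setminus S_\ell$, the paired least-squares objective in \cref{eq_approx_faithgax} splits additively into an ``even'' part (built from averages of paired residuals) and an ``odd'' part (built from half-differences), and this holds for both the KernelSHAP ($\mathcal I=\emptyset$) and the $2$-PolySHAP models simultaneously.

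Next I would expand both models in the $\vs$-basis via $\mathbbm 1[i\in S] = \tfrac{1+s_i}{2}$ and $\mathbbm 1[\{i,j\}\subseteq S] = \tfrac{(1+s_i)(1+s_j)}{4}$. For KernelSHAP, the even part of the model is the forced constant $\tfrac12\nu(D)$ — it carries no free parameter once the efficiency constraint $\sum_i\phi_i=\nu(D)$ is imposed — and the odd part is $\tfrac12\sum_i\phi_i s_i$; hence the paired KernelSHAP problem is, up to the fixed even term, the minimization of $\sum_\ell 2w_\ell\big(\nu^{\mathrm{odd}}(S_\ell) - \tfrac12\sum_i\phi_i s^{S_\ell}_i\big)^2$ over $\{\sum_i\phi_i=\nu(D)\}$, where $\nu^{\mathrm{odd}}(S)=\tfrac12(\nu(S)-\nu(D\setminus S))$. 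For $2$-PolySHAP the odd part of the model is $\sum_i s_i\big(\tfrac{\phi_i}{2}+\tfrac14\sum_{j\neq i}\phi_{ij}\big)$; recognizing the bracket as $\tfrac12$ times the map of \cref{thm_polyshap_consistency} (every $T\in\mathcal I_{\leq 2}$ has $|T|=2$), this equals $\tfrac12\sum_i \hat\phi^{\mathrm{SV}}_i s_i$ with $\hat\phi^{\mathrm{SV}}_i := \phi_i+\tfrac12\sum_{j\neq i}\phi_{ij}$, and the efficiency constraint $\sum_i\phi_i+\sum_{i<j}\phi_{ij}=\nu(D)$ translates precisely into $\sum_i\hat\phi^{\mathrm{SV}}_i=\nu(D)$. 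The even part of the $2$-PolySHAP model, after substituting $\sum_i\phi_i=\nu(D)-\sum_{i<j}\phi_{ij}$, becomes a function of the $\{\phi_{ij}\}$ alone.

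The core step is then the change of variables $(\phi_i,\phi_{ij})\mapsto(\hat\phi^{\mathrm{SV}}_i,\phi_{ij})$, which is a bijection from $\{\bm\phi\in\mathbb R^{d'}:\langle\bm\phi,\mathbf 1\rangle=\nu(D)\}$ onto $\{(\bm\psi,\bm\theta): \sum_i\psi_i=\nu(D)\}$. In these coordinates the paired $2$-PolySHAP objective decouples into (even part: a function of $\bm\theta$ alone) $+$ (odd part: a function of $\bm\psi$ alone, constrained by $\sum_i\psi_i=\nu(D)$), so its minimizer has $\bm\psi$-component equal to the minimizer of the odd subproblem over $\{\sum_i\psi_i=\nu(D)\}$ — and that subproblem is identical, term by term, to the (also purely-odd) paired KernelSHAP objective in the variable $\bm\psi$. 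The full-column-rank hypothesis on $\tilde{\mathbf X}$ for $D\cup\mathcal I_{\leq 2}$ guarantees uniqueness of the $2$-PolySHAP minimizer, of the KernelSHAP minimizer, and of the reduced odd subproblem's minimizer, so $\textsc{PolySHAPtoSV}(\hat{\bm\phi}^{\mathcal I_{\leq 2}})=\bm\psi^\star=\hat{\bm\phi}^{\mathrm{SV}}$, as claimed. The step I expect to be the main obstacle is the bookkeeping in the last two paragraphs: verifying that the even/odd split introduces no cross-terms coupling $\bm\psi$ and $\bm\theta$, that the efficiency constraint maps cleanly under the change of variables, and that full column rank of $\tilde{\mathbf X}$ indeed descends to uniqueness of the solution of the reduced odd subproblem.
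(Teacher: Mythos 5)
Your proposal is correct, but it proves the theorem by a genuinely different route than the paper. The paper (see the restatement as Theorem \ref{thm_paired_shap2}) works in closed form: it uses the Projection Lemma (Lemma \ref{lemma:projection}) to reduce the claim to an explicit matrix identity, namely that $\mathbf{P}_d \tilde{\mathbf{X}}_1^\top\tilde{\mathbf{X}}_1 \mathbf{M}_{2\to1} = \mathbf{P}_d \tilde{\mathbf{X}}_1^\top\tilde{\mathbf{X}}_{\leq 2}$ together with consistency of the all-ones components, and then invokes pairing twice to show that certain sums (the size-dependent cross term, and the $|R\cap S|=1$ term) are constant vectors annihilated by $\mathbf{P}_d$. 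You instead pass to $\pm1$ coordinates, observe that pairing makes the weighted empirical measure symmetric under negation so the objective splits orthogonally into even and odd parts, note that the odd part of any degree-$\le 2$ model is linear with $s_i$-coefficient exactly $\tfrac12\bigl(\phi_i+\tfrac12\sum_{j\ne i}\phi_{ij}\bigr)$ --- i.e.\ $\tfrac12$ times the \textsc{PolySHAPtoSV} map --- and then decouple via the change of variables $(\phi_i,\phi_{ij})\mapsto(\psi_i,\theta_{ij})$, under which the efficiency constraint maps exactly to $\sum_i\psi_i=\nu(D)$ and the even residual depends on $\theta$ alone. I checked the bookkeeping you flagged as the main obstacle and it goes through: the even/odd split produces no $\psi$--$\theta$ cross terms, the constraint maps cleanly, and the uniqueness worry for the reduced odd subproblem is automatic, since on the constraint set the paired KernelSHAP objective equals a constant plus that odd objective, and full column rank of $\tilde{\mathbf{X}}_{\leq 2}$ implies full column rank of $\tilde{\mathbf{X}}_1$, hence strict convexity and a unique minimizer there. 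Your approach is more conceptual: it explains \emph{why} pairing collapses second-order terms (the Shapley-relevant information lives entirely in the odd part, which degree-2 terms cannot enrich) and naturally suggests the odd-$k$ conjecture discussed after the theorem, whereas the paper's argument is more computational but reuses the projection lemma shared with the consistency proof and yields the explicit formulas. One caveat common to both arguments: you explicitly assume $p(S)=p(D\setminus S)$ so paired complements share the weight $\mu(S)/p(S)$; the paper's proof uses the same symmetry implicitly (it asserts the coefficient $\mu_S/p_S$ is equal across a pair), so this is a hypothesis you have made visible rather than a gap relative to the paper.
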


We prove \cref{thm_paired_shap} by explicitly building the approximate solutions of KernelSHAP and 2-PolySHAP.
Of particular help is a new technical projection lemma that we also use in the proof of \cref{thm_polyshap_consistency}.
See \cref{appx_sec_proofs} for the details.

\textbf{Generalizing Prior Work.}
\cite{Mayer.2025} recently showed that paired KernelSHAP exactly recovers the Shapley values of games with interactions of at most size 2.
This follows immediately from \cref{thm_paired_shap}, because 2-PolySHAP will precisely fit a game with order-2 interactions and paired KernelSHAP will return the same solution.
However, \cref{thm_paired_shap} is far more general because it explains why paired sampling performs so well for \textit{all} games, not just a restricted class.

\textbf{Higher Dimensional Extensions.}
A natural question is whether similar results hold for higher-order interactions.
Suppose $k$ is an odd number,
we find empirically that paired $(k+1)$-PolySHAP returns the same approximate Shapley values as paired $k$-PolySHAP.
We conjecture that this pattern holds for all odd $k$ such that $1 \leq k < d$.
However, it is not obvious how to adapt our proof of Theorem \ref{thm_paired_shap}, since we would need the explicit mapping of $k+1$-PolySHAP representations to $k$-PolySHAP representations (this is clear when $k=1$, but not so for higher dimensions).

\begin{figure}
    \centering
    \includegraphics[width=.5\linewidth]{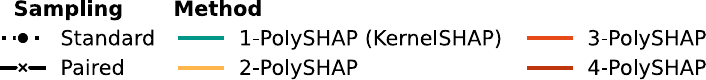}
    \\
    \begin{minipage}{0.245\linewidth}
        \includegraphics[width=\linewidth]{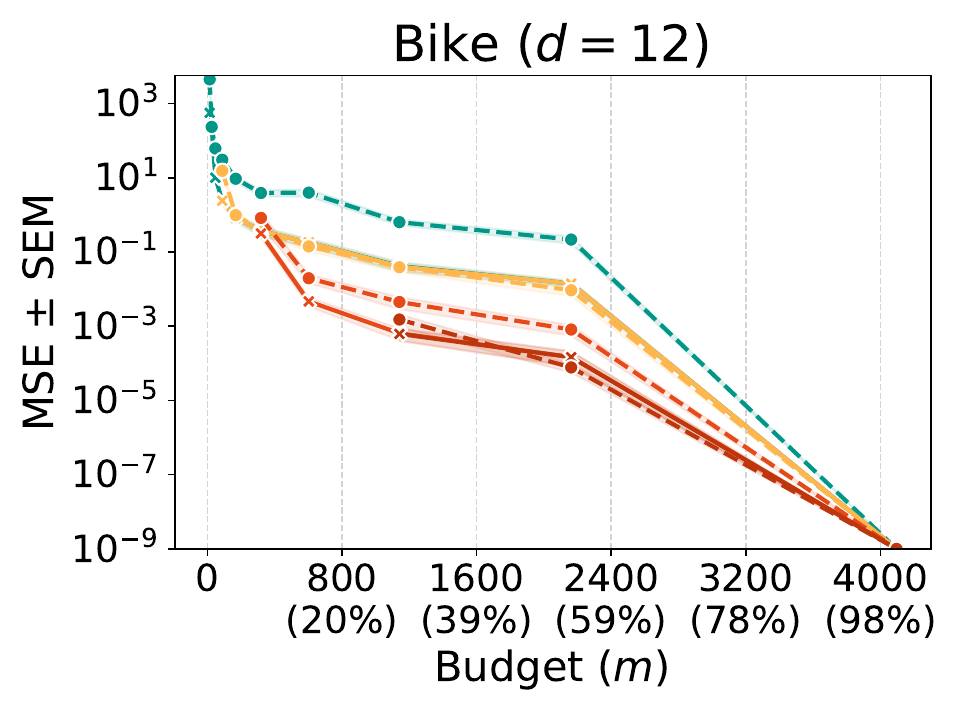}
    \end{minipage}
        \hfill
    \begin{minipage}{0.245\linewidth}
        \includegraphics[width=\linewidth]{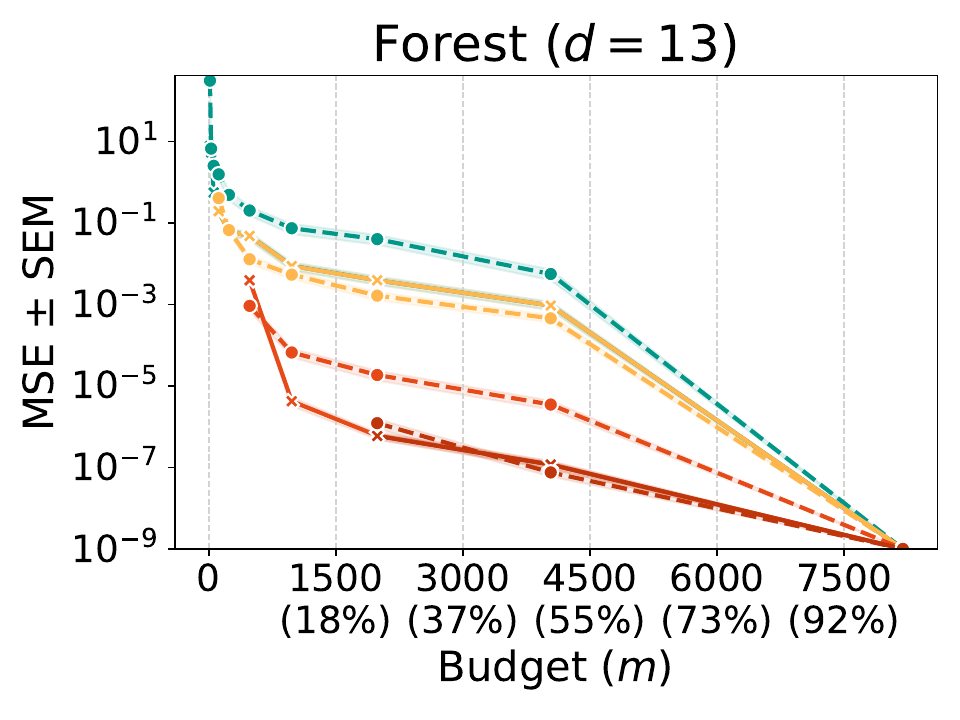}
    \end{minipage}
        \hfill
    \begin{minipage}{0.245\linewidth}
        \includegraphics[width=\linewidth]{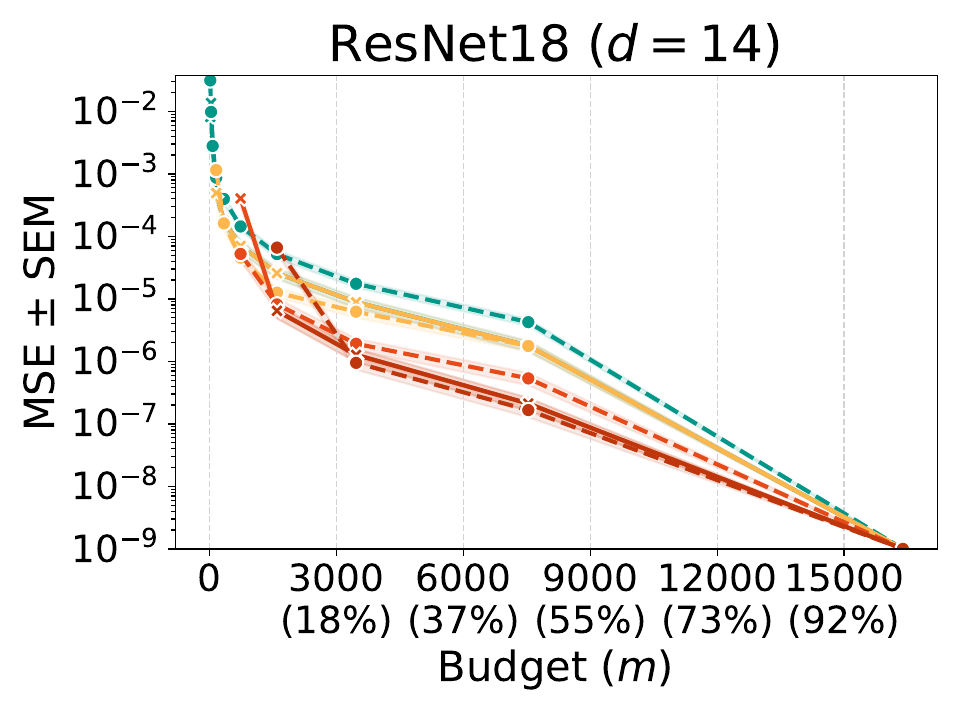}
    \end{minipage}
        \hfill
    \begin{minipage}{0.245\linewidth}
        \includegraphics[width=\linewidth]{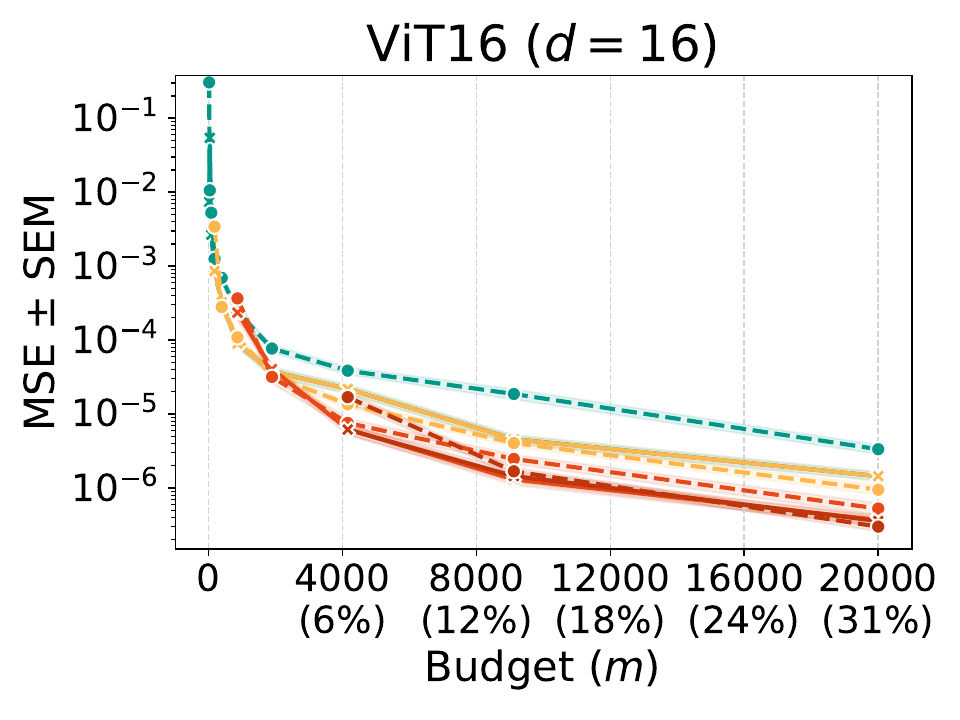}
    \end{minipage}
        \hfill
    \begin{minipage}{0.245\linewidth}
        \includegraphics[width=\linewidth]{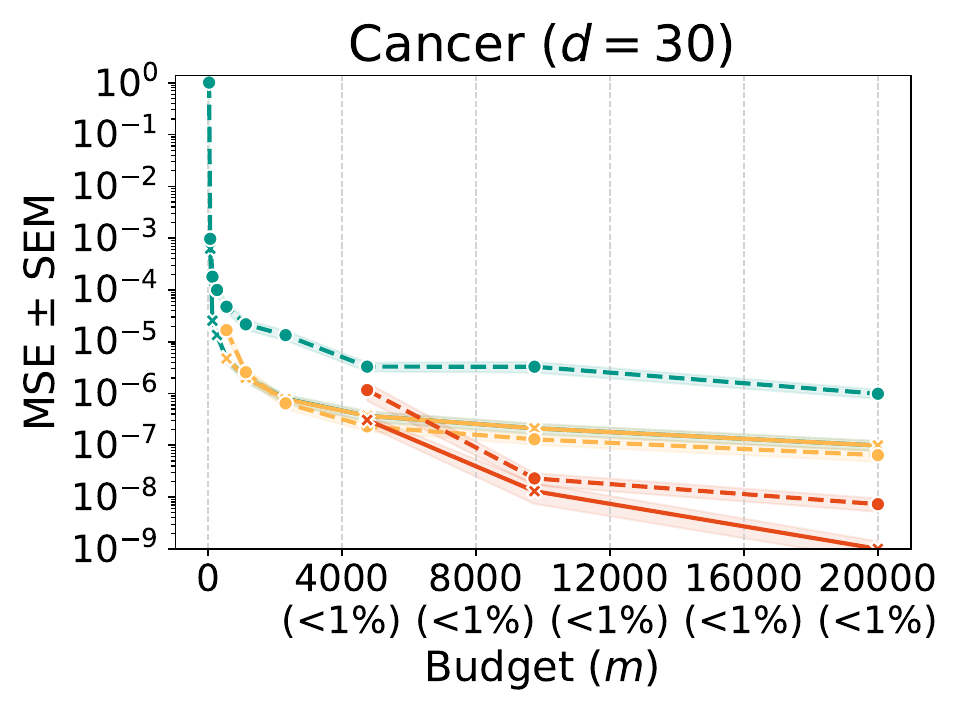}
    \end{minipage}
        \hfill
    \begin{minipage}{0.245\linewidth}
        \includegraphics[width=\linewidth]{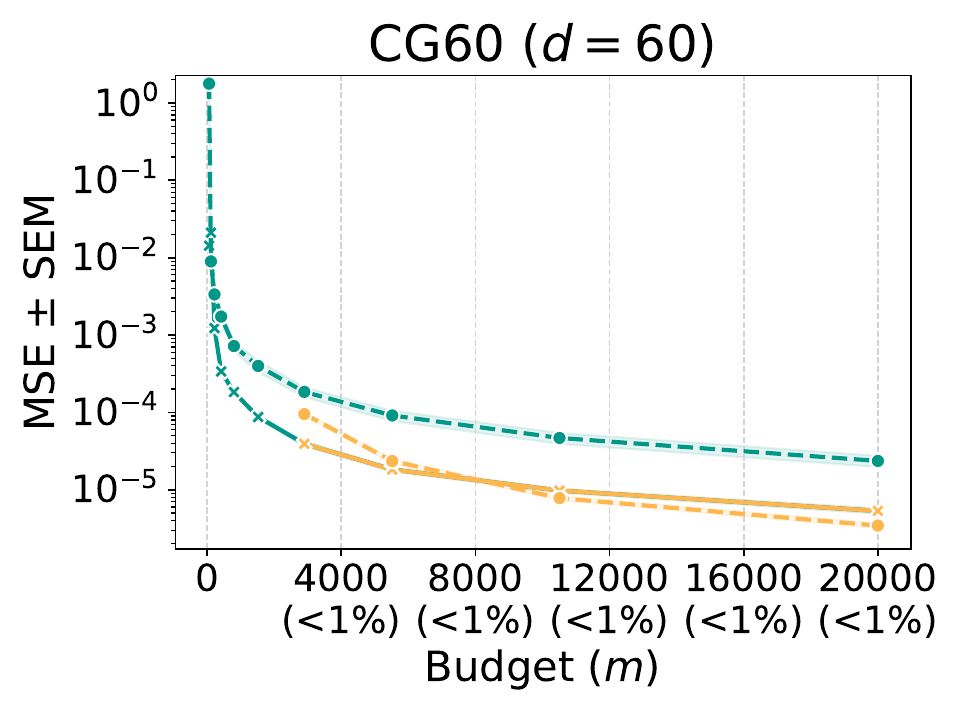}
    \end{minipage}
        \hfill
    \begin{minipage}{0.245\linewidth}
        \includegraphics[width=\linewidth]{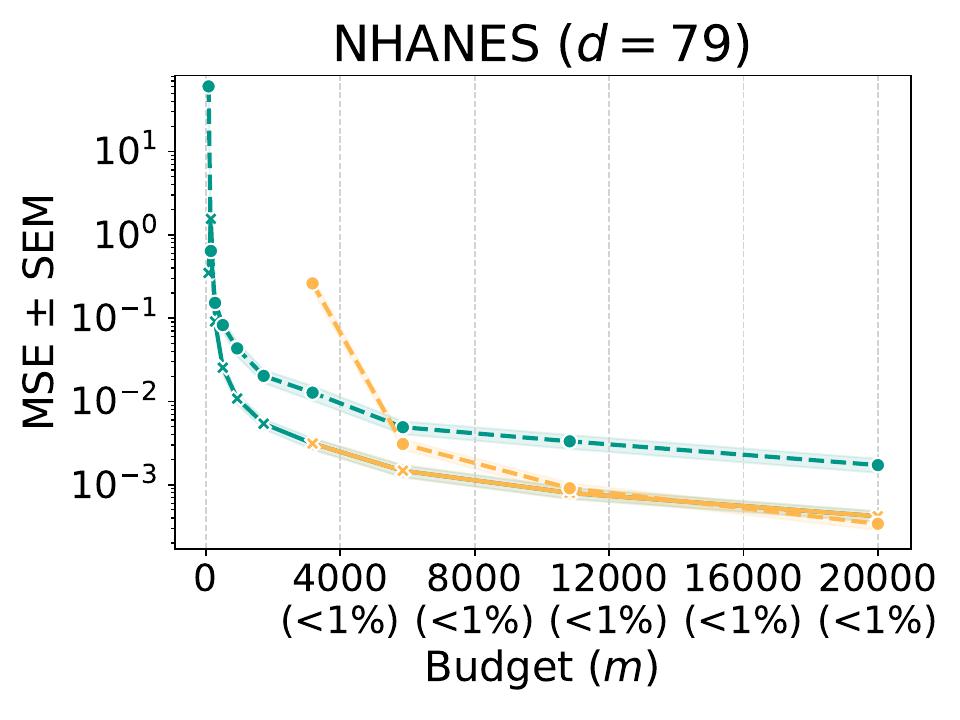}
    \end{minipage}
        \hfill
    \begin{minipage}{0.245\linewidth}
        \includegraphics[width=\linewidth]{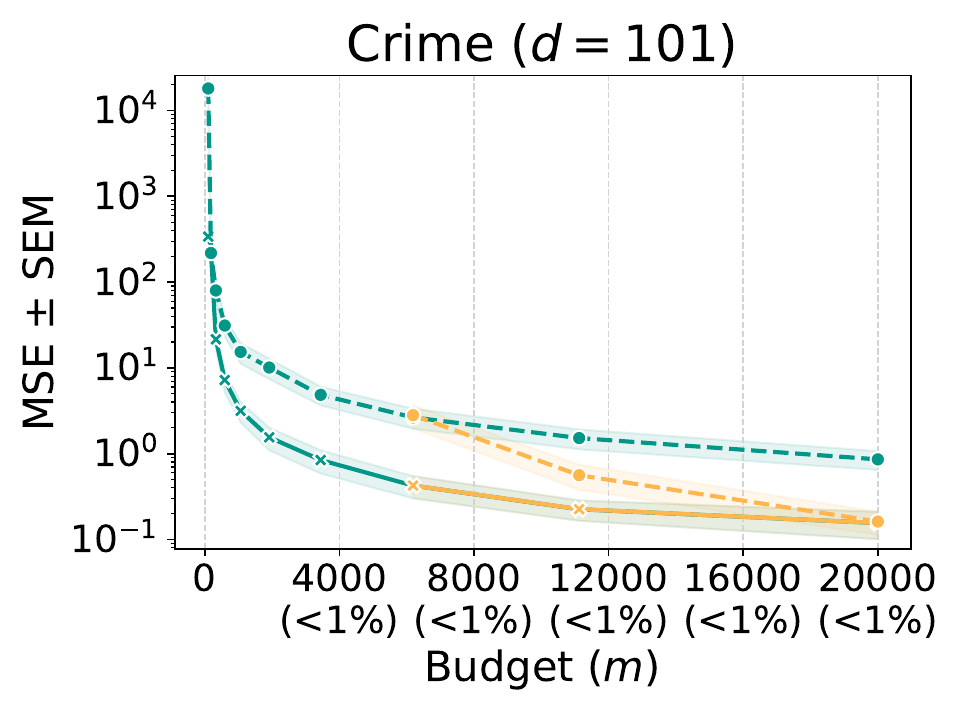}
    \end{minipage}
    \caption{Approximation quality measured by MSE ($\pm$ SEM) for standard (dotted) and paired (solid) sampling. With paired sampling, KernelSHAP achieves the same performance as $2$-PolySHAP.}
    \label{fig_exp_paired_vs_standard}
\end{figure}

\section{Experiments}\label{sec_experiments}

We empirically validate PolySHAP and approximate Shapley values on $15$ local explanation games across $30$ randomly selected instances, see \cref{tab_games}.
We evaluate all methods with $m$ samples ranging from $d+1$ to $\min(2^d,20000)$, and compare PolySHAP against \emph{Permutation Sampling} \citep{Castro.2009}, \emph{SVARM} \citep{Kolpaczki.2024a}, \emph{MSR} \citep{Fumagalli.2023,Wang.2023}, \emph{Unbiased KernelSHAP} \citep{Covert.2021}, \emph{RegressionMSR} with XGBoost \citep{Witter.2025}, and \emph{KernelSHAP} ($1$-PolySHAP) with leverage score sampling \citep{Lundberg.2017,Musco.2025}.

For tabular datasets, we trained random forests, while for image classification we used a \texttt{ResNet18} \citep{He.2018} with $14$ superpixels and vision transformers with 3$\times$3 (ViT9) and 4$\times$4 (ViT16) super-patches on ImageNet \citep{Deng.2009}, and CIFAR-10 \citep{Krizhevsky.2009}.
For language modeling, we used a fine-tuned \texttt{DistilBert} \citep{Sanh.2019} to predict sentiment on the IMDB dataset \citep{Maas.2011} with review excerpts of length $14$.
For tabular datasets, the games were defined via path-dependent feature perturbation, allowing ground-truth Shapley values to be obtained from TreeSHAP \citep{Lundberg.2020}.
For all other datasets, we used baseline imputation and exhaustive  Shapley value computation.
As evaluation metrics, we report \emph{mean-squared error (MSE)}, top-$5$ precision (\emph{Precision@5}), and \emph{Spearman correlation} with \emph{standard error of the mean (SEM)}.
Code is available at \url{https://github.com/FFmgll/PolySHAP}, and additional details and results, including a runtime analysis, are provided in \cref{appx_sec_exp_details}. 

\begin{wraptable}{r}{0.35\textwidth}
\caption{Explanation games.}\label{tab_games}
\centering
\resizebox{\linewidth}{!}{
\begin{tabular}{llll}
\textbf{ID} & \textbf{$d$} & \textbf{Domain} \\
\hline
Housing & 8  & tabular \\
ViT9 & 9 & image \\ 
Bike & 12 & tabular\\
Forest & 13 & tabular\\
Adult & 14 & tabular\\
ResNet18 & 14 & image \\
DistilBERT & 14 & language \\
Estate & 15 & tabular \\
ViT16 & 16 & image \\
CIFAR10 & 16 & image \\
Cancer & 30 & tabular\\
CG60 & 60 & synthetic\\
IL60 & 60 & synthetic\\
NHANES & 79 & tabular \\
Crime & 101 & tabular \\
\hline
\end{tabular}
}
\end{wraptable}

\textbf{PolySHAP Variants.}
For comparability across methods, we sample subsets using order-$1$ leverage scores, i.e., uniformly over subset sizes.
We further adopt sampling without replacement and distinguish between standard and paired subset sampling.
We apply the \emph{border trick} \citep{Fumagalli.2023}, replacing random sampling with exhaustive enumeration of sizes when the expected samples exceed the number of subsets.
We use \emph{$k$-PolySHAP} with $k \in \{1,2,3,4\}$, and additionally the partial \frontiers that cover $50\%$ of all $k$-order interactions, denoted by \emph{k-PolySHAP (50\%)}.
For high-dimensional settings, we introduce PolySHAP (log) that adds $d\log(\binom{d}{3})$ order-3 interactions.

\textbf{Higher-order Interactions Improve Approximation.}
\cref{fig_exp_standard} reports the MSE with SEM for selected explanation games and standard sampling. Across different games, we observe that incorporating higher-order interactions in PolySHAP consistently improves approximation quality. 
However, higher-order PolySHAP requires a larger sampling budget, and hence performance is only plotted for $m \geq d'$.
Nevertheless, $2$-PolySHAP, and even partial interaction inclusion (e.g., $2$-PolySHAP at 50\%), still yield notable improvements in approximation accuracy.

\textbf{Paired KernelSHAP is 2-PolySHAP.} 
As shown in \cref{thm_paired_shap}, under paired sampling, KernelSHAP and $2$-PolySHAP are equivalent indicated by the overlapping lines.
We confirm this empirically in \cref{fig_exp_paired_vs_standard}.
However, there is an important distinction: $2$-PolySHAP requires more budget, whereas KernelSHAP can be computed already with $d+1$ samples.
Lastly, we observe a similar pattern for $3$-PolySHAP: Under paired sampling $3$-PolySHAP substantially improves its approximation quality and is equivalent to $4$-PolySHAP.

\textbf{Practical Benefits of PolySHAP.} 
In practice, we adopt paired sampling and benchmark PolySHAP against all baselines in \cref{fig_exp_paired_baselines} and \cref{appx_fig_exp_paired_baselines_mse} in \cref{appx_sec_mse}. Because of our paired sampling result, the practical benefits of PolySHAP become apparent only when order-$3$ interactions are included. In low-dimensional settings, the $3$-PolySHAP yields the best performance on Housing, Adult, Estate, Forest, and Cancer datasets (see e.g., \cref{fig_exp_paired_baselines} and \cref{appx_fig_exp_paired_baselines_mse}). In budget-restricted cases, partially incorporating order-$3$ interactions already provides substantial gains, cf. $3$-PolySHAP (50\%) and $3$-PolySHAP (log). In high-dimensional settings ($d\geq 60$), however, only a small number of order-$3$ interactions can be added, resulting in more modest improvements. Among all baselines, only RegressionMSR achieves comparable performance, although its performance depends strongly on XGBoost, as indicated by its poor results on CG60. Moreover, RegressionMSR has an inherent advantage since all tabular games rely on tree-based models.

\begin{figure}
    \centering
    \includegraphics[width=.85\linewidth]{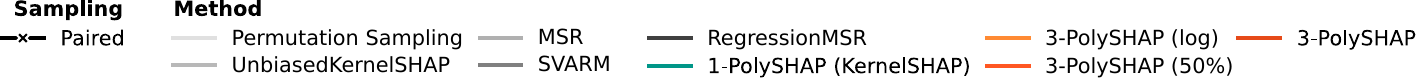}
    \\
    \begin{minipage}{0.245\linewidth}
        \includegraphics[width=\linewidth]{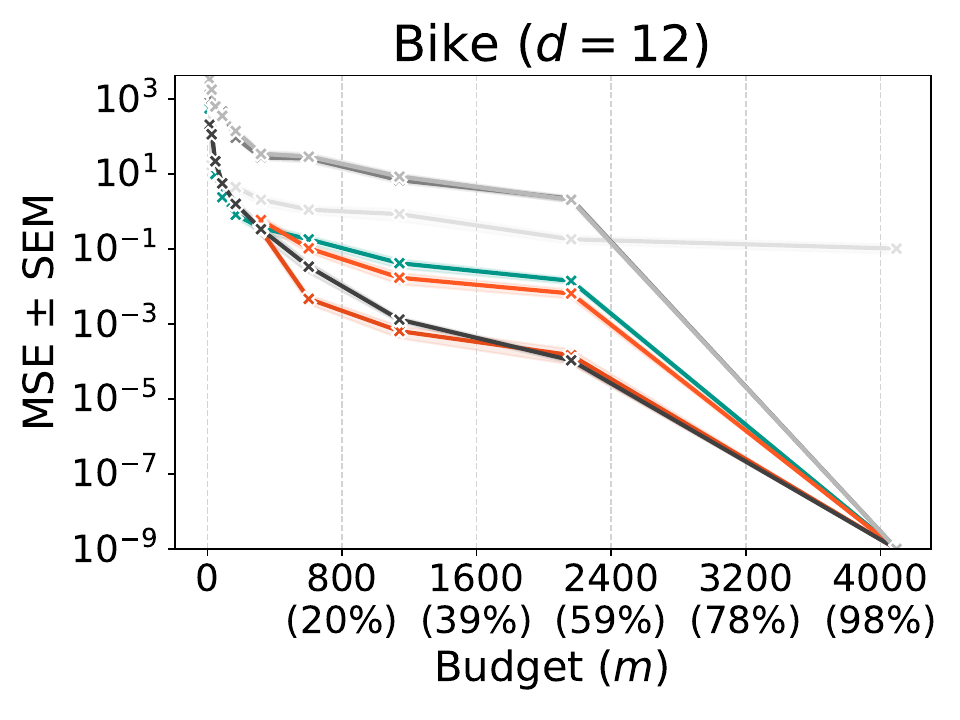}
    \end{minipage}
        \hfill
    \begin{minipage}{0.245\linewidth}
        \includegraphics[width=\linewidth]{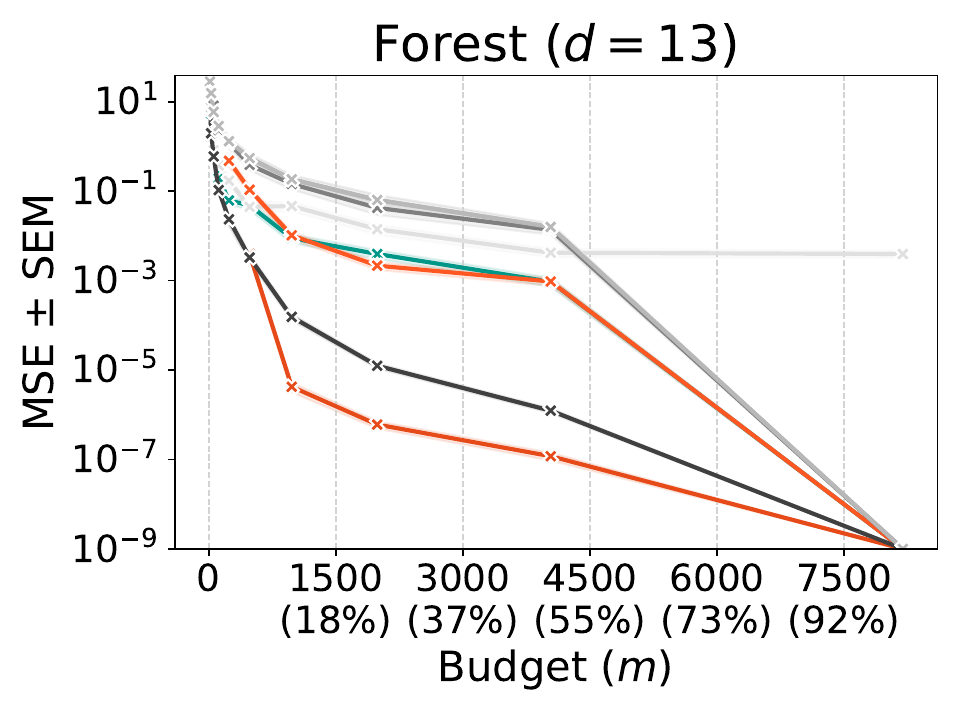}
    \end{minipage}
        \hfill
    \begin{minipage}{0.245\linewidth}
        \includegraphics[width=\linewidth]{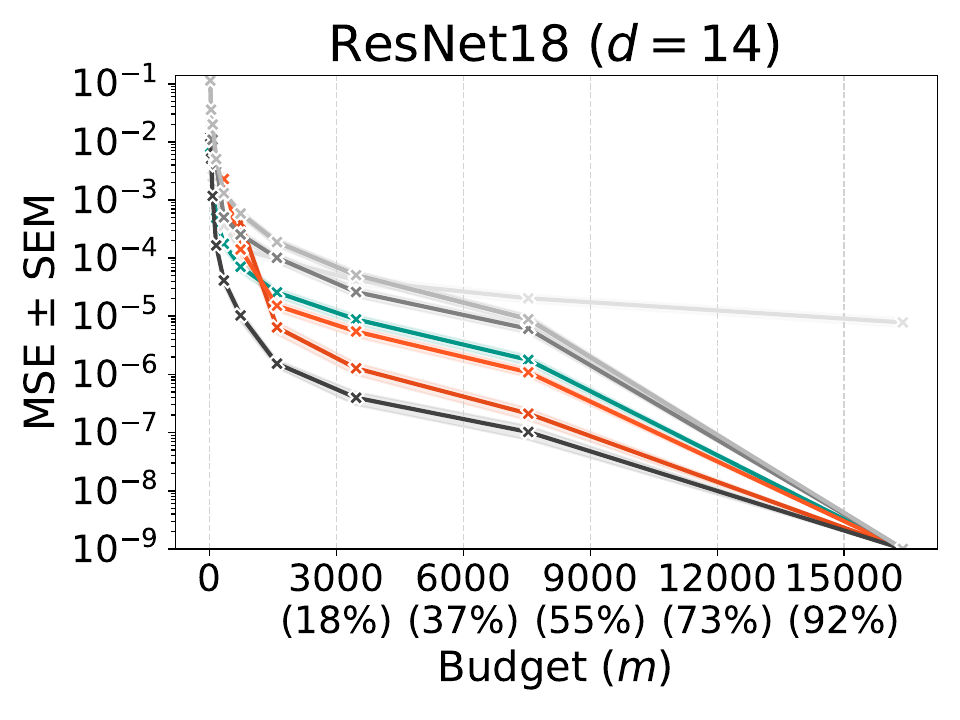}
    \end{minipage}
        \hfill
    \begin{minipage}{0.245\linewidth}
        \includegraphics[width=\linewidth]{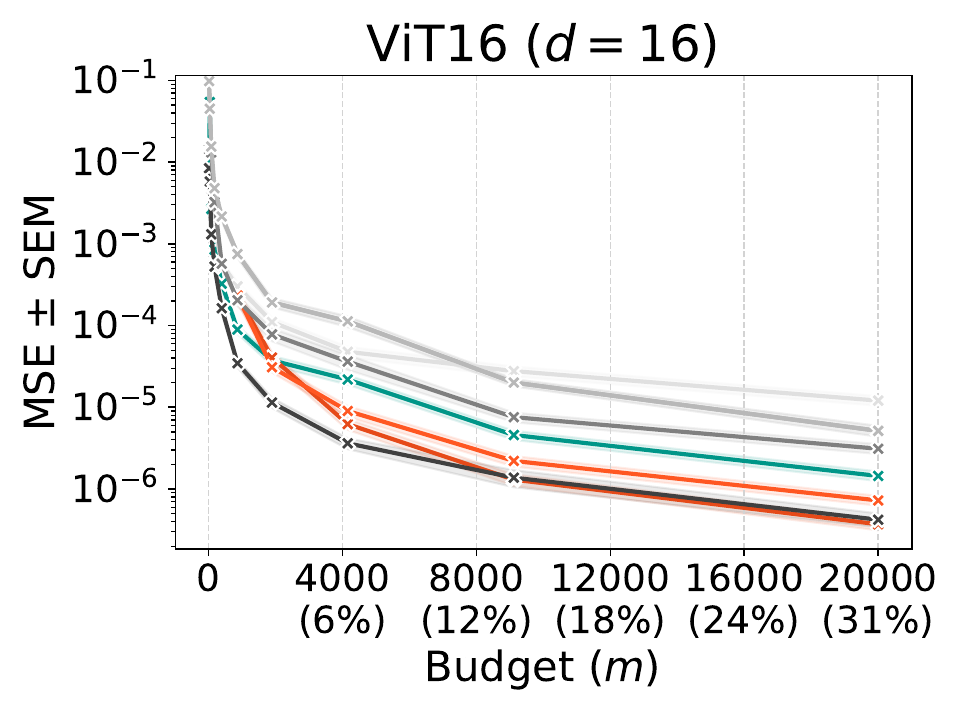}
    \end{minipage}
        \hfill
    \begin{minipage}{0.245\linewidth}
        \includegraphics[width=\linewidth]{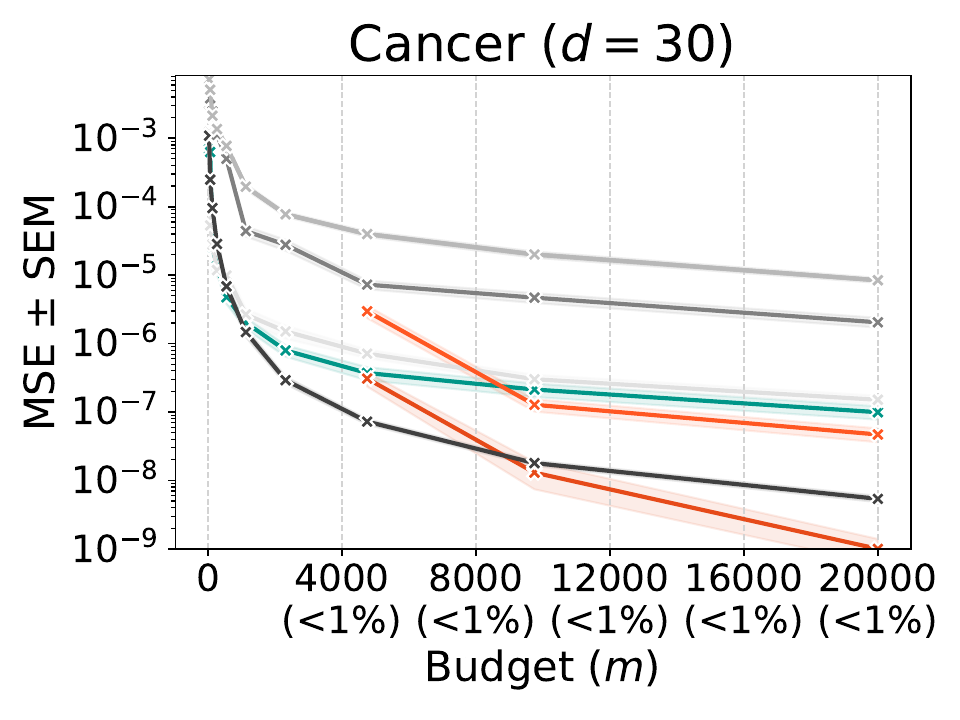}
    \end{minipage}
        \hfill
    \begin{minipage}{0.245\linewidth}
        \includegraphics[width=\linewidth]{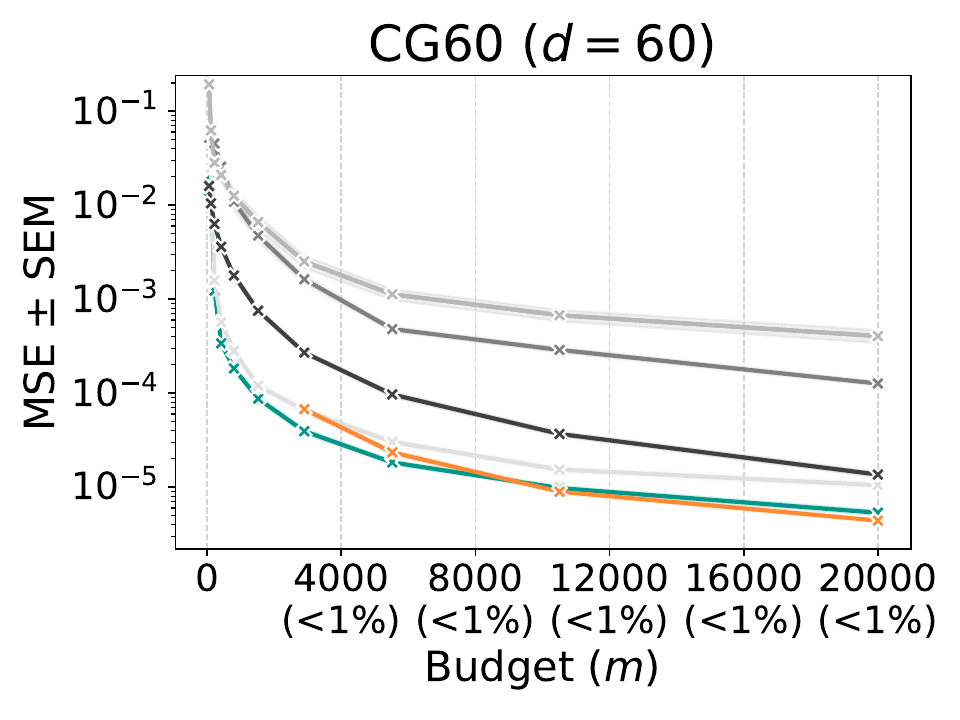}
    \end{minipage}
        \hfill
    \begin{minipage}{0.245\linewidth}
        \includegraphics[width=\linewidth]{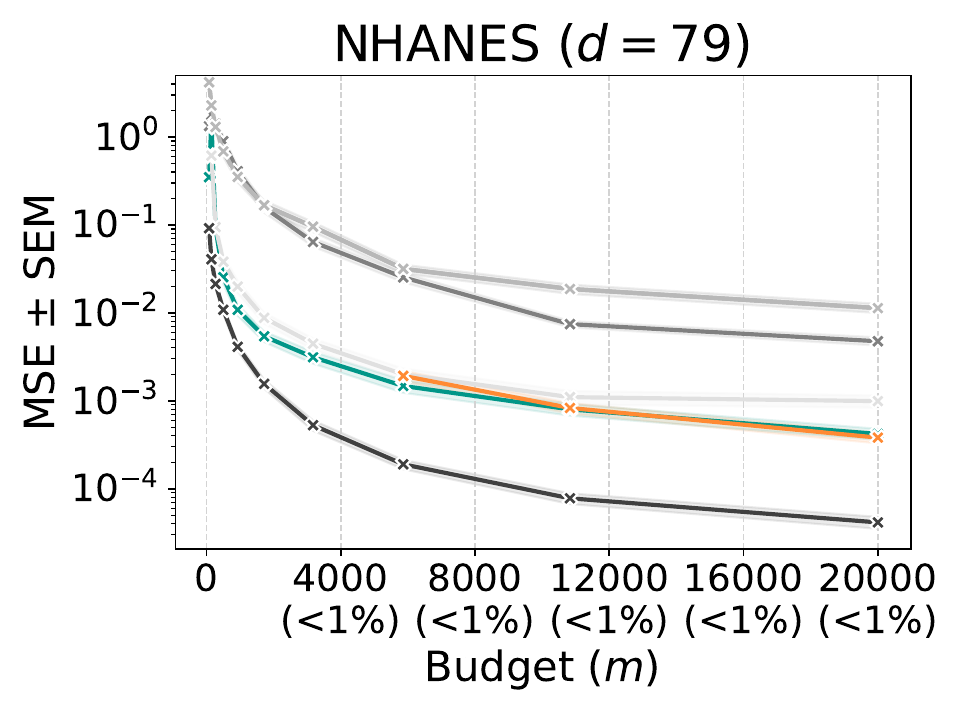}
    \end{minipage}
        \hfill
    \begin{minipage}{0.245\linewidth}
        \includegraphics[width=\linewidth]{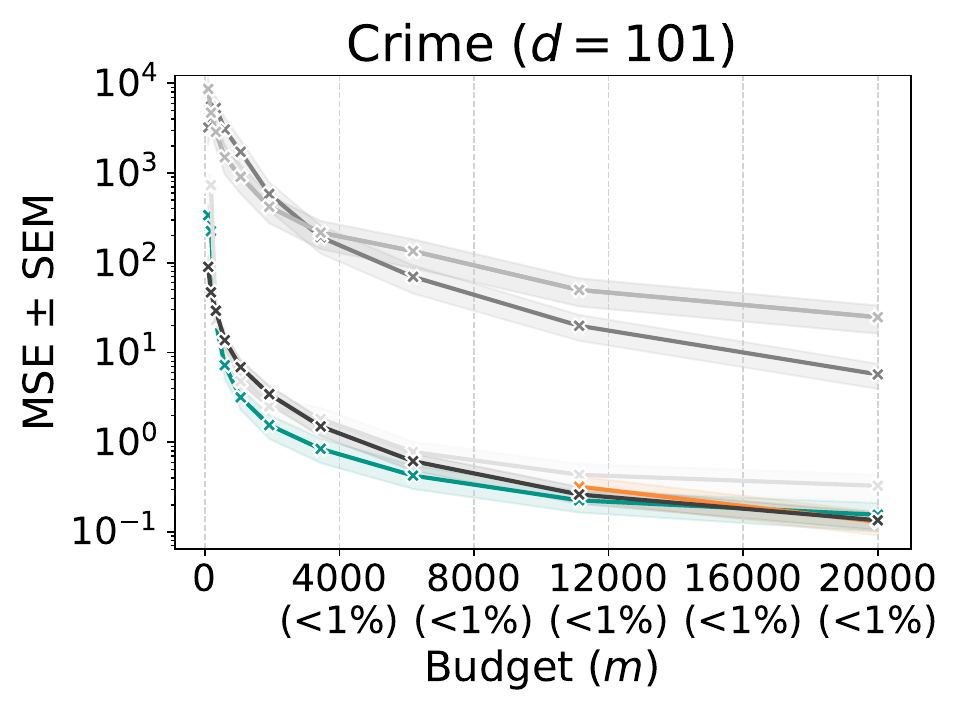}
    \end{minipage}
    \caption{Approximation quality of PolySHAP variants and baseline methods measured by MSE ($\pm$ SEM) using paired sampling. With paired sampling, PolySHAP consistently improves upon KernelSHAP when order-3 interactions are included. In higher dimensions ($d\geq 60$), only a few of these can be modeled, yielding smaller improvements.}
    \label{fig_exp_paired_baselines}
\end{figure}

\section{Conclusion \& Future Work}
By reformulating the computation of the Shapley value as a polynomial regression problem with selected interaction terms, PolySHAP extends beyond the linear regression framework of KernelSHAP.
We demonstrate that PolySHAP provides consistent estimates of the Shapley value (\cref{thm_polyshap_consistency}), and produces more accurate Shapley value estimates (see \cref{fig_exp_standard} and \cref{fig_exp_paired_vs_standard}).
Moreover, we show that paired subset sampling in KernelSHAP \citep{Covert.2021} implicitly captures all second-order interactions at no extra cost (\cref{thm_paired_shap}), explaining why paired sampling improves estimator accuracy on games with arbitrary interaction structures.

Future work could explore more structured variants of \frontier, for example by detecting important interactions \citep{Tsang.2020} or leveraging inherent interaction structures in graph-structured inputs \citep{Muschalik.2025}.
In addition, we empirically find that paired $k$-PolySHAP produces the same estimates as ($k+1)$-PolySHAP for odd $k>1$, but leave the proof for future work.

\subsubsection*{Ethics Statement} 
This work introduces a framework for efficient approximation of Shapley values, which are primarily used for explainable AI. We do not see any ethical concerns associated with this work.

\subsubsection*{Reproducibility Statement} 
Our implementation is built upon the \texttt{shapiq} \citep{Muschalik.2024a} library. We extend this framework with the \texttt{PolySHAP} and \texttt{RegressionMSR} approximator classes, which include optimized sampling strategies within the \texttt{CoalitionSampler} class.
All technical details can be found in \cref{sec_experiments} and \cref{appx_sec_exp_details}.
The code for reproducing the experiments is available at \url{https://github.com/FFmgll/PolySHAP}.
We will soon add our methods to \texttt{shapiq}.

\subsubsection*{Acknowledgments}
Fabian Fumagalli gratefully acknowledges funding by the Deutsche Forschungsgemeinschaft (DFG, German Research Foundation): TRR 318/3 2026 – 438445824.
Christopher Musco was supported by NSF Award \#2045590.

\bibliography{bib}

@inproceedings{Bord.2023,
	title        = {{From Shapley Values to Generalized Additive Models and back}},
	author       = {Sebastian Bordt and Ulrike von Luxburg},
	year         = 2023,
	booktitle    = {Proceedings of the International Conference on Artificial Intelligence and Statistics {(AISTATS)}},
	pages        = {709--745}
}

@article{Castro.2009,
	title        = {{Polynomial calculation of the Shapley value based on sampling}},
	author       = {Javier Castro and Daniel G{\'{o}}mez and Juan Tejada},
	year         = 2009,
	journal      = {Computers \& Operations Research},
	volume       = 36,
	number       = 5,
	pages        = {1726--1730},
	doi          = {10.1016/j.cor.2008.04.004}
}

@article{Castro.2017,
	title        = {{Improving polynomial estimation of the Shapley value by stratified random sampling with optimum allocation}},
	author       = {Javier Castro and Daniel G{\'{o}}mez and Elisenda Molina and Juan Tejada},
	year         = 2017,
	journal      = {Computers \& Operations Research},
	volume       = 82,
	pages        = {180--188},
	doi          = {10.1016/j.cor.2017.01.019}
}

@inproceedings{Chen.2016,
	title        = {{XGBoost}: A Scalable Tree Boosting System},
	author       = {Chen, Tianqi and Guestrin, Carlos},
	year         = 2016,
	booktitle    = {Proceedings of the 22nd ACM SIGKDD International Conference on Knowledge Discovery and Data Mining (KDD)},
	publisher    = {ACM},
	pages        = {785--794},
	doi          = {10.1145/2939672.2939785},
	numpages     = 10
}

@inproceedings{Covert.2020,
	title        = {{Understanding Global Feature Contributions With Additive Importance Measures}},
	author       = {Ian Covert and Scott M. Lundberg and Su{-}In Lee},
	year         = 2020,
	booktitle    = {Proceedings of Advances in Neural Information Processing Systems {(NeurIPS)}}
}

@inproceedings{Covert.2021,
	title        = {{Improving KernelSHAP: Practical Shapley Value Estimation Using Linear Regression}},
	author       = {Ian Covert and Su{-}In Lee},
	year         = 2021,
	booktitle    = {Proceedings of the International Conference on Artificial Intelligence and Statistics {(AISTATS)}},
	pages        = {3457--3465}
}

@article{Covert.2021b,
	title        = {{Explaining by Removing: A Unified Framework for Model Explanation}},
	author       = {Ian Covert and Scott M. Lundberg and Su{-}In Lee},
	year         = 2021,
	journal      = {Journal of Machine Learning Research},
	volume       = 22,
	number       = 209,
	pages        = {1--90},
	doi          = {10.5555/3546258.3546467}
}

@inproceedings{Deng.2009,
	title        = {ImageNet: A large-scale hierarchical image database},
	author       = {Jia Deng and Wei Dong and Richard Socher and Li{-}Jia Li and Kai Li and Li Fei{-}Fei},
	year         = 2009,
	booktitle    = {Proceedings of {IEEE} Computer Society Conference on Computer Vision and Pattern Recognition {(CVPR)}},
	pages        = {248--255}
}

@article{FanaeeT.2014,
	title        = {{Event Labeling Combining Ensemble Detectors and Background Knowledge}},
	author       = {Fanaee-T, H. and Gama, J.},
	year         = 2014,
	journal      = {Progress in Artificial Intelligence},
	volume       = 2,
	number       = 2,
	pages        = {113--127},
	doi          = {10.1007/s13748-013-0040-3}
}

@article{Feurer.2020,
	title        = {{OpenML-Python: an extensible Python API for OpenML}},
	author       = {Matthias Feurer and Jan N. van Rijn and Arlind Kadra and Pieter Gijsbers and Neeratyoy Mallik and Sahithya Ravi and Andreas Mueller and Joaquin Vanschoren and Frank Hutter},
	year         = 2020,
	journal      = {CoRR},
	volume       = {abs/1911.02490}
}

@inproceedings{Fumagalli.2023,
	title        = {{SHAP-IQ: Unified Approximation of any-order Shapley Interactions}},
	author       = {Fabian Fumagalli and Maximilian Muschalik and Patrick Kolpaczki and Eyke H{\"u}llermeier and Barbara Hammer},
	year         = 2023,
	booktitle    = {Proceedings of Advances in Neural Information Processing Systems {(NeurIPS)}},
keywords = {FF},
}

@inproceedings{Ghorbani.2019,
	title        = {Data Shapley: Equitable Valuation of Data for Machine Learning},
	author       = {Amirata Ghorbani and James Y. Zou},
	year         = 2019,
	booktitle    = {Proceedings of the International Conference on Machine Learning {(ICML)}},
	pages        = {2242--2251}
}

@article{Grabisch.1999,
	title        = {{An axiomatic approach to the concept of interaction among players in cooperative games}},
	author       = {Michel Grabisch and Marc Roubens},
	year         = 1999,
	journal      = {International Journal of Game Theory},
	volume       = 28,
	number       = 4,
	pages        = {547--565},
	doi          = {10.1007/s001820050125}
}

@article{Grabisch.2000,
	title        = {Equivalent Representations of Set Functions},
	author       = {Michel Grabisch and Jean{-}Luc Marichal and Marc Roubens},
	year         = 2000,
	journal      = {Mathematics of Operations Research},
	volume       = 25,
	number       = 2,
	pages        = {157--178},
	doi          = {10.1287/moor.25.2.157.12225}
}

@inproceedings{He.2018,
	title        = {Deep Residual Learning for Image Recognition},
	author       = {Kaiming He and Xiangyu Zhang and Shaoqing Ren and Jian Sun},
	year         = 2016,
	booktitle    = {Proceedings of {IEEE} Computer Society Conference on Computer Vision and Pattern Recognition {(CVPR)}},
	pages        = {770--778}
}

@article{Kelley.1997,
	title        = {{Sparse spatial autoregressions}},
	author       = {R. {Kelley Pace} and Ronald Barry},
	year         = 1997,
	journal      = {Statistics \& Probability Letters},
	volume       = 33,
	number       = 3,
	pages        = {291--297},
	doi          = {10.1016/S0167-7152(96)00140-X}
}

@inproceedings{Kohavi.1996,
	title        = {Scaling up the Accuracy of Naive-Bayes Classifiers: A Decision-Tree Hybrid},
	author       = {Kohavi, R.},
	year         = 1996,
	booktitle    = {Proceedings of International Conference on Knowledge Discovery and Data Mining {(KDD)}},
	pages        = {202--207}
}

@inproceedings{Kolpaczki.2024a,
	title        = {Approximating the Shapley Value without Marginal Contributions},
	author       = {Patrick Kolpaczki and Viktor Bengs and Maximilian Muschalik and Eyke H{\"{u}}llermeier},
	year         = 2024,
	booktitle    = {Proceeedings of the {AAAI} Conference on Artificial Intelligence {(AAAI)}},
	pages        = {13246--13255}
}

@inproceedings{Kwon.2022,
	title        = {WeightedSHAP: analyzing and improving Shapley based feature attributions},
	author       = {Yongchan Kwon and James Y. Zou},
	year         = 2022,
	booktitle    = {Proceedings of Advances in Neural Information Processing Systems {(NeurIPS)}}
}

@inproceedings{Li.2024,
	title        = {Faster Approximation of Probabilistic and Distributional Values via Least Squares},
	author       = {Weida Li and Yaoliang Yu},
	year         = 2024,
	booktitle    = {Proceedings of the International Conference on Learning Representations {(ICLR)}}
}

@inproceedings{Lundberg.2017,
	title        = {{A Unified Approach to Interpreting Model Predictions}},
	author       = {Scott M. Lundberg and Su{-}In Lee},
	year         = 2017,
	booktitle    = {Proceedings of Advances in Neural Information Processing Systems {(NeurIPS)}},
	pages        = {4765--4774}
}

@article{Lundberg.2020,
	title        = {{From local explanations to global understanding with explainable AI for trees}},
	author       = {Scott M. Lundberg and Gabriel G. Erion and Hugh Chen and Alex J. DeGrave and Jordan M. Prutkin and Bala Nair and Ronit Katz and Jonathan Himmelfarb and Nisha Bansal and Su{-}In Lee},
	year         = 2020,
	journal      = {Nature Machine Intelligence},
	volume       = 2,
	number       = 1,
	pages        = {56--67},
	doi          = {10.1038/s42256-019-0138-9}
}

@inproceedings{Maas.2011,
	title        = {Learning Word Vectors for Sentiment Analysis},
	author       = {Maas, Andrew L.  and  Daly, Raymond E.  and  Pham, Peter T.  and  Huang, Dan  and  Ng, Andrew Y.  and  Potts, Christopher},
	year         = 2011,
	booktitle    = {Proceedings of the 49th Annual Meeting of the Association for Computational Linguistics: Human Language Technologies {(HLT)}},
	pages        = {142--150}
}

@article{Olsen.2024,
	title        = {A comparative study of methods for estimating model-agnostic {Shapley} value explanations},
	author       = {Olsen, Lars Henry Berge and Glad, Ingrid Kristine and Jullum, Martin and Aas, Kjersti},
	year         = 2024,
	journal      = {Data Mining and Knowledge Discovery},
	pages        = {1--48}
}

@article{Pedregosa.2011,
	title        = {Scikit-learn: Machine Learning in Python},
	author       = {Fabian Pedregosa and Ga{\"{e}}l Varoquaux and Alexandre Gramfort and Vincent Michel and Bertrand Thirion and Olivier Grisel and Mathieu Blondel and Peter Prettenhofer and Ron Weiss and Vincent Dubourg and Jake VanderPlas and Alexandre Passos and David Cournapeau and Matthieu Brucher and Matthieu Perrot and Edouard Duchesnay},
	year         = 2011,
	journal      = {Journal of Machine Learning Research},
	volume       = 12,
	pages        = {2825--2830},
	doi          = {10.5555/1953048.2078195}
}

@article{Pelegrina.2023,
	title        = {A \emph{k}-additive Choquet integral-based approach to approximate the {SHAP} values for local interpretability in machine learning},
	author       = {Guilherme Dean Pelegrina and Leonardo Tomazeli Duarte and Michel Grabisch},
	year         = 2023,
	journal      = {Artificial Intelligence},
	volume       = 325,
	pages        = 104014,
	doi          = {10.1016/J.ARTINT.2023.104014}
}

@incollection{Rota.1964,
	title        = {On the foundations of combinatorial theory: I. Theory of M{\"o}bius functions},
	author       = {Rota, Gian-Carlo},
	year         = 1964,
	booktitle    = {Classic Papers in Combinatorics},
	publisher    = {Springer},
	pages        = {332--360}
}

@article{Sanh.2019,
	title        = {DistilBERT, a distilled version of BERT: smaller, faster, cheaper and lighter},
	author       = {Victor Sanh and Lysandre Debut and Julien Chaumond and Thomas Wolf},
	year         = 2019,
	journal      = {CoRR},
	volume       = {abs/1910.01108},
	eprinttype   = {arXiv},
	eprint       = {1910.01108}
}

@incollection{Shapley.1953,
	title        = {{A Value for n-Person Games}},
	author       = {Shapley, L. S.},
	year         = 1953,
	booktitle    = {Contributions to the Theory of Games (AM-28), Volume II},
	publisher    = {Princeton University Press},
	pages        = {307--318}
}

@inproceedings{Sundararajan.2020,
	title        = {{The Shapley Taylor Interaction Index}},
	author       = {Mukund Sundararajan and Kedar Dhamdhere and Ashish Agarwal},
	year         = 2020,
	booktitle    = {Proceedings of the International Conference on Machine Learning {(ICML)}},
	pages        = {9259--9268}
}

@article{Tsai.2022,
	title        = {{Faith-Shap: The Faithful Shapley Interaction Index}},
	author       = {Che{-}Ping Tsai and Chih{-}Kuan Yeh and Pradeep Ravikumar},
	year         = 2023,
	journal      = {Journal of Machine Learning Research},
	volume       = 24,
	number       = 94,
	pages        = {1--42}
}

@inproceedings{Tsang.2020,
	title        = {{How does This Interaction Affect Me? Interpretable Attribution for Feature Interactions}},
	author       = {Tsang, Michael and Rambhatla, Sirisha and Liu, Yan},
	year         = 2020,
	booktitle    = {Proceedings of Advances in Neural Information Processing Systems {(NeurIPS)}},
	pages        = {6147--6159}
}

@inproceedings{Fumagalli.2024a,
      title={{Unifying Feature-Based Explanations with Functional ANOVA and Cooperative Game Theory}}, 
      author={Fabian Fumagalli and Maximilian Muschalik and Eyke Hüllermeier and Barbara Hammer and Julia Herbinger},
      year={2025},
	   booktitle = {Proceedings of the International Conference on Artificial Intelligence and Statistics {(AISTATS)}},
        pages = {5140--5148},
        keywords = {FF},
}

@inproceedings{Muschalik.2024a,
  title     = {{shapiq: Shapley Interactions for Machine Learning}},
  author    = {Maximilian Muschalik and Hubert Baniecki and Fabian Fumagalli   and Patrick Kolpaczki and Barbara Hammer and Eyke H\"{u}llermeier},
	booktitle    = {Proceedings of Advances in Neural Information Processing Systems {(NeurIPS)}},
  year      = {2024},
 pages = {130324--130357},
keywords = {FF},
}

@inproceedings{
Muschalik.2025,
title={{Exact Computation of Any-Order Shapley Interactions for Graph Neural Networks}},
author={Maximilian Muschalik and Fabian Fumagalli and Paolo Frazzetto and Janine Strotherm and Luca Hermes and Alessandro Sperduti and Eyke H{\"u}llermeier and Barbara Hammer},
booktitle={Proceedings of the Conference on Learning Representations {(ICLR)}},
year={2025},
keywords = {FF},
}

@article{Mitchell.2022,
	title        = {{Sampling Permutations for Shapley Value Estimation}},
	author       = {Rory Mitchell and Joshua Cooper and Eibe Frank and Geoffrey Holmes},
	year         = 2022,
	journal      = {Journal of Machine Learning Research},
	volume       = 23,
    number={43},
    pages={1--46},
}

@misc{Olsen.2024b,
    author       = {Lars Henry Berge Olsen and
                  Martin Jullum},
    title        = {Improving the Sampling Strategy in KernelSHAP},
    journal      = {CoRR},
    volume       = {abs/2410.04883},
    year         = {2024},
    eprinttype    = {arXiv},
    eprint       = {2410.04883},
}

@inproceedings{Kang.2024,
  author       = {Justin Singh Kang and
                  Yigit Efe Erginbas and
                  Landon Butler and
                  Ramtin Pedarsani and
                  Kannan Ramchandran},
  title        = {Learning to Understand: Identifying Interactions via the M{\"{o}}bius Transform},
  booktitle    = {Proceedings of Advances in Neural Information Processing Systems ({NeurIPS})},
  year         = {2024},
}

@inproceedings{Wang.2023,
  author       = {Jiachen T. Wang and
                  Ruoxi Jia},
  editor       = {Francisco J. R. Ruiz and
                  Jennifer G. Dy and
                  Jan{-}Willem van de Meent},
  title        = {Data Banzhaf: {A} Robust Data Valuation Framework for Machine Learning},
  booktitle    = {Proceedings of the International Conference on Artificial Intelligence and Statistics ({AISTATS})},
  pages        = {6388--6421},
  year         = {2023},
}

@inproceedings{Mohammadi.2025,
  author       = {Majid Mohammadi and
                  Ilaria Tiddi and
                  Annette ten Teije},
  title        = {Unlocking the Game: Estimating Games in M{\"{o}}bius Representation
                  for Explanation and High-Order Interaction Detection},
	booktitle    = {Proceeedings of the {AAAI} Conference on Artificial Intelligence {(AAAI)}},
  pages        = {19512--19519},
  year         = {2025},
  doi          = {10.1609/AAAI.V39I18.34148},
}

@inproceedings{Butler.2025,
  author       = {Landon Butler and
                  Abhineet Agarwal and
                  Justin Singh Kang and
                  Yigit Efe Erginbas and
                  Bin Yu and
                  Kannan Ramchandran},
  title        = {ProxySPEX: Inference-Efficient Interpretability via Sparse Feature
                  Interactions in LLMs},
  year         = {2025},
  booktitle    = {Proceedings of Advances in Neural Information Processing Systems {(NeurIPS)}}
}

@article{Pelegrina.2025,
  author       = {Guilherme Dean Pelegrina and
                  Patrick Kolpaczki and
                  Eyke H{\"{u}}llermeier},
  title        = {Shapley Value Approximation Based on k-Additive Games},
  journal      = {CoRR},
  volume       = {abs/2502.04763},
  year         = {2025},
  eprinttype    = {arXiv},
  eprint       = {2502.04763},
}

@inproceedings{Musco.2025,
  author       = {Christopher Musco and
                  R. Teal Witter},
  title        = {{Provably Accurate Shapley Value Estimation via Leverage Score Sampling}},
  booktitle = {Proccedings of the International Conference on Learning Representations (ICLR)},
  year         = {2025},
}

@book{Glasserman.2004,
  title={Monte Carlo Methods in Financial Engineering},
  author={Glasserman, Paul},
  year={2004},
  publisher={Springer},
  series={Stochastic Modelling and Applied Probability},
  doi={10.1007/978-0-387-21617-1}
}

@article{Mayer.2025,
  title={Shapley Values: Paired-Sampling Approximations},
  author={Mayer, Michael and W{\"u}thrich, Mario V},
  journal = {CoRR},
  volume = {abs/2508.12947},
  eprinttype = {arXiv},
  eprint = {2508.12947},
  year = {2025},
}

@inproceedings{Kelodjou.2024,
  author       = {Gwladys Kelodjou and
                  Laurence Roz{\'{e}} and
                  V{\'{e}}ronique Masson and
                  Luis Gal{\'{a}}rraga and
                  Romaric Gaudel and
                  Maurice Tchuent{\'{e}} and
                  Alexandre Termier},
  editor       = {Michael J. Wooldridge and
                  Jennifer G. Dy and
                  Sriraam Natarajan},
  title        = {Shaping Up {SHAP:} Enhancing Stability through Layer-Wise Neighbor
                  Selection},
  booktitle    = {Proceedings of the {AAAI} Conference on Artificial Intelligence ({AAAI})},
  pages        = {13094--13103},
  year         = {2024},
  doi          = {10.1609/AAAI.V38I12.29208},
}

@inproceedings{Witter.2025,
  author       = {R. Teal Witter and
                  Yurong Liu and
                  Christopher Musco},
  title        = {Regression-adjusted Monte Carlo Estimators for Shapley Values and
                  Probabilistic Values},
  booktitle    = {Proceedings of Advances in Neural Information Processing Systems {(NeurIPS)}},
  year         = {2025},
}

@InProceedings{Kwon.2022b,
  title = 	 { Beta Shapley: a Unified and Noise-reduced Data Valuation Framework for Machine Learning },
  author =       {Kwon, Yongchan and Zou, James},
  booktitle = 	 {{Proceedings of the International Conference on Artificial Intelligence and Statistics {(AISTATS)}}},
  pages = 	 {8780--8802},
  year = 	 {2022},
}

@book{Molnar.2024,
	author = {Molnar, Christoph},
	publisher = {Leanpub},
	title = {Interpretable machine learning: A guide for making black box models explainable},
	year = {2024}}

@article{Cortez.2007,
  title={A data mining approach to predict forest fires using meteorological data},
  author={Cortez, Paulo and Morais, An{\'\i}bal de Jesus Raimundo},
  year={2007},
  journal={New Trends in Artificial Intelligence, Proceedings of the 13th EPIA},
  pages={512--523},
}

@article{Yeh.2018b,
  author       = {I{-}Cheng Yeh and
                  Tzu{-}Kuang Hsu},
  title        = {Building real estate valuation models with comparative approach through
                  case-based reasoning},
  journal      = {Appied Soft Computing},
  volume       = {65},
  pages        = {260--271},
  year         = {2018},
  doi          = {10.1016/J.ASOC.2018.01.029},
}

@inproceedings{Street.1993,
  title={Nuclear feature extraction for breast tumor diagnosis},
  author={Street, W Nick and Wolberg, William H and Mangasarian, Olvi L},
  booktitle={Biomedical image processing and biomedical visualization},
  volume={1905},
  pages={861--870},
  year={1993},
}

@article{Dinh.2019,
  author       = {An Dinh and
                  Stacey Miertschin and
                  Amber Young and
                  Somya D. Mohanty},
  title        = {A data-driven approach to predicting diabetes and cardiovascular disease
                  with machine learning},
  journal      = {{BMC} Medical Informatics Decisision Making},
  volume       = {19},
  number       = {1},
  pages        = {211:1--211:15},
  year         = {2019},
  doi          = {10.1186/S12911-019-0918-5},
}

@misc{Redmond.2011,
  author       = {Michael Redmond},
  title        = {Communities and Crime Unnormalized},
  publisher    = {{UCI} Machine Learning Repository},
  year         = {2011},
  doi          = {10.24432/C5PC8X},
}

@inproceedings{Dosovitskiy.2021,
  author       = {Alexey Dosovitskiy and
                  Lucas Beyer and
                  Alexander Kolesnikov and
                  Dirk Weissenborn and
                  Xiaohua Zhai and
                  Thomas Unterthiner and
                  Mostafa Dehghani and
                  Matthias Minderer and
                  Georg Heigold and
                  Sylvain Gelly and
                  Jakob Uszkoreit and
                  Neil Houlsby},
  title        = {An Image is Worth 16x16 Words: Transformers for Image Recognition
                  at Scale},
  booktitle    = {Proceedings of the International Conference on Learning Representations ({ICLR})},
  year         = {2021},
}

@article{Krizhevsky.2009,
  title={Learning multiple layers of features from tiny images},
  author={Krizhevsky, Alex and Hinton, Geoffrey and others},
  year={2009},
  journal={Technical Report},
}
\bibliographystyle{iclr2026_conference}

\appendix
\onecolumn

\section*{Contents of the Appendix}

\startcontents[sections]
\printcontents[sections]{l}{1}{\setcounter{tocdepth}{3}}
\clearpage

\section{Proofs}
\label{appx_sec_proofs}

\subsection{Projection Lemma}

We introduce the following technical lemma that will 
be useful in the proofs of \cref{thm_polyshap_consistency} and \cref{thm_paired_shap}.

\begin{restatable}[Projection Lemma]{lemma}{projectionlemma}
    \label{lemma:projection}
    Let $n \geq d_+ > d.$
    Consider a matrix $\mathbf{X} \in \mathbb{R}^{n \times d}$ with full column rank, a vector $\mathbf{y} \in \mathbb{R}^n$, and a real number $c \in \mathbb{R}$.
    Let $\mathbf{X}_+ \in \mathbb{R}^{n \times d_+}$ be a matrix where the first $d$ columns are equal to $\mathbf{X}$.
    Define
    \begin{align*}
    \boldsymbol{\beta}^*_+ = \argmin_{\boldsymbol{\beta} \in \mathbb{R}^{d_+}: \langle \boldsymbol{\beta}, \mathbf{1}_{d_+} \rangle = c} \| \mathbf{X}_+ \boldsymbol{\beta} - \mathbf{y} \|_2^2.
    \end{align*}
    Then
    \begin{align}
       \argmin_{\boldsymbol{\beta} \in \mathbb{R}^{d}
       : \langle \boldsymbol{\beta}, \mathbf{1} \rangle = c} 
       \| \mathbf{X} \boldsymbol{\beta} - \mathbf{y} \|_2^2 
       = \argmin_{\boldsymbol{\beta} \in \mathbb{R}^{d}
       : \langle \boldsymbol{\beta}, \mathbf{1}_d \rangle = c} 
       \| \mathbf{X} \boldsymbol{\beta} - \mathbf{X}_+ \boldsymbol{\beta}^*_+ \|_2^2.
    \end{align} 
\end{restatable}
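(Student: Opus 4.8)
The plan is to characterize both constrained minimizers through their first-order (Lagrange) optimality conditions and show they must coincide. Write $\mathbf{X}_+ = [\,\mathbf{X} \mid \mathbf{Z}\,]$ with $\mathbf{Z} \in \mathbb{R}^{n \times (d_+ - d)}$ collecting the extra columns. Because $\mathbf{X}$ has full column rank, $\boldsymbol{\beta} \mapsto \|\mathbf{X}\boldsymbol{\beta} - \mathbf{w}\|_2^2$ is strictly convex for every target $\mathbf{w} \in \mathbb{R}^n$, so each minimization over the affine set $\{\boldsymbol{\beta} \in \mathbb{R}^d : \langle \boldsymbol{\beta}, \mathbf{1}_d \rangle = c\}$ has a \emph{unique} minimizer, and $\hat{\boldsymbol{\beta}}$ is that minimizer if and only if $\langle \hat{\boldsymbol{\beta}}, \mathbf{1}_d \rangle = c$ and $\mathbf{X}^\top(\mathbf{X}\hat{\boldsymbol{\beta}} - \mathbf{w}) = \lambda \mathbf{1}_d$ for some scalar $\lambda$. (By contrast $\boldsymbol{\beta}^*_+$ itself need not be unique if $\mathbf{X}_+$ is column-rank-deficient, but its fitted vector $\mathbf{X}_+\boldsymbol{\beta}^*_+$ is unique, being the orthogonal projection of $\mathbf{y}$ onto the affine subspace $\{\mathbf{X}_+\boldsymbol{\beta} : \langle\boldsymbol{\beta},\mathbf{1}_{d_+}\rangle = c\}$; only this fitted vector appears in the statement, so the claim is well-posed.)

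First I would record the optimality condition of the $d_+$-dimensional problem: there is a scalar $\lambda_+$ with $\mathbf{X}_+^\top(\mathbf{X}_+\boldsymbol{\beta}^*_+ - \mathbf{y}) = \lambda_+\mathbf{1}_{d_+}$. Restricting this vector identity to the first $d$ coordinates (those indexed by the columns of $\mathbf{X}$) yields the key fact
\[
\mathbf{X}^\top(\mathbf{X}_+\boldsymbol{\beta}^*_+ - \mathbf{y}) = \lambda_+\mathbf{1}_d .
\]
This is an ``inherited orthogonality'': although $\mathbf{X}_+\boldsymbol{\beta}^*_+$ is produced by the larger regression, its residual against $\mathbf{y}$ is aligned with $\mathbf{1}_d$ \emph{as seen through the small design} $\mathbf{X}$.

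Next, let $\hat{\boldsymbol{\beta}}$ denote the unique minimizer of the left-hand side, $\min_{\langle\boldsymbol{\beta},\mathbf{1}_d\rangle=c}\|\mathbf{X}\boldsymbol{\beta}-\mathbf{y}\|_2^2$, with multiplier $\lambda$, so $\mathbf{X}^\top(\mathbf{X}\hat{\boldsymbol{\beta}}-\mathbf{y}) = \lambda\mathbf{1}_d$. I claim the same $\hat{\boldsymbol{\beta}}$ solves the right-hand side, $\min_{\langle\boldsymbol{\beta},\mathbf{1}_d\rangle=c}\|\mathbf{X}\boldsymbol{\beta}-\mathbf{X}_+\boldsymbol{\beta}^*_+\|_2^2$: it is feasible, and
\[
\mathbf{X}^\top\!\big(\mathbf{X}\hat{\boldsymbol{\beta}} - \mathbf{X}_+\boldsymbol{\beta}^*_+\big) = \mathbf{X}^\top(\mathbf{X}\hat{\boldsymbol{\beta}} - \mathbf{y}) + \mathbf{X}^\top(\mathbf{y} - \mathbf{X}_+\boldsymbol{\beta}^*_+) = \lambda\mathbf{1}_d - \lambda_+\mathbf{1}_d = (\lambda-\lambda_+)\mathbf{1}_d,
\]
so $\hat{\boldsymbol{\beta}}$ satisfies the first-order condition for the right-hand-side problem with multiplier $\lambda-\lambda_+$. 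By uniqueness of the constrained minimizer, $\hat{\boldsymbol{\beta}}$ equals the right-hand-side $\argmin$, which is exactly the asserted identity.

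The only genuine obstacle is careful bookkeeping of the optimality conditions: that a least-squares objective with a single linear equality constraint has a unique minimizer under full column rank of $\mathbf{X}$, that Lagrange stationarity ($\mathbf{X}^\top(\mathbf{X}\boldsymbol{\beta}-\mathbf{w}) \in \operatorname{span}\{\mathbf{1}_d\}$) is both necessary and sufficient there, and that the possible non-uniqueness of $\boldsymbol{\beta}^*_+$ is harmless because only $\mathbf{X}_+\boldsymbol{\beta}^*_+$ enters. A fully geometric rendering of the same argument, which I would also mention, is: $\{\mathbf{X}\boldsymbol{\beta}:\langle\boldsymbol{\beta},\mathbf{1}_d\rangle=c\} \subseteq \{\mathbf{X}_+\boldsymbol{\beta}:\langle\boldsymbol{\beta},\mathbf{1}_{d_+}\rangle=c\}$ (pad a feasible $\boldsymbol{\beta}$ with zeros), so by the tower property of orthogonal projections onto nested affine subspaces, projecting $\mathbf{y}$ onto the smaller subspace equals projecting $\mathbf{X}_+\boldsymbol{\beta}^*_+$ (the projection of $\mathbf{y}$ onto the larger subspace) onto the smaller subspace; since $\mathbf{X}$ is injective on the constraint set, this common projected vector corresponds to a common $\boldsymbol{\beta}$.
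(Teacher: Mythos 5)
Your proof is correct, and it takes a genuinely different route from the paper's. You argue via Lagrange/KKT stationarity: any minimizer $\boldsymbol{\beta}^*_+$ of the larger problem satisfies $\mathbf{X}_+^\top(\mathbf{X}_+\boldsymbol{\beta}^*_+-\mathbf{y})=\lambda_+\mathbf{1}_{d_+}$, reading off the first $d$ coordinates gives $\mathbf{X}^\top(\mathbf{X}_+\boldsymbol{\beta}^*_+-\mathbf{y})=\lambda_+\mathbf{1}_d$, and then the minimizer of the left-hand problem verifies the stationarity condition of the right-hand problem with multiplier $\lambda-\lambda_+$; uniqueness under full column rank of $\mathbf{X}$ closes the argument, and your observation that only the (always unique) fitted vector $\mathbf{X}_+\boldsymbol{\beta}^*_+$ enters handles possible rank deficiency of $\mathbf{X}_+$ cleanly. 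The paper instead converts each constrained problem to an unconstrained one via the centering projection $\mathbf{P}_d=\mathbf{I}-\frac1d\mathbf{1}\mathbf{1}^\top$, writes the solutions explicitly with pseudoinverses, and exploits the nested column spaces through the identity $(\mathbf{XP}_d)(\mathbf{XP}_d)^\dagger(\mathbf{X}_+\mathbf{P}_{d_+})(\mathbf{X}_+\mathbf{P}_{d_+})^\dagger=(\mathbf{XP}_d)(\mathbf{XP}_d)^\dagger$ — which is essentially the ``tower of projections onto nested affine subspaces'' argument you sketch as your geometric alternative at the end. Your KKT route is shorter and avoids the offset bookkeeping with the $\mathbf{X}\mathbf{1}\frac{c}{d}$ terms; the paper's explicit pseudoinverse formulation has the separate advantage that its intermediate identity (the constrained-to-unconstrained conversion) is reused verbatim in the proof of the paired-sampling theorem, so the algebraic machinery does double duty there.
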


\begin{proof}[Proof of Lemma \ref{lemma:projection}]
We will first reformulate the constrained least squares problem as an unconstrained problem.
Let $\mathbf{P}_d$ be the matrix that projects \textit{off} the all ones vector in $d$ dimensions i.e., $\mathbf{P}_{d} = \mathbf{I} - \frac1{d} {\mathbf{1}_d} {\mathbf{1}_d}^\top$.
Similarly, let $\mathbf{P}_{d_+} = \mathbf{I} - \frac1{d_+} {\mathbf{1}_{d_+}} {\mathbf{1}_{d_+}}^\top$.
In general, we will drop the subscript $d$ when the dimension is clear from context.
We have
\begin{align}
    \argmin_{\boldsymbol{\beta} \in \mathbb{R}^{d}
   : \langle \boldsymbol{\beta}, \mathbf{1} \rangle = c} 
   \| \mathbf{X} \boldsymbol{\beta} - \mathbf{y} \|_2^2       
   &= \argmin_{\boldsymbol{\beta} \in \mathbb{R}^{d}
   : \langle \boldsymbol{\beta}, \mathbf{1} \rangle = 0} 
   \left\| \mathbf{X} \boldsymbol{\beta} + \mathbf{X} \mathbf{1} \frac{c}{d}
   - \mathbf{y} \right \|_2^2 
   + \mathbf{1} \frac{c}{d}
   \nonumber \\&=
   \mathbf{P}_d \argmin_{\boldsymbol{\beta} \in \mathbb{R}^{d}} 
   \left\| \mathbf{X P}_{d} \boldsymbol{\beta} + \mathbf{X} \mathbf{1} \frac{c}{d}
   - \mathbf{y} \right \|_2^2 
   + \mathbf{1} \frac{c}{d}
   \nonumber \\&=
   \mathbf{P}_d (\mathbf{X P}_d)^\dagger
   \left(\mathbf{y} - \mathbf{X 1} \frac{c}{d} \right)
   + \mathbf{1} \frac{c}{d},
   \label{eq:constrained2unconstrained}
\end{align} 
where the $(\cdot)^\dagger$ denotes the pseudoinverse, and the last equality follows by the standard solution to an unconstrained least squares problem.
Similarly,
\begin{align}
    \boldsymbol{\beta}^*_+ 
    = \argmin_{\boldsymbol{\beta} \in \mathbb{R}^{d_+}: \langle \boldsymbol{\beta}, \mathbf{1}_{d_+} \rangle = c} \| \mathbf{X}_+ \boldsymbol{\beta} - \mathbf{y} \|_2^2
    = \mathbf{P}_{d_+} (\mathbf{X_+ P}_{d_+})^\dagger
   \left(\mathbf{y} - \mathbf{X_+ 1} \frac{c}{d_+} \right)
   + \mathbf{1} \frac{c}{d_+}.
\end{align}
Let $\textbf{proj}_{\mathbf{X P}_{d}} = (\mathbf{XP}_{d})(\mathbf{XP}_d)^\dagger$ be the projection 
\textit{onto} $\mathbf{X P}_d$.
We have
\begin{align}
    &\mathbf{X} 
    \argmin_{\boldsymbol{\beta} \in \mathbb{R}^{d}
   : \langle \boldsymbol{\beta}, \mathbf{1} \rangle = c} 
   \| \mathbf{X} \boldsymbol{\beta} - \mathbf{X_+} \boldsymbol{\beta}^*_+ \|_2^2    
    = \mathbf{X} \mathbf{P}_d (\mathbf{X P}_d)^\dagger
   \left(\mathbf{X_+ { \boldsymbol{\beta}^*_+}} 
   - \mathbf{X 1} \frac{c}{d} \right)
   + \mathbf{X 1} \frac{c}{d}
   \nonumber \\&=
   \textbf{proj}_{\mathbf{X P}_d} 
   \left(\mathbf{X_+}
   {\left[
   \mathbf{P}_{d_+} (\mathbf{X}_{+} \mathbf{P}_{d_+})^\dagger
   \left(\mathbf{y} - \mathbf{X_+ 1} \frac{c}{d_+} \right)
   + \mathbf{1} \frac{c}{d_+}
   \right]}
   - \mathbf{X 1} \frac{c}{d} \right)
   + \mathbf{X1} \frac{c}{d}
  \nonumber \\&= 
    \textbf{proj}_{\mathbf{X P}_d} \textbf{proj}_{\mathbf{X_+ P}_{d_+}} 
    \mathbf{y}
    - {\color{red} \textbf{proj}_{\mathbf{X P}_d} \textbf{proj}_{\mathbf{X_+ P}_{d_+}} 
    \mathbf{X_+ 1} \frac{c}{d_+}}
    + {\color{red} \textbf{proj}_{\mathbf{X P}_d} \mathbf{X_+ 1} \frac{c}{d_+}} \nonumber
    \\
    &- \textbf{proj}_{\mathbf{X P}_d} \mathbf{X 1} \frac{c}{d}
    + \mathbf{X 1} \frac{c}{d}.
    \label{eq:redline}
\end{align}
Since the column space of $\mathbf{X} \mathbf{P}_d$ is contained in the column space of $\mathbf{X}_+ \mathbf{P}_{d_+}$,
observe that 
$\textbf{proj}_{\mathbf{X P}_d} \textbf{proj}_{\mathbf{X_+ P}_{d_+}}  = \textbf{proj}_{\mathbf{X P}_d}$.
Then
\begin{align}
    (\ref{eq:redline})&=
     \textbf{proj}_{\mathbf{X P}_d} 
    \mathbf{y}
    - \textbf{proj}_{\mathbf{X P}_d} \mathbf{X 1} \frac{c}{d}
    + \mathbf{X 1} \frac{c}{d}
    \nonumber \\&=
    \mathbf{X} \mathbf{P}_{d}
    (\mathbf{X P}_d)^\dagger
    \left(
    \mathbf{y} - \mathbf{X 1} \frac{c}{d}
    \right)
    + \mathbf{X 1} \frac{c}{d}
    \nonumber \\&=
    \mathbf{X}
    \argmin_{\boldsymbol{\beta} \in \mathbb{R}^{d}
   : \langle \boldsymbol{\beta}, \mathbf{1} \rangle = c} 
   \| \mathbf{X} \boldsymbol{\beta} - \mathbf{y} \|_2^2.
\end{align}
Since $\mathbf{X}$ has full column rank, we have $\mathbf{X}^\dagger \mathbf{X} = \mathbf{I}$, so multiplying on the left by $\mathbf{X}^\dagger$ yields the statement.

\end{proof}

\newpage
\subsection{PolySHAP is Consistent}

In this section, we will prove \cref{thm_polyshap_consistency}.

\polyshapconsistency*

\begin{proof}[Proof of \cref{thm_polyshap_consistency}]

Recall $d' = d + |\mathcal{I}|$.
Define the target vector $\mathbf{y} \in \mathbb{R}^{2^d}$ so that $[\tilde{\mathbf{y}}]_{S} = \sqrt{\mu(S)} \cdot \nu(S_\ell)$.
Define the design matrix $\mathbf{X} \in \mathbb{R}^{2^d \times d}$ so that
\begin{align}
    [\mathbf{X}]_{S,i} = \sqrt{\mu(S)}
    \cdot \mathbbm{1}[i \in S],
\end{align}
and the extended design matrix $\mathbf{X}_+ \in \mathbb{R}^{2^d \times d'}$ so that
\begin{align}
    [\mathbf{X}_+]_{S,T} = \sqrt{\mu(S)}
    \cdot \mathbbm{1}[T \subseteq S],
\end{align}
for $T \in D \cup \mathcal{I}$.

In this notation, we may write
\begin{align}
    \bm{\phi}^\text{SV}[\nu]
    = \argmin_{\bm{\phi} \in \mathbb{R}^{d}: 
    \langle \mathbf{1}, \bm{\phi} \rangle = \nu(D)}
    \| {\mathbf{X}} \bm{\phi} - {\mathbf{y}}\|_2^2,
\end{align}
and 
\begin{align}
    \bm{\phi}^\mathcal{I}[\nu]
    = \argmin_{\bm{\phi} \in \mathbb{R}^{d'}: 
    \langle \mathbf{1}, \bm{\phi} \rangle = \nu(D)}
    \| {\mathbf{X}}_+ \bm{\phi} - {\mathbf{y}}\|_2^2.
\end{align}
Consider the game $\hat{\nu}: 2^D \to \mathbb{R}$ where
\begin{align}
  \hat{\nu}(S) = \sum_{T \in D \cup \mathcal{I} : T \subseteq S} 
  \phi_T^\mathcal{I}[\nu].
  \label{eq:nuhat}
\end{align}
For this game, the target vector is given by $\hat{\mathbf{y}} = \mathbf{X}_+ \bm{\phi}^\mathcal{I}[\nu]$.
Then its Shapley values are given by
\begin{align}
    \bm{\phi}^\text{SV}[\hat{\nu}]
    &= \argmin_{\bm{\phi} \in \mathbb{R}^{d}: 
    \langle \mathbf{1}, \bm{\phi} \rangle = \nu(D)}
    \| {\mathbf{X}} \bm{\phi} - \hat{\mathbf{y}}\|_2^2
    \nonumber \\&=
    \argmin_{\bm{\phi} \in \mathbb{R}^{d}: 
    \langle \mathbf{1}, \bm{\phi} \rangle = \nu(D)}
    \| {\mathbf{X}} \bm{\phi} - \mathbf{X}_+ \bm{\phi}^\mathcal{I}\|_2^2
    \nonumber \\&=
    \argmin_{\bm{\phi} \in \mathbb{R}^{d}: 
    \langle \mathbf{1}, \bm{\phi} \rangle = \nu(D)}
    \| {\mathbf{X}} \bm{\phi} - \mathbf{y}\|_2^2
    \nonumber \\&= \bm{\phi}^\text{SV}[\nu],
\end{align}
where the penultimate equality follows by \cref{lemma:projection}.
All that remains is to compute the Shapley values $\hat{\nu}$. Since we have an explicit representation of $\hat{\nu}$ in terms of its Möbius transform in \cref{eq:nuhat}, we know its Shapley values are
\begin{align}
    {\phi}_i^\text{SV}[\hat{\nu}]
    = \sum_{T \in D \cup \mathcal{I}: i \in T}
    \frac{\phi_T^\mathcal{I}[\nu]}{|T|}.
\end{align}
by e.g., Table 3 in \citet{Grabisch.2000}.
The statement follows.

\end{proof}

\newpage
\subsection{Paired KernelSHAP is Paired 2-PolySHAP}

We introduce some helpful notation, and then use it to restate \cref{thm_paired_shap} more formally below.

Define $d_k = \sum_{\ell=1}^k \binom{d}{\ell}$.
Let $\tilde{\mathbf{X}}_k \in \mathbb{R}^{m \times \binom{d}{k}}$ be the matrix where the $\ell$, $T$ entry is given by
\begin{align}
    [\tilde{\mathbf{X}}_k]_{\ell, T} 
    = \frac{\sqrt{\mu(S_\ell)}}{\sqrt{p(S_\ell)}}
    \mathbbm{1}[T \subseteq S_\ell]
\end{align}
where $S_\ell \subseteq D$ is the $\ell$th sampled subset, and $T \subseteq D$ such that $|T|=k$.
Then the matrix $\tilde{\mathbf{X}}_{\leq k} \in \mathbb{R}^{m \times d_k}$ is given by
\begin{align}
    \tilde{\mathbf{X}}_{\leq k} =
    \begin{bmatrix}
    \tilde{\mathbf{X}}_1 &  \ldots & \tilde{\mathbf{X}}_k
    \end{bmatrix}.
\end{align}

Let $\mathbf{M}_{2\to1} \in \mathbb{R}^{d \times d_2}$ be the matrix that projects a $2$-PolySHAP to a $1$-PolySHAP.
The entry corresponding to $i \in D$, and $S \subseteq D$ such that $|S| \leq 2$ is given by
\begin{align}
    [\mathbf{M}_{2\to1}]_{i, S} = \frac{\mathbbm{1}[i \in S]}{|S|}.
\end{align}

\begin{theorem}[Paired KernelSHAP is Paired 2-PolySHAP]
    \label{thm_paired_shap2}
    Suppose $\tilde{\mathbf{X}}_{\leq 2}$ has full column rank.
    Further, suppose that both 1-PolySHAP and 2-PolySHAP are computed with the \textnormal{same} paired samples i.e.,
    if $S$ is sampled then so is its complement $D\setminus S$.
    Then
    \begin{align}
        \argmin_{\boldsymbol{\phi} \in \mathbb{R}^d: \langle \mathbf{1}_d, \boldsymbol{\phi} \rangle = \nu(D)}
        \| \tilde{\mathbf{X}}_{1} \boldsymbol{\phi} - \tilde{\mathbf{y}} \|_2^2
        = \mathbf{M}_{2 \to 1} \argmin_{\boldsymbol{\phi} \in \mathbb{R}^{d_2}: \langle \mathbf{1}_{d_2}, \boldsymbol{\phi} \rangle = \nu(D)}
        \| \tilde{\mathbf{X}}_{\leq2} \boldsymbol{\phi} - \tilde{\mathbf{y}} \|_2^2.
    \end{align}
    In words, the Shapley values of the approximate 1-PolySHAP are exactly the same as those of the approximate $2$-PolySHAP.
\end{theorem}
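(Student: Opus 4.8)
The plan is to route the left-hand side through the degree-$2$ \emph{surrogate game} attached to the 2-PolySHAP fit, and then exploit two facts: that paired sampling is invisible to the Projection Lemma, and that the linear residual of a degree-$2$ game against its own Shapley fit is invariant under complementation $S \mapsto D\setminus S$. Write $\hat{\bm{\phi}}^{\mathcal{I}_{\leq 2}}$ for the inner $\argmin$ on the right-hand side (the 2-PolySHAP coefficients) and set $\hat\nu(S) := \sum_{T \in D \cup \mathcal{I}_{\leq 2}:\, T\subseteq S} \hat{\phi}^{\mathcal{I}_{\leq 2}}_T$. By construction $\hat\nu$ has interactions of order at most $2$, satisfies $\hat\nu(D) = \langle \mathbf{1}, \hat{\bm{\phi}}^{\mathcal{I}_{\leq 2}}\rangle = \nu(D)$, and obeys $[\tilde{\mathbf{X}}_{\leq 2}\hat{\bm{\phi}}^{\mathcal{I}_{\leq 2}}]_\ell = \sqrt{\mu(S_\ell)/p(S_\ell)}\,\hat\nu(S_\ell)$; that is, $\tilde{\mathbf{X}}_{\leq 2}\hat{\bm{\phi}}^{\mathcal{I}_{\leq 2}}$ is exactly the sampled, reweighted target vector $\tilde{\mathbf{y}}_{\hat\nu}$ one would form from $\hat\nu$.

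First I would apply the Projection Lemma (\cref{lemma:projection}) with $\mathbf{X} = \tilde{\mathbf{X}}_1$, $\mathbf{X}_+ = \tilde{\mathbf{X}}_{\leq 2}$, $\mathbf{y} = \tilde{\mathbf{y}}$, $c = \nu(D)$ and $\bm{\beta}^*_+ = \hat{\bm{\phi}}^{\mathcal{I}_{\leq 2}}$ — full column rank of $\tilde{\mathbf{X}}_1$ is inherited from $\tilde{\mathbf{X}}_{\leq 2}$, and the two constraint values agree because 2-PolySHAP is efficient. This identifies the left-hand side with $\argmin_{\langle \mathbf{1}, \bm{\phi}\rangle = \nu(D)} \| \tilde{\mathbf{X}}_1 \bm{\phi} - \tilde{\mathbf{X}}_{\leq 2}\hat{\bm{\phi}}^{\mathcal{I}_{\leq 2}}\|_2^2 = \argmin_{\langle \mathbf{1}, \bm{\phi}\rangle = \hat\nu(D)} \| \tilde{\mathbf{X}}_1 \bm{\phi} - \tilde{\mathbf{y}}_{\hat\nu}\|_2^2$, i.e. with the sampled paired KernelSHAP estimate of the surrogate $\hat\nu$. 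So it remains to show that sampled paired KernelSHAP applied to a degree-$2$ game $\hat\nu$ returns the \emph{exact} Shapley values of $\hat\nu$, which by the Möbius formula (\cite{Grabisch.2000}), equivalently by \cref{thm_polyshap_consistency} applied to $\hat\nu$ (whose exact PolySHAP representation is $\hat{\bm{\phi}}^{\mathcal{I}_{\leq 2}}$ itself), equal $\mathbf{M}_{2\to 1}\hat{\bm{\phi}}^{\mathcal{I}_{\leq 2}}$.

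For this reduced claim I would write $\hat\nu(S) = \sum_i a_i\mathbbm{1}[i\in S] + \sum_{\{i,j\}} b_{ij}\mathbbm{1}[\{i,j\}\subseteq S]$ with Shapley value $\phi^*_i = a_i + \tfrac12\sum_{j\neq i} b_{ij}$, and verify by a four-case check on $\{i,j\}$ against $S$ the residual identity
\[
    r(S) \;:=\; \hat\nu(S) - \sum_i \phi^*_i\,\mathbbm{1}[i\in S] \;=\; -\tfrac12\sum_{\{i,j\}} b_{ij}\,\mathbbm{1}\!\left[\,|S\cap\{i,j\}| = 1\,\right],
\]
whose right-hand side is unchanged under $S \mapsto D\setminus S$. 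The constrained problem $\min_{\langle \mathbf{1}, \bm{\phi}\rangle = \hat\nu(D)} \| \tilde{\mathbf{X}}_1 \bm{\phi} - \tilde{\mathbf{y}}_{\hat\nu}\|_2^2$ has a unique solution (full column rank of $\tilde{\mathbf{X}}_1$), characterized by feasibility together with $\tilde{\mathbf{X}}_1^\top(\tilde{\mathbf{X}}_1\bm{\phi} - \tilde{\mathbf{y}}_{\hat\nu}) \parallel \mathbf{1}$. Plugging in $\bm{\phi} = \bm{\phi}^*$, the $i$-th coordinate of $\tilde{\mathbf{X}}_1^\top(\tilde{\mathbf{X}}_1\bm{\phi}^* - \tilde{\mathbf{y}}_{\hat\nu})$ is $-\sum_\ell \tfrac{\mu(S_\ell)}{p(S_\ell)}\, r(S_\ell)\,\mathbbm{1}[i\in S_\ell]$. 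Grouping the sample into complementary pairs $\{S, D\setminus S\}$ — which carry equal weight, since $\mu(S)=\mu(D\setminus S)$ and the sampling distribution $p$ used here is complement-symmetric — and using $r(S) = r(D\setminus S)$ together with $\mathbbm{1}[i\in S] + \mathbbm{1}[i\in D\setminus S] = 1$, each pair contributes $\tfrac{\mu(S)}{p(S)}\,r(S)$ to this sum, independently of $i$. Hence $\tilde{\mathbf{X}}_1^\top(\tilde{\mathbf{X}}_1\bm{\phi}^* - \tilde{\mathbf{y}}_{\hat\nu})$ is a multiple of $\mathbf{1}$, so $\bm{\phi}^*$ is the sampled optimum; combined with the first step this gives left-hand side $= \bm{\phi}^* = \mathbf{M}_{2\to 1}\hat{\bm{\phi}}^{\mathcal{I}_{\leq 2}}$, as claimed.

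The main obstacle is conceptual rather than technical: recognizing that what lets paired KernelSHAP silently absorb all second-order interactions is exactly the complement-symmetry of the degree-$2$ misfit $r$, so that antithetic pairs fold its contribution to the normal equations into the harmless direction $\mathbf{1}$. The one genuinely computational ingredient is the residual identity above, including its factor $\tfrac12$; everything else is bookkeeping with \cref{lemma:projection} and the definition of $\mathbf{M}_{2\to 1}$. Two points deserve care in the write-up: the within-pair equal-weight condition, which is implicit in sampling ``in pairs'' for every scheme considered here but is used in the last step and should be stated explicitly; and the reason the argument does not visibly extend to higher odd orders $k$ — the analogue would require an explicit linear map collapsing a $(k{+}1)$-PolySHAP representation to a $k$-PolySHAP representation, transparent for $k=1$ via $\mathbf{M}_{2\to 1}$ but not obviously available for $k \geq 3$.
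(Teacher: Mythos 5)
Your proposal is correct, and it shares the paper's opening move but then diverges in how it finishes. Like the paper, you first invoke \cref{lemma:projection} with $\mathbf{X}=\tilde{\mathbf{X}}_1$, $\mathbf{X}_+=\tilde{\mathbf{X}}_{\leq 2}$ to replace the target $\tilde{\mathbf{y}}$ by $\tilde{\mathbf{X}}_{\leq 2}\hat{\bm{\phi}}^{\mathcal{I}_{\leq 2}}$. From there the paper works at the level of the normal equations: it writes the constrained solution in closed form via pseudoinverses, checks the all-ones component of $\mathbf{M}_{2\to 1}$ separately, and establishes the two entrywise identities ($\mathbf{P}_d\tilde{\mathbf{X}}_1^\top\tilde{\mathbf{X}}_1\mathbf{M}_{2\to 1}=\mathbf{P}_d\tilde{\mathbf{X}}_1^\top\tilde{\mathbf{X}}_{\leq 2}$, and the vanishing of a cross term involving $|S|(|S|-d)$) using paired complements. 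You instead interpret $\tilde{\mathbf{X}}_{\leq 2}\hat{\bm{\phi}}^{\mathcal{I}_{\leq 2}}$ as the sampled target of the degree-$2$ surrogate game $\hat\nu$, identify its exact Shapley values with $\mathbf{M}_{2\to 1}\hat{\bm{\phi}}^{\mathcal{I}_{\leq 2}}$ via the M\"obius-to-Shapley conversion (as in \cref{thm_polyshap_consistency}), and verify the Lagrange condition at this explicit candidate: the residual identity $r(S)=-\tfrac12\sum_{\{i,j\}}b_{ij}\mathbbm{1}[|S\cap\{i,j\}|=1]$ (which checks out, including the factor $\tfrac12$) is complement-invariant, so pairing folds $\tilde{\mathbf{X}}_1^\top(\tilde{\mathbf{X}}_1\bm{\phi}^*-\tilde{\mathbf{y}}_{\hat\nu})$ into the direction $\mathbf{1}$, and strict convexity on the constraint hyperplane (from full column rank of $\tilde{\mathbf{X}}_1$) gives uniqueness. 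This buys two things: it isolates the structural reason paired sampling absorbs second-order interactions (complement-symmetry of the degree-$2$ misfit against its own Shapley linear fit), and it makes transparent that \cref{thm_paired_shap2} is equivalent to the Projection Lemma plus a sampled strengthening of the exactness result of \cite{Mayer.2025}, which you prove directly rather than cite — so there is no circularity. The paper's route is more mechanical but requires no surrogate-game detour. Two hygiene points you already flag, both also used implicitly by the paper: the within-pair weight symmetry $\mu(S)/p(S)=\mu(D\setminus S)/p(D\setminus S)$, and feasibility of $\bm{\phi}^*$ (Shapley efficiency of $\hat\nu$ together with $\hat\nu(D)=\nu(D)$ from the 2-PolySHAP constraint); both should be stated explicitly in a final write-up, but neither is a gap.
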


\begin{proof}[Proof of Theorem \ref{thm_paired_shap2}]
Define
\begin{align}
    \tilde{\mathbf{z}} + \mathbf{1}_{d_2} \frac{\nu(D)}{d_2}
     = \argmin_{\boldsymbol{\phi} \in \mathbb{R}^{d_2}: \langle \boldsymbol{\phi}, \mathbf{1}_{d_2} \rangle = \nu(D)} \| \tilde{\mathbf{X}}_{\leq 2} \boldsymbol{\phi} - \tilde{\mathbf{y}} \|_2^2,
\end{align}
where $\tilde{\mathbf{z}}$ is orthogonal to the all ones vector.
By Lemma \ref{lemma:projection} and the structure $\mathbf{A}_{\leq 2} = [\mathbf{A}_1 \quad \mathbf{A}_2]$, we have
\begin{align}
    \argmin_{\boldsymbol{\phi} \in \mathbb{R}^d: \langle \mathbf{1}_d, \boldsymbol{\phi} \rangle = \nu(D)}
    \| \tilde{\mathbf{X}}_{1} \boldsymbol{\phi} - \tilde{\mathbf{y}} \|_2^2
    =\argmin_{\boldsymbol{\phi} \in \mathbb{R}^d: \langle \mathbf{1}_d, \boldsymbol{\phi} \rangle = \nu(D)}
    \left\| \tilde{\mathbf{X}}_{1} \boldsymbol{\phi} 
    - \tilde{\mathbf{X}}_{\leq 2}
    \left(\tilde{\mathbf{z}} + \mathbf{1}_{d_2} \frac{\nu(D)}{d_2}\right)
    \right\|_2^2.
    \label{eq:pairedprojdown}
\end{align}
Using Equation \ref{eq:constrained2unconstrained},
we can write Equation \ref{eq:pairedprojdown} explicitly as
\begin{align}
    (\ref{eq:pairedprojdown})=
    \mathbf{P}_d
    (\mathbf{P}_d \tilde{\mathbf{X}}_1^\top
    \tilde{\mathbf{X}}_1 \mathbf{P}_d)^{\dagger}
    \mathbf{P}_d \tilde{\mathbf{X}}_1^\top
    \left[
    \tilde{\mathbf{X}}_{\leq 2}
    \left(\tilde{\mathbf{z}} + \mathbf{1}_{d_2} \frac{\nu(D)}{d_2}\right)
    - \tilde{\mathbf{X}}_{1}
    \mathbf{1}_{d} \frac{\nu(D)}{d}
    \right]
    + \mathbf{1}_{d} \frac{\nu(D)}{d}
    \label{eq:righthandside}
\end{align}
where $\mathbf{P}_d = \mathbf{1} - \frac1{d} \mathbf{11}^\top$ is the matrix that projects off the all ones direction in $d$ dimensions.

Our goal is to show that 
\begin{align}
    (\ref{eq:righthandside})
    = \mathbf{M}_{2 \to 1}\left( \tilde{\mathbf{z}} + \mathbf{1}_{d_2} \frac{\nu(D)}{d_2}\right).
    \label{eq:lefthandside}
\end{align}
We'll begin with the all ones component.
Observe that
\begin{align}
    \frac{\nu(D)}{d}[\mathbf{M}_{2 \to 1} \mathbf{1}]_{i}
    = \frac{\nu(D)}{d_2} \left(\sum_{j=1}^d \mathbbm{1}[i = j]
    + \sum_{T \subseteq D: |T|=2} \frac{\mathbbm{1}[i \in T]}{2}\right)
    = \nu(D) \frac{1 + \frac{d-1}{2}}{d + \binom{d}{2}} = \frac{\nu(D)}{d},
    \nonumber
\end{align}
so $\mathbf{M}_{2 \to 1} \mathbf{1}_{d_2} \frac{\nu(D)}{d_2} = \mathbf{1}_{d} \frac{\nu(D)}{d}$.

Now it remains to show the equality for the component orthogonal to the all ones direction.
Since $\tilde{\mathbf{X}}_{\leq 2} = \begin{bmatrix} \tilde{\mathbf{X}}_1 & \tilde{\mathbf{X}}_2 \end{bmatrix}$ has full column rank by assumption,
$\tilde{\mathbf{X}}_1$ must have full column rank as well.
It follows that $( \mathbf{P}_d \tilde{\mathbf{X}}_1^\top
\tilde{\mathbf{X}}_1 \mathbf{P}_d )
(\mathbf{P}_d \tilde{\mathbf{X}}_1^\top
\tilde{\mathbf{X}}_1 \mathbf{P}_d )^\dagger = \mathbf{P}_d$.
Then, after multiplying Equations \ref{eq:righthandside} and \ref{eq:lefthandside} by $( \mathbf{P}_d \tilde{\mathbf{X}}_1^\top
\tilde{\mathbf{X}}_1 \mathbf{P}_d )$, it suffices to show that
\begin{align}
    \mathbf{P}_d \tilde{\mathbf{X}}_1^\top
    \tilde{\mathbf{X}}_1 \mathbf{P}_d 
    \mathbf{M}_{2 \to 1} \tilde{\mathbf{z}}
    &= \mathbf{P}_d \tilde{\mathbf{X}}_1^\top
    \left[
    \tilde{\mathbf{X}}_{\leq 2}
    \left(\tilde{\mathbf{z}} + \mathbf{1}_{d_2} \frac{\nu(D)}{d_2}\right)
    - \tilde{\mathbf{X}}_{1}
    \mathbf{1}_{d} \frac{\nu(D)}{d}
    \right]
    \nonumber \\&= 
    \mathbf{P}_d \tilde{\mathbf{X}}_1^\top
    \tilde{\mathbf{X}}_{\leq 2} \tilde{\mathbf{z}}
    + \mathbf{P}_d \left[
    \tilde{\mathbf{X}}_1^\top
    \tilde{\mathbf{X}}_{\leq 2} \mathbf{1}_{d_2} \frac{\nu(D)}{d_2}
    -  \tilde{\mathbf{X}}_1^\top
    \tilde{\mathbf{X}}_{\leq 1} \mathbf{1}_{d} \frac{\nu(D)}{d}
    \right].
\end{align}
We will first show that the second term on the right hand side is 0.
First, notice that 
\begin{align}
    [\tilde{\mathbf{X}}_{\leq k} \mathbf{1}_{d_k}]_S 
    = \sum_{T \subseteq D: |T| \leq k}
    \frac{\sqrt{\mu(S)}}{\sqrt{p(S)}} \mathbbm{1}[T \subseteq S] = \frac{\sqrt{\mu(S)}}{\sqrt{p(S)}} |S|_k    
\end{align}
where $|S|_k = \sum_{\ell=1}^k \binom{|S|}{\ell}$.
Then
\begin{align}
    \frac1{d_k} [\tilde{\mathbf{X}}_1^\top \tilde{\mathbf{X}}_{\leq k} \mathbf{1}_{d_k}]_i
    = \sum_{S: i \in S}  \frac{{\mu(S)}}{{p(S)}} \frac{ |S|_k  }{d_k}.
\end{align}
We have $\frac{|S|_2}{d_2} 
= \frac{|S| + \binom{|S|}{2}}{d + \binom{d}{2}} 
= \frac{|S|}{d} \frac{1 + (|S|-1)/2}{1 + (d-1)/2}
= \frac{|S|}{d} \cdot \frac{|S|+1}{d+1}$.
Together,
\begin{align}
    \left[\tilde{\mathbf{X}}_1^\top
    \tilde{\mathbf{X}}_{\leq 2}  \mathbf{1}_{d_2}\frac1{d_2}
    - \tilde{\mathbf{X}}_1^\top
    \tilde{\mathbf{X}}_{1}  \mathbf{1}_{d} \frac1{d}
    \right]_i
    &= \sum_{S:i \in S} \frac{\mu(S)}{p(S)}
    \frac{|S|}{d} \left( \frac{|S|+1}{d+1} - \frac{d+1}{d+1}\right)
    \nonumber \\&= 
    \frac1{d(d+1)} \sum_{S:i \in S} \frac{\mu(S)}{p(S)} |S|(|S|-d)
    \nonumber \\&= 
    \frac1{2d(d+1)} \sum_{S} \frac{\mu(S)}{p(S)} |S|(|S|-d)
\end{align}
where the last equality follows because the subsets are sampled in paired complements.
In particular, for a given pair $S$ and $D \setminus S$, the item $i$ is in exactly one of them, and the coefficient $\frac{\mu(S)}{p(S)} |S|(|S|-d)$ is the same for both.
We have shown that every entry is the same, i.e., a scaling of $\mathbf{1}$, so $\mathbf{P}_d$ projects off the entire vector.

Finally, it remains to show that
\begin{align}
    \mathbf{P}_d \tilde{\mathbf{X}}_1^\top
    \tilde{\mathbf{X}}_1 \mathbf{P}_d 
    \mathbf{M}_{2 \to 1} \tilde{\mathbf{z}}
    =  \mathbf{P}_d \tilde{\mathbf{X}}_1^\top
    \tilde{\mathbf{X}}_{\leq 2} \tilde{\mathbf{z}}.
\end{align}
It is easy to verify that $\langle \mathbf{1}_d, \mathbf{M} \tilde{\mathbf{z}} \rangle =\langle \mathbf{1}_{d_2}, \tilde{\mathbf{z}} \rangle = 0$, so $\mathbf{P}_d \mathbf{M}_{2 \to 1} \tilde{\mathbf{z}} = \mathbf{M}_{2 \to 1} \tilde{\mathbf{z}}$.
Therefore, it suffices to prove that $\mathbf{P}_d \tilde{\mathbf{X}}_1^\top\tilde{\mathbf{X}}_1 \mathbf{M}_{2 \to 1} = \mathbf{P}_d \tilde{\mathbf{X}}_1^\top
\tilde{\mathbf{X}}_{\leq 2}$.

Notice that $[\tilde{\mathbf{X}}_1^\top \tilde{\mathbf{X}}_1]_{i,j}
= \sum_{S: i \in S, j \in S} \frac{\mu(S)}{p(S)}$
where $i,j \in D$.
Then
\begin{align}
    [\tilde{\mathbf{X}}_1^\top\tilde{\mathbf{X}}_1 \mathbf{M}_{2 \to 1}]_{i,R}
    &= \sum_{j=1}^d \frac{\mathbbm{1}[j \in R]}{|R|} \sum_{S: i \in S, j \in S}
    \frac{\mu(S)}{p(S)}
    \nonumber \\&= \sum_{S: i\in S} \frac{\mu(S)}{p(S)}
    \sum_{\substack{j=1 \\ j \in S, j \in R}}^d
    \frac1{|R|}
    \nonumber \\&= \sum_{S: i\in S} \frac{\mu(S)}{p(S)}
    \frac{|R \cap S|}{|R|}.
    \label{eq:ATAM}
\end{align}
Meanwhile,
\begin{align}
    [\tilde{\mathbf{X}}_1^\top\tilde{\mathbf{X}}_{\leq 2} ]_{i,R}
    = \sum_{S: i \in S, R \subseteq S} \frac{\mu(S)}{p(S)}.
    \label{eq:ATA}
\end{align}
Clearly, \cref{eq:ATA,eq:ATAM} are equal when $|R|=1$.
Now consider the case when $|R|=2$; we have
\begin{align}
    (\ref{eq:ATAM})
    = \sum_{S: i \in S, |R \cap S|=1} \frac{\mu(S)}{p(S)} \frac12
    +  \sum_{S: i \in S, |R \cap S|=2} \frac{\mu(S)}{p(S)}
    = \frac14 \sum_{S: |R \cap S|=1} \frac{\mu(S)}{p(S)}
    +  \sum_{S: i \in S, R \subseteq S} \frac{\mu(S)}{p(S)}.
\end{align}
where the last equality follows by sampling in paired complements.
In particular, exactly one of the paired samples $S$ and $D \setminus S$ will contain item $i$, and the coefficient $\mu(S)/p(S)$ is the same for both.
Finally, because it is the same for all $i$, the projection $\mathbf{P}_d$ eliminates the first term.
The statement follows.

\end{proof}

\newpage

\subsection{$k$-ADD-SHAP Converges to the Shapley Value}\label{appx_sec_proofs_kaddshap}
In this section, we prove \cref{prop_equivalence_kaddshap} and discuss the differences between PolySHAP and $k_{\text{ADD}}$-SHAP, and its practical implications.
We generally recommend to prefer PolySHAP over $k_{\text{ADD}}$-SHAP.

\kaddshap*

\begin{proof}
    The $k_{\text{ADD}}$ approximation algorithm \citep{Pelegrina.2023} is based on the interaction representation \citep{Grabisch.2000} of $\nu$ given by
    \begin{align*}
        \nu(S) =  \sum_{T\subseteq D} \gamma^{\vert T\vert}_{\vert S \cap T\vert}I_{\text{Sh}}(T) \quad \text{ with }         \gamma^{t}_{r} := \sum_{\ell=0}^{r}\binom{r}{\ell} B_{t-\ell},
    \end{align*}
    where $B_t$ are the Bernoulli numbers and $I_{\text{Sh}}$ is the Shapley interaction index \citep{Grabisch.1999} with
    \begin{align*}
        I_{\text{Sh}}(S) := \sum_{T \subseteq D \setminus S} \frac{1}{(d-\vert S\vert+1) \binom{d-\vert S\vert}{\vert T\vert}} \sum_{L \subseteq S}(-1)^{\vert S \vert - \vert L \vert} \nu(T \cup L).
    \end{align*}
    The Shapley interaction index generalizes the Shapley value to arbitrary subsets, and it holds $\phi^{\text{SV}}_i[\nu] = I_{\text{Sh}}(i)$ for all $i \in D$.
    The $k_{\text{ADD}}$-SHAP approximation algorithm then restricts this representation to interactions up to order $k$.
    \begin{definition}[$k_{\text{ADD}}$-SHAP \citep{Pelegrina.2025}] 
    The $k_{\text{ADD}}$-SHAP algorithms solves the constrained weighted least-squares problem 
    \begin{align*}
        I^{k_{\text{ADD}}} := \argmin_{I \in \mathbb{R}^{\sum_{\ell=0}^k \binom{d}{\ell}}}\sum_{S\subseteq D}\mu(S) \left(\nu(S) - \sum_{T\subseteq D: \vert T \vert \leq k} \gamma^{\vert T\vert}_{\vert S \cap T\vert} I_T\right)^2 
        \\
        \text{s.t. } \nu(D) -\nu(\emptyset) = \sum_{T \subseteq D: \vert T\vert \leq k} \left(\gamma^{\vert T\vert}_{\vert T \vert} - \gamma^{\vert T \vert}_{0}\right)I_T.
    \end{align*}
    In practice, the least-squares objective is approximated and solved similar to KernelSHAP \citep{Lundberg.2017}, and the Shapley value estimates that are output are $I^{k_{\text{ADD}}}_i$ for $i \in D$ from the approximated least-squares system.
    \end{definition}
    
    Our first observation is that the output $I_i$ is the Shapley value of the approximated game, i.e.
    \begin{align*}
        \phi^{\text{SV}}_i[\sum_{T\subseteq D: \vert T \vert \leq k} \gamma^{\vert T\vert}_{\mathbbm{1}[i \in T]} I_T] = I_i.
    \end{align*}
    We will show that the Shapley values of this approximation are the Shapley values of the PolySHAP representation $\bm\phi^{\mathcal I_{\leq k}}$, which then are equal to the Shapley values of $\nu$ by \cref{thm_polyshap_consistency}.
    
    In contrast to PolySHAP, $k_{\text{ADD}}$-SHAP fits a coefficient for the empty set $\phi_\emptyset$.
    However, we may rewrite
    \begin{align*}
        \left(\nu(S) - \sum_{T\subseteq D: \vert T \vert \leq k} \gamma^{\vert T\vert}_{\vert S \cap T\vert} I_T\right)^2 = \left(\nu(S) - \gamma^{0}_0 I_\emptyset - \sum_{T\subseteq D: 0<\vert T \vert \leq k} \gamma^{\vert T\vert}_{\vert S \cap T\vert} I_T\right)^2,
    \end{align*}
    and thus $I_\emptyset$ is an additive shift of $\nu$, which does not affect the Shapley values of the approximation, i.e.
    \begin{align*}
        \phi^{\text{SV}}_i[\sum_{T\subseteq D: \vert T \vert \leq k} \gamma^{\vert T\vert}_{\vert S \cap T\vert} I_T] = \phi^{\text{SV}}_i[\sum_{T\subseteq D: 0 < \vert T \vert \leq k} \gamma^{\vert T\vert}_{\vert S \cap T\vert} I_T].
    \end{align*}

    Moreover, we can compute $\gamma^t_0 = B_t$ and

    \begin{align*}
        \gamma^s_s = \sum_{\ell = 0}^s \binom{s}{\ell}B_{s-\ell} = \sum_{\ell=0}^s \binom{s}{\ell}B_\ell = \sum_{\ell=0}^{s-1} \binom{s}{\ell}B_\ell + B_s = \mathbbm{1}[s=1] + B_s,
    \end{align*}
    by the recursion of Bernoulli numbers, and thus
    \begin{align*}
        \sum_{S \subseteq D: \vert S\vert \leq k} \left(\gamma^{\vert S\vert}_{\vert S \vert} - \gamma^{\vert S \vert}_{0}\right)I_S = \sum_{i \in D} I_i,
    \end{align*}
    which is already mentioned by \citet[Proof of Theorem 4.2]{Pelegrina.2025}.
    Now, without loss of generality, we can assume that $\nu(\emptyset) = 0$, since it does not affect the Shapley values of $\nu$, and thus the class of approximations is given by
    \begin{align*}
        \mathcal F^{k_{\text{ADD}}} := \left\{ S \mapsto \sum_{T\subseteq D: 0 < \vert T \vert \leq k} \gamma^{\vert T\vert}_{\vert S \cap T\vert} I_T : \phi \in \mathbb{R}^{d + \vert \mathcal I_{\leq k} \vert} \text{ and } \sum_{i \in D} I_i = \nu(D) \right\}. 
    \end{align*}

    \begin{lemma}\label{appx_lemma_equivalent_kaddshap}
        There is an equivalence between the function class $\mathcal F^{k_{\text{ADD}}}$ and the class of functions of PolySHAP representation with \frontier $\mathcal I_{\leq k}$, i.e. 
        \begin{align*}
            \mathcal F^{k_{\text{ADD}}} = \left\{ S \mapsto \sum_{T \in D \cup \mathcal I_{\leq k}} \phi_T \prod_{j \in T} \mathbbm{1}[j \in S]: \phi \in \mathbb{R}^{d+\vert \mathcal I_{\leq k} \vert } \text{ and } \langle \phi,1 \rangle = \nu(D) \right\}
        \end{align*}
    \end{lemma}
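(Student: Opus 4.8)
The plan is to prove the set equality by exhibiting an explicit, constraint-preserving change of parametrization, via the polynomial expansion of the Shapley-interaction basis. I would proceed in three steps.

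First I would establish the key identity: each generator $S\mapsto\gamma^{|T|}_{|S\cap T|}$ of $\mathcal F^{k_{\text{ADD}}}$ is the multilinear polynomial
\begin{align*}
\gamma^{|T|}_{|S\cap T|}=\sum_{R\subseteq T}B_{|T|-|R|}\prod_{j\in R}\mathbbm{1}[j\in S],
\end{align*}
which follows from $\gamma^t_r=\sum_{\ell=0}^r\binom{r}{\ell}B_{t-\ell}$ together with the elementary count $\binom{|S\cap T|}{\ell}=\sum_{R\subseteq T,\,|R|=\ell}\prod_{j\in R}\mathbbm{1}[j\in S]$. Summing over $T\in D\cup\mathcal I_{\leq k}$ and collecting monomials then writes any $f=\sum_{T}\gamma^{|T|}_{|\cdot\cap T|}I_T$ as $\sum_R\bigl(\sum_{T\supseteq R}B_{|T|-|R|}I_T\bigr)\prod_{j\in R}\mathbbm{1}[j\in\cdot]$, i.e.\ as a degree-$\leq k$ polynomial in the presence indicators. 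Since the top monomial $\prod_{j\in T}\mathbbm{1}[j\in\cdot]$ occurs in $\gamma^{|T|}_{|\cdot\cap T|}$ with coefficient $B_0=1$, the map $I\mapsto\phi$ above is triangular with unit diagonal with respect to set inclusion, hence invertible, and it carries the support $\{|T|\leq k\}$ onto $\{|R|\leq k\}$, so nothing is lost by truncating at order $k$. This identifies the linear span of the $k_{\text{ADD}}$ generators with that of the order-$1$ through order-$k$ PolySHAP monomials, together with the constant function.

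Next I would match the single affine constraint cutting out each family. The surrounding proof already reduces the $k_{\text{ADD}}$ efficiency condition to $\sum_{i\in D}I_i=\nu(D)$ using $\gamma^s_s-\gamma^s_0=\mathbbm{1}[s=1]$; evaluating the polynomial identity at $S=D$ and at $S=\emptyset$ shows this is exactly $f(D)-f(\emptyset)=\nu(D)$. On the PolySHAP side, $g(S)=\sum_{T\in D\cup\mathcal I_{\leq k}}\phi_T\prod_{j\in T}\mathbbm{1}[j\in S]$ has $g(\emptyset)=0$ and $g(D)=\langle\phi,\mathbf{1}\rangle$, so $\langle\phi,\mathbf{1}\rangle=\nu(D)$ is again the condition $g(D)-g(\emptyset)=\nu(D)$. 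Thus, modulo the one extra degree of freedom isolated in Step 1, both families are the same affine slice of the same span; combining the resulting equality of classes with \cref{thm_polyshap_consistency} applied to the common weighted-least-squares minimizer then gives the convergence statement of \cref{prop_equivalence_kaddshap}, since (as noted at the start of the proof) the $k_{\text{ADD}}$-SHAP output $I_i$ is the Shapley value of that minimizer.

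I expect the main obstacle to be pinning down exactly that extra degree of freedom, namely the empty-set coefficient. Each $\gamma$-generator takes the nonzero value $\gamma^{|T|}_0=B_{|T|}$ at $S=\emptyset$, whereas the PolySHAP monomials vanish there, so the span identification in Step 1 genuinely requires the constant function to be carried along — this is precisely the ``$I_\emptyset$ is a harmless additive shift'' observation already in the proof. The argument must therefore make explicit that it is the intrinsic constraint $h(D)-h(\emptyset)=\nu(D)$, not an ad hoc normalization $h(\emptyset)=0$, that reconciles the two parametrizations, and that — because $\mu(\emptyset)=\mu(D)=0$ — the fitted function influences the Shapley-value output only through its restriction to $\{S:0<|S|<d\}$ together with this constraint, so any residual constant is immaterial. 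Verifying the triangularity and invertibility of the Bernoulli change of basis is the only other technical point, and it is immediate from the displayed identity.
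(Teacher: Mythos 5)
Your core algebra follows essentially the same route as the paper's proof: your expansion $\gamma^{|T|}_{|S\cap T|}=\sum_{R\subseteq T}B_{|T|-|R|}\prod_{j\in R}\mathbbm{1}[j\in S]$ is exactly the Shapley-interaction-to-M\"obius conversion that the paper simply cites from \citet{Grabisch.2000} (Tables 3 and 4), your triangular-with-unit-diagonal observation is the paper's remark that the conversion is upper-triangular in the superset order (so vanishing above order $k$ transfers both ways), and your constraint matching via $h(D)-h(\emptyset)=\nu(D)$, i.e.\ $\sum_i I_i=\sum_R\phi_R$, is the paper's ``the constraints are similarly converted.'' Deriving the conversion explicitly rather than citing it is a fine, self-contained variant, but it is not a different argument.

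The one place where you genuinely depart from the paper is also where your proposal has a gap. You correctly notice that each $\gamma$-generator contributes an order-zero term $B_{|T|}I_T$, so an element of $\mathcal F^{k_{\text{ADD}}}$ takes the value $\sum_{0<|T|\leq k}B_{|T|}I_T$ at $S=\emptyset$ (and carries that same constant on every coalition), whereas every function in the PolySHAP class vanishes at $\emptyset$; what the change of basis actually yields is therefore equality of the two classes \emph{modulo additive constants} (with the constraints corresponding under the parametrization), not the displayed set equality verbatim. Your proposed repair — that the residual constant is ``immaterial'' because $\mu(\emptyset)=\mu(D)=0$ — does not close this: the induced constant is coefficient-dependent and shifts the fitted values on all coalitions $0<|S|<d$, which do carry weight, so it changes the least-squares objective and, a priori, the minimizer. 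To make your version rigorous you would either need to state the lemma as an equivalence up to additive constants (which is what the paper's proof, read charitably, establishes at the level of truncated interaction/M\"obius supports) and then separately prove that such constant offsets do not alter the extracted Shapley values of the constrained weighted least-squares fit — e.g.\ by showing the all-ones direction leaks uniformly into the order-one coefficients and is re-centered by the efficiency constraint — or restrict attention, as the paper implicitly does, to the correspondence of parametrizations $I\leftrightarrow\phi$ rather than of function values at $\emptyset$. Your related claim that the span of the $k_{\text{ADD}}$ generators equals the span of the order-$1$ to order-$k$ monomials \emph{together with} the constant is also off by a dimension (the constant component is determined by the other coefficients, not free), which is a symptom of the same issue.
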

    \begin{proof}
        For the game $\nu$ there exist the two equivalent representations \citep[Table 3 and 4]{Grabisch.2000}
    \begin{align*}
        \nu(S) =  \sum_{T\subseteq D} \gamma^{\vert T\vert}_{\vert S \cap T\vert}I_{\text{Sh}}(T) \quad \text{ with }         \gamma^{t}_{r} := \sum_{\ell=0}^{r}\binom{r}{\ell} B_{t-\ell},
    \end{align*}
    where $I_{\text{Sh}}$ is the Shapley interaction index \citep{Grabisch.1999}, and the Möbius representation 
    \begin{align*}
        \nu(S) = \sum_{T\subseteq D} m(T) \prod_{j \in T} \mathbbm{1}[j \in S] \quad \text{with } m(S) := \sum_{L \subseteq S} (-1)^{\vert S \vert - \vert L \vert} \nu(L).
    \end{align*}
    Moreover, there exist the two conversion formulas \citep{Grabisch.2000}[Table 3 and 4]
    \begin{align*}
        I_{\text{Sh}}(S) = \sum_{T \subseteq D: T \supseteq S} \frac{1}{\vert T \vert - \vert S \vert +1} m(T) \text{ and } m(S) = \sum_{T \subseteq D: T \supseteq S} B_{\vert T \vert - \vert S \vert } I_{\text{Sh}}(T).
    \end{align*}
    From the conversion formulas it is obvious that
    \begin{align*}
        I_{\text{Sh}}(S) = 0, \quad \forall S\subseteq D: \vert S \vert > k \quad \Leftrightarrow \quad m(S) = 0, \quad \forall S\subseteq D: \vert S \vert > k.
    \end{align*}
    Hence, restricting the interaction representation to order $k$ yields the same function class as restricting the Möbius representation to order $k$.
    Moreover, the constraints are similarly converted, which proves the equivalence.
    \end{proof}

    Utilizing \cref{appx_lemma_equivalent_kaddshap}, we obtain that for 
    \begin{align*}
        I^{k^+_{\text{ADD}}} := \argmin_{I \in \mathbb{R}^{\sum_{\ell=1}^k \binom{d}{\ell}}}\sum_{S\subseteq D}\mu(S) \left(\nu(S) - \sum_{T\subseteq D:  \vert T \vert \leq k} \gamma^{\vert T\vert}_{\vert S \cap T\vert} I_T\right)^2 
        \\
        \text{s.t. } \nu(D)= \sum_{i \in D} I_i
    \end{align*}
    we have equivalence between the approximations
    \begin{align*}
         \sum_{T\subseteq D:  \vert T \vert \leq k} \gamma^{\vert T\vert}_{\vert S \cap T\vert}  I^{k^+_{\text{ADD}}}_T = \sum_{T \in D \cup \mathcal I_{\leq k}} \phi^{\mathcal I_{\leq k}}_T \prod_{j \in T} \mathbbm{1}[j \in S],
    \end{align*}
    where $\phi^{\mathcal I_{\leq k}}_T$ is the PolySHAP representation, due to the equivalent function classes parametrized by the vectors $I^{k^+_{\text{ADD}}}$ and $\phi^{\mathcal I_{\leq k}}$.
    By \cref{thm_polyshap_consistency}, we know that the Shapley values of this approximation are equal to the Shapley values of $\nu$, and hence, we have
    \begin{align*}
        I^{k_{\text{ADD}}}_i = \phi_i^{\text{SV}}[\sum_{T\subseteq D:  \vert T \vert \leq k} \gamma^{\vert T\vert}_{\vert S \cap T\vert}  I^{k_{\text{ADD}}}_T] = \phi_i^{\text{SV}}[\sum_{T\subseteq D:  \vert T \vert \leq k} \gamma^{\vert T\vert}_{\vert S \cap T\vert}  I^{k^+_{\text{ADD}}}_T] =  \phi_i^{\text{SV}}[\nu],
    \end{align*}
    which concludes the proof and shows convergence of $k_{\text{ADD}}$-SHAP to the Shapley value.

    \paragraph{Practical difference between $k_{\text{ADD}}$-SHAP and PolySHAP.}
    In contrast to PolySHAP, $k_{\text{ADD}}$-SHAP was proposed for $k$-additive \frontiers.
    Moreover, the design matrix of $k_{\text{ADD}}$-SHAP is less intuitive, making the PolySHAP formulation a simpler and more transparent alternative.
    More importantly, a key practical difference arises from our use of the modified representation $I^{k^+_{\text{ADD}}}$ as an intermediate step in the proof.
    While $I^{k^+_{\text{ADD}}}$ and $I^{k_{\text{ADD}}}$ yield the same Shapley values when all subsets are evaluated, they diverge under approximation.
    In particular, unlike PolySHAP, $k_{\text{ADD}}$-SHAP is affected by the value of $\nu(\emptyset)$, and its least-squares fit includes an additional variable.
    For these reasons, we recommend PolySHAP in practice over $k_{\text{ADD}}$-SHAP.

\end{proof}

\newpage
\section{Experimental Details and Additional Results}\label{appx_sec_exp_details}

In this section, we provide additional details regarding our experiments and the local explanation game setup (\cref{appx_sec_exp_setup}) with additional results on the remaining games using MSE (\cref{appx_sec_mse}), Precision@5 (\cref{appx_sec_prec5}), and Spearman correlation (\cref{appx_sec_spearman}). Lastly, we report results of the runtime analysis (\cref{appx_sec_runtime}).

\subsection{Experimental Details}\label{appx_sec_exp_setup}

All experiments were conducted on a consumer-grade laptop with an 11th Gen Intel Core i7-11850H CPU and 30GB of RAM, where we used \texttt{cuda}\footnote{\url{https://developer.nvidia.com/cuda-toolkit}} on a NVIDIA RTX A2000 GPU for inference of the CIFAR10 game.

\begin{table}[t]
\caption{Datasets used for tabular explanation games}\label{appx_tab_datasets}
\centering
\resizebox{\linewidth}{!}{
\begin{tabular}{llll}
\textbf{Name (ID in bold)} & \textbf{Reference} & \textbf{License} & \textbf{Source} \\
\hline
California \textbf{Housing}  & \citep{Kelley.1997} & Public Domain & \texttt{sklearn} \\
\textbf{Bike} Regression & \citep{FanaeeT.2014} & CC-BY 4.0 & OpenML \\
\textbf{Forest} Fires & \citep{Cortez.2007} & CC-BY 4.0 & UCI Repo \\
\textbf{Adult} Census & \citep{Kohavi.1996} & CC-BY 4.0 & OpenML\\
Real \textbf{Estate} & \citep{Yeh.2018b} & CC-BY 4.0 & UCI Repo  \\
Breast \textbf{Cancer} & \citep{Street.1993} & CC-BY 4.0 & \texttt{shap}\\
Correlated Groups (\textbf{CG60}) & synthetic & MIT & \texttt{shap}\\
Independent Linear (\textbf{IL60}) & synthetic & MIT & \texttt{shap}\\
\textbf{NHANES} I & \citep{Dinh.2019} & Public Domain & \texttt{shap} \\
Communities and \textbf{Crime} & \citep{Redmond.2011} & CC-BY 4.0 & \texttt{shap}\\
\hline
\end{tabular}
}
\end{table}

\paragraph{Non-tabular Datasets.}
We used the $30$ pre-computed games provided by the \texttt{shapiq} benchmark for the ResNET18 \citep{He.2018}, and the vision transformers pre-trained on ImageNet \citep{Deng.2009}.
We used the pre-computed language game using a DistilBERT \citep{Sanh.2019} model and sentiment analysis on the IMDB dataset \citep{Maas.2011} from the \texttt{shapiq} benchmark.
Lastly for the CIFAR10 game, we used a vision transformer (vit-base-patch16-224-in21k) \citep{Dosovitskiy.2021} fine-tuned on CIFAR10 \citep{Krizhevsky.2009}, which is publicly available\footnote{\url{https://huggingface.co/aaraki/vit-base-patch16-224-in21k-finetuned-cifar10}}.

\paragraph{Datasets.}
The datasets and their source used for the tabular explanation games are described in \cref{appx_tab_datasets}.
The \emph{Forest Fires}\footnote{\url{https://archive.ics.uci.edu/ml/datasets/forest+fires}} and \emph{Real Estate}\footnote{\url{https://archive.ics.uci.edu/dataset/477/real+estate+valuation+data+set}} were sourced from UCI Machine Learning Repository (UCI Repo), whereas \emph{Bike Regression} was taken from OpenML \citep{Feurer.2020}.
The \emph{California Housing} dataset was sourced from scikit-learn \citep{Pedregosa.2011}[\texttt{sklearn}], and the remaining datasets were sourced from the \texttt{shap}\footnote{\url{https://shap.readthedocs.io/en/latest/}} library.

\paragraph{Random forest configuration.}
We use the standard implementation for \texttt{RandomForestRegressor} and \texttt{RandomForestClassifier} from \texttt{scikit-learn}  \citep{Pedregosa.2011} with $10$ tree instances of maximum depth $10$ and fit the training data using accuracy (classification) and $R^2$ (regression).
For all datasets, a $80/20$ percent train-test-split was executed.

\paragraph{RegressionMSR.}
For the RegressionMSR approach, we use \texttt{XGBoost} \citep{Chen.2016} with its default configuration as a tree-based backbone combined with the \texttt{MonteCarlo} approximator (equivalent to MSR \citep{Witter.2025}) from the \texttt{shapiq} package.

\paragraph{MSR and Unbiased KernelSHAP.}
\citet[Theorem 4.5]{Fumagalli.2023} established that MSR \citep{Wang.2023} is equivalent to Unbiased KernelSHAP \citep{Covert.2021}.
We use the implementation of Unbiased KernelSHAP provided in the \texttt{shapiq} package.

\paragraph{SVARM}
Shapley Value Approximation without Requesting Marginals (SVARM) was proposed by \citet{Kolpaczki.2024a} and uses stratification of MSR \citep{Castro.2017}.
We use the implementation of SVARM provided in the \texttt{shapiq} package.

\subsection{Additional Results on Approximation Quality using MSE}\label{appx_sec_mse}
In this section, we report approximation quality measured by MSE for the remaining explanation games.

\cref{appx_fig_exp_standard_mse} reports the MSE for the \emph{Housing}, \emph{ViT9}, \emph{Adult}, \emph{DistilBERT}, \emph{Estate} , and \emph{IL60} explanation games.
Similar to \cref{fig_exp_standard}, we observe that PolySHAP's approximation quality substantially improves with higher-order interactions.
Again, this comes at the cost of larger budget requirements, indicated by the delay of the line plots.
The Permutation Sampling and KernelSHAP (1-PolySHAP) baseline are consistently outperformed by higher-order PolySHAP, while RegressionMSR yields comparable results.

\cref{appx_fig_exp_paired_vs_standard_mse} shows the approximation quality of PolySHAP with and without (standard) paired subset sampling.
Similar to \cref{fig_exp_paired_vs_standard}, we observe a strong improvement of $1$-PolySHAP due to the equivalence to $2$-PolySHAP.
The same observation holds for $3$-PolySHAP.

\begin{figure}[t]
    \centering
    \includegraphics[width=.8\linewidth]{figures/legend_standard.pdf}
    \begin{minipage}{0.245\linewidth}
        \includegraphics[width=\linewidth]{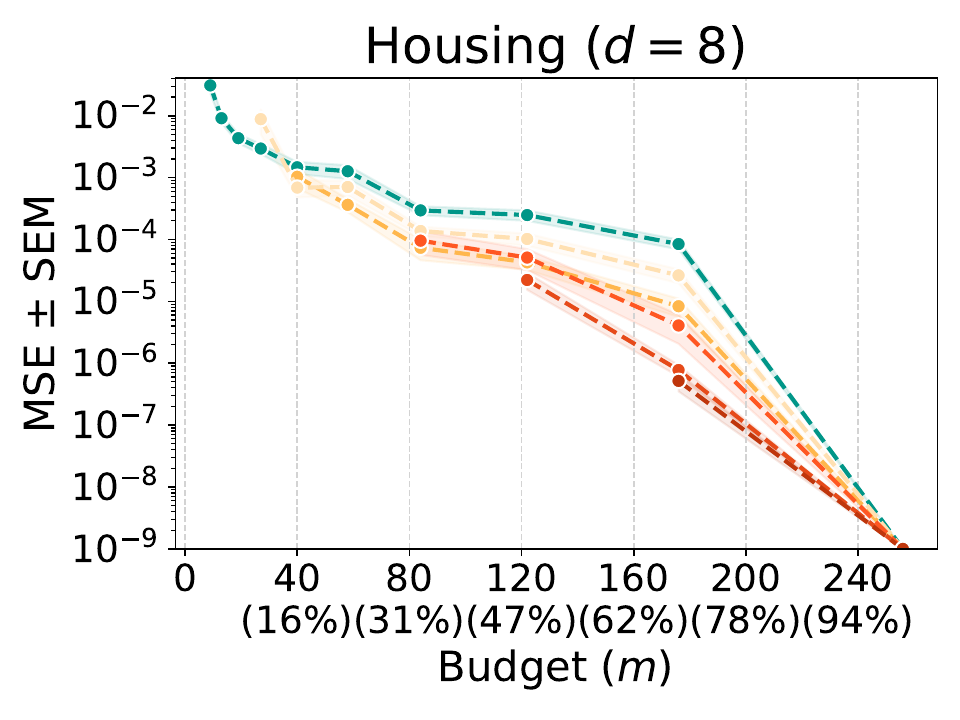}
    \end{minipage}
        \hfill
    \begin{minipage}{0.245\linewidth}
        \includegraphics[width=\linewidth]{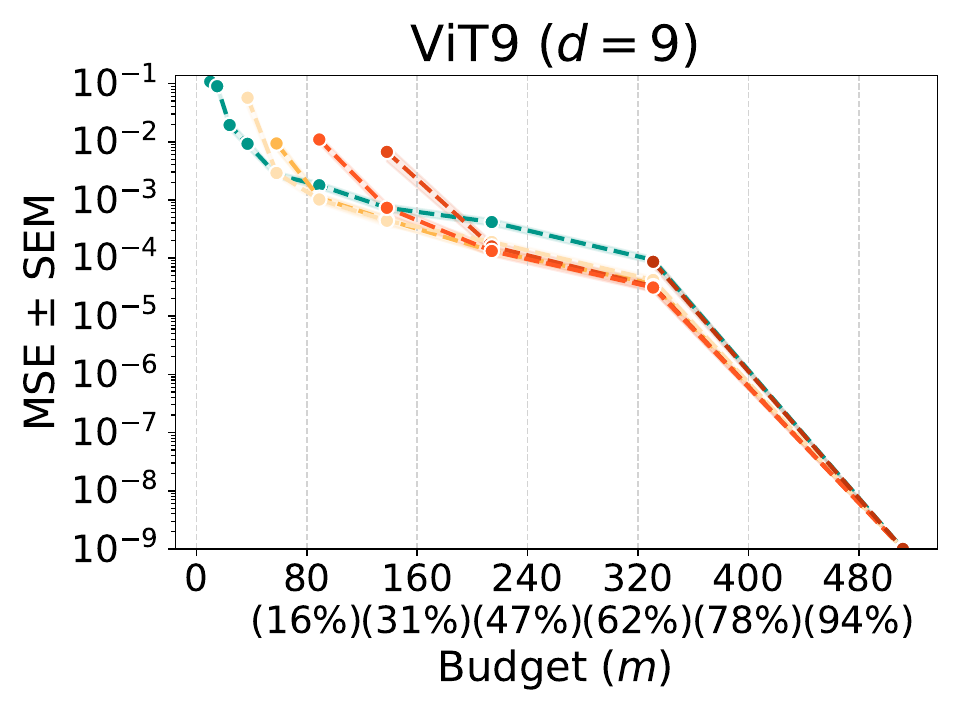}
    \end{minipage}
        \hfill
    \begin{minipage}{0.245\linewidth}
        \includegraphics[width=\linewidth]{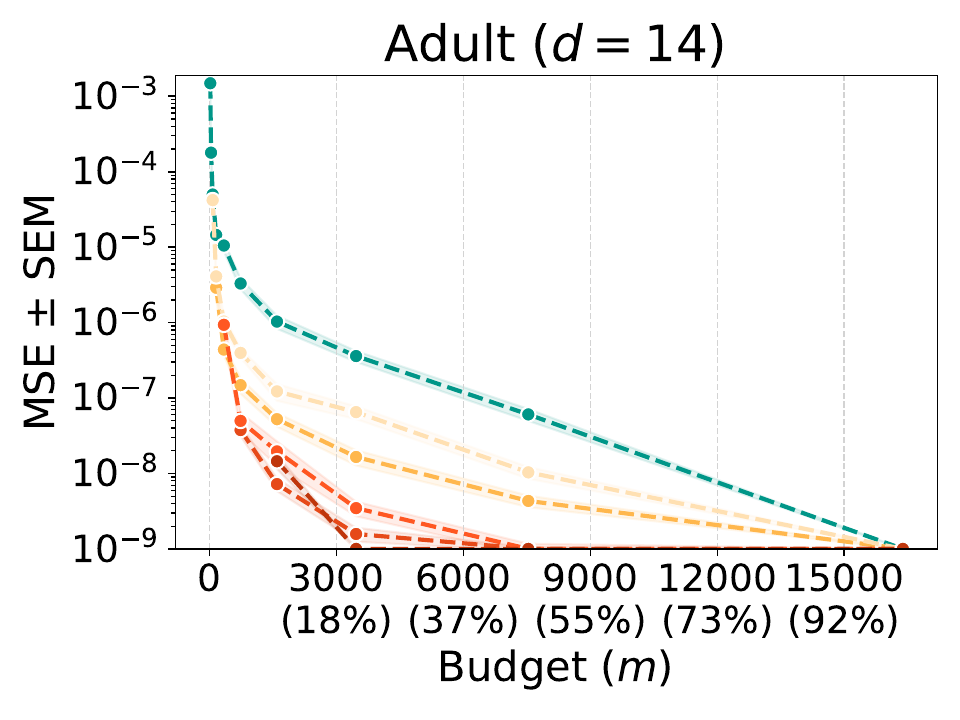}
    \end{minipage}
        \begin{minipage}{0.245\linewidth}
        \includegraphics[width=\linewidth]{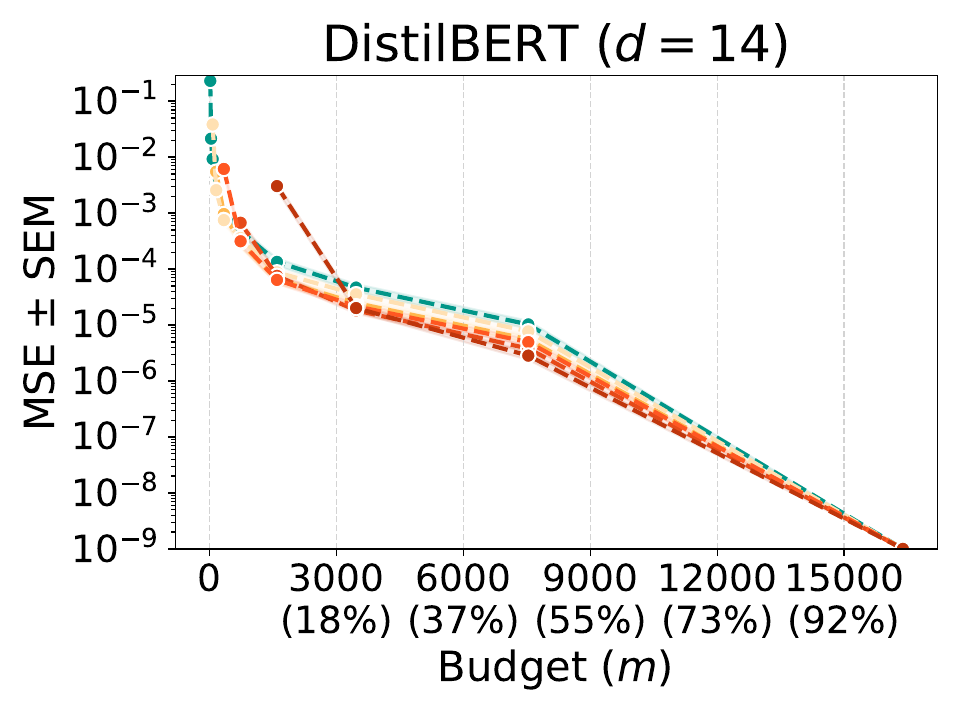}
    \end{minipage}
        \hfill
    \begin{minipage}{0.245\linewidth}
        \includegraphics[width=\linewidth]{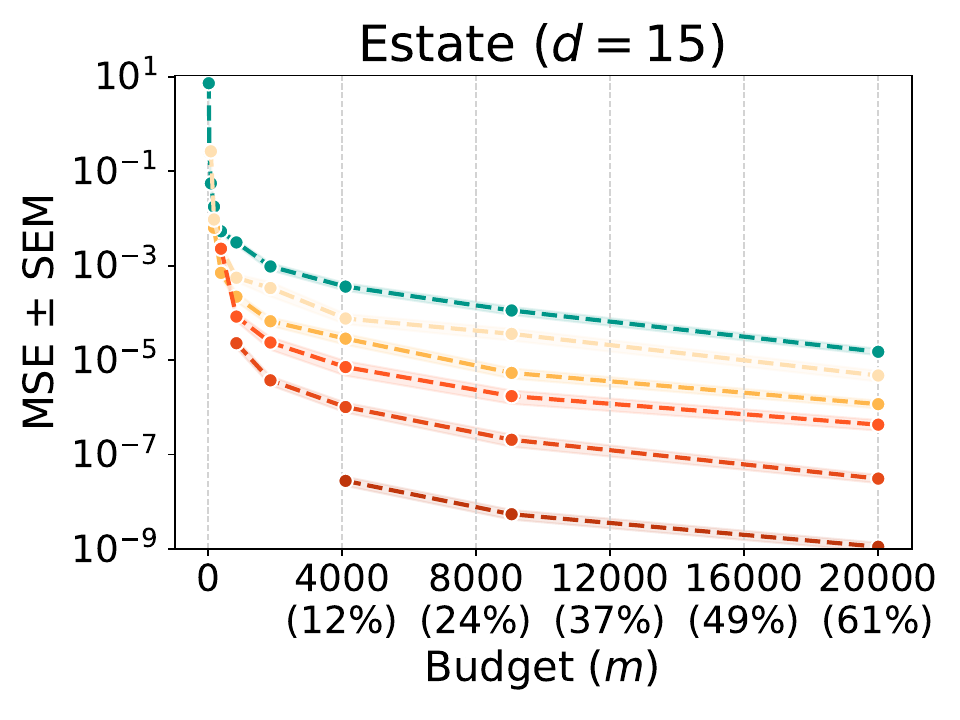}
    \end{minipage}
    \hfill
        \begin{minipage}{0.245\linewidth}
        \includegraphics[width=\linewidth]{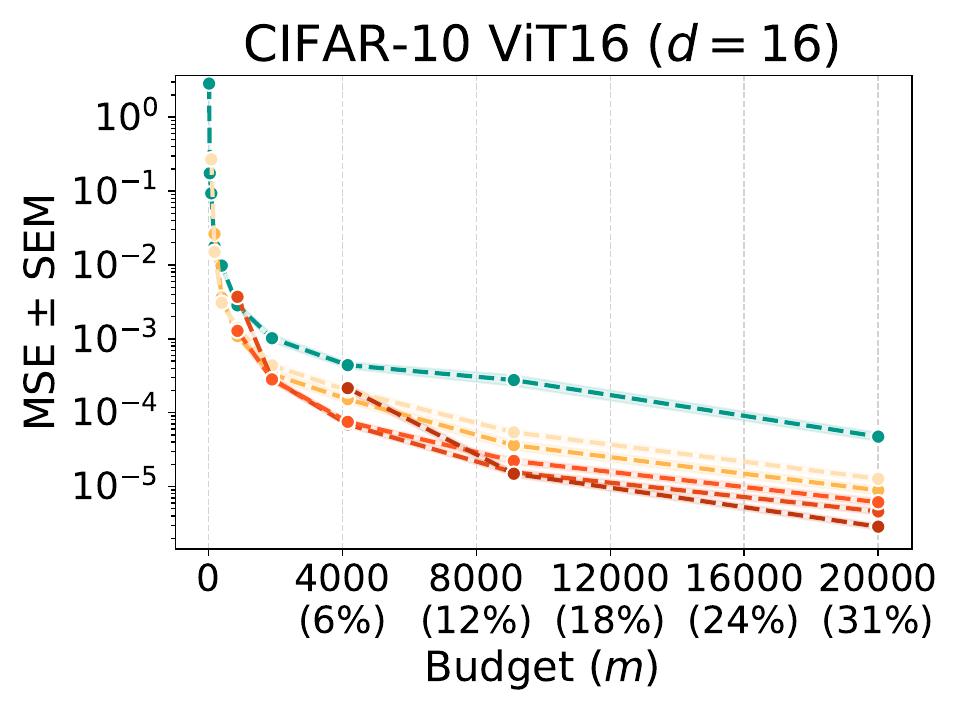}
    \end{minipage}
        \hfill
    \begin{minipage}{0.245\linewidth}
        \includegraphics[width=\linewidth]{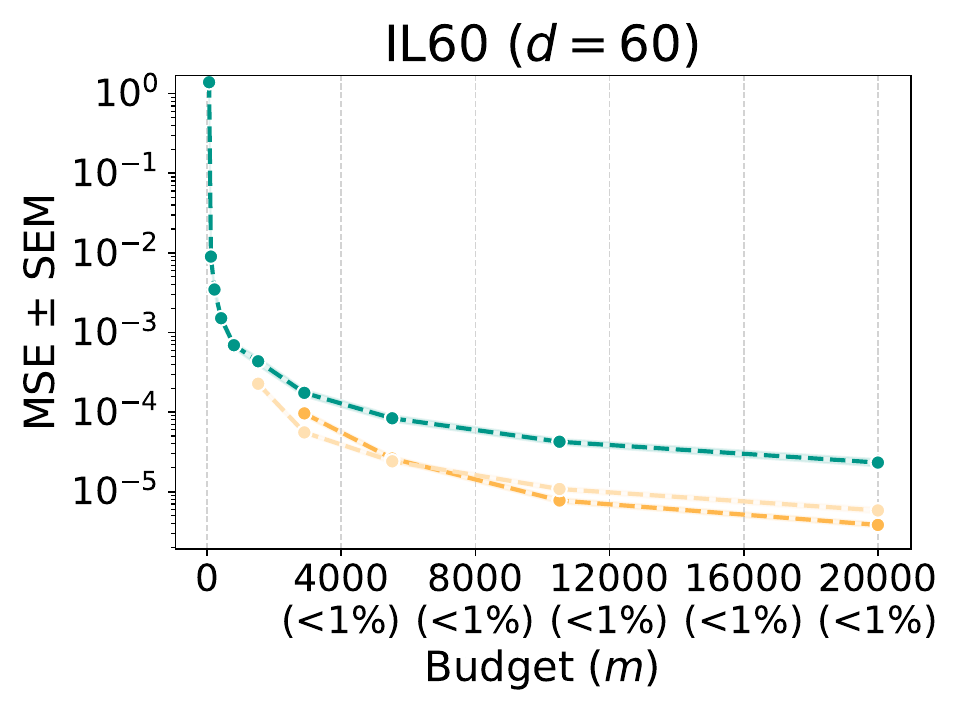}
    \end{minipage}
    \hfill
    \begin{minipage}{0.245\linewidth}
        \hfill
    \end{minipage}
    \hfill
    \caption{Approximation quality measured by MSE ($\pm$ SEM) for varying budget ($m$) on remaining explanation games. Adding interactions in PolySHAP can substantially improve approximation quality}
    \label{appx_fig_exp_standard_mse}
\end{figure}

\begin{figure}[t]
    \centering
    \includegraphics[width=.5\linewidth]{figures/legend_standard_vs_paired.pdf}
    \\
    \begin{minipage}{0.245\linewidth}
        \includegraphics[width=\linewidth]{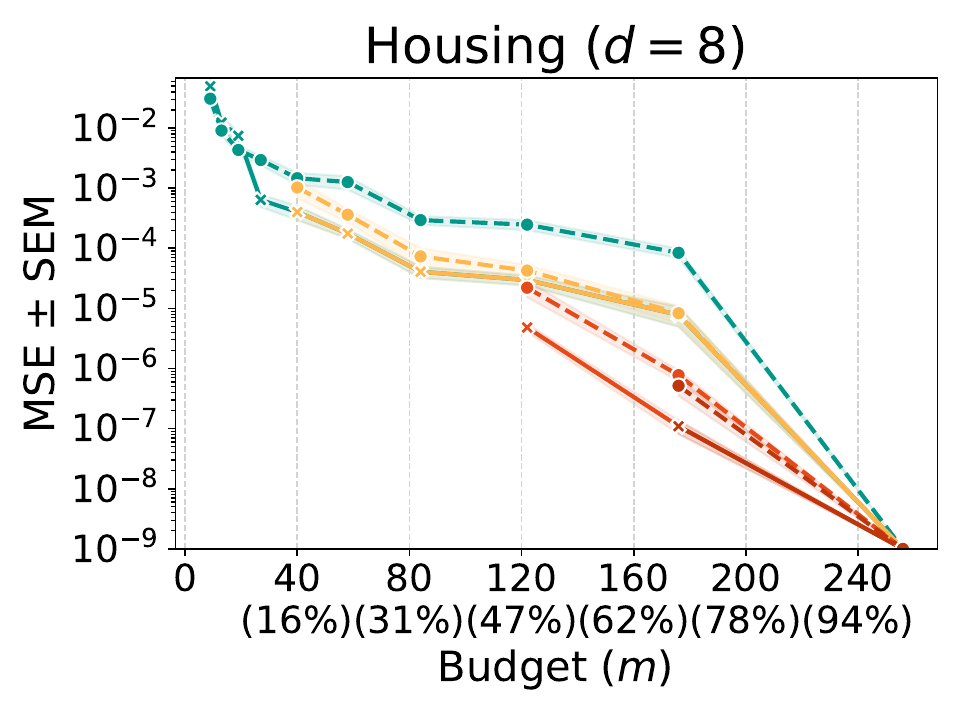}
    \end{minipage}
        \hfill
    \begin{minipage}{0.245\linewidth}
        \includegraphics[width=\linewidth]{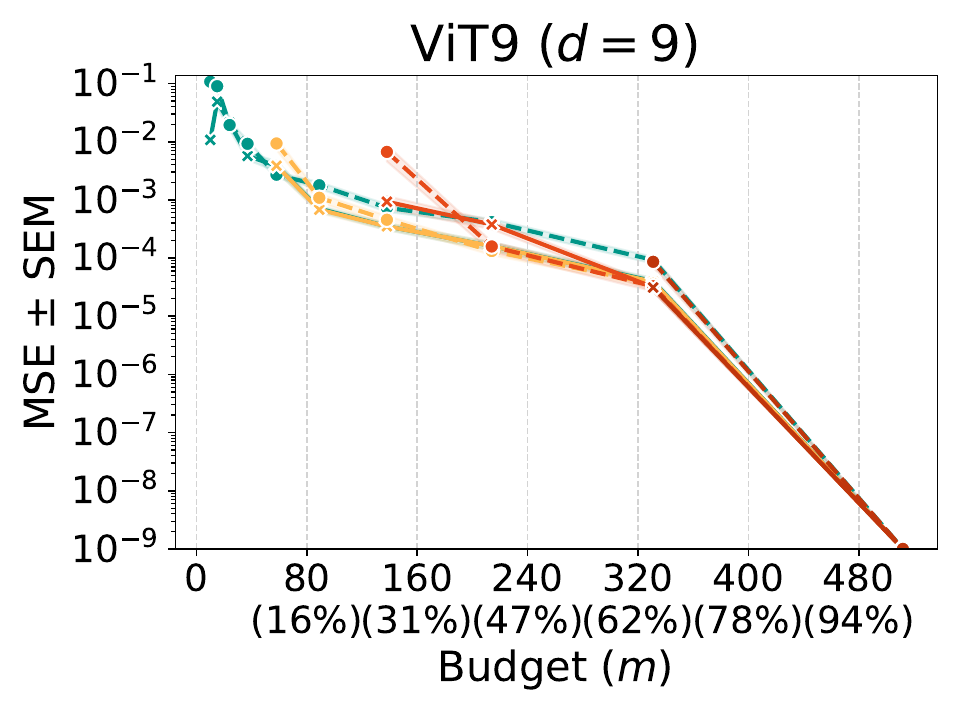}
    \end{minipage}
        \hfill
    \begin{minipage}{0.245\linewidth}
        \includegraphics[width=\linewidth]{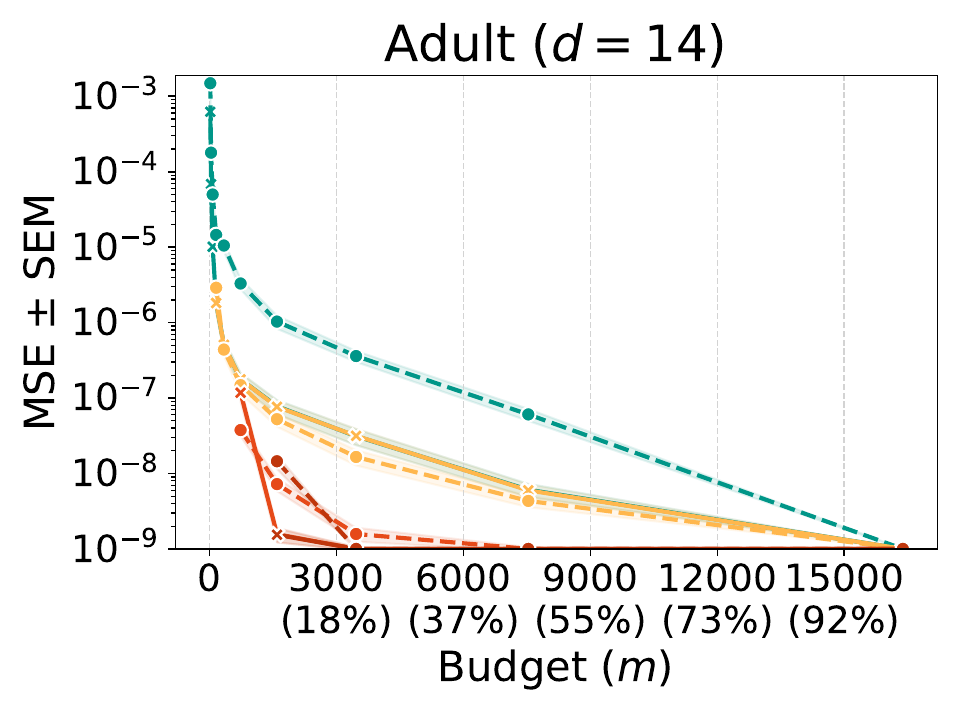}
    \end{minipage}
        \begin{minipage}{0.245\linewidth}
        \includegraphics[width=\linewidth]{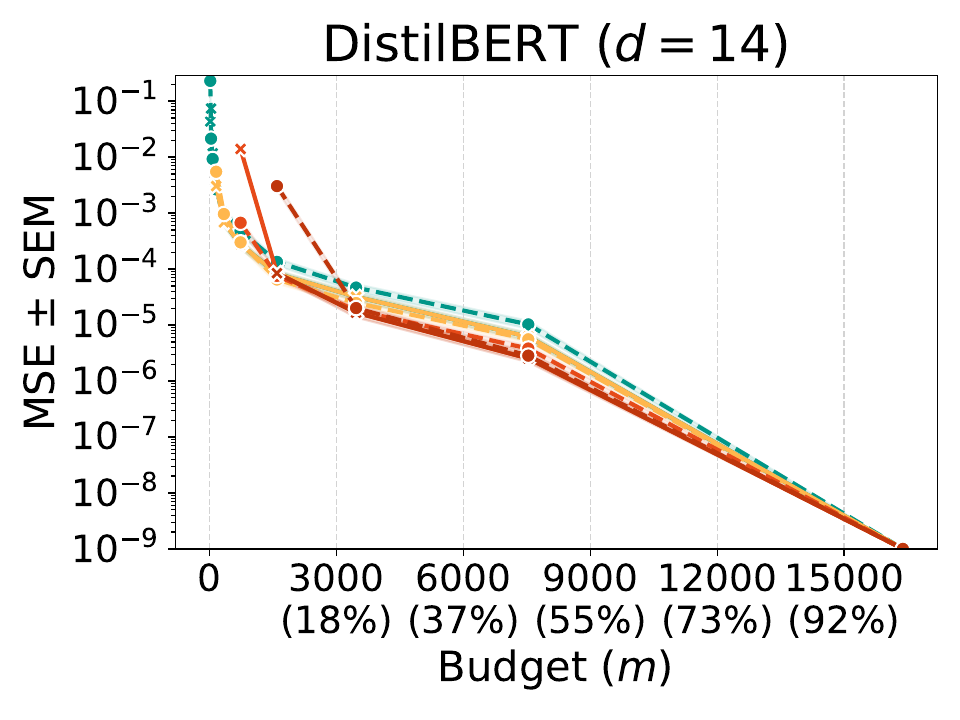}
    \end{minipage}
        \hfill
    \begin{minipage}{0.245\linewidth}
        \includegraphics[width=\linewidth]{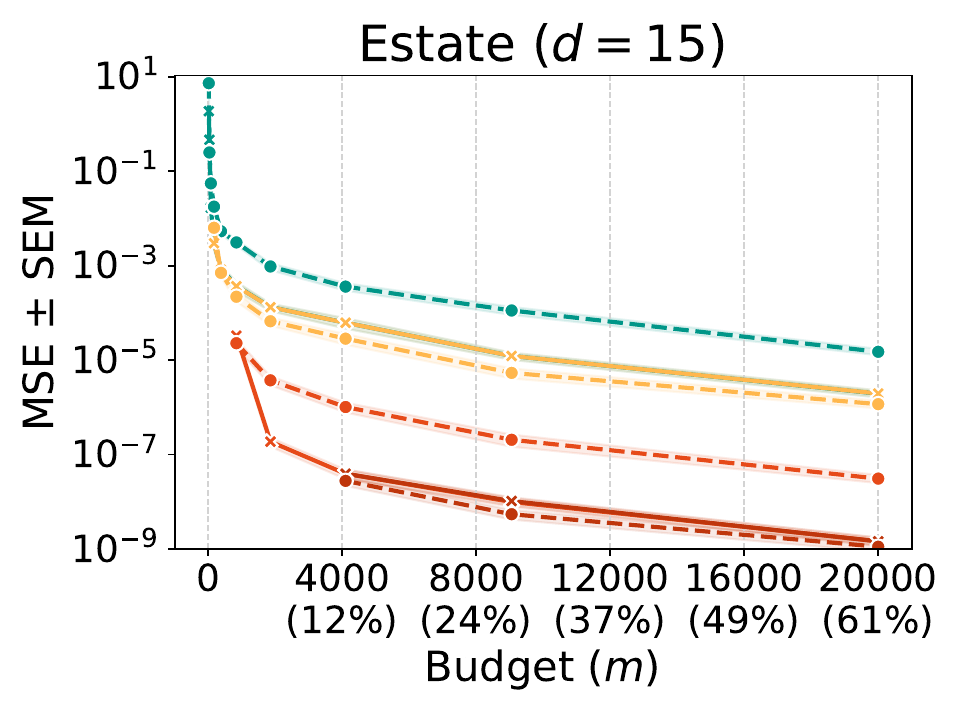}
    \end{minipage}
    \hfill
        \begin{minipage}{0.245\linewidth}
        \includegraphics[width=\linewidth]{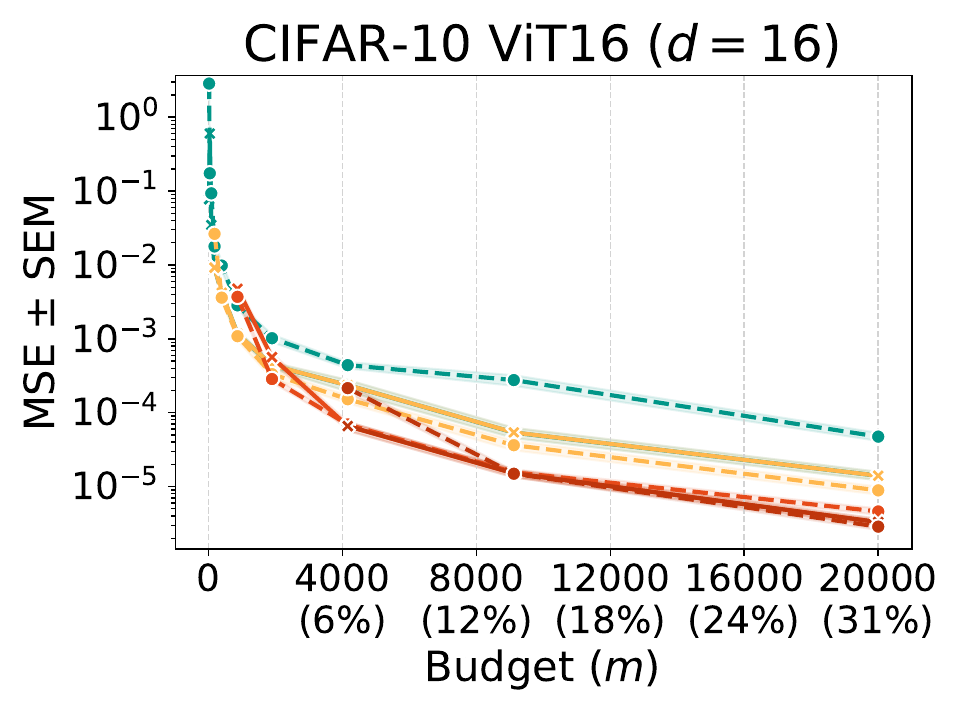}
    \end{minipage}
        \hfill
    \begin{minipage}{0.245\linewidth}
        \includegraphics[width=\linewidth]{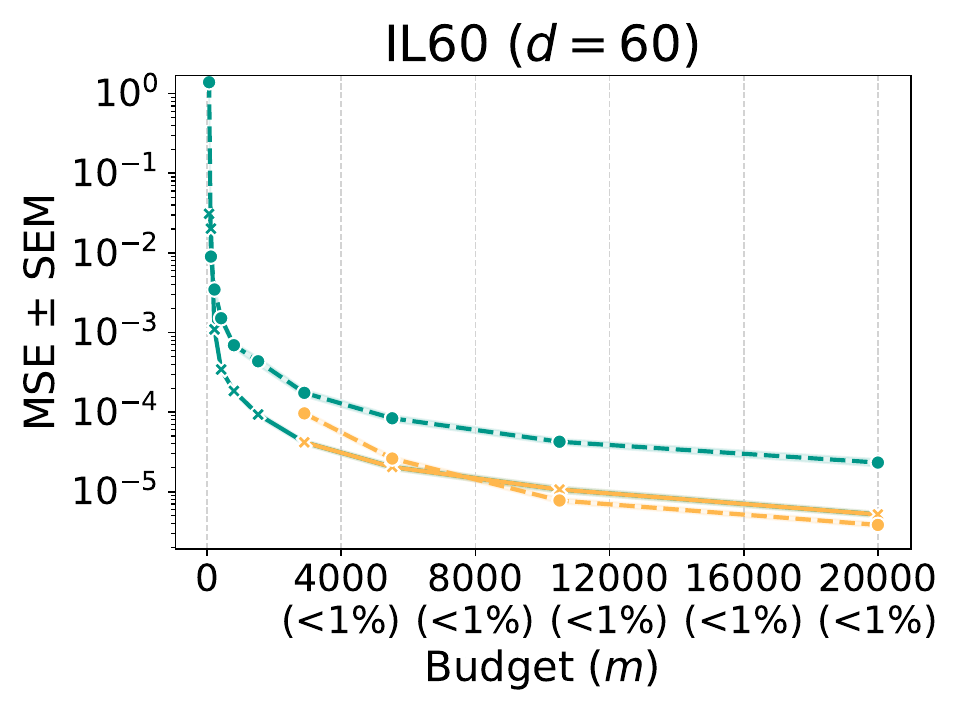}
    \end{minipage}
    \hfill
    \begin{minipage}{0.245\linewidth}
    \hfill
    \end{minipage}
    \hfill
    \caption{Approximation quality measured by MSE ($\pm$ SEM) for standard (dotted) and paired (solid) sampling for remaining local explanation games. Under paired sampling, 2-PolySHAP marginally improves, whereas KernelSHAP substantially improves due to its equivalence to 2-PolySHAP.}
    \label{appx_fig_exp_paired_vs_standard_mse}
\end{figure}

\begin{figure}[t]
    \centering
    \includegraphics[width=.8\linewidth]{figures/legend_paired_baselines.pdf}
    \begin{minipage}{0.245\linewidth}
        \includegraphics[width=\linewidth]{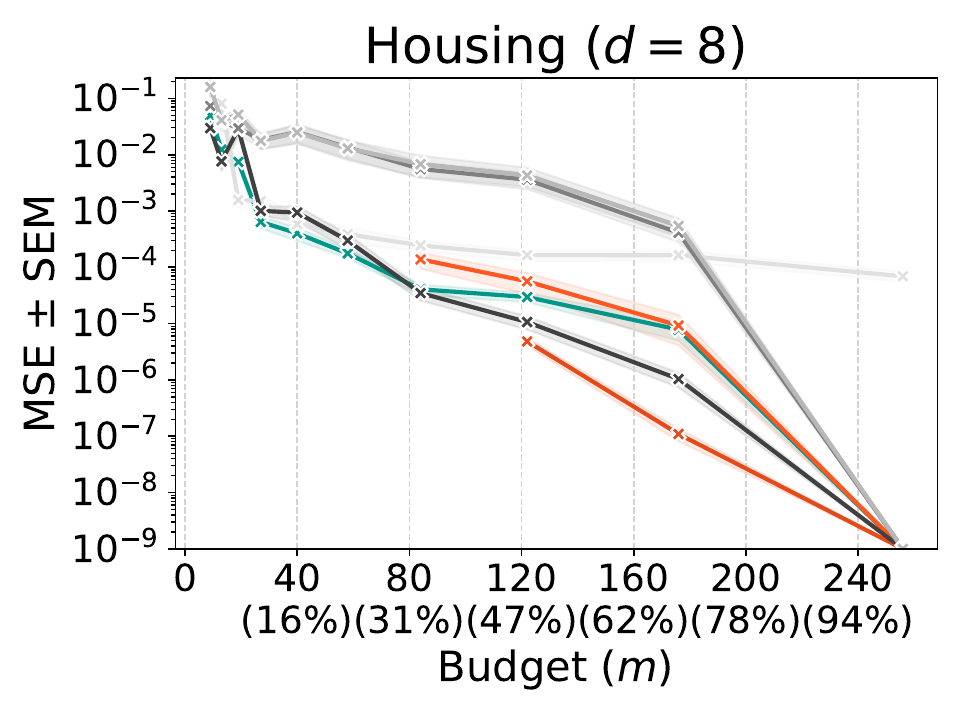}
    \end{minipage}
        \hfill
    \begin{minipage}{0.245\linewidth}
        \includegraphics[width=\linewidth]{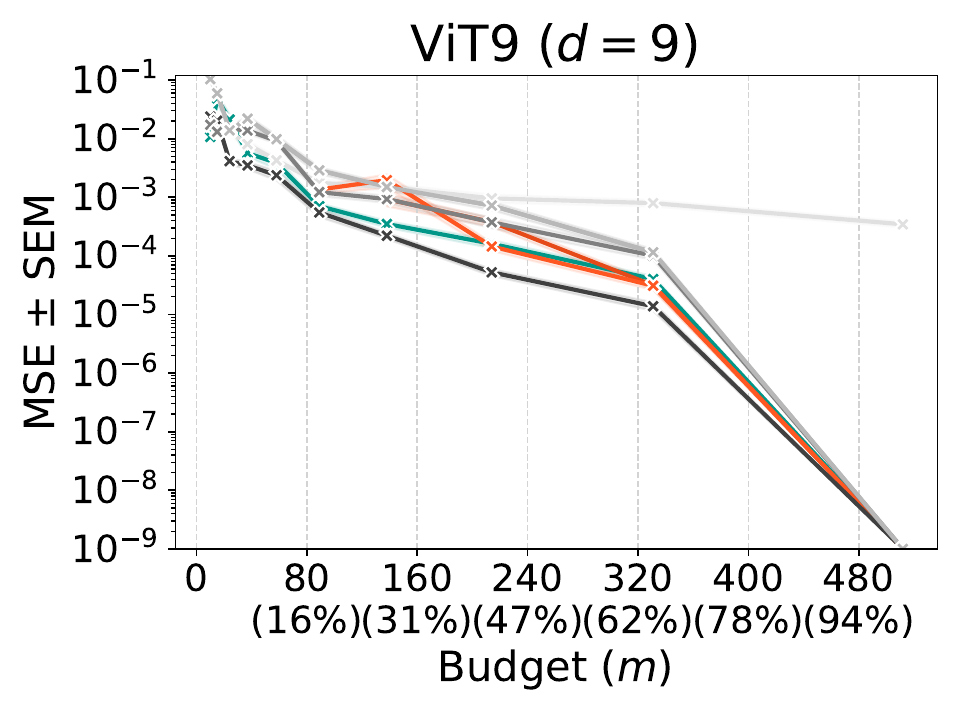}
    \end{minipage}
        \hfill
    \begin{minipage}{0.245\linewidth}
        \includegraphics[width=\linewidth]{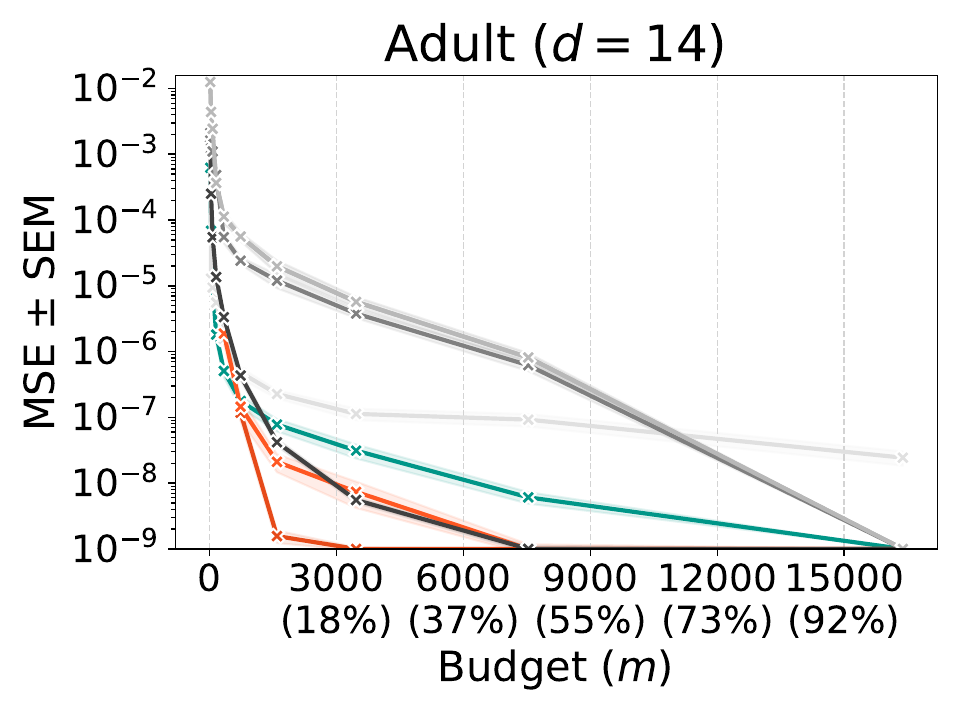}
    \end{minipage}
        \begin{minipage}{0.245\linewidth}
        \includegraphics[width=\linewidth]{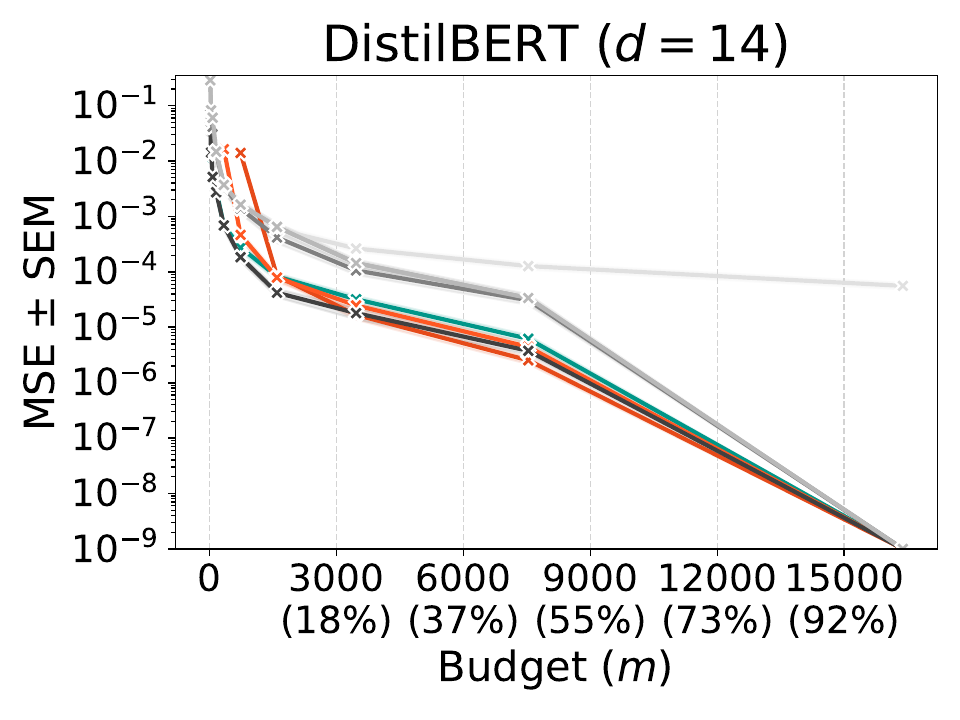}
    \end{minipage}
        \hfill
    \begin{minipage}{0.245\linewidth}
        \includegraphics[width=\linewidth]{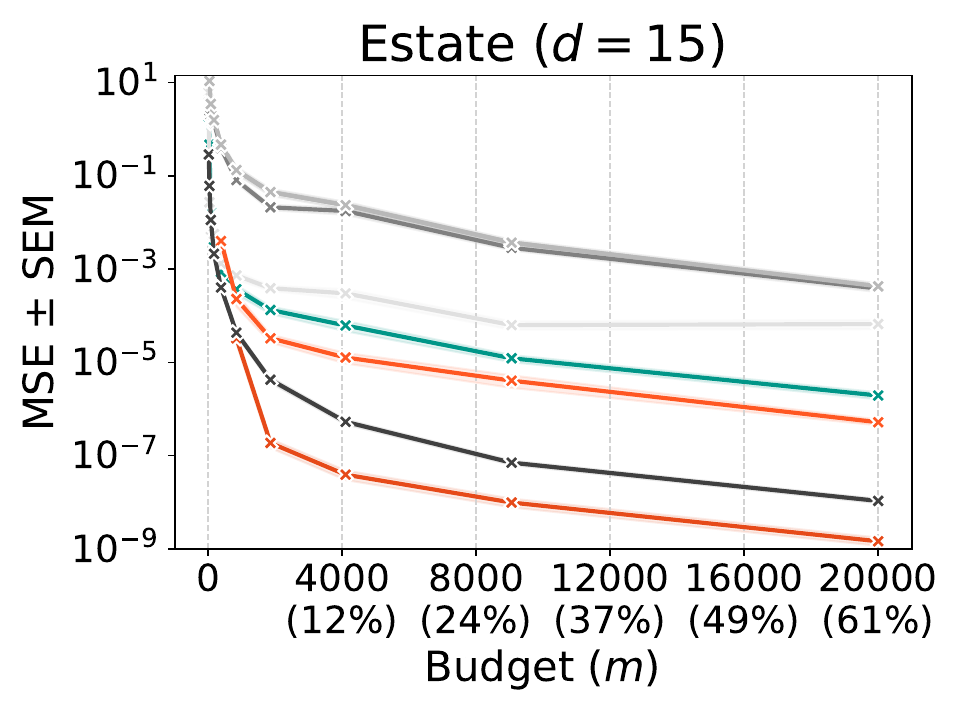}
    \end{minipage}
    \hfill
        \begin{minipage}{0.245\linewidth}
        \includegraphics[width=\linewidth]{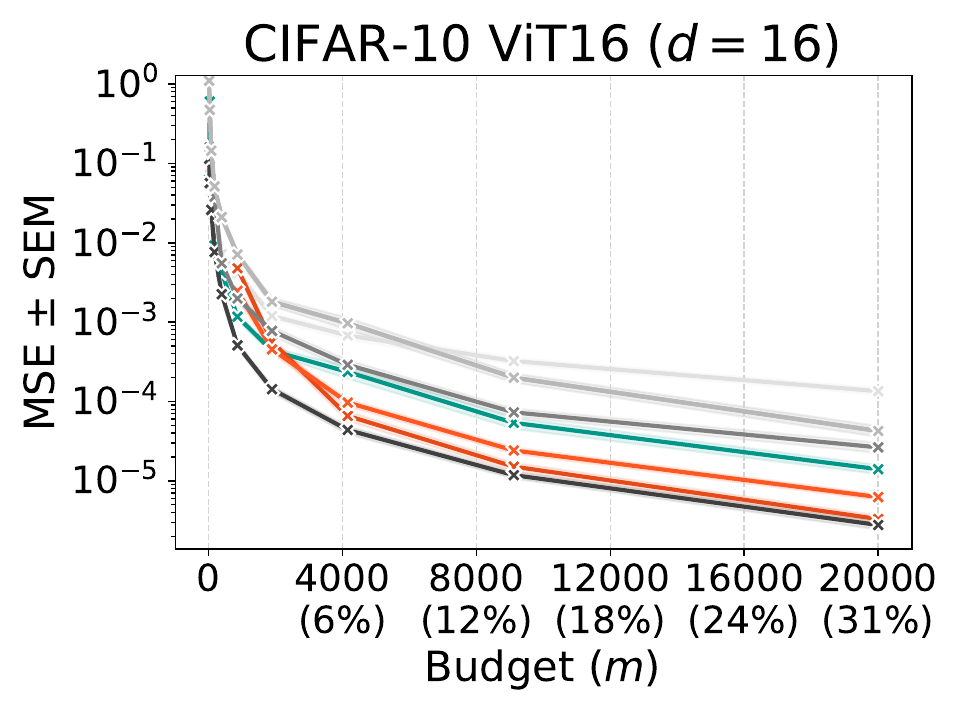}
    \end{minipage}
        \hfill
    \begin{minipage}{0.245\linewidth}
        \includegraphics[width=\linewidth]{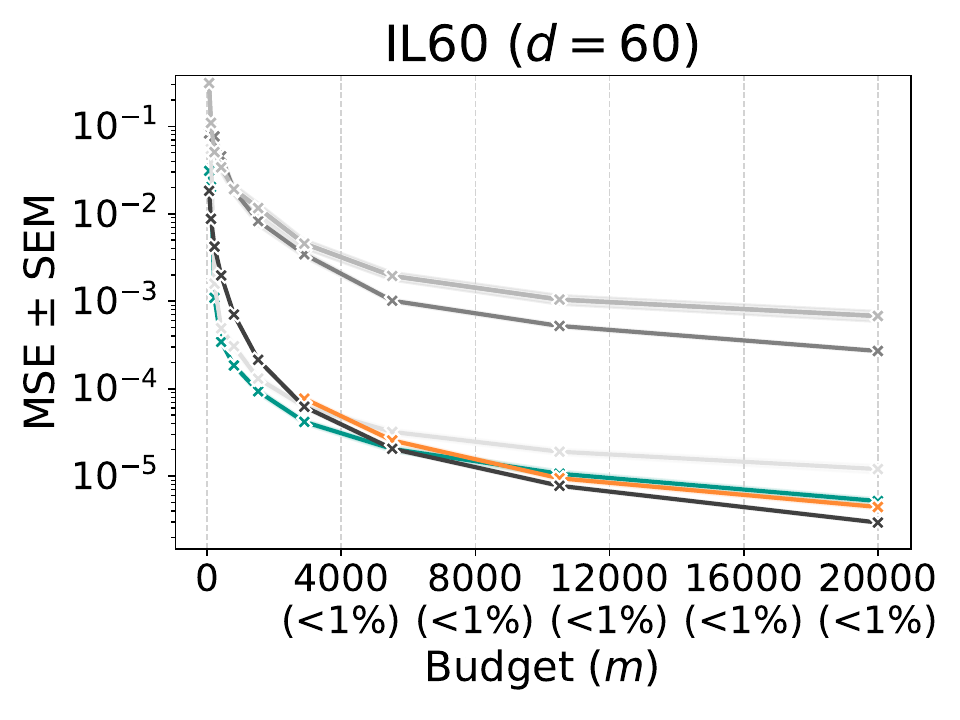}
    \end{minipage}
    \hfill
    \begin{minipage}{0.245\linewidth}
    \hfill
    \end{minipage}
    \hfill
    \caption{Approximation quality of PolySHAP variants and baselines measured by MSE ($\pm$ SEM) for paired sampling for remaining local explanation games.}
    \label{appx_fig_exp_paired_baselines_mse}
\end{figure}

\clearpage

\subsection{Approximation Quality using Precision@5}\label{appx_sec_prec5}

In this section, we report approximation quality with respect to top-5 precision (\emph{Precision@5}) for all explanation games from \cref{tab_games}.

In \cref{appx_fig_exp_standard_prec5} we observe that higher-order interactions also improve the approximation quality regarding the Precision@5 metric.
However, the distinction is not as clear as for MSE, since ranking is not considered in the optimization objective.
In general, the approximation quality varies across different games, where the low-dimensional tabular explanation games show very good results, in contrast to the more challenging non-tabular games (ViT9, DistilBERT, ResNet18 and ViT16), and high-dimensional games (CG60, IL60, NHANES, and Crime), which require more budget for similar results.

In \cref{appx_fig_exp_paired_vs_standard_prec5} Precision@5 is compared for standard sampling and paired sampling. Again, we observe improvements for 1-PolySHAP and 3-PolySHAP when using paired sampling due to its equivalence to 2-PolySHAP and 4-PolySHAP, respectively.

In \cref{appx_fig_exp_paired_competitors_prec5}, we report the Precision@5 metric for the PolySHAP variants and the baselines for paired sampling.
Again, we observe state-of-the-art performance for PolySHAP and RegressionMSR. 
PolySHAP's performance substantially improves, if the budget allows to capture order-3 interactions.

\begin{figure}[t]
    \centering
    \includegraphics[width=.8\linewidth]{figures/legend_standard.pdf}
    \begin{minipage}{0.245\linewidth}
        \includegraphics[width=\linewidth]{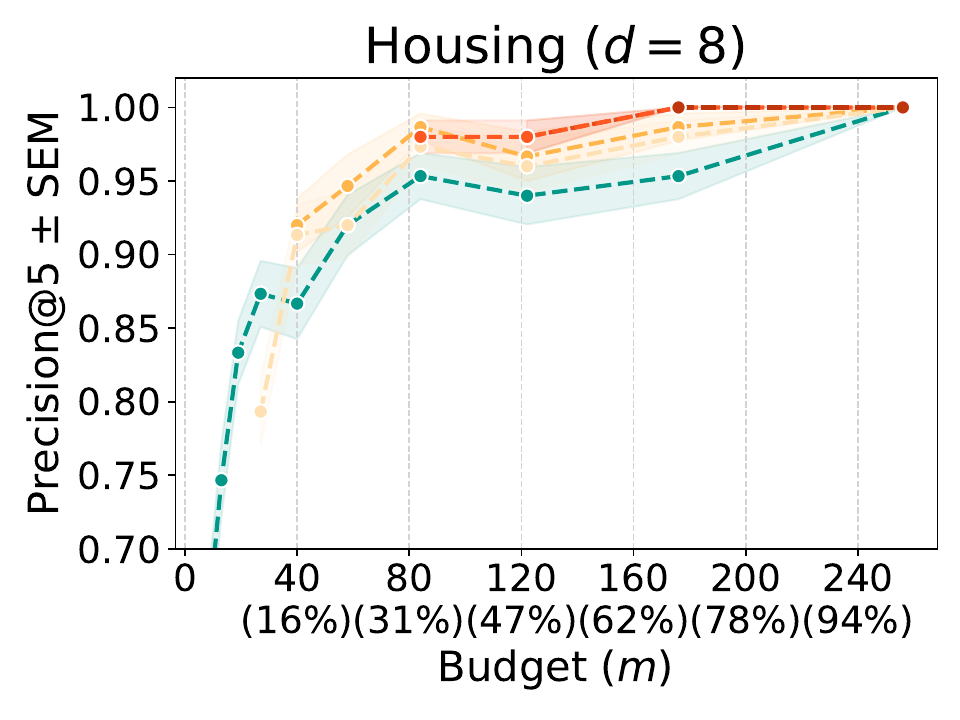}
    \end{minipage}
    \hfill
    \begin{minipage}{0.245\linewidth}
        \includegraphics[width=\linewidth]{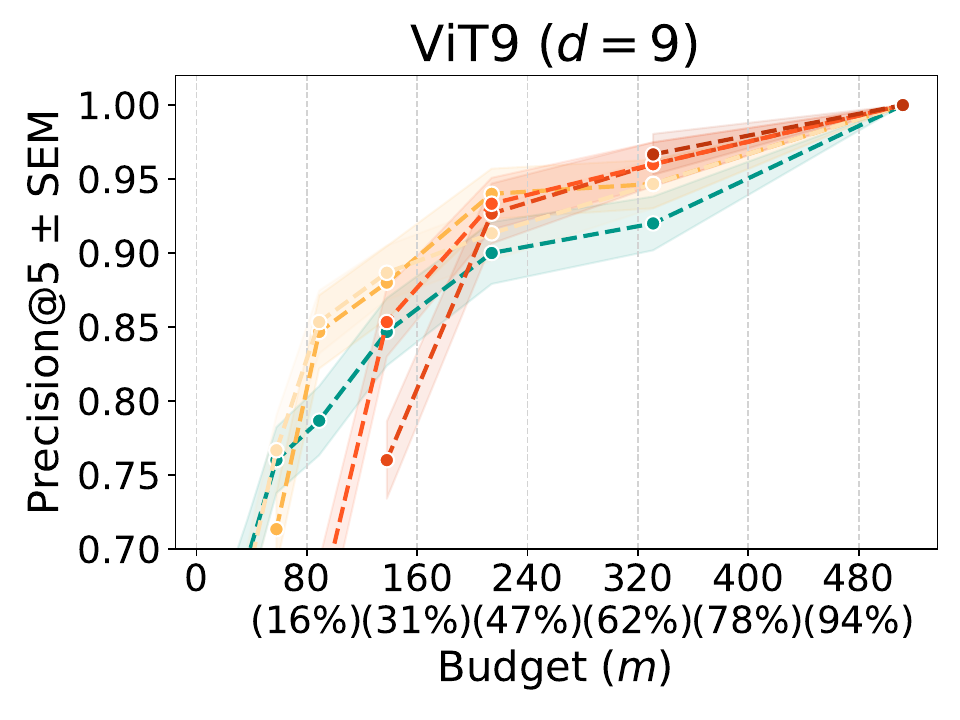}
    \end{minipage}
    \hfill
    \begin{minipage}{0.245\linewidth}
        \includegraphics[width=\linewidth]{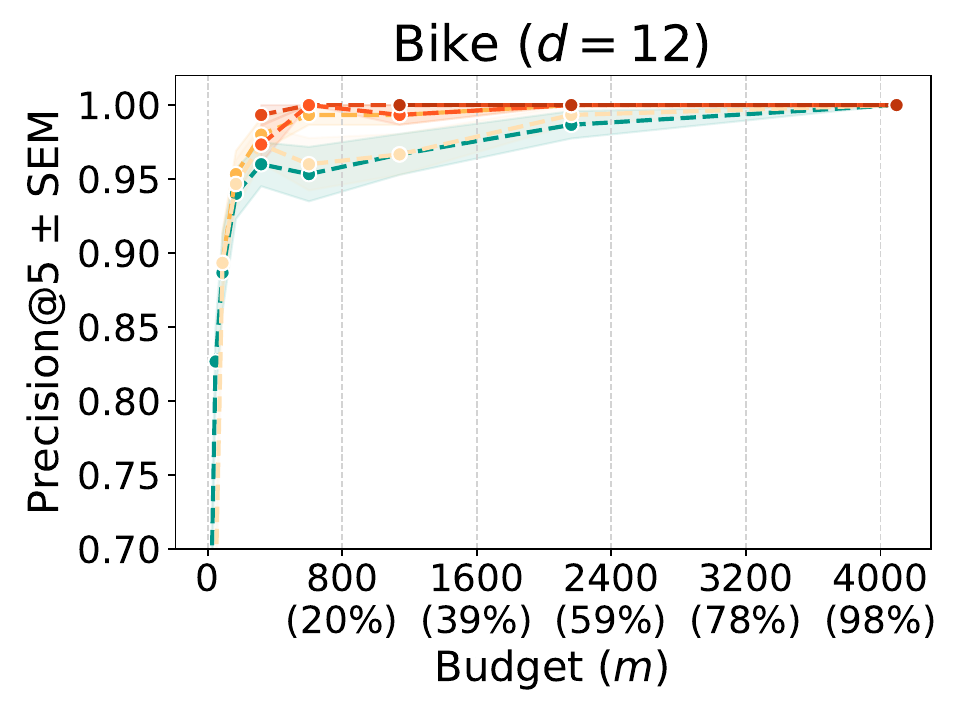}
    \end{minipage}
    \hfill
    \begin{minipage}{0.245\linewidth}
        \includegraphics[width=\linewidth]{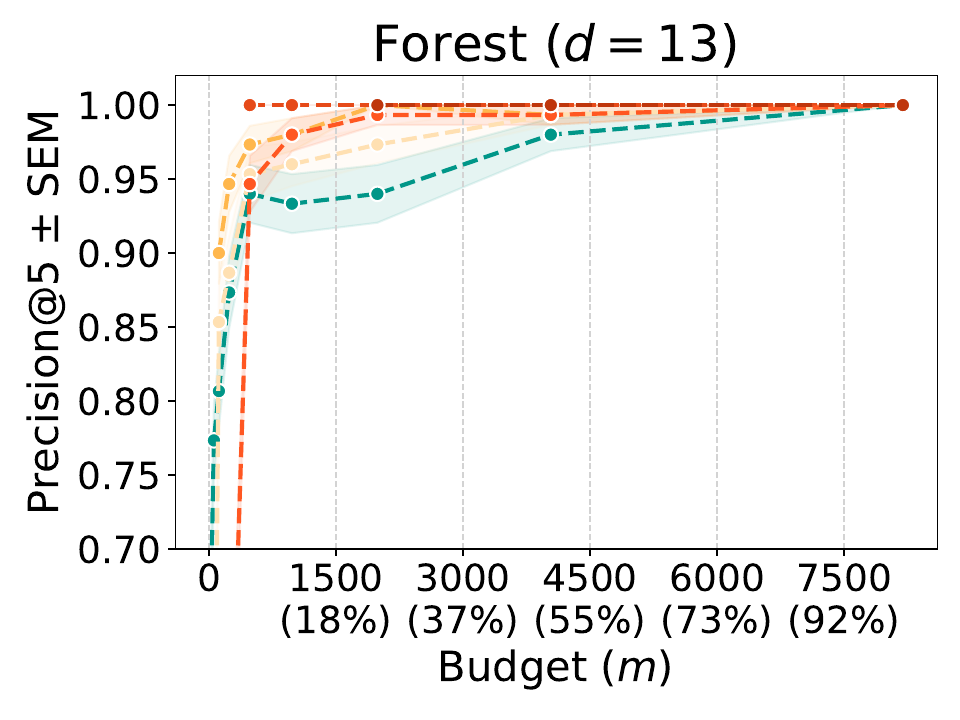}
    \end{minipage}
    \hfill
    \begin{minipage}{0.245\linewidth}
        \includegraphics[width=\linewidth]{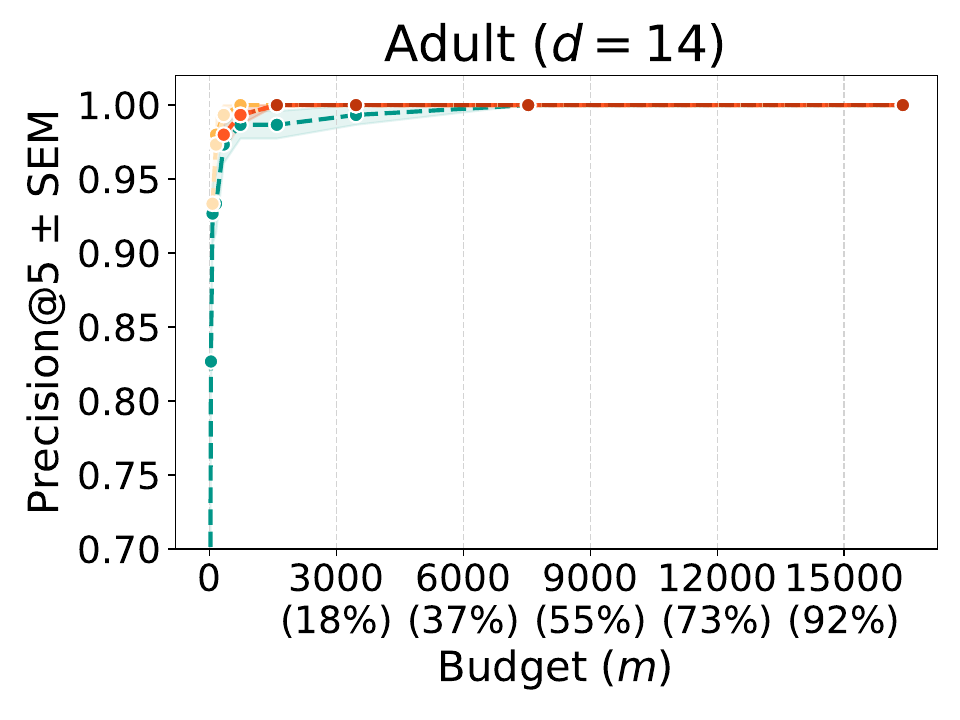}
    \end{minipage}
    \hfill
    \begin{minipage}{0.245\linewidth}
        \includegraphics[width=\linewidth]{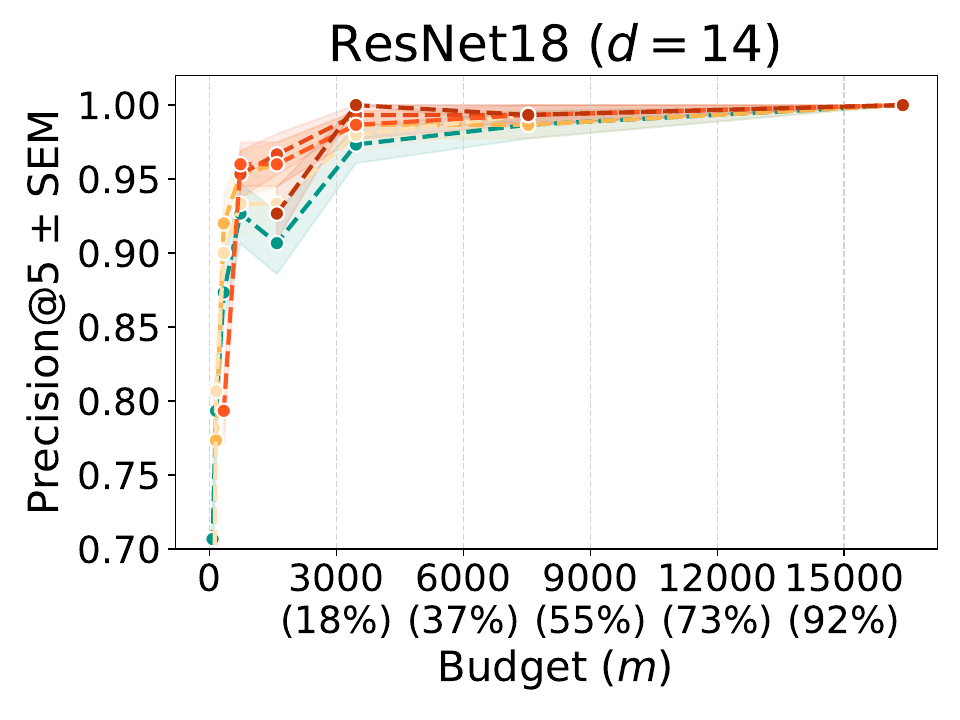}
    \end{minipage}
        \hfill
    \begin{minipage}{0.245\linewidth}
        \includegraphics[width=\linewidth]{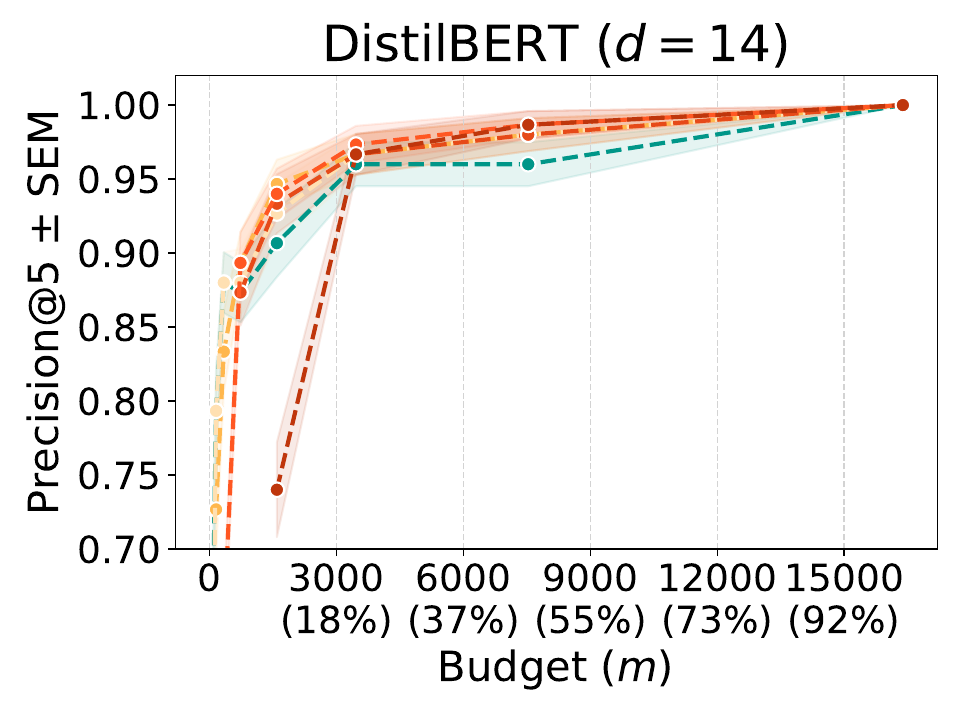}
    \end{minipage}
        \hfill
    \begin{minipage}{0.245\linewidth}
        \includegraphics[width=\linewidth]{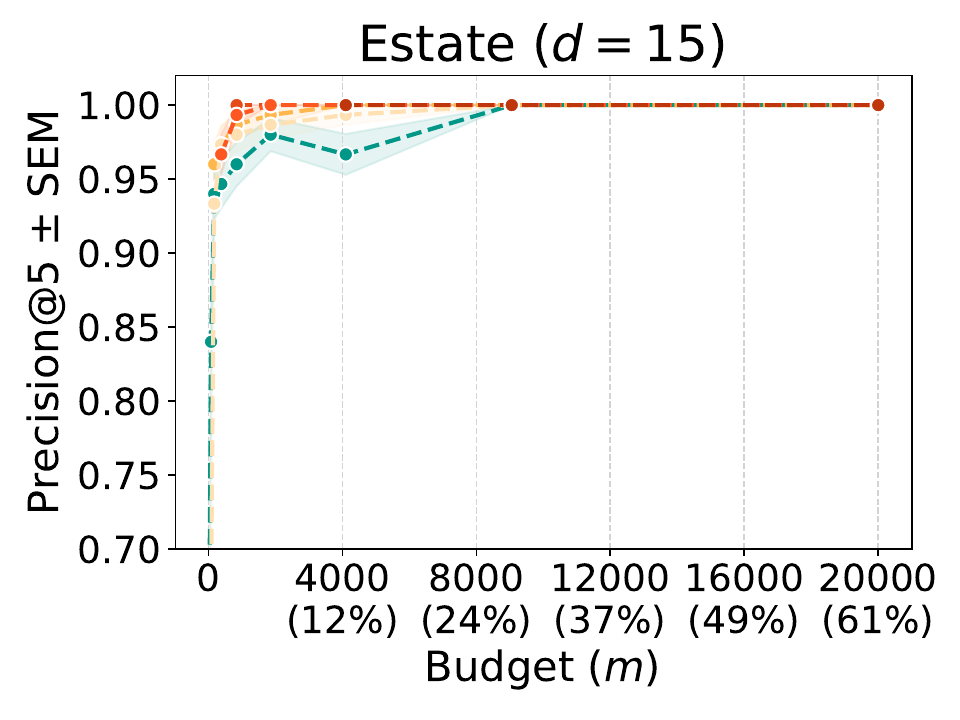}
    \end{minipage}
        \hfill
    \begin{minipage}{0.245\linewidth}
        \includegraphics[width=\linewidth]{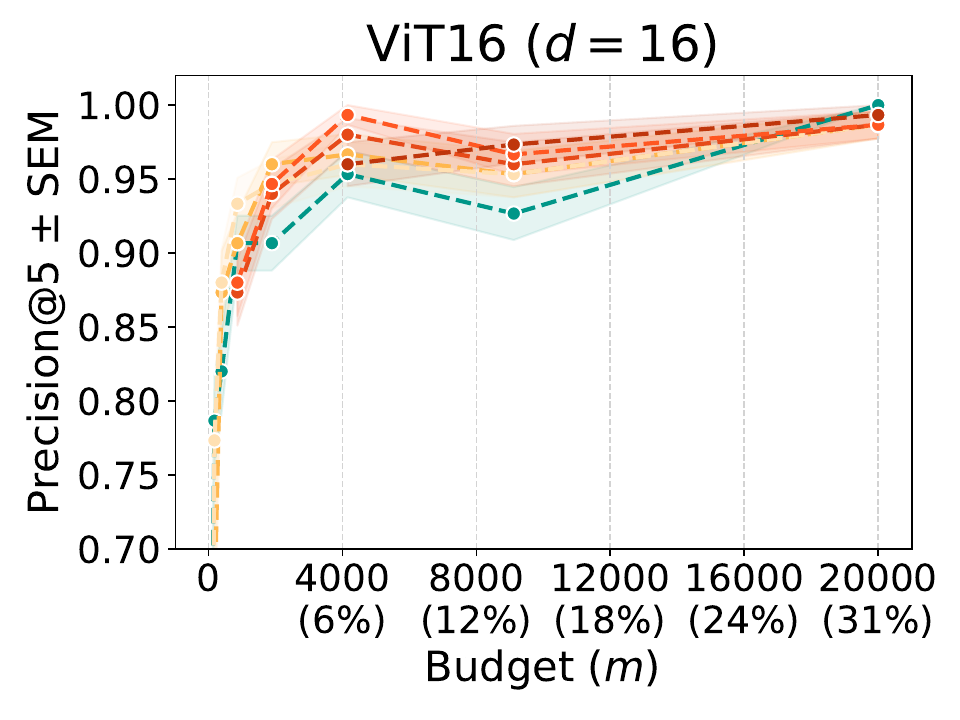}
    \end{minipage}
    \hfill
    \begin{minipage}{0.245\linewidth}
        \includegraphics[width=\linewidth]{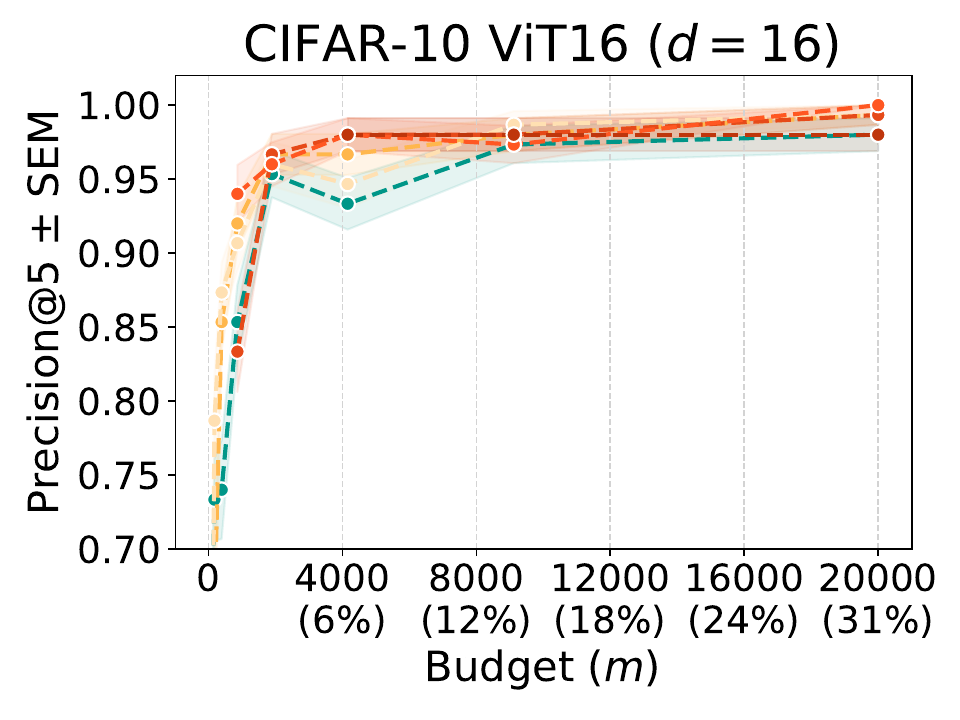}
    \end{minipage}
        \hfill
    \begin{minipage}{0.245\linewidth}
        \includegraphics[width=\linewidth]{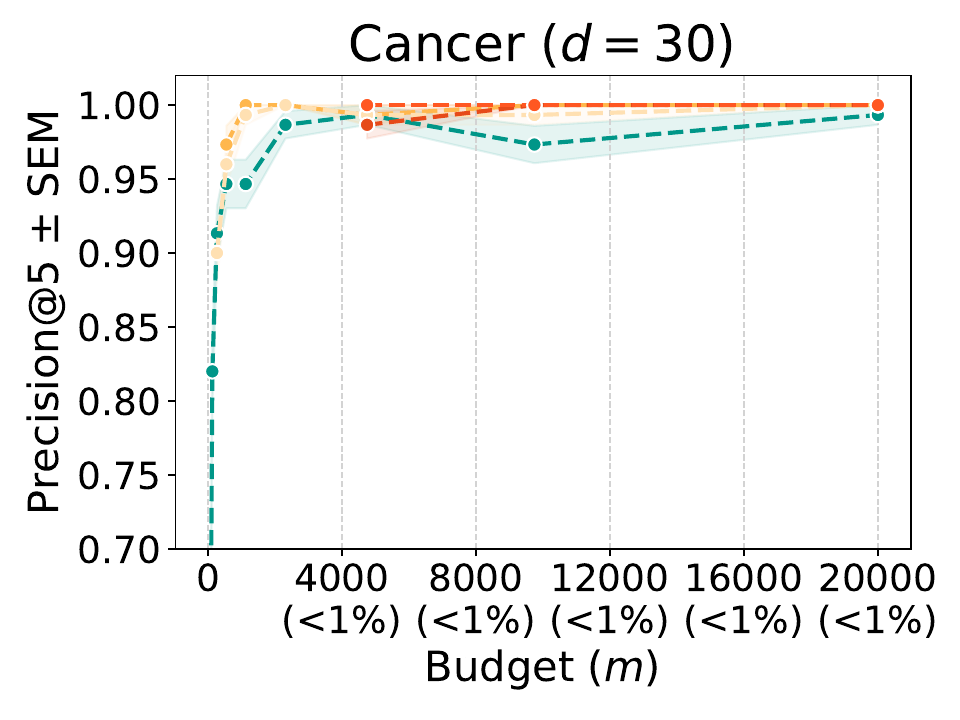}
    \end{minipage}
        \hfill
    \begin{minipage}{0.245\linewidth}
        \includegraphics[width=\linewidth]{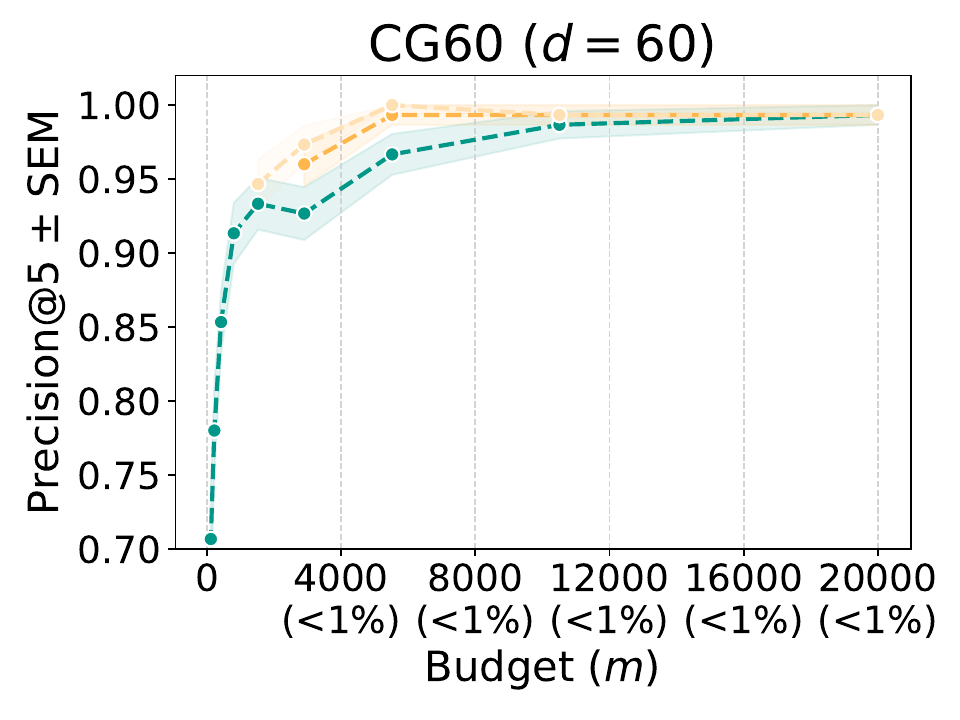}
    \end{minipage}
        \hfill
    \begin{minipage}{0.245\linewidth}
        \includegraphics[width=\linewidth]{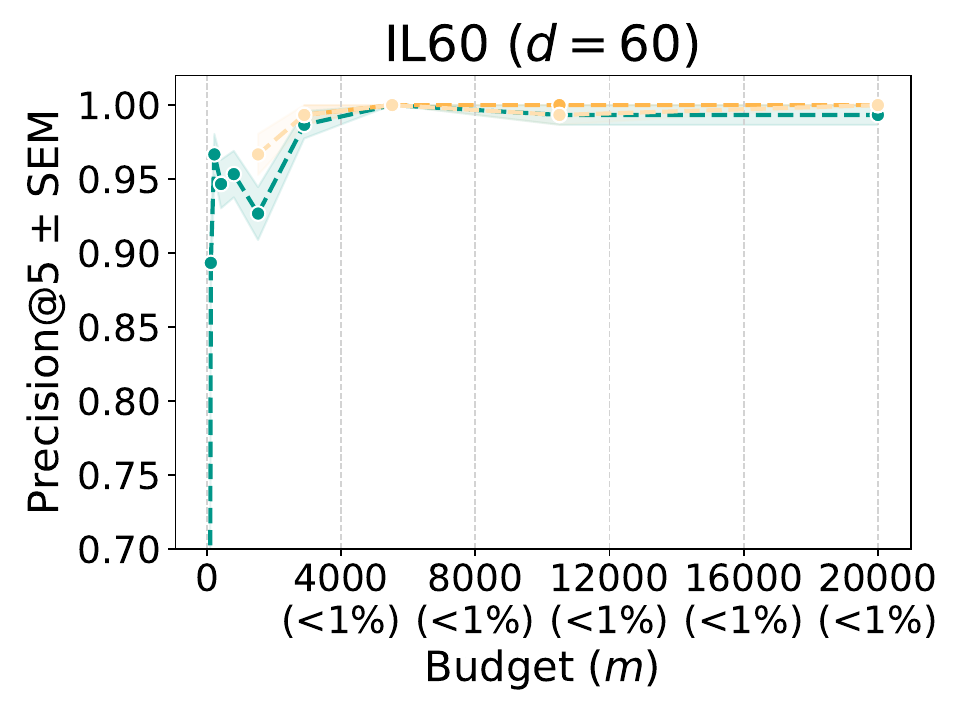}
    \end{minipage}
        \hfill
    \begin{minipage}{0.245\linewidth}
        \includegraphics[width=\linewidth]{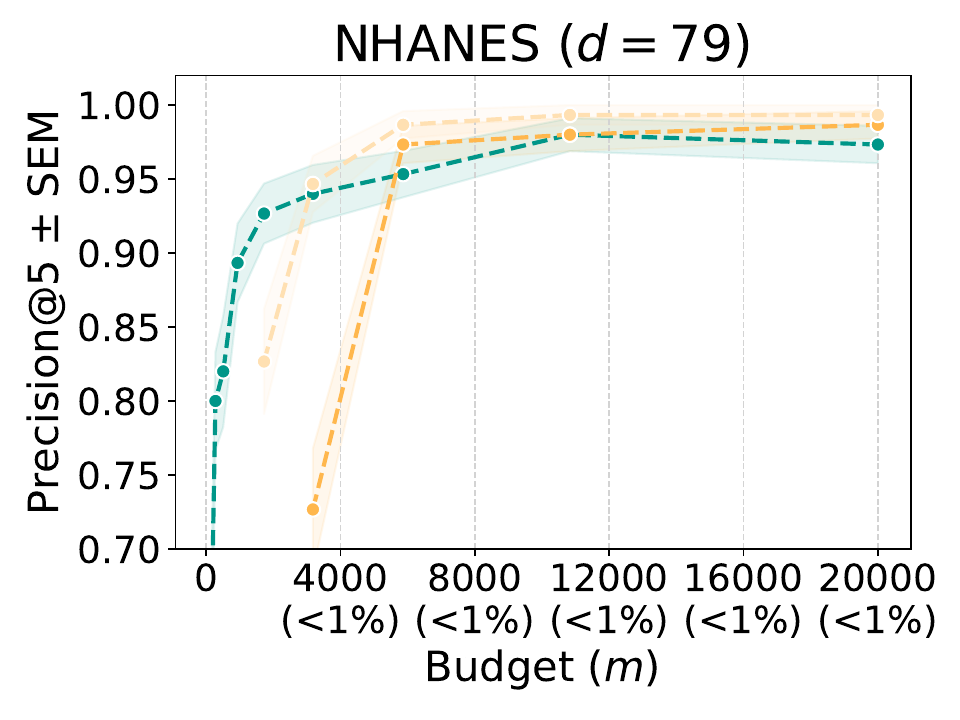}
    \end{minipage}
        \hfill
    \begin{minipage}{0.245\linewidth}
        \includegraphics[width=\linewidth]{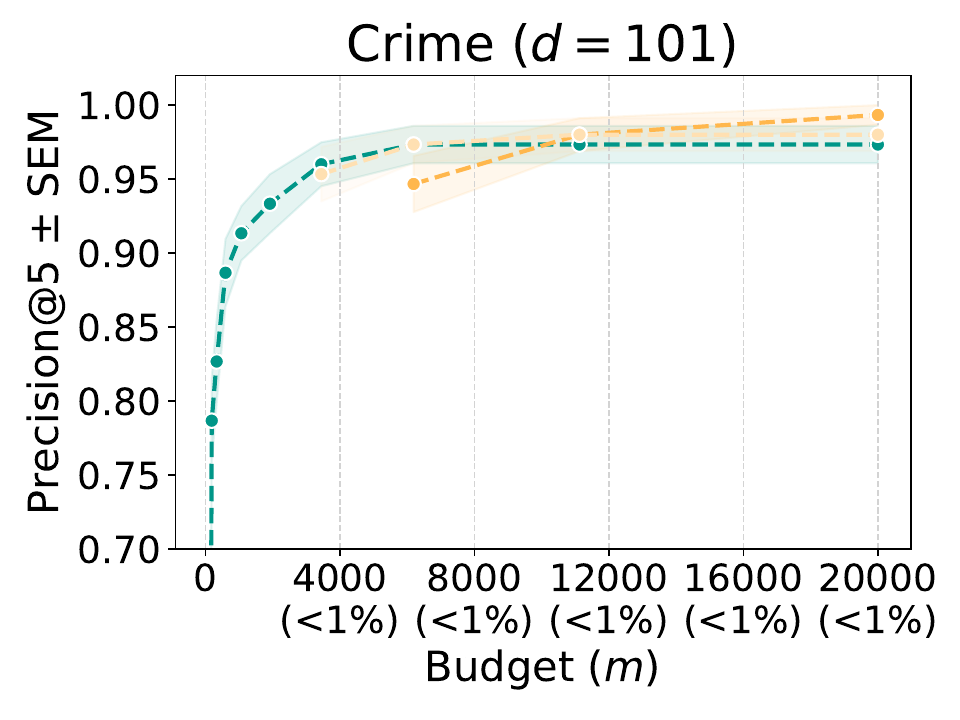}
    \end{minipage}
        \hfill
    \begin{minipage}{0.245\linewidth}
        \hfill
    \end{minipage}        
        \hfill
    \begin{minipage}{0.245\linewidth}
        \hfill
    \end{minipage}
    \caption{Approximation quality measured by Precision@5 ($\pm$ SEM) for varying budget ($m$) on different games. Adding interactions in PolySHAP can substantially improve approximation quality}
    \label{appx_fig_exp_standard_prec5}
\end{figure}

\clearpage

\begin{figure}
    \centering
    \includegraphics[width=.5\linewidth]{figures/legend_standard_vs_paired.pdf}
    \\
    \begin{minipage}{0.245\linewidth}
        \includegraphics[width=\linewidth]{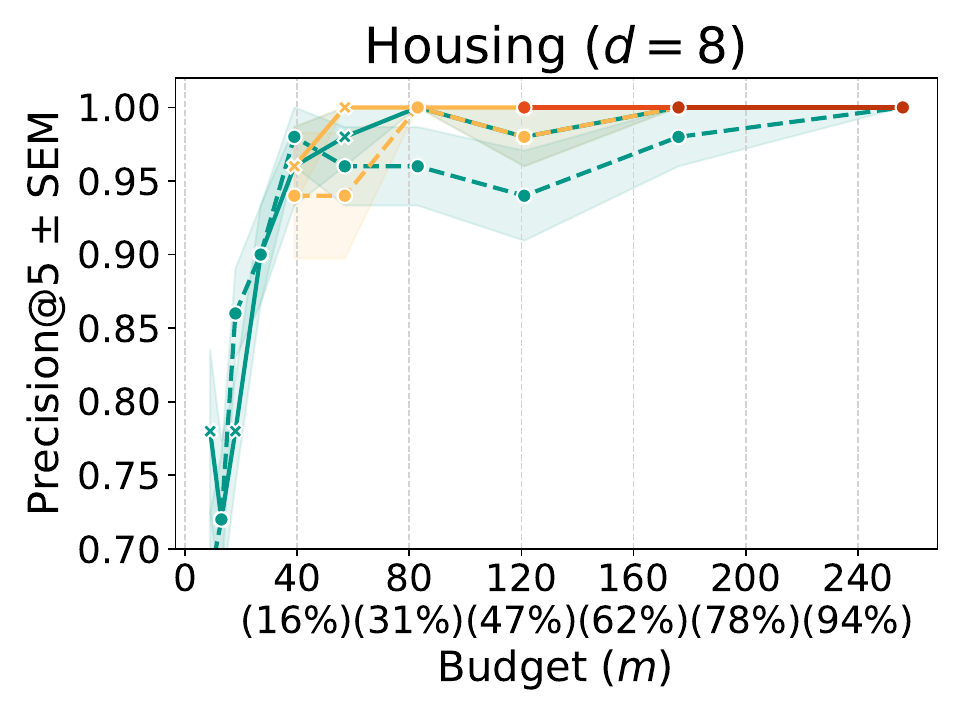}
    \end{minipage}
    \hfill
    \begin{minipage}{0.245\linewidth}
        \includegraphics[width=\linewidth]{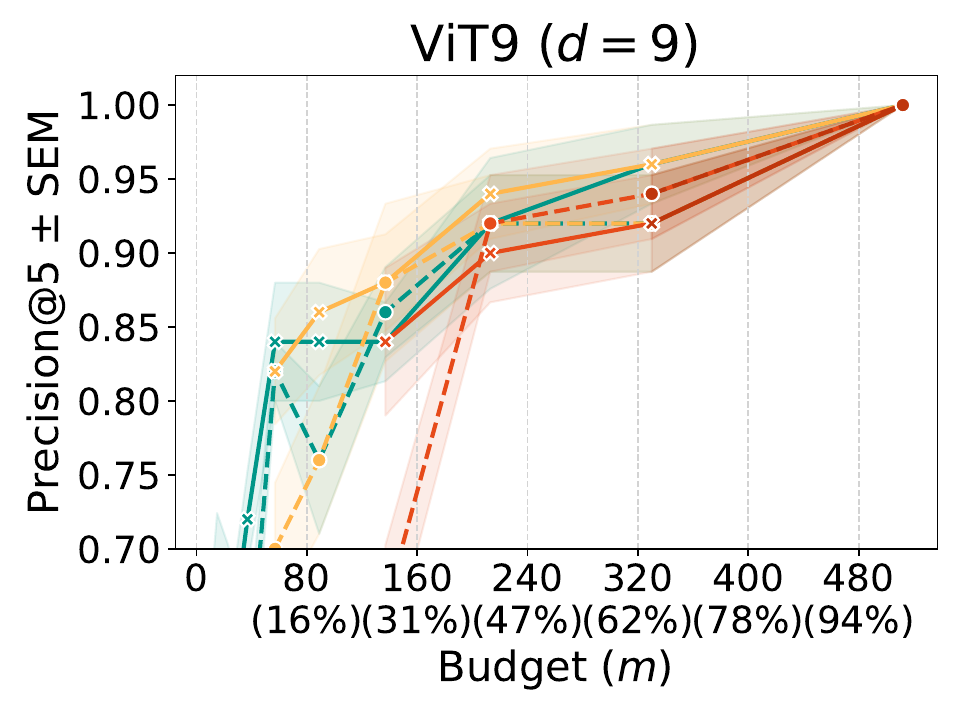}
    \end{minipage}
    \hfill
    \begin{minipage}{0.245\linewidth}
        \includegraphics[width=\linewidth]{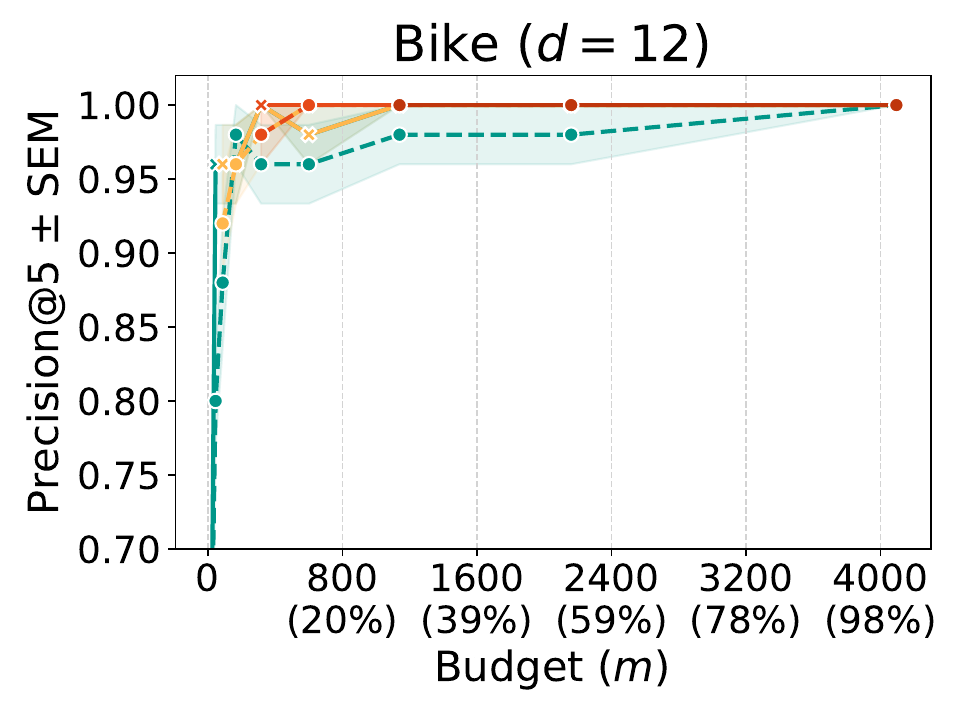}
    \end{minipage}
    \hfill
    \begin{minipage}{0.245\linewidth}
        \includegraphics[width=\linewidth]{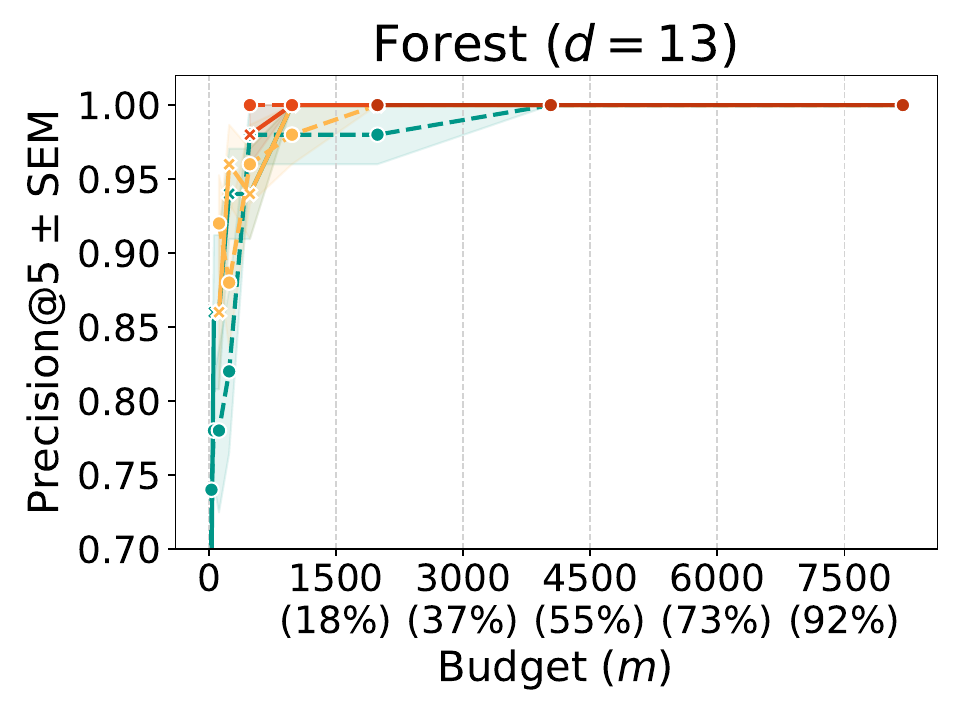}
    \end{minipage}
    \hfill
    \begin{minipage}{0.245\linewidth}
        \includegraphics[width=\linewidth]{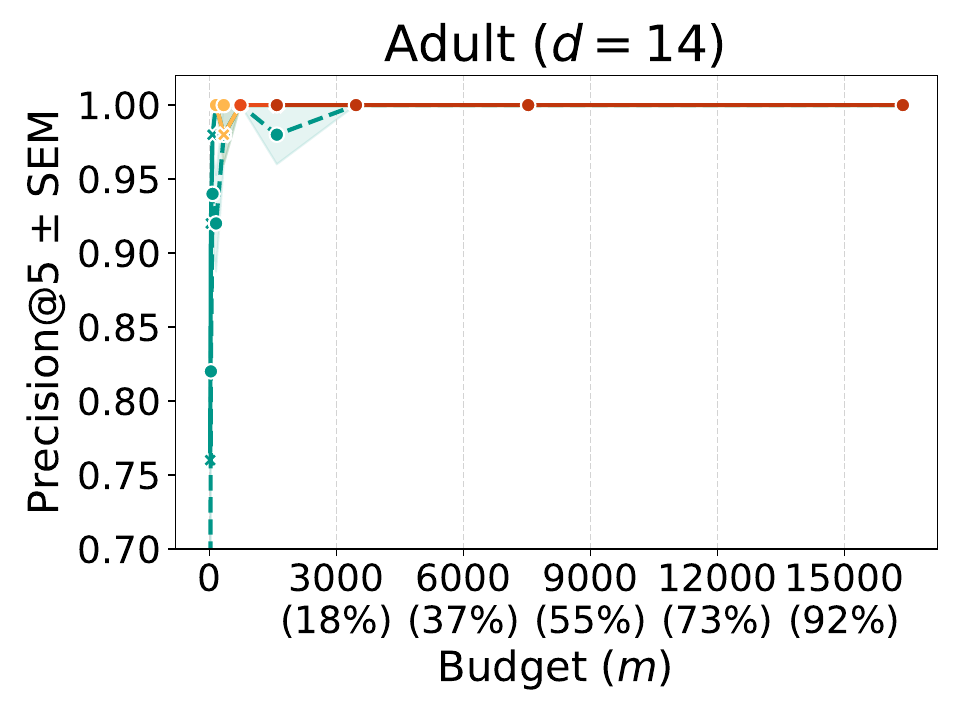}
    \end{minipage}
    \hfill
    \begin{minipage}{0.245\linewidth}
        \includegraphics[width=\linewidth]{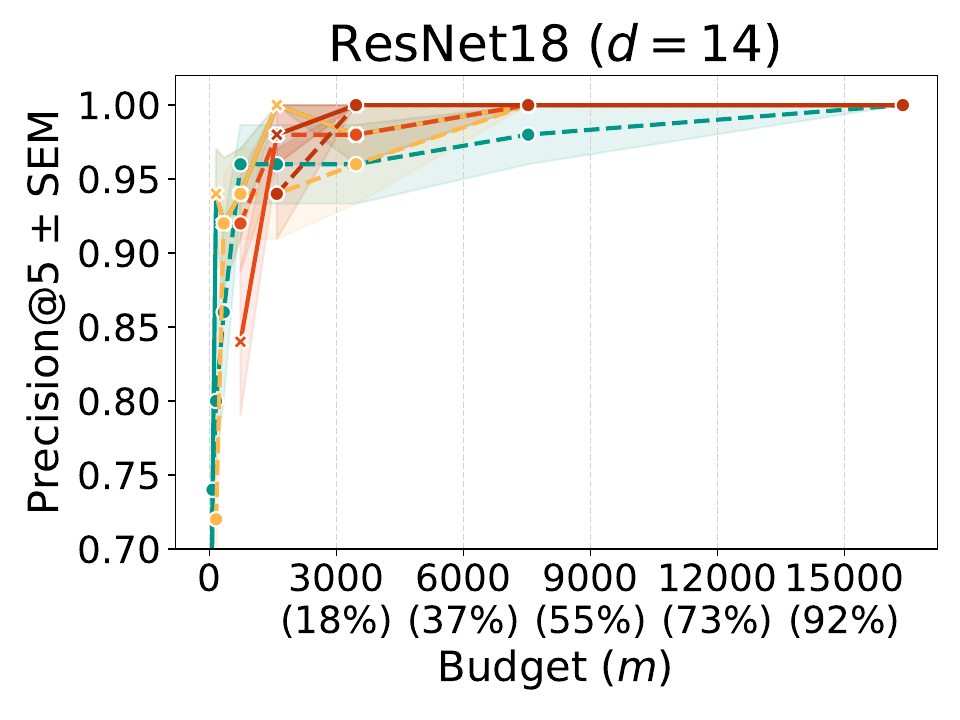}
    \end{minipage}
        \hfill
    \begin{minipage}{0.245\linewidth}
        \includegraphics[width=\linewidth]{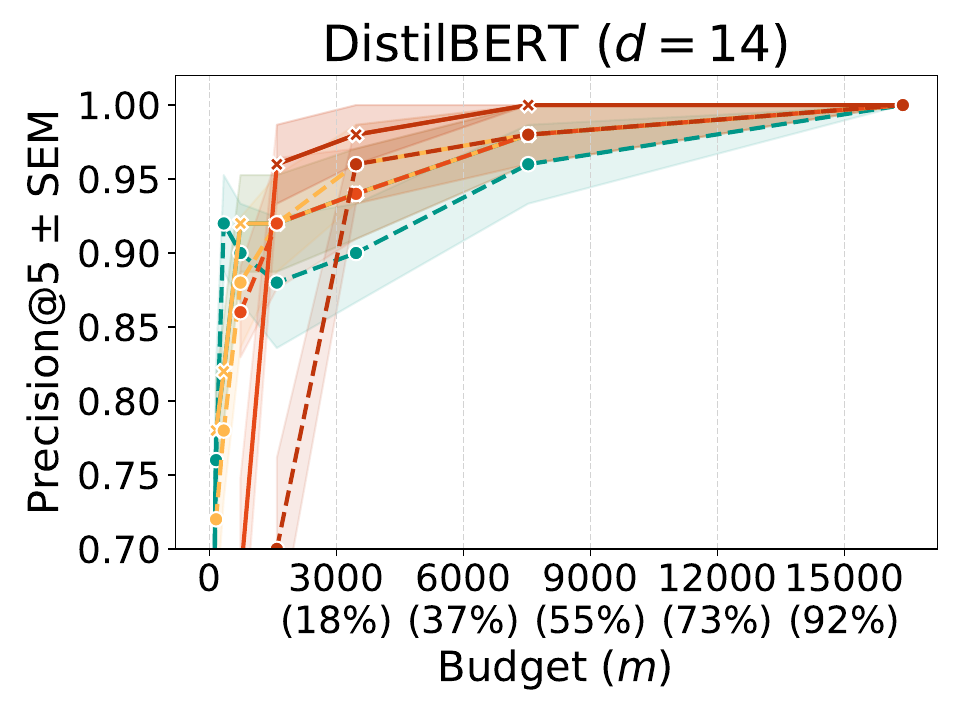}
    \end{minipage}
        \hfill
    \begin{minipage}{0.245\linewidth}
        \includegraphics[width=\linewidth]{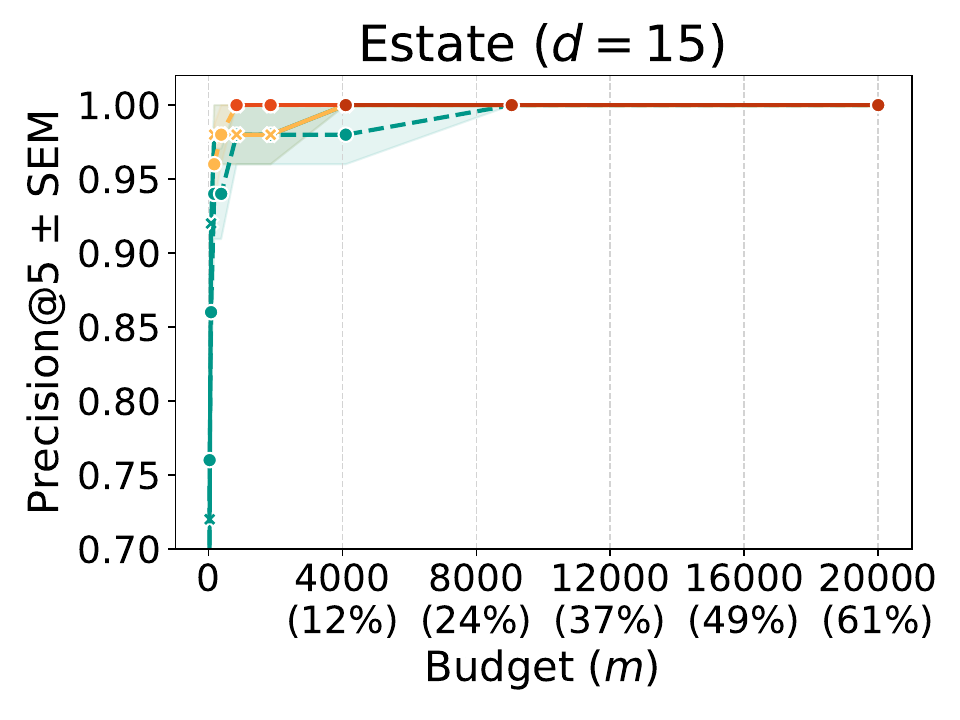}
    \end{minipage}
        \hfill
    \begin{minipage}{0.245\linewidth}
        \includegraphics[width=\linewidth]{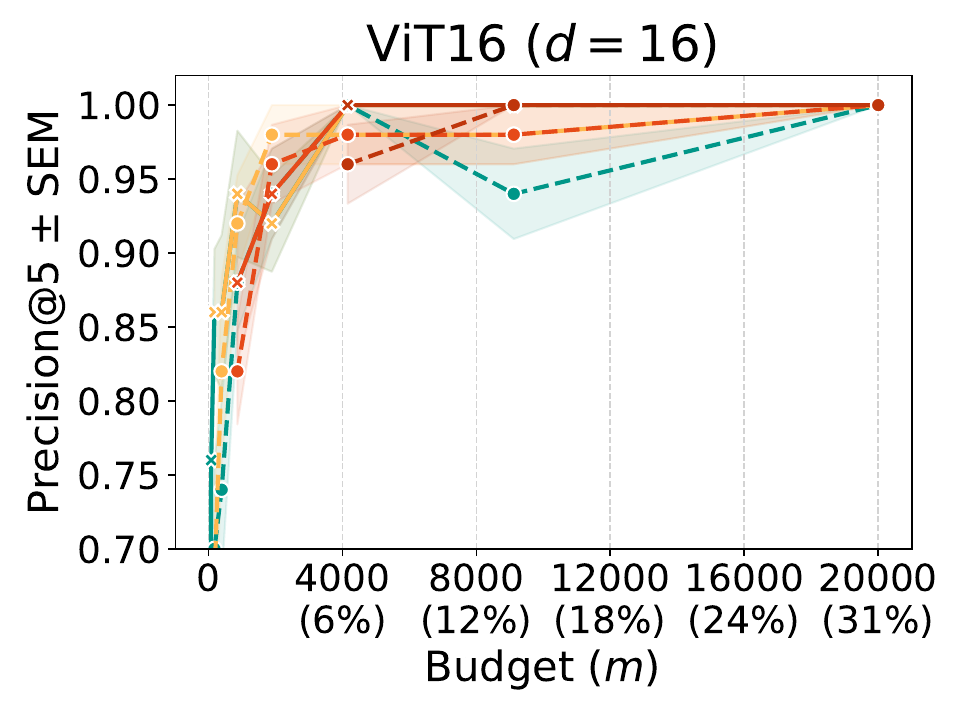}
    \end{minipage}
    \hfill
        \begin{minipage}{0.245\linewidth}
        \includegraphics[width=\linewidth]{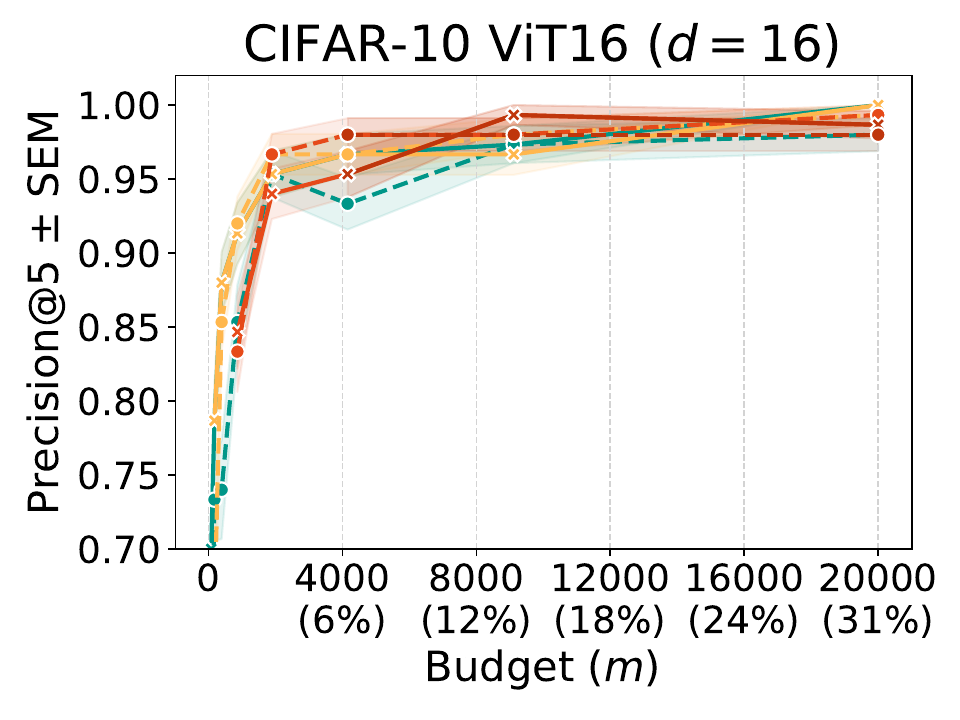}
    \end{minipage}
        \hfill
    \begin{minipage}{0.245\linewidth}
        \includegraphics[width=\linewidth]{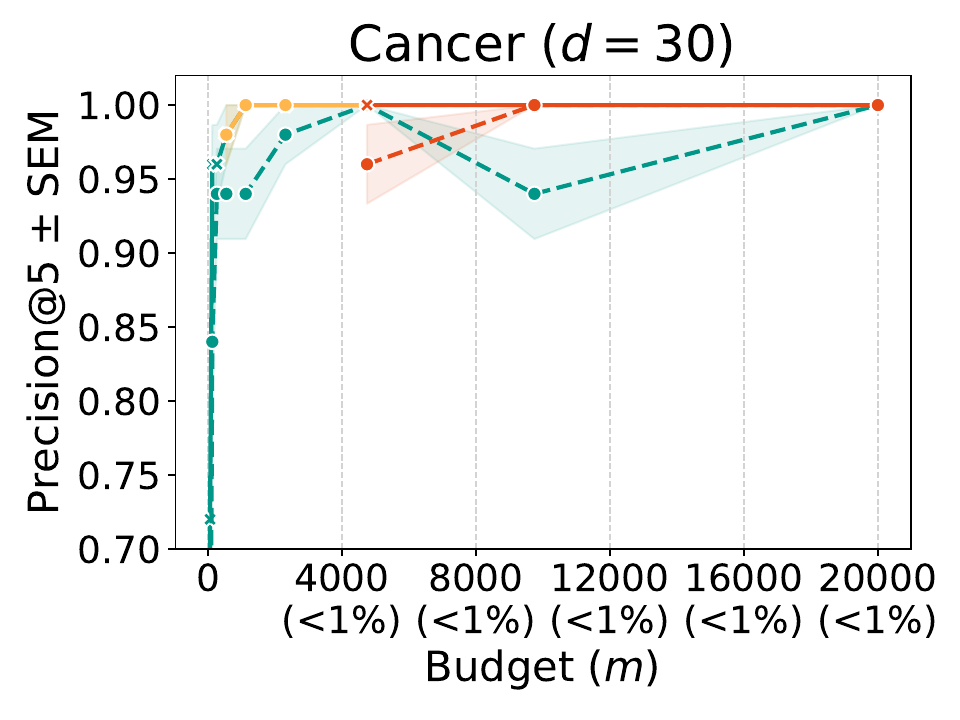}
    \end{minipage}
        \hfill
    \begin{minipage}{0.245\linewidth}
        \includegraphics[width=\linewidth]{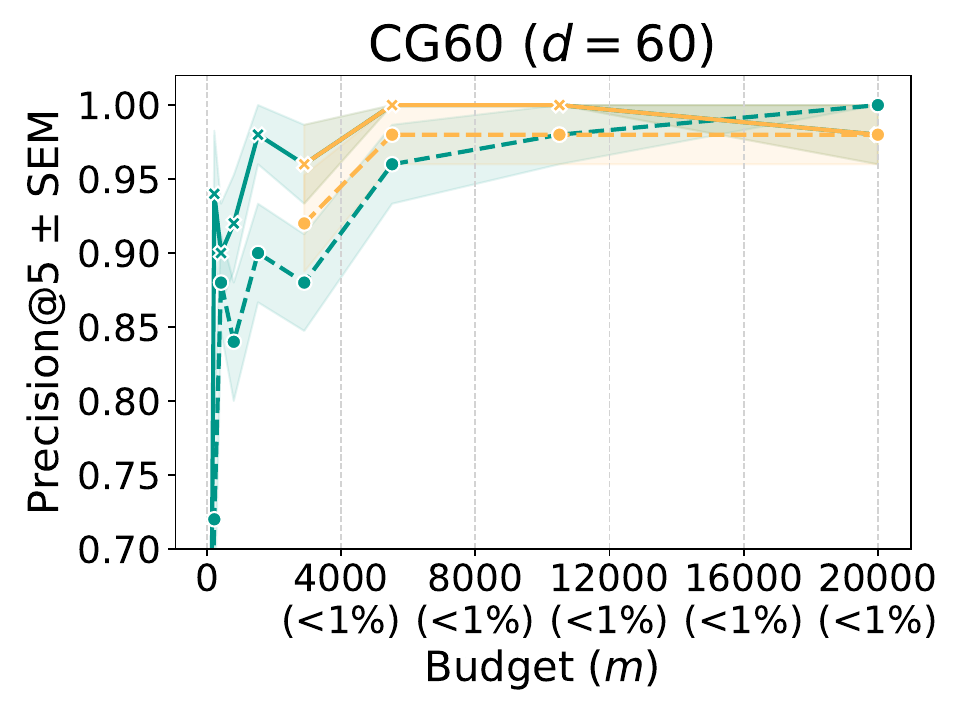}
    \end{minipage}
        \hfill
    \begin{minipage}{0.245\linewidth}
        \includegraphics[width=\linewidth]{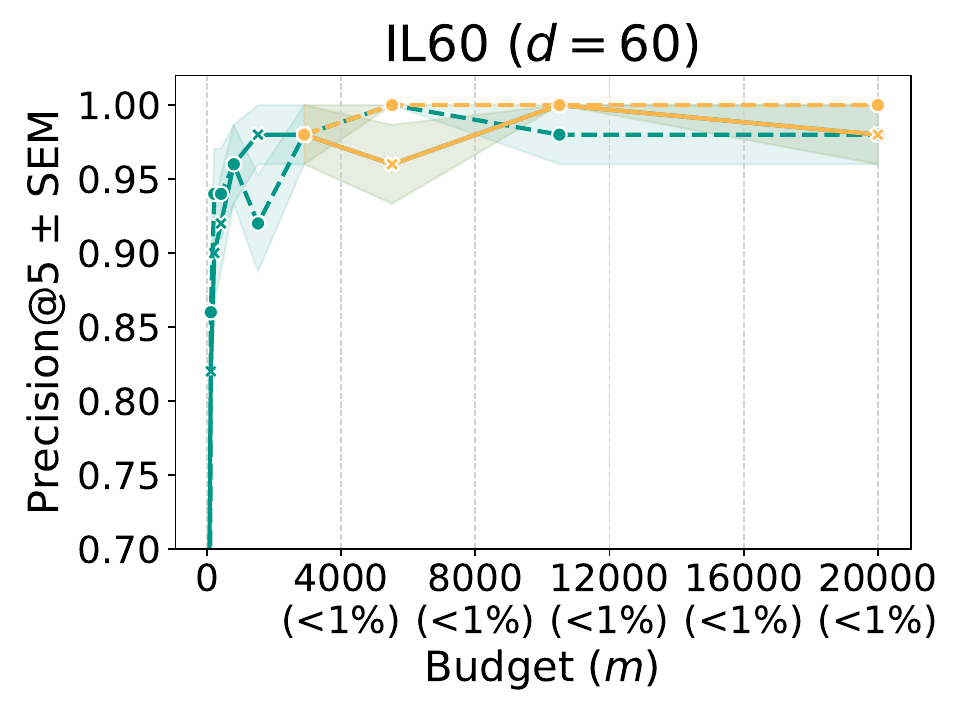}
    \end{minipage}
        \hfill
    \begin{minipage}{0.245\linewidth}
        \includegraphics[width=\linewidth]{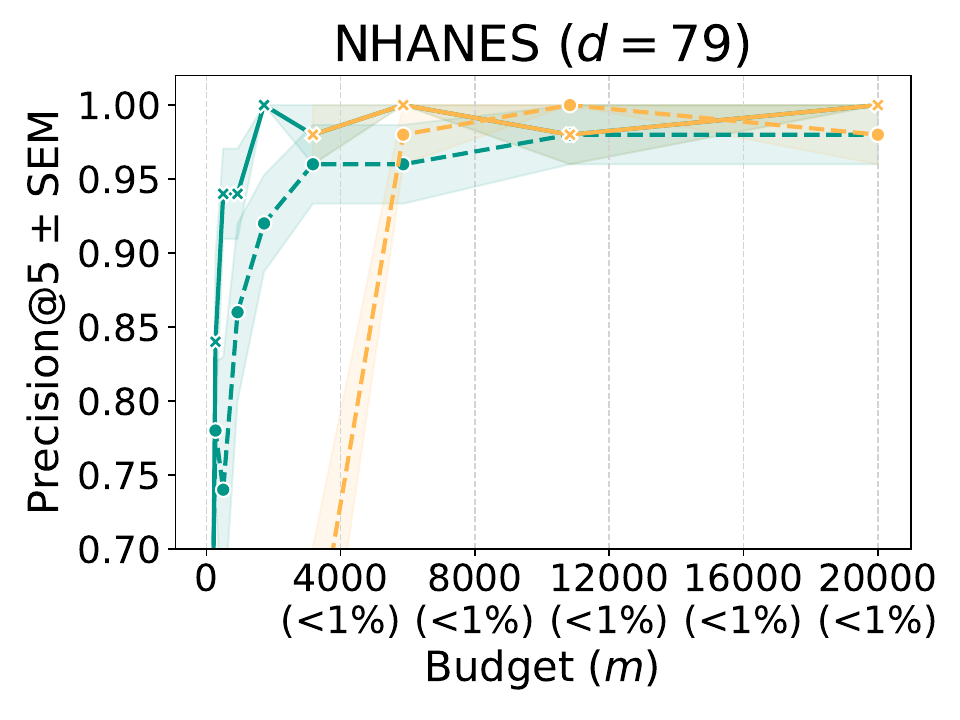}
    \end{minipage}
        \hfill
    \begin{minipage}{0.245\linewidth}
        \includegraphics[width=\linewidth]{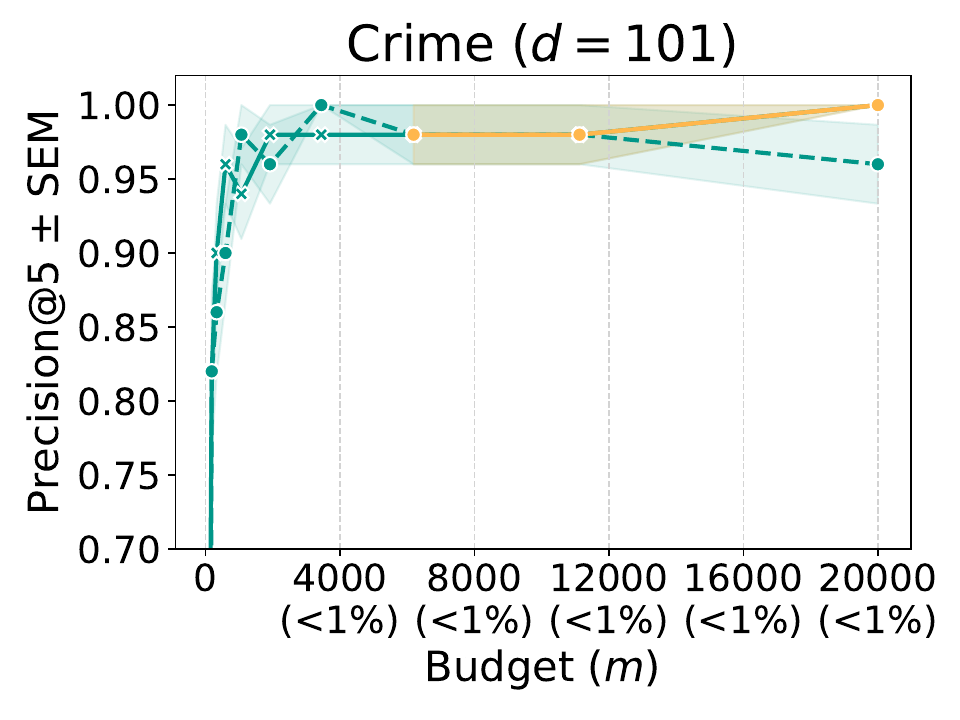}
    \end{minipage}
        \hfill
    \begin{minipage}{0.245\linewidth}
        \hfill
    \end{minipage}        
        \hfill
    \begin{minipage}{0.245\linewidth}
        \hfill
    \end{minipage}
    \caption{Approximation quality measured by Precision@5 ($\pm$ SEM) for standard (dotted) and paired (solid) sampling. Under paired sampling, 2-PolySHAP marginally improves, whereas KernelSHAP substantially improves due to its equivalence to 2-PolySHAP}
    \label{appx_fig_exp_paired_vs_standard_prec5}
\end{figure}

\begin{figure}
    \centering
    \includegraphics[width=.8\linewidth]{figures/legend_paired_baselines.pdf}
    \\
    \begin{minipage}{0.245\linewidth}
        \includegraphics[width=\linewidth]{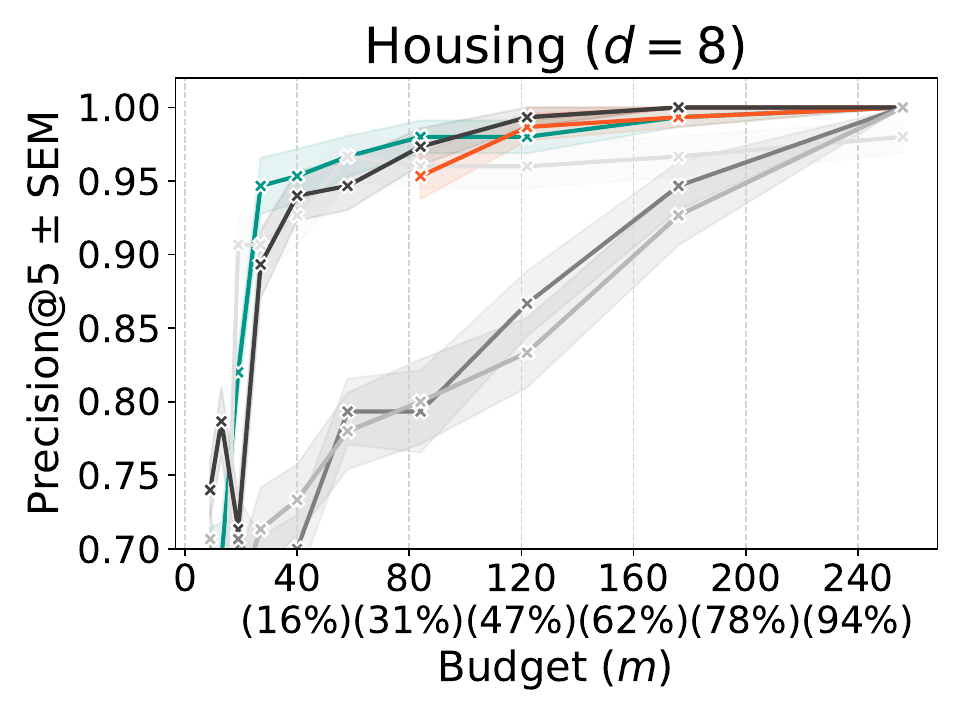}
    \end{minipage}
    \hfill
    \begin{minipage}{0.245\linewidth}
        \includegraphics[width=\linewidth]{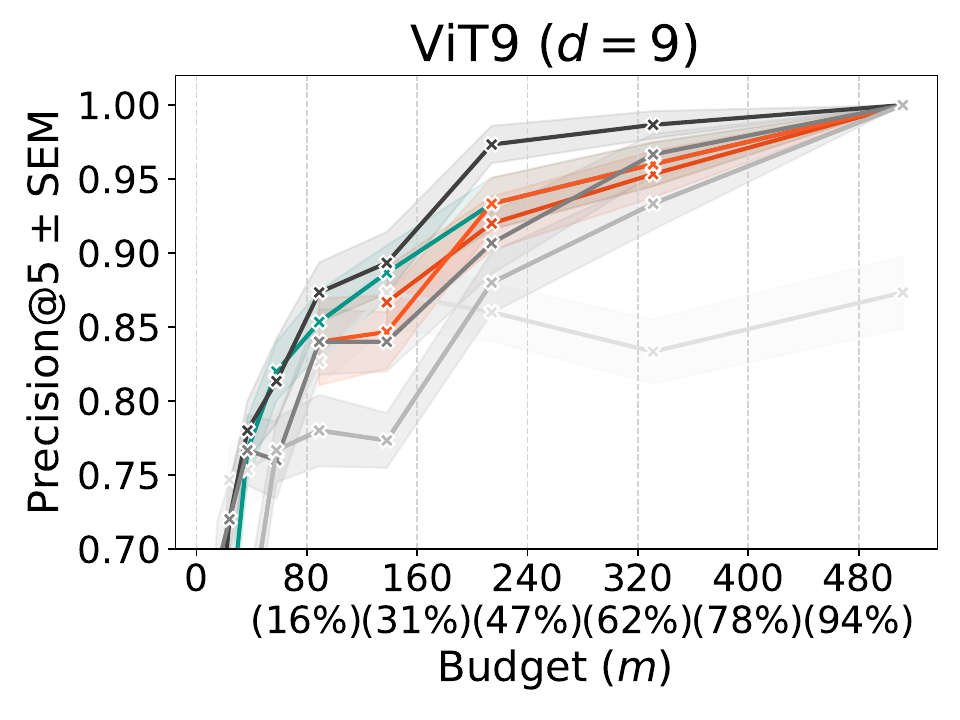}
    \end{minipage}
    \hfill
    \begin{minipage}{0.245\linewidth}
        \includegraphics[width=\linewidth]{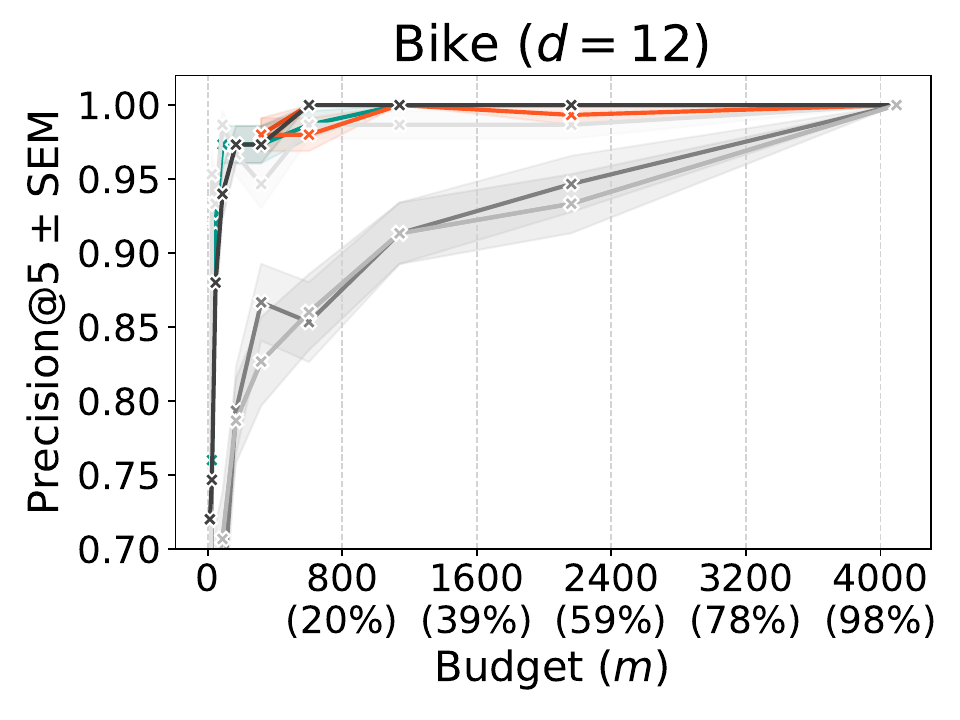}
    \end{minipage}
    \hfill
    \begin{minipage}{0.245\linewidth}
        \includegraphics[width=\linewidth]{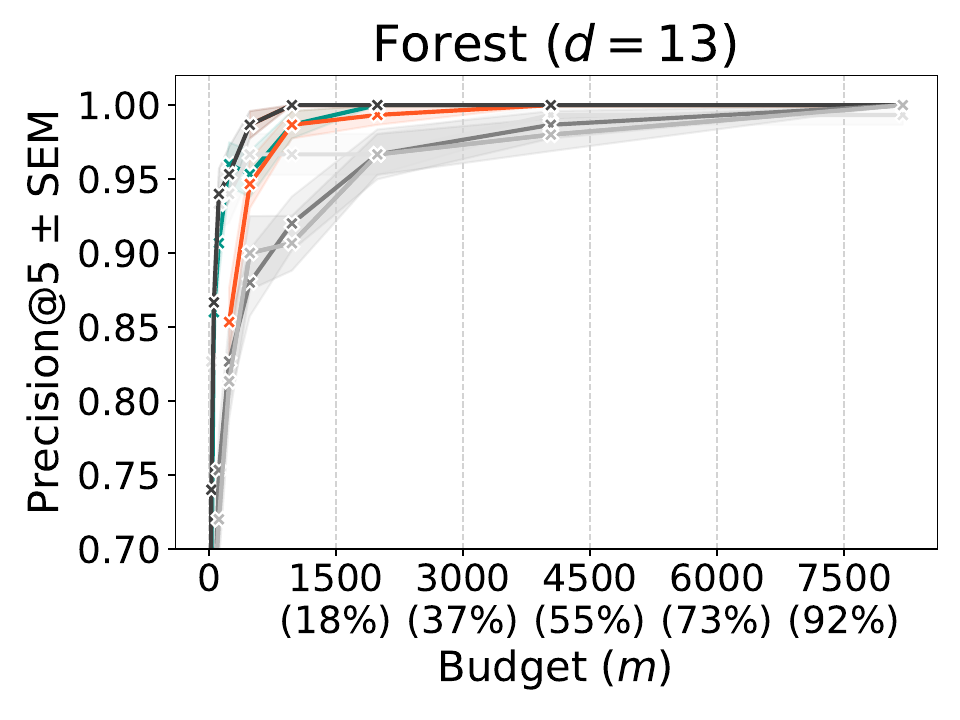}
    \end{minipage}
    \hfill
    \begin{minipage}{0.245\linewidth}
        \includegraphics[width=\linewidth]{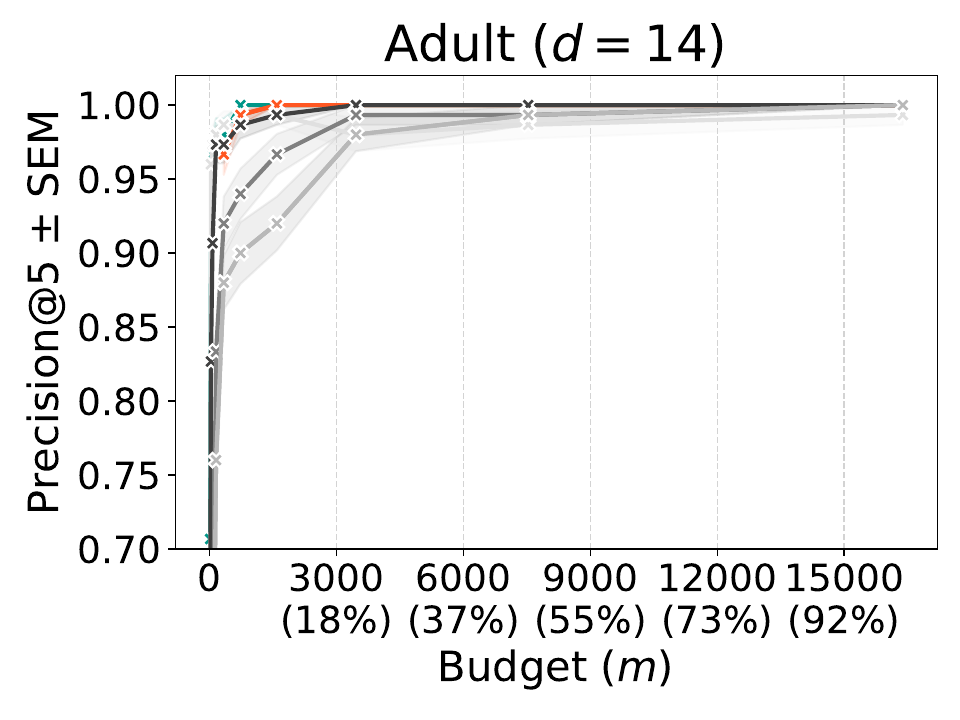}
    \end{minipage}
    \hfill
    \begin{minipage}{0.245\linewidth}
        \includegraphics[width=\linewidth]{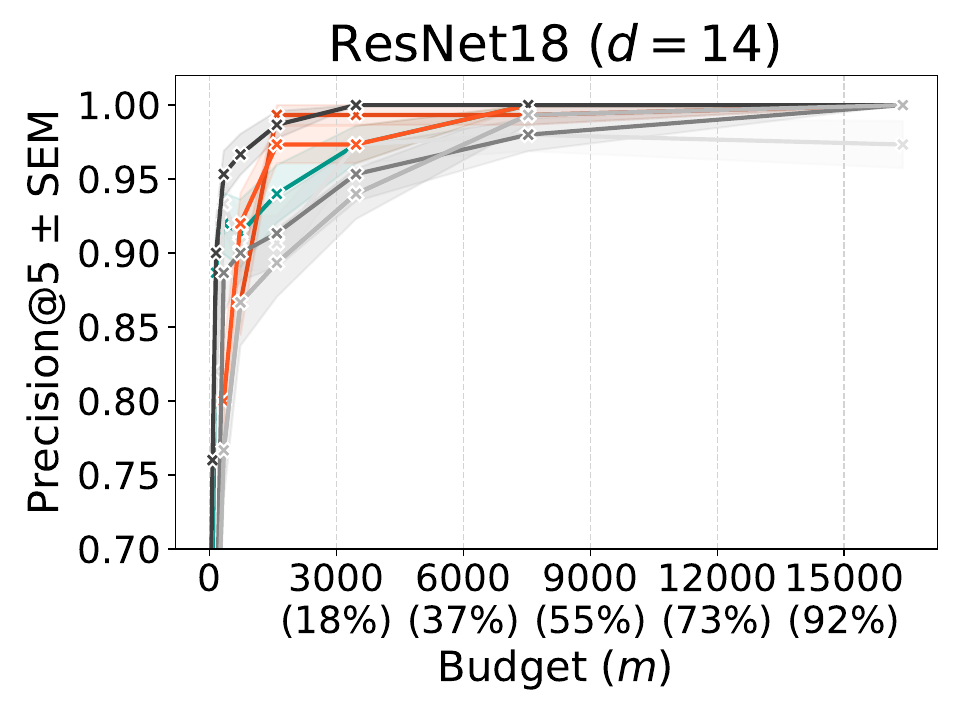}
    \end{minipage}
        \hfill
    \begin{minipage}{0.245\linewidth}
        \includegraphics[width=\linewidth]{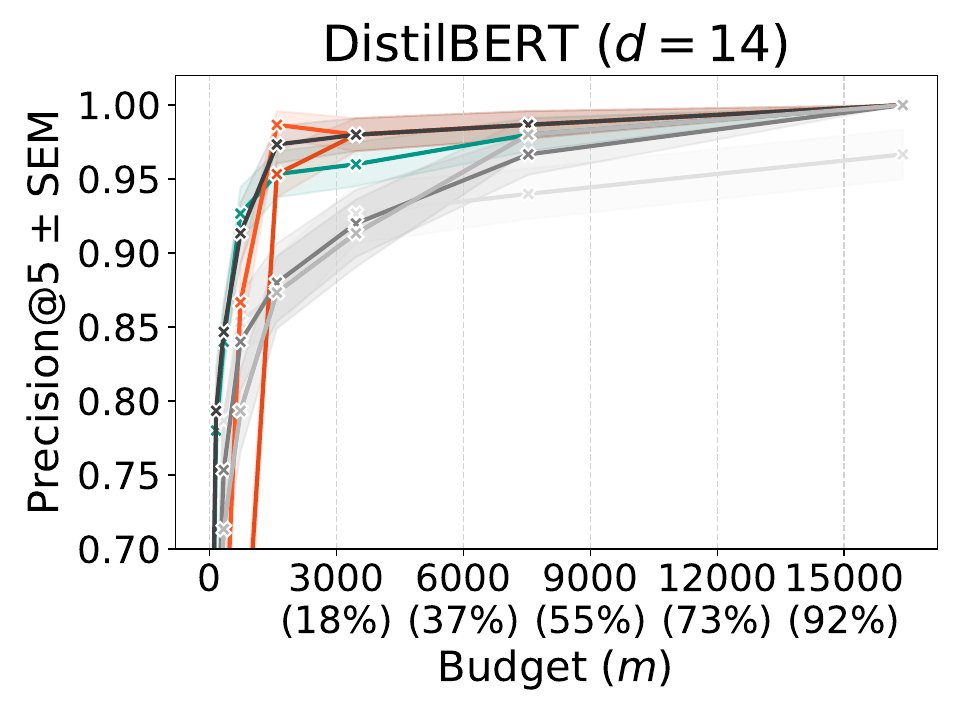}
    \end{minipage}
        \hfill
    \begin{minipage}{0.245\linewidth}
        \includegraphics[width=\linewidth]{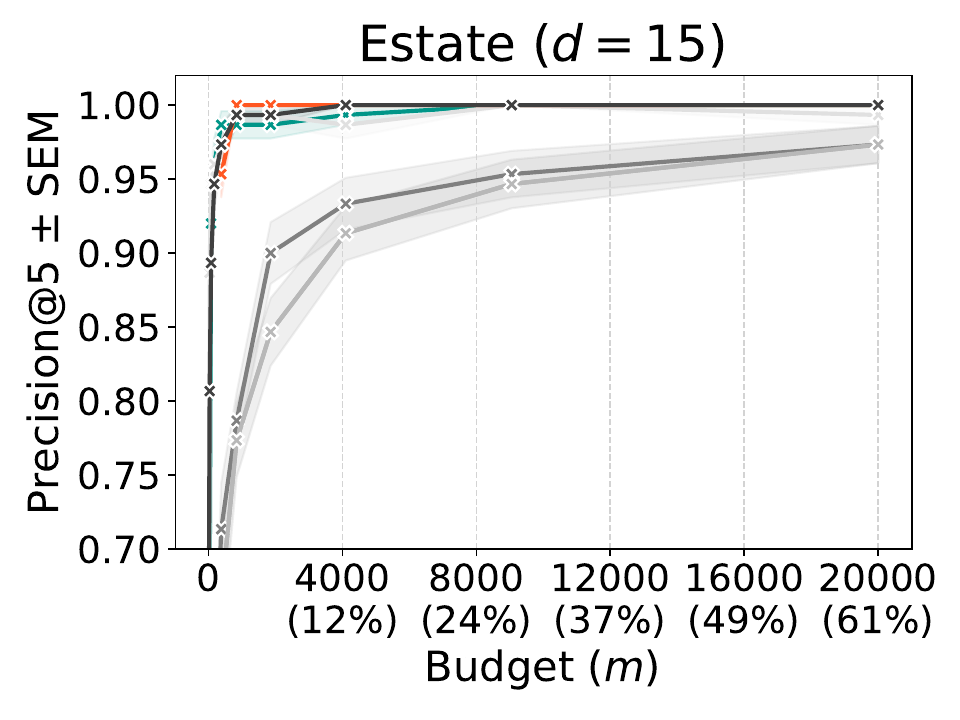}
    \end{minipage}
        \hfill
    \begin{minipage}{0.245\linewidth}
        \includegraphics[width=\linewidth]{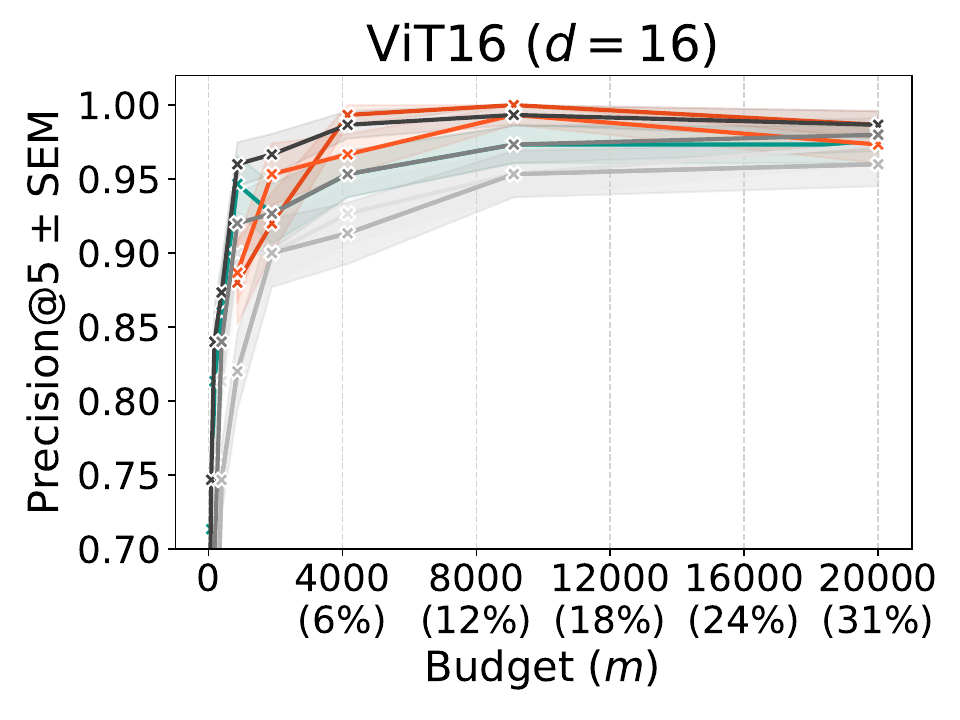}
    \end{minipage}
    \hfill
        \begin{minipage}{0.245\linewidth}
        \includegraphics[width=\linewidth]{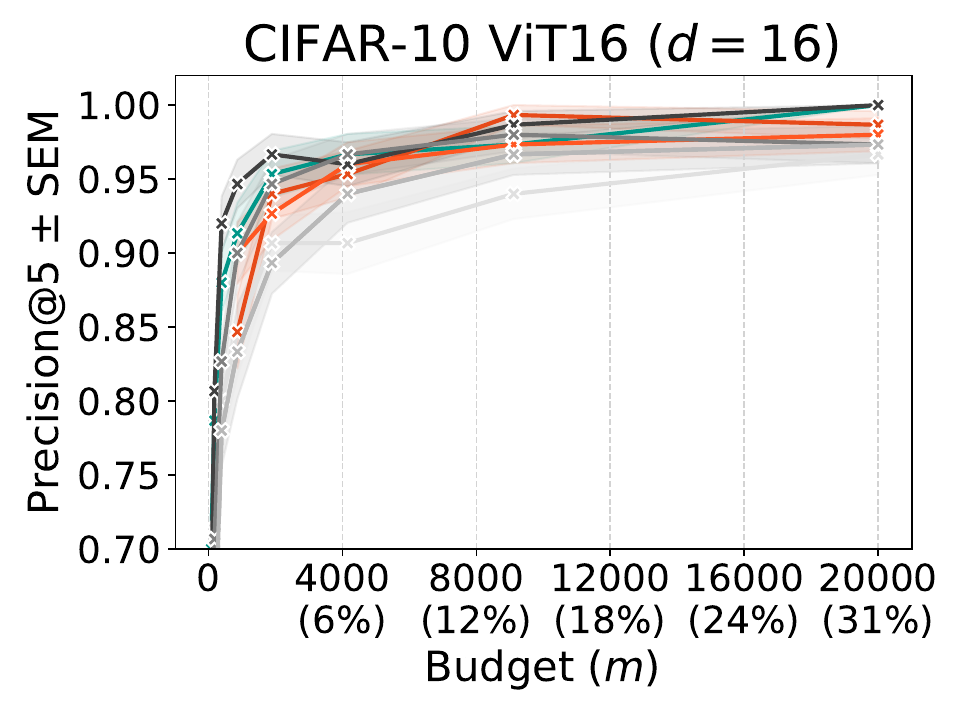}
    \end{minipage}
        \hfill
    \begin{minipage}{0.245\linewidth}
        \includegraphics[width=\linewidth]{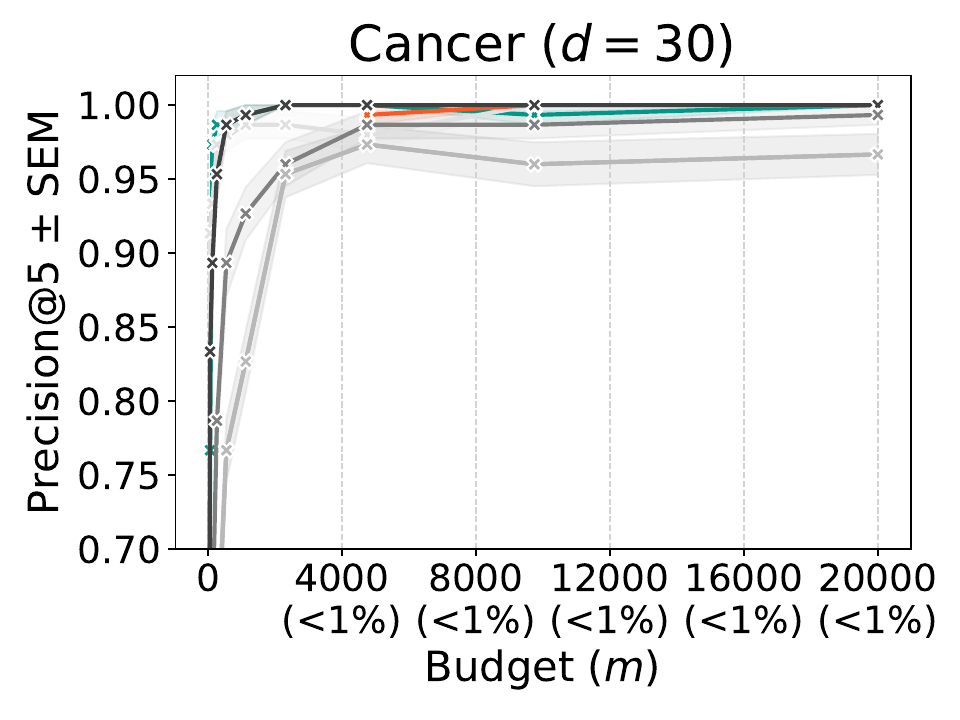}
    \end{minipage}
        \hfill
    \begin{minipage}{0.245\linewidth}
        \includegraphics[width=\linewidth]{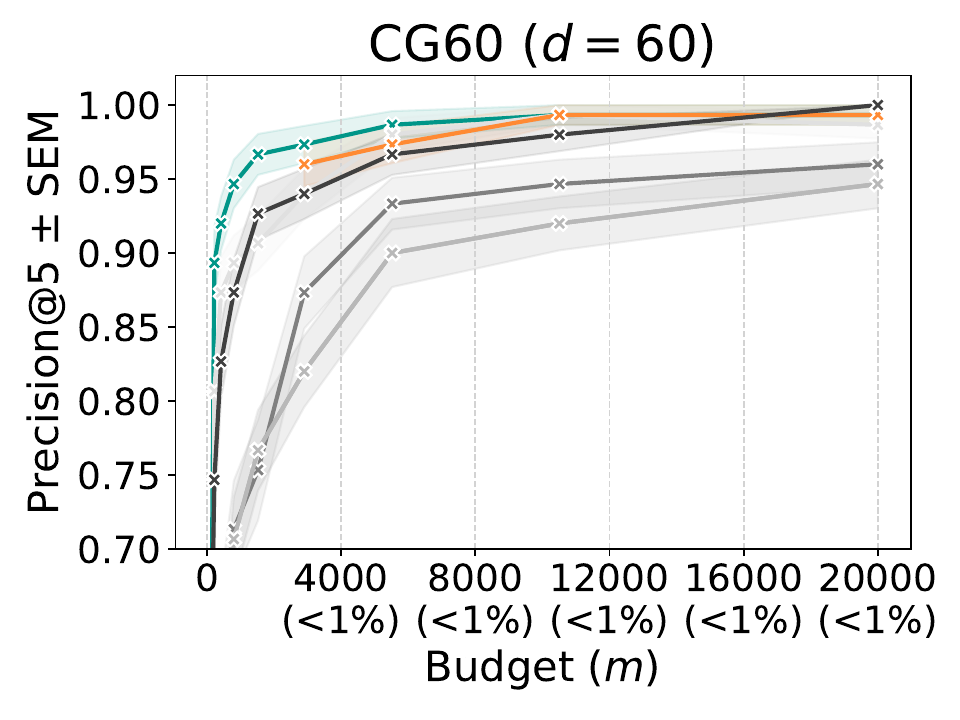}
    \end{minipage}
        \hfill
    \begin{minipage}{0.245\linewidth}
        \includegraphics[width=\linewidth]{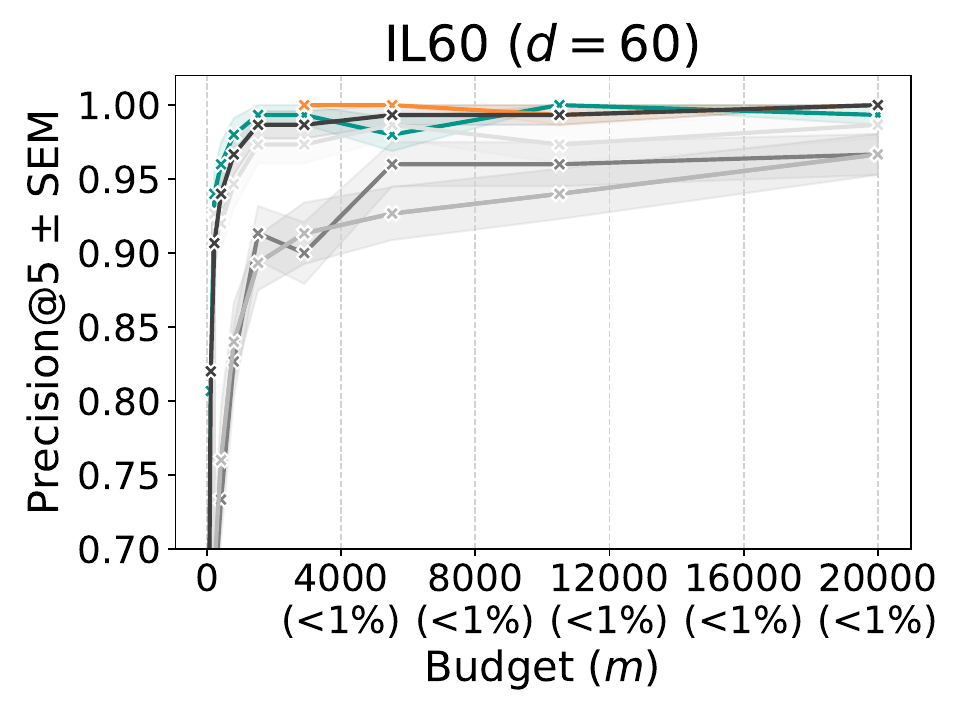}
    \end{minipage}
        \hfill
    \begin{minipage}{0.245\linewidth}
        \includegraphics[width=\linewidth]{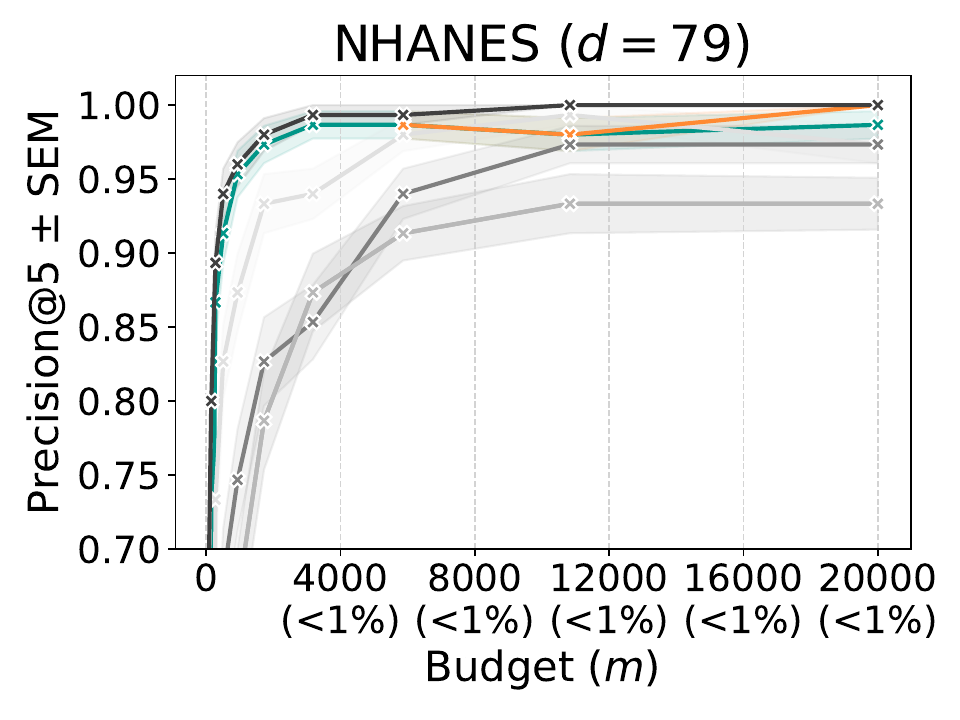}
    \end{minipage}
        \hfill
    \begin{minipage}{0.245\linewidth}
        \includegraphics[width=\linewidth]{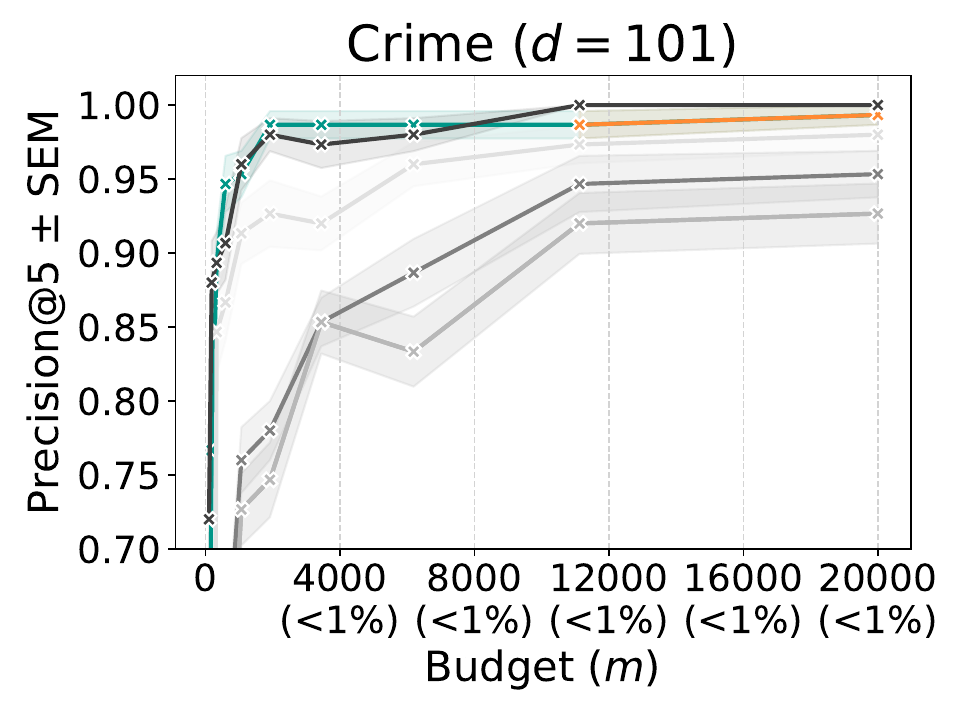}
    \end{minipage}
        \hfill
    \begin{minipage}{0.245\linewidth}
        \hfill
    \end{minipage}        
        \hfill
    \begin{minipage}{0.245\linewidth}
        \hfill
    \end{minipage}
    \caption{Approximation quality of PolySHAP variants and baselines measured by Precision@5 ($\pm$ SEM) for paired sampling}
    \label{appx_fig_exp_paired_competitors_prec5}
\end{figure}

\subsection{Approximation Quality using Spearman Correlation}\label{appx_sec_spearman}

In this section, we report approximation quality with respect to Spearman correlation (\emph{SpearmanCorrelation}) for all explanation games from \cref{tab_games}.

\cref{appx_fig_exp_standard_spearman} reports Spearman correlation of PolySHAP and the baseline methods. 
Again, we observe consistent improvements of higher-order interactions in this metric.
For high-dimensional settings ($\geq 60)$, we further observe that the baselines clearly outperform PolySHAP in this metric.
Since we have seen that PolySHAP performs very well in the Precision@5 metric, we conjecture that this difference is mainly due to features with lower absolute Shapley values.

In \cref{appx_fig_exp_paired_vs_standard_spearman}, we observe a similar pattern as with MSE and Precision@5. Using paired sampling drastically improves the approximation quality of 1-PolySHAP, due to its equivalence to 2-PolySHAP.
Since 3-PolySHAP often performs very well in this metric, we do not observe strong differences between 3-PolySHAP and 4-PolySHAP in both sampling settings.

In \cref{appx_fig_exp_paired_competitors_spearman}, we report SpearmanCorrelation for the PolySHAP variants and the baseline methods under paired sampling.
Again, we observe state-of-the-art performance for PolySHAP and the RegressionMSR baseline.
PolySHAP substantially improves, if the budget allows to capture order-3 interactions.

\begin{figure}[t]
    \centering
    \includegraphics[width=.8\linewidth]{figures/legend_standard.pdf}
    \begin{minipage}{0.245\linewidth}
        \includegraphics[width=\linewidth]{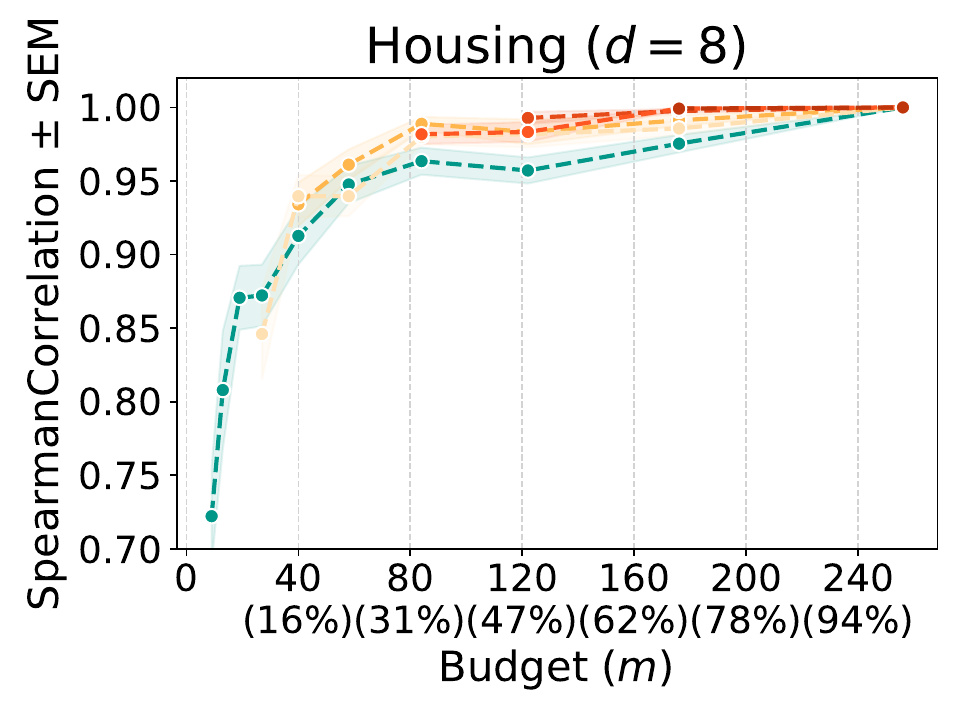}
    \end{minipage}
    \hfill
    \begin{minipage}{0.245\linewidth}
        \includegraphics[width=\linewidth]{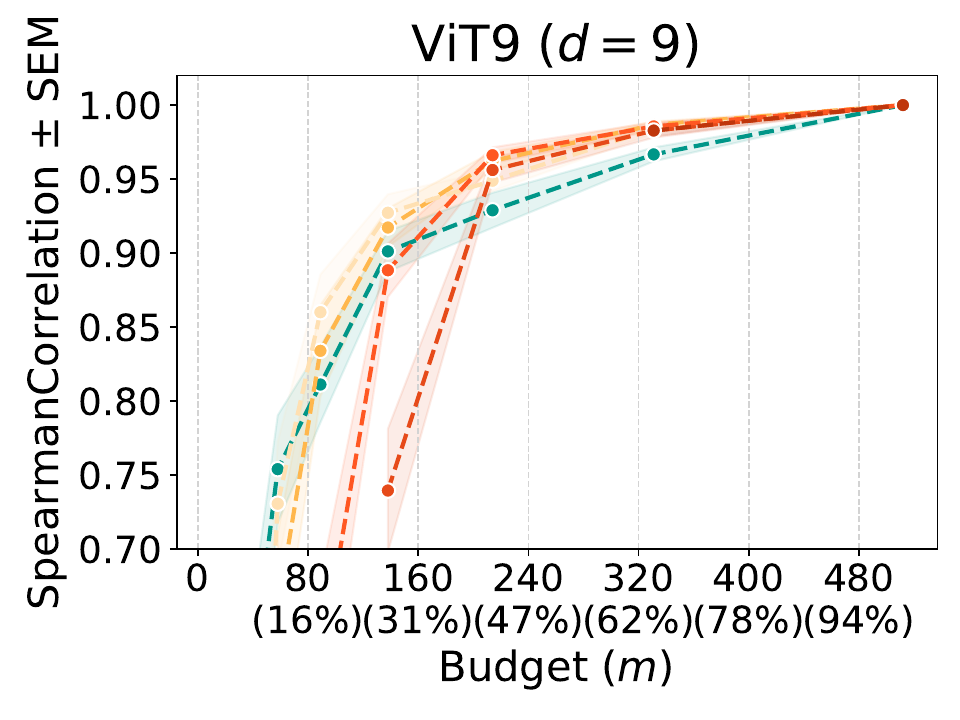}
    \end{minipage}
    \hfill
    \begin{minipage}{0.245\linewidth}
        \includegraphics[width=\linewidth]{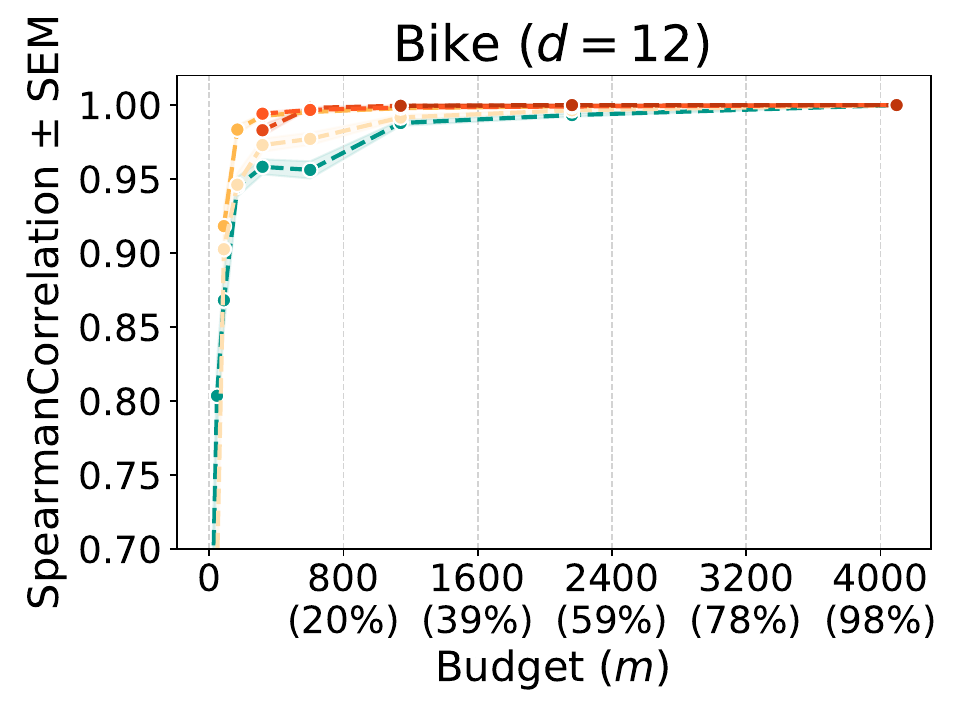}
    \end{minipage}
    \hfill
    \begin{minipage}{0.245\linewidth}
        \includegraphics[width=\linewidth]{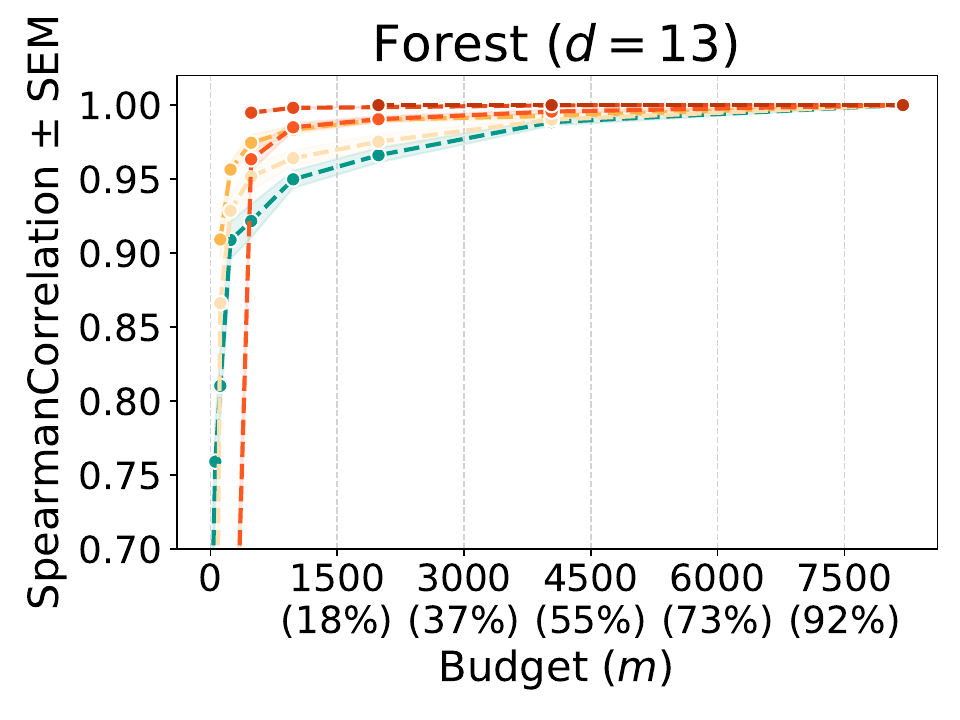}
    \end{minipage}
    \hfill
    \begin{minipage}{0.245\linewidth}
        \includegraphics[width=\linewidth]{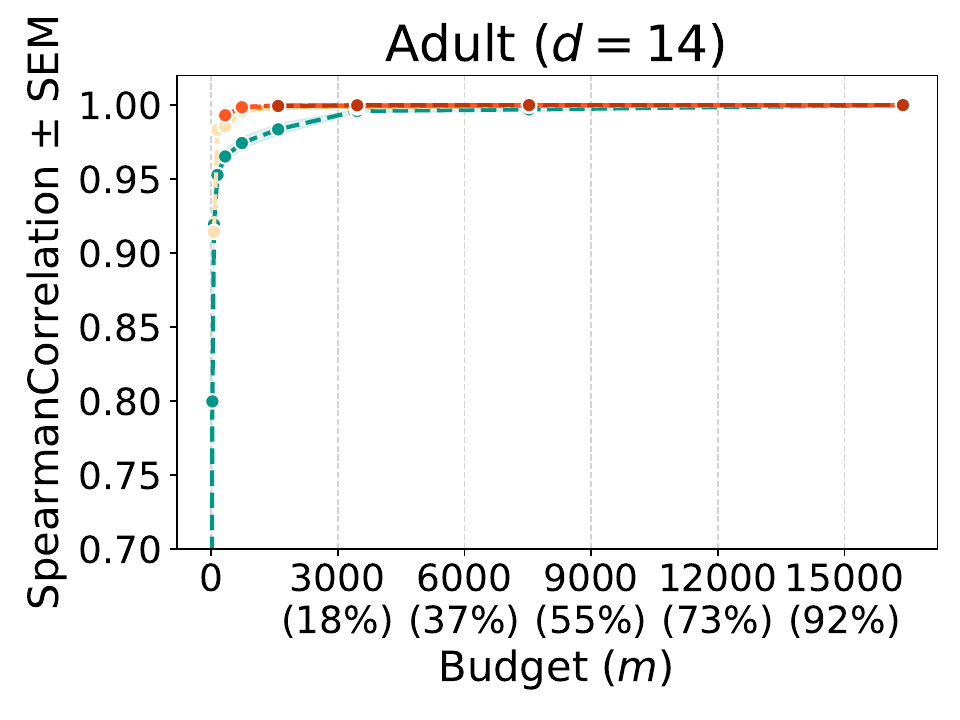}
    \end{minipage}
    \hfill
    \begin{minipage}{0.245\linewidth}
        \includegraphics[width=\linewidth]{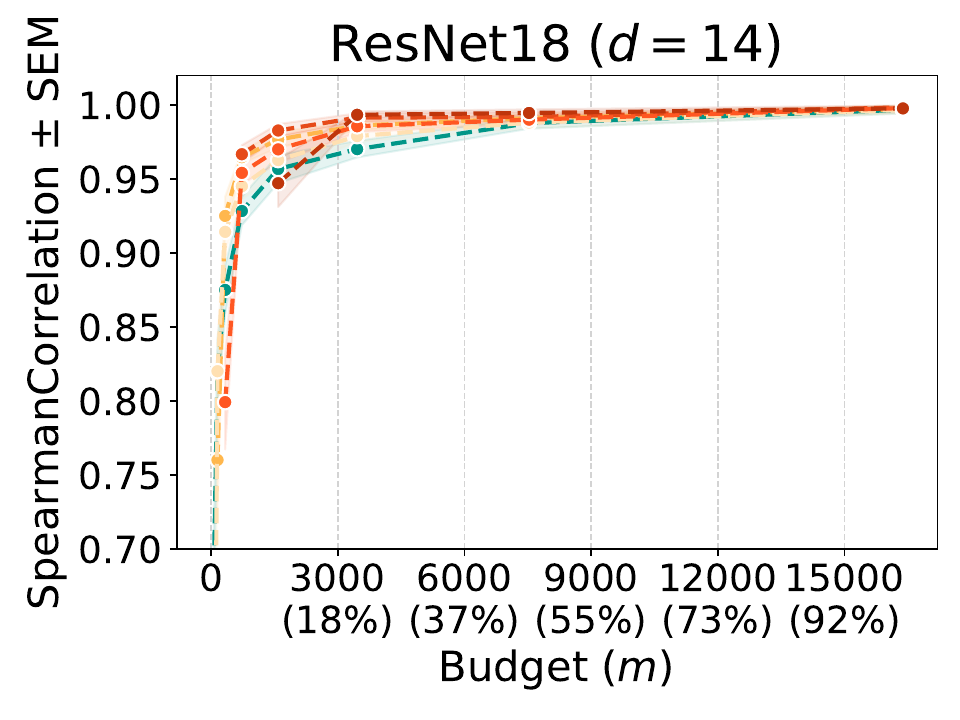}
    \end{minipage}
        \hfill
    \begin{minipage}{0.245\linewidth}
        \includegraphics[width=\linewidth]{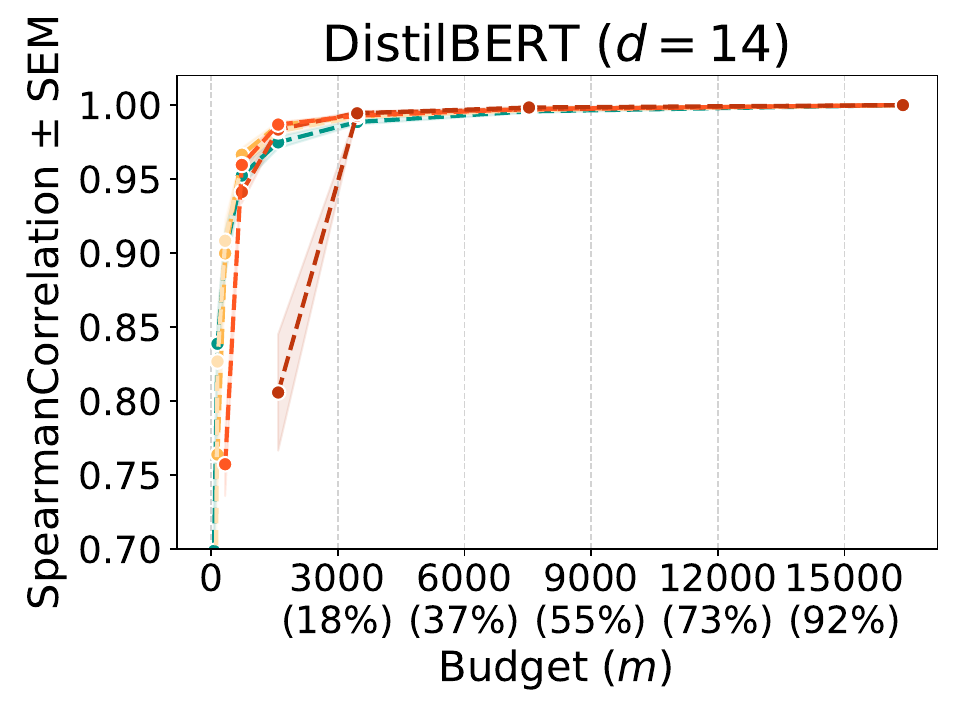}
    \end{minipage}
        \hfill
    \begin{minipage}{0.245\linewidth}
        \includegraphics[width=\linewidth]{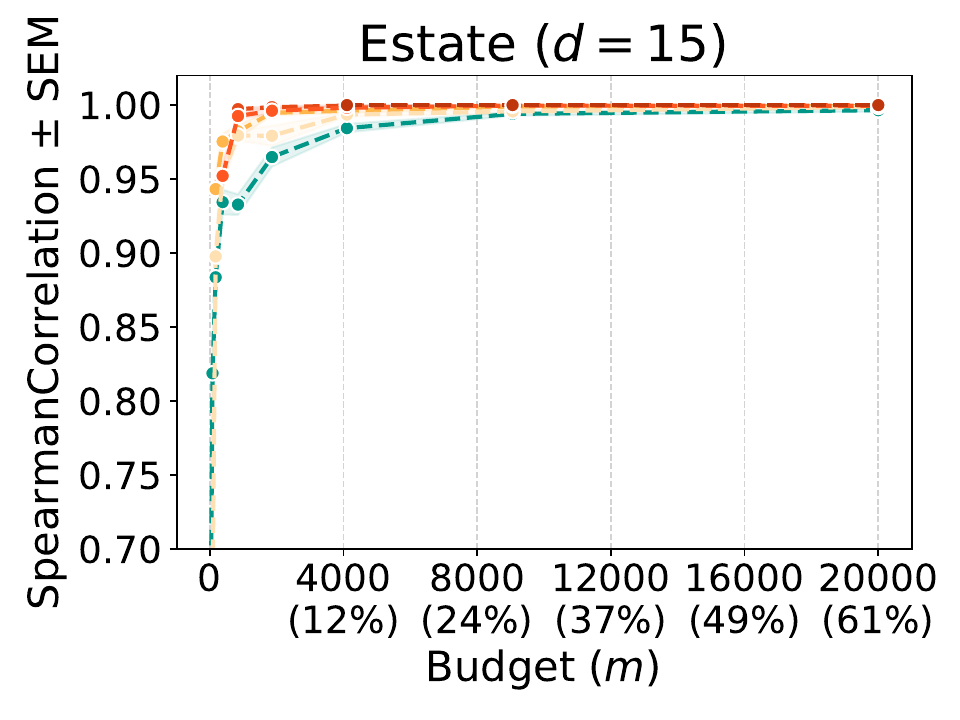}
    \end{minipage}
        \hfill
    \begin{minipage}{0.245\linewidth}
        \includegraphics[width=\linewidth]{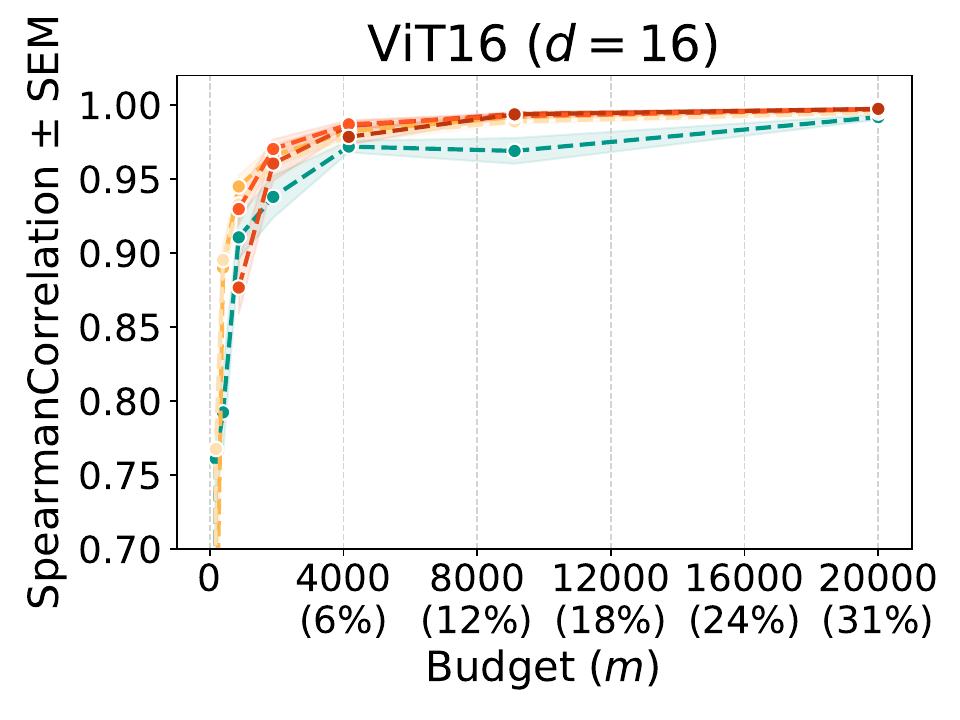}
    \end{minipage}
    \hfill
    \begin{minipage}{0.245\linewidth}
        \includegraphics[width=\linewidth]{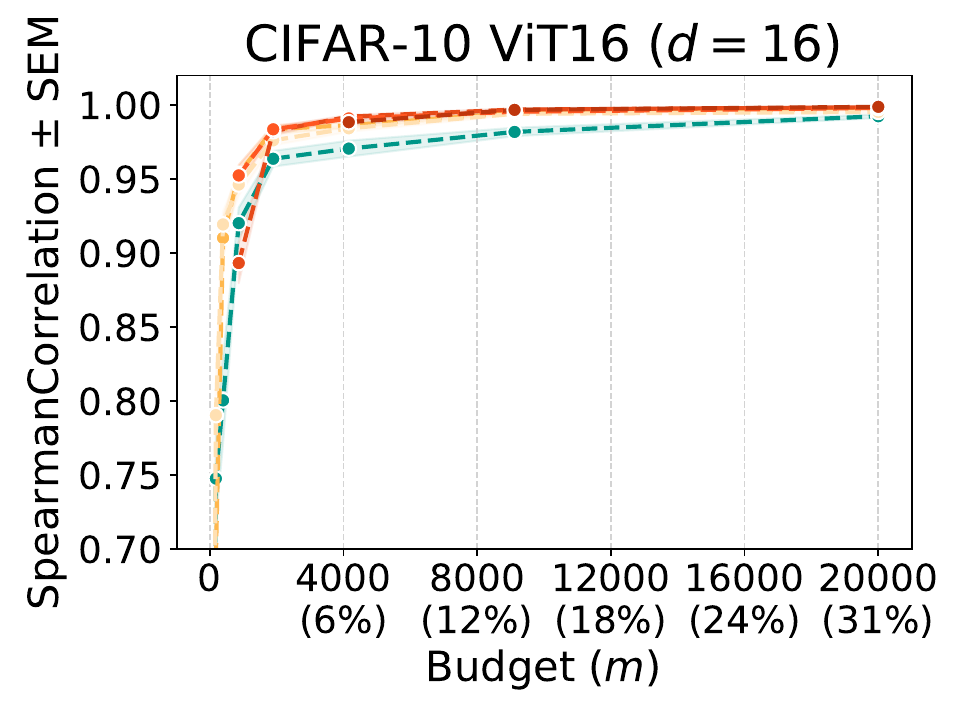}
    \end{minipage}
        \hfill
    \begin{minipage}{0.245\linewidth}
        \includegraphics[width=\linewidth]{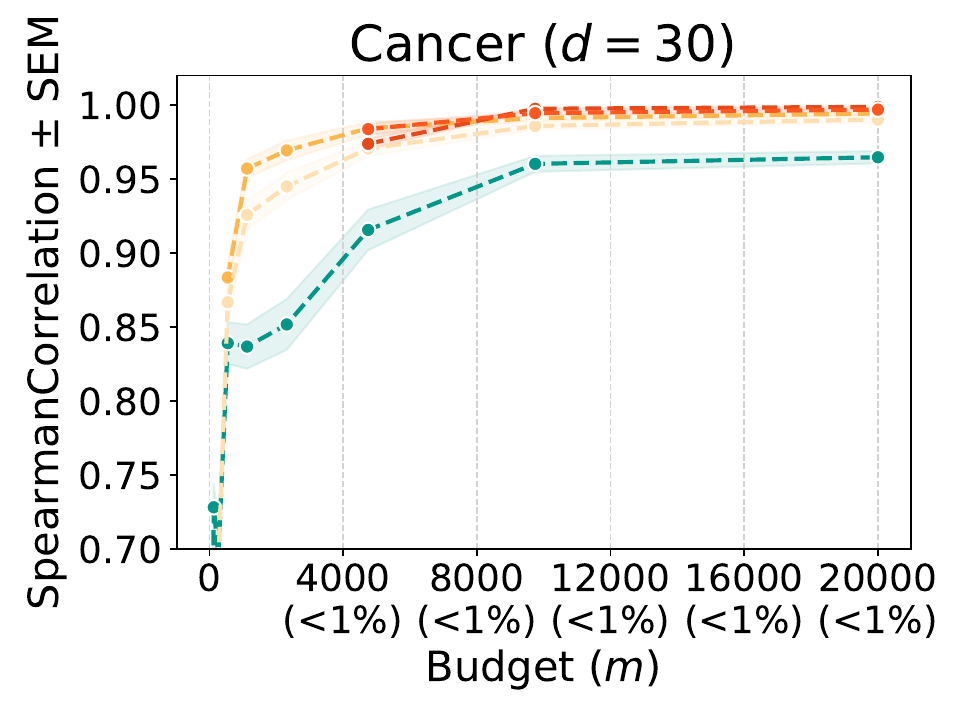}
    \end{minipage}
        \hfill
    \begin{minipage}{0.245\linewidth}
        \includegraphics[width=\linewidth]{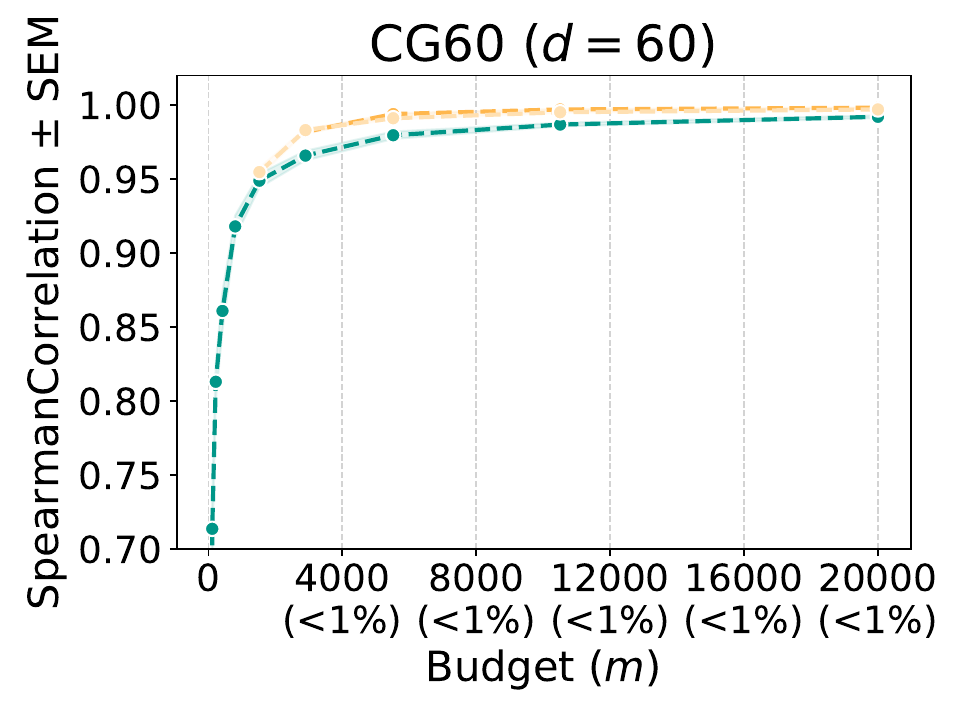}
    \end{minipage}
        \hfill
    \begin{minipage}{0.245\linewidth}
        \includegraphics[width=\linewidth]{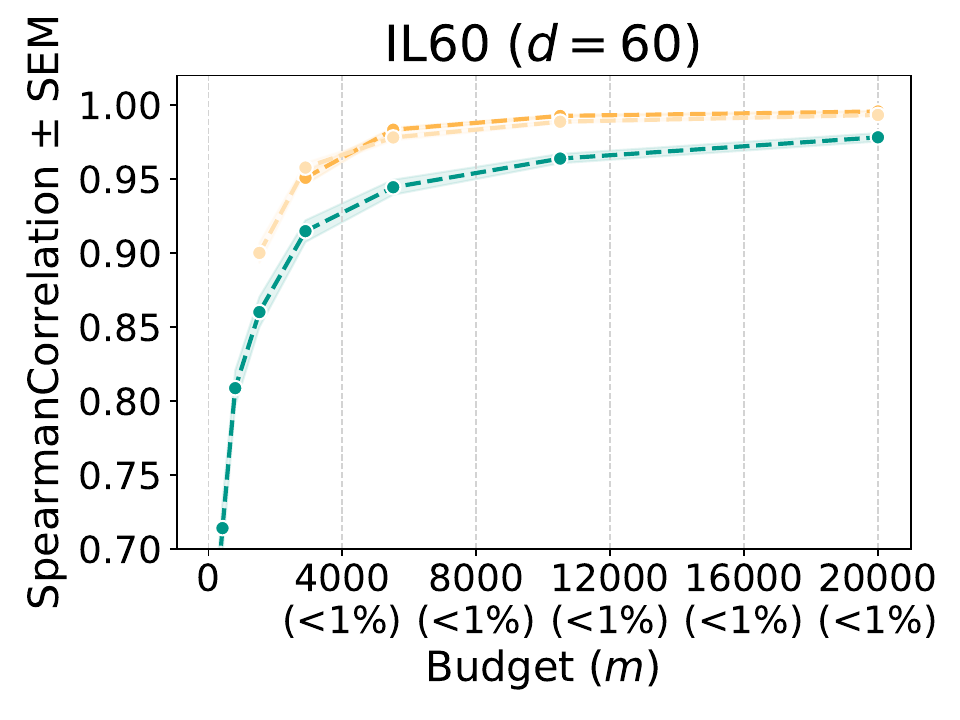}
    \end{minipage}
        \hfill
    \begin{minipage}{0.245\linewidth}
        \includegraphics[width=\linewidth]{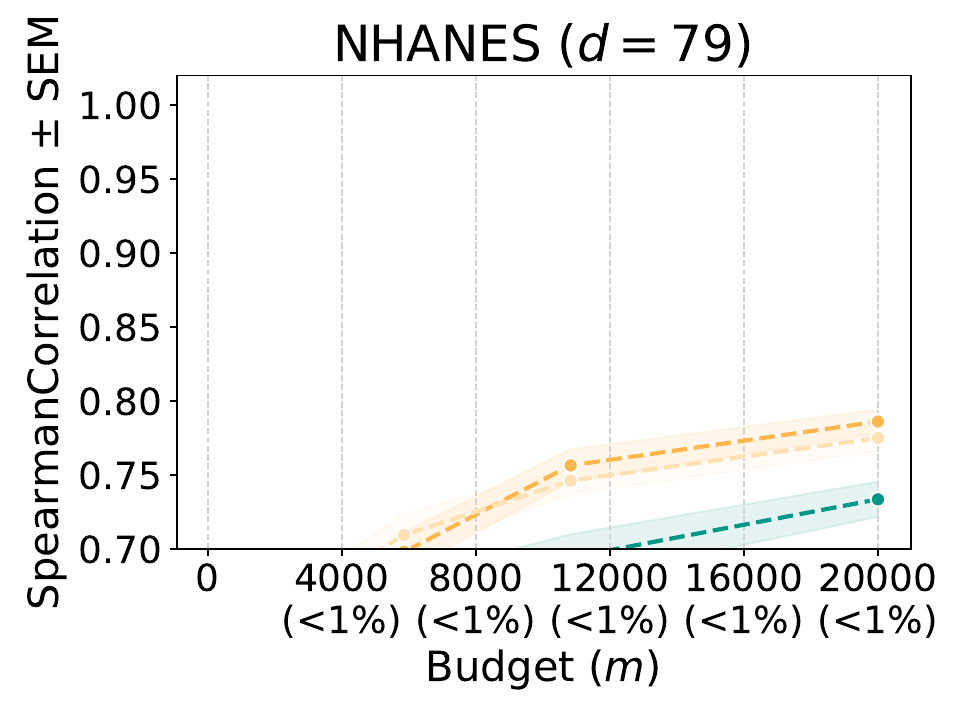}
    \end{minipage}
        \hfill
    \begin{minipage}{0.245\linewidth}
        \includegraphics[width=\linewidth]{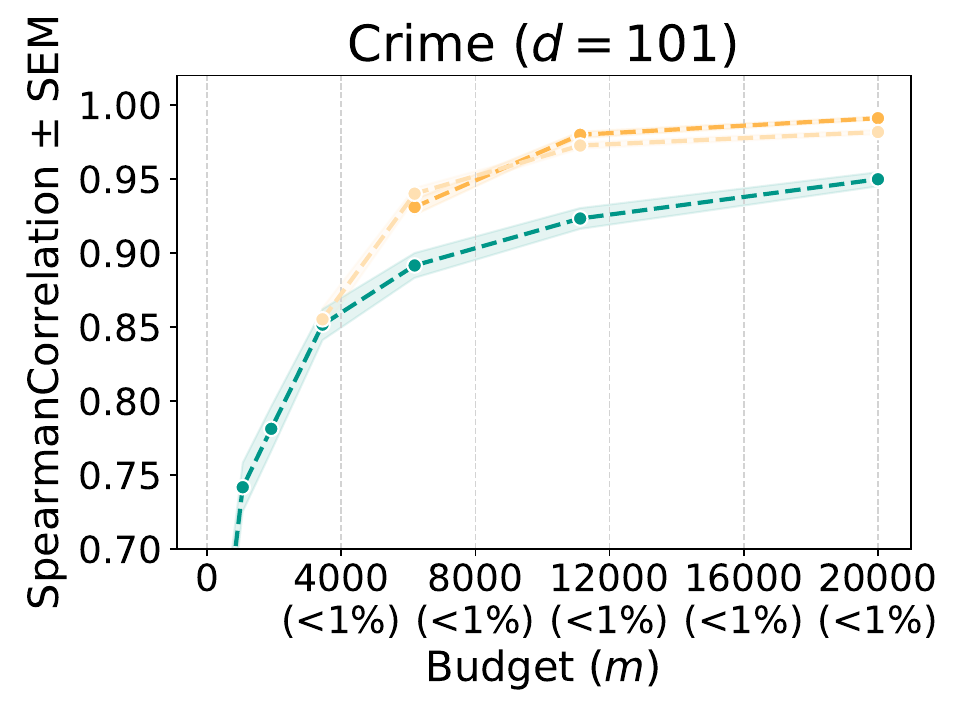}
    \end{minipage}
        \hfill
    \begin{minipage}{0.245\linewidth}
        \hfill
    \end{minipage}        
        \hfill
    \begin{minipage}{0.245\linewidth}
        \hfill
    \end{minipage}
    \caption{Approximation quality measured by SpearmanCorrelation ($\pm$ SEM) for varying budget ($m$) on different games. Adding interactions in PolySHAP can substantially improve approximation quality}
    \label{appx_fig_exp_standard_spearman}
\end{figure}

\clearpage

\begin{figure}
    \centering
    \includegraphics[width=.5\linewidth]{figures/legend_standard_vs_paired.pdf}
    \\
    \begin{minipage}{0.245\linewidth}
        \includegraphics[width=\linewidth]{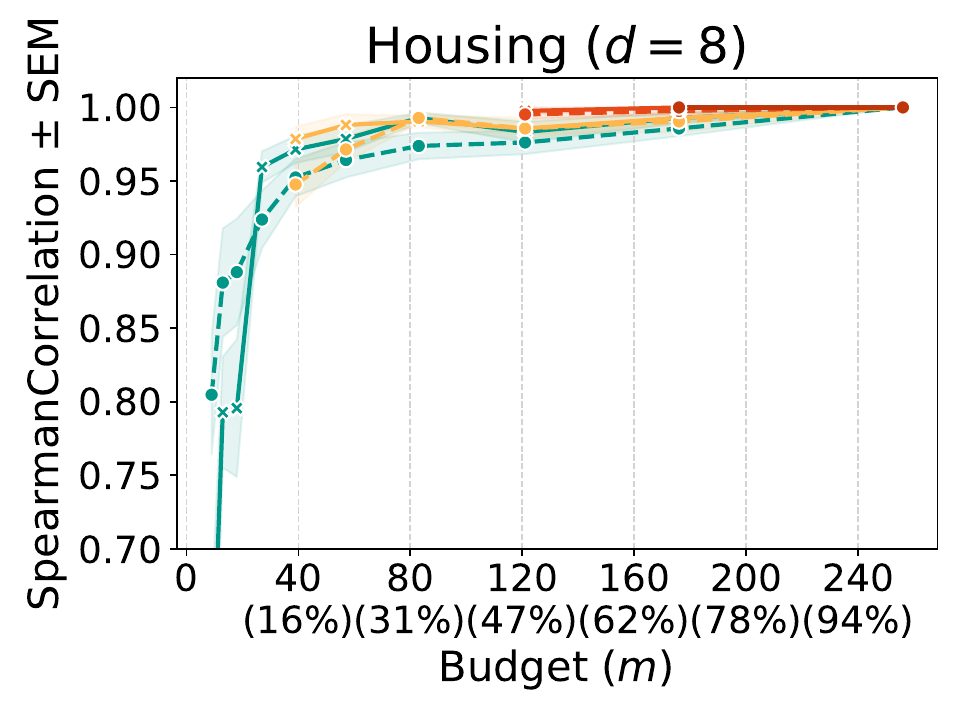}
    \end{minipage}
    \hfill
    \begin{minipage}{0.245\linewidth}
        \includegraphics[width=\linewidth]{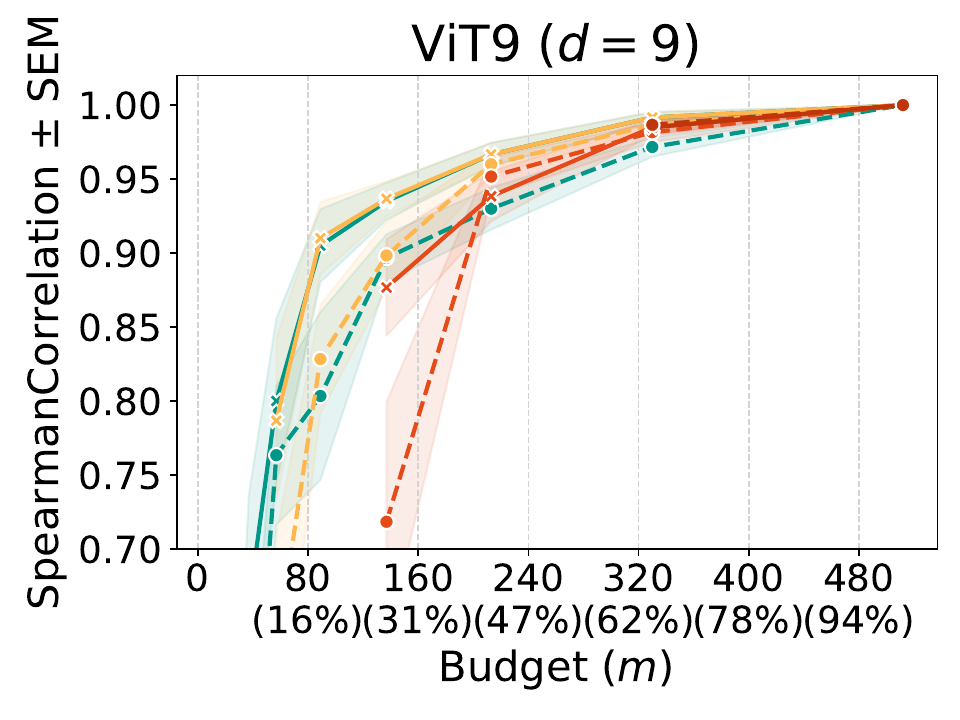}
    \end{minipage}
    \hfill
    \begin{minipage}{0.245\linewidth}
        \includegraphics[width=\linewidth]{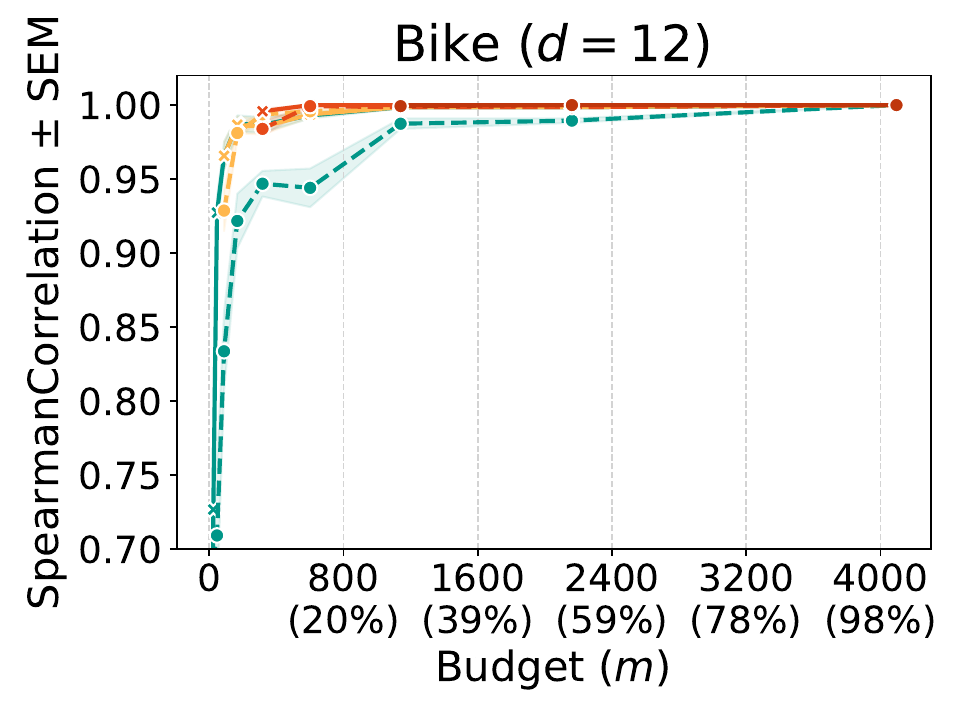}
    \end{minipage}
    \hfill
    \begin{minipage}{0.245\linewidth}
        \includegraphics[width=\linewidth]{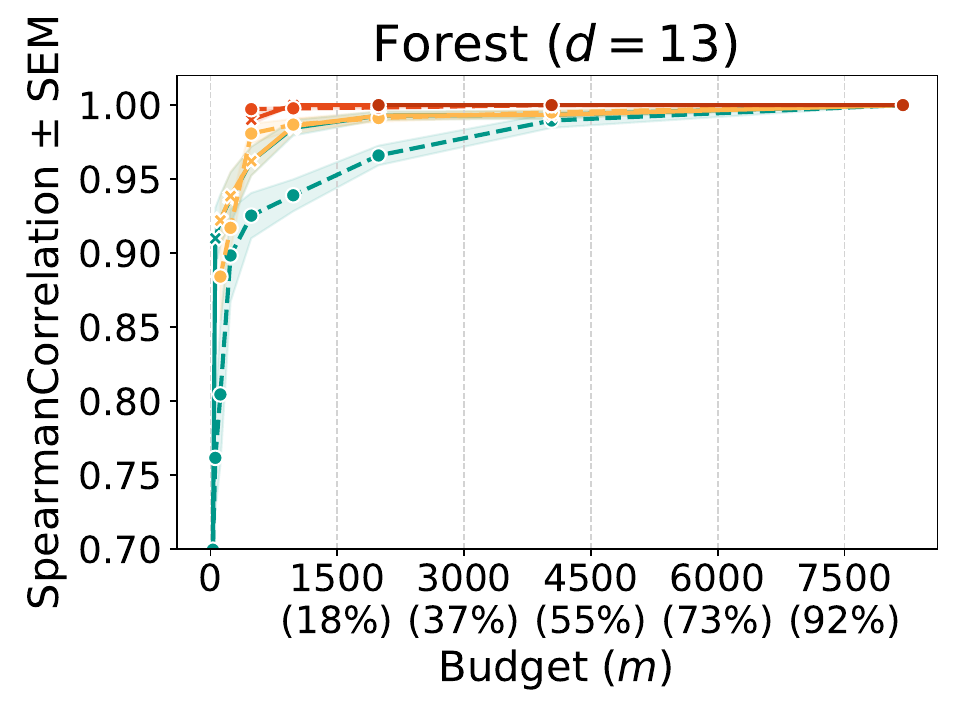}
    \end{minipage}
    \hfill
    \begin{minipage}{0.245\linewidth}
        \includegraphics[width=\linewidth]{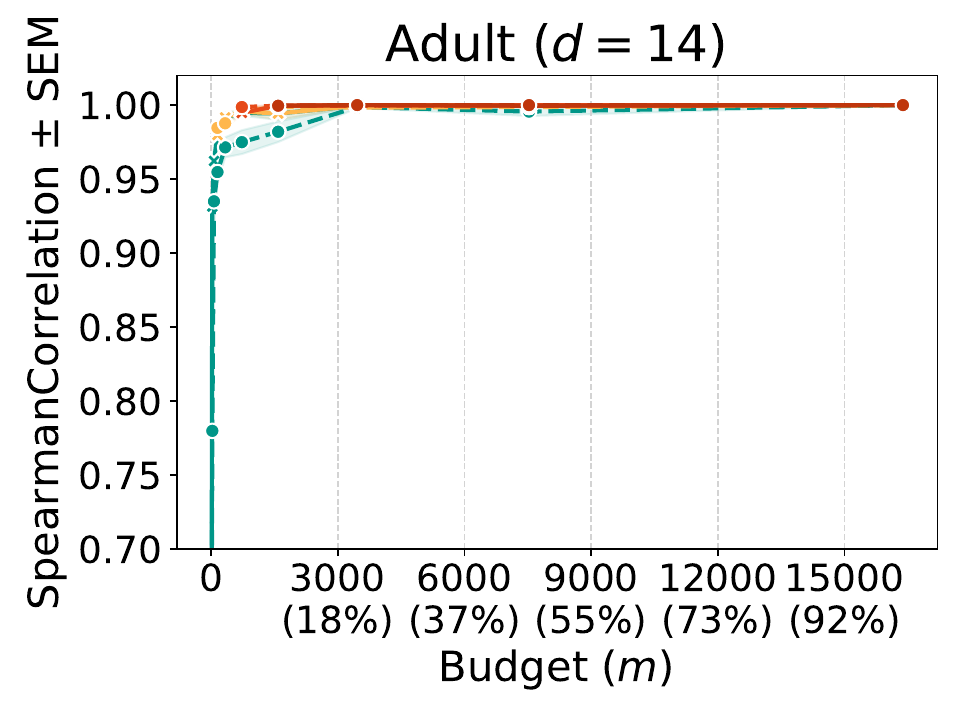}
    \end{minipage}
    \hfill
    \begin{minipage}{0.245\linewidth}
        \includegraphics[width=\linewidth]{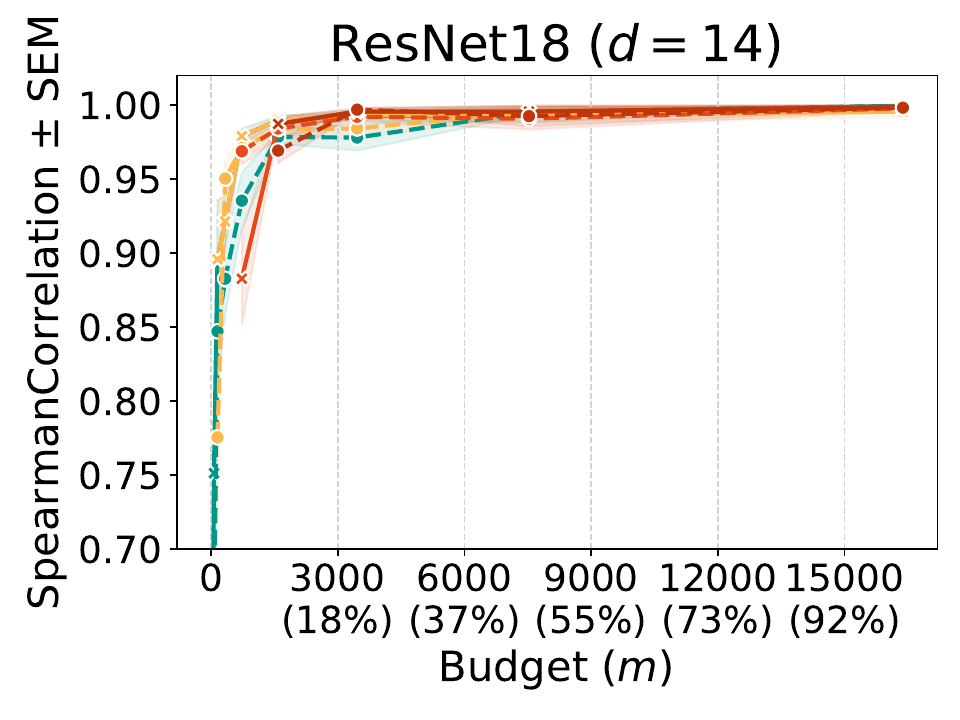}
    \end{minipage}
        \hfill
    \begin{minipage}{0.245\linewidth}
        \includegraphics[width=\linewidth]{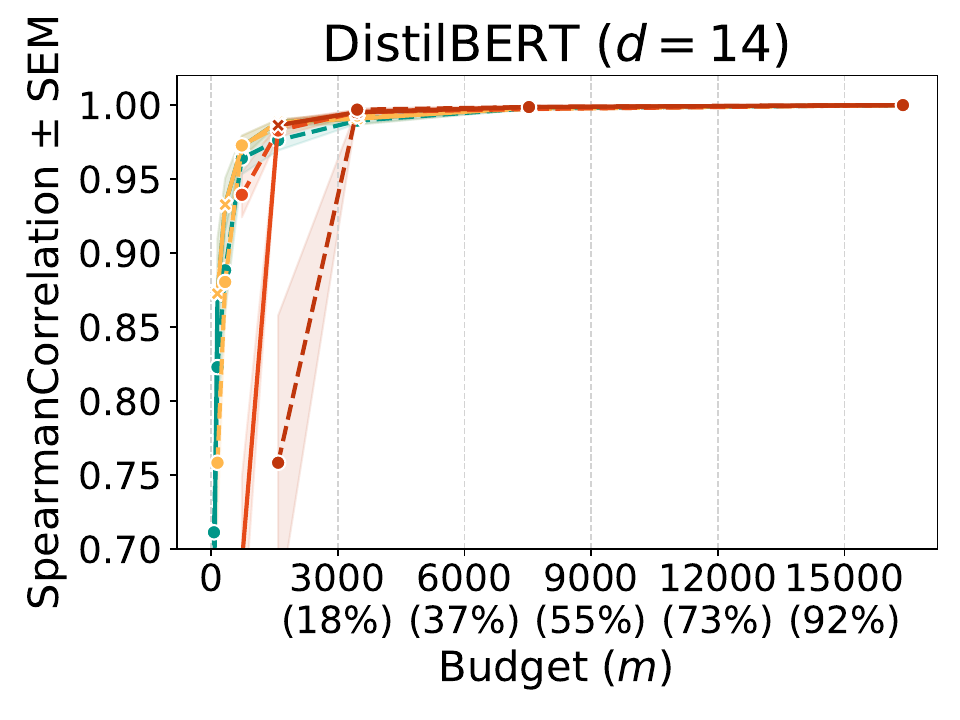}
    \end{minipage}
        \hfill
    \begin{minipage}{0.245\linewidth}
        \includegraphics[width=\linewidth]{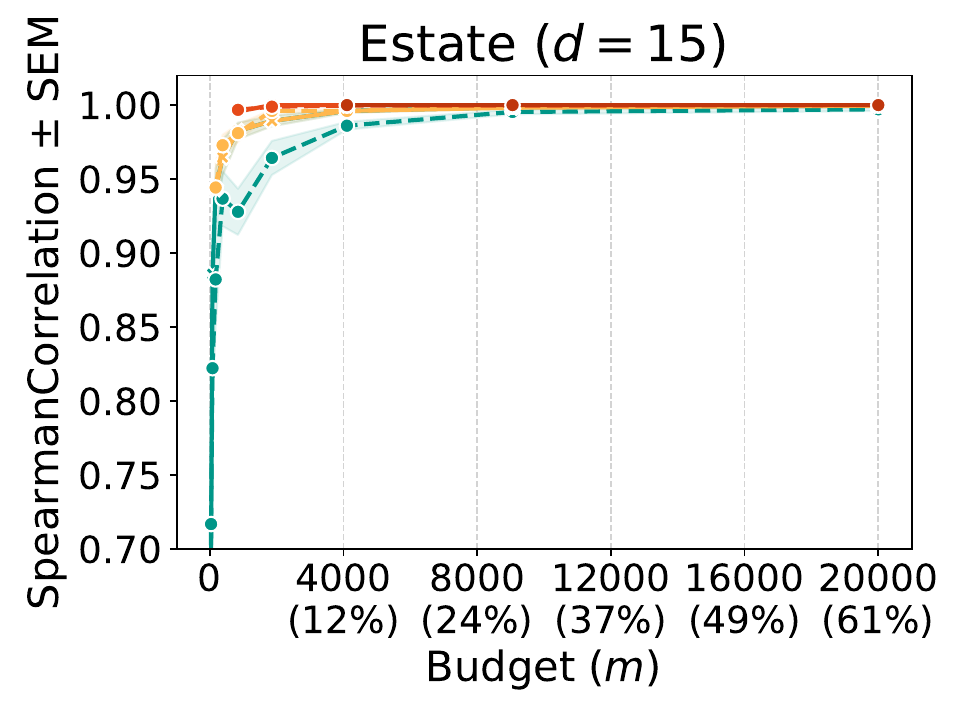}
    \end{minipage}
        \hfill
    \begin{minipage}{0.245\linewidth}
        \includegraphics[width=\linewidth]{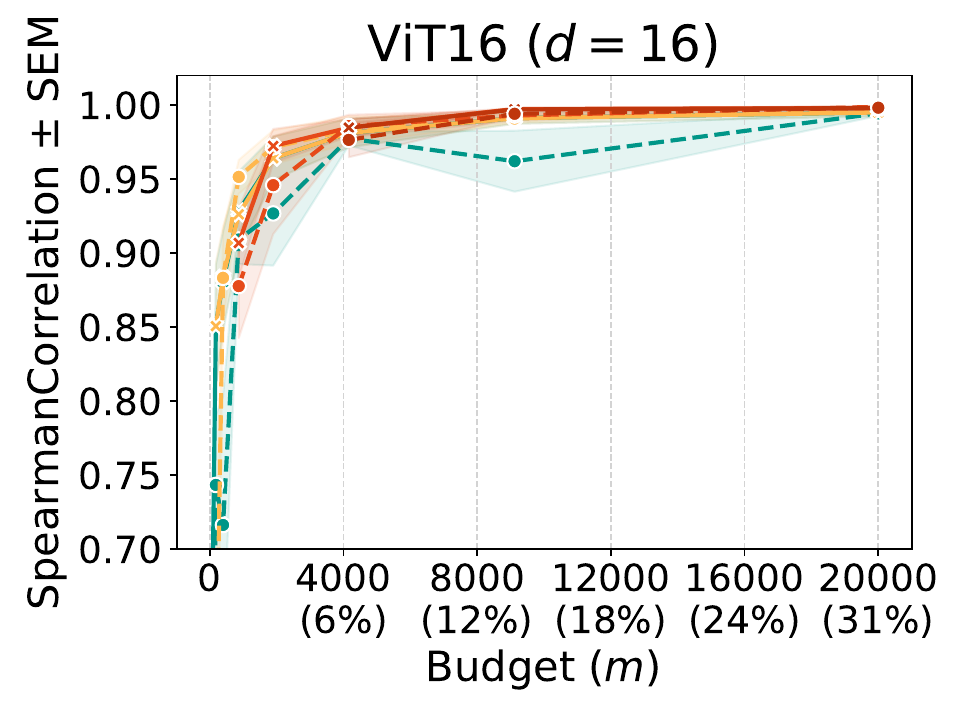}
    \end{minipage}
    \hfill
        \begin{minipage}{0.245\linewidth}
        \includegraphics[width=\linewidth]{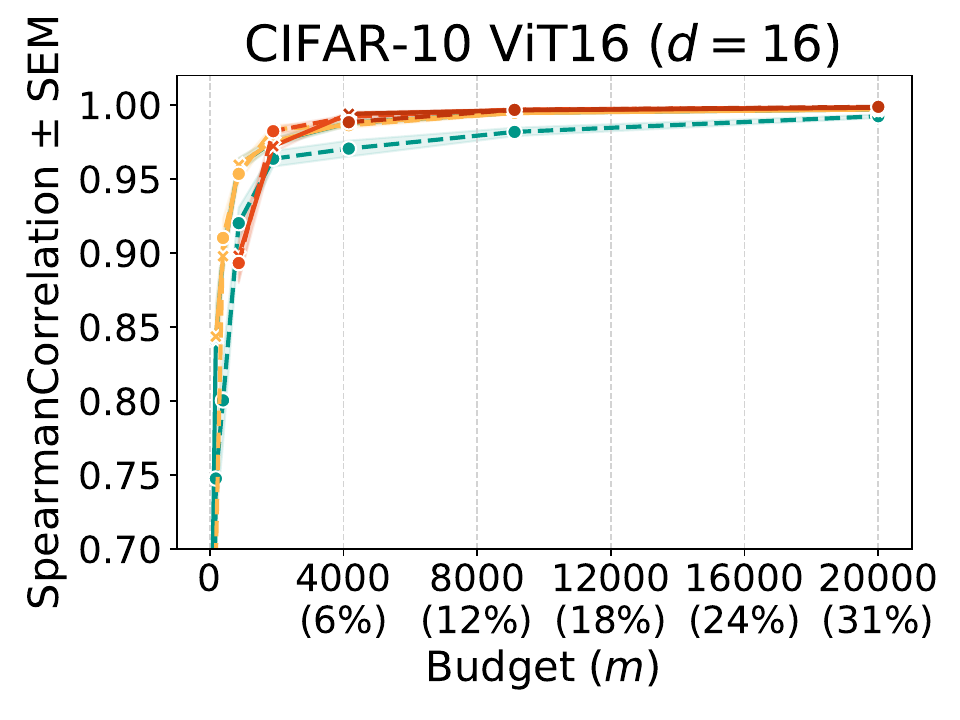}
    \end{minipage}
        \hfill
    \begin{minipage}{0.245\linewidth}
        \includegraphics[width=\linewidth]{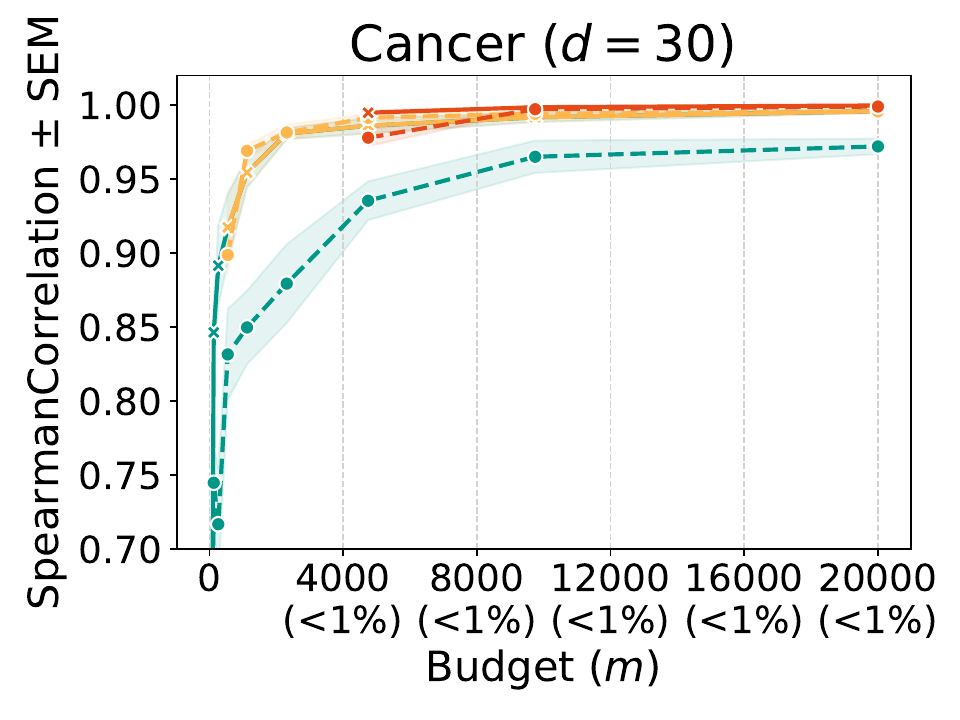}
    \end{minipage}
        \hfill
    \begin{minipage}{0.245\linewidth}
        \includegraphics[width=\linewidth]{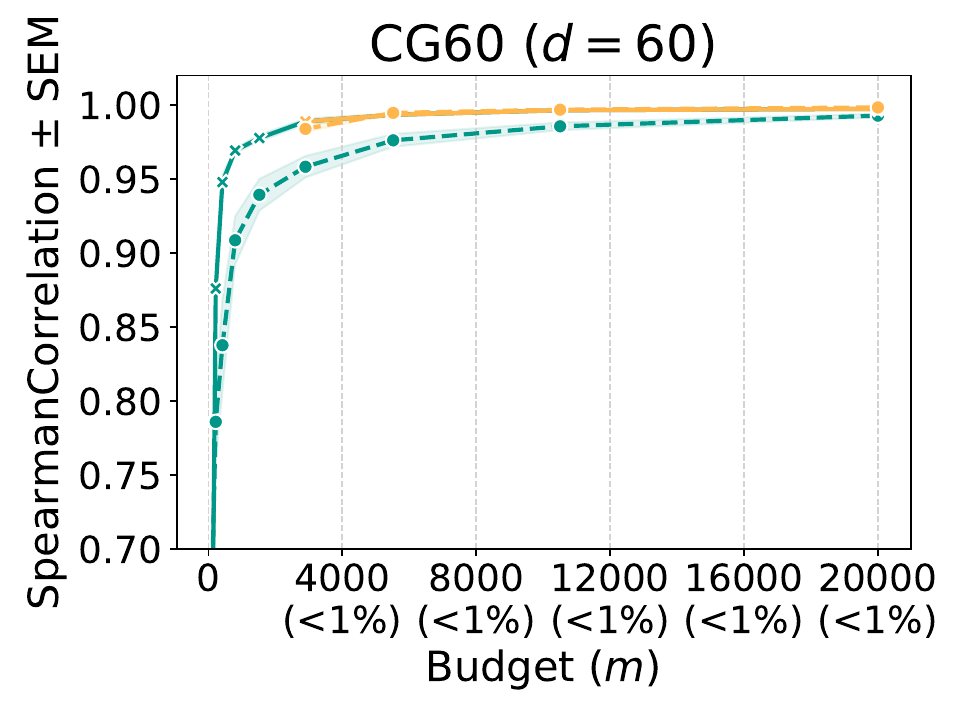}
    \end{minipage}
        \hfill
    \begin{minipage}{0.245\linewidth}
        \includegraphics[width=\linewidth]{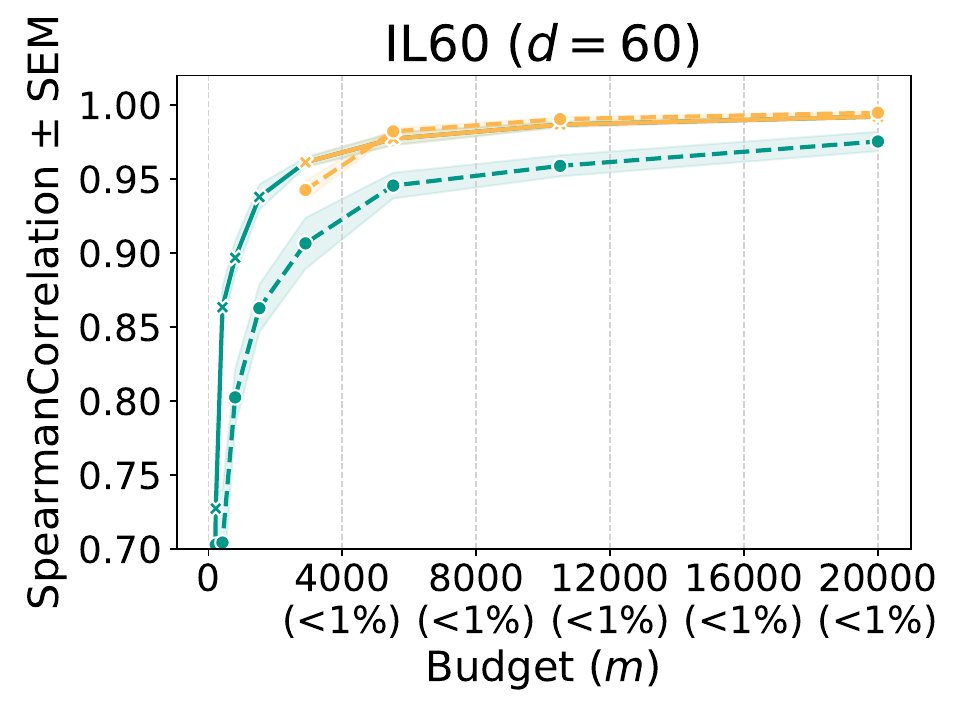}
    \end{minipage}
        \hfill
    \begin{minipage}{0.245\linewidth}
        \includegraphics[width=\linewidth]{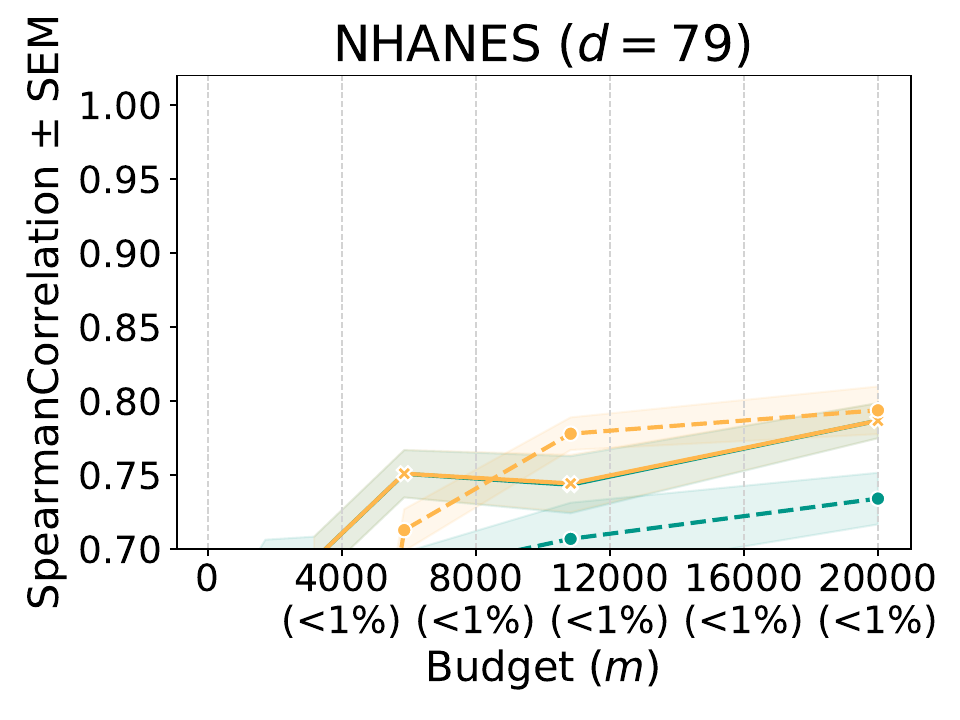}
    \end{minipage}
        \hfill
    \begin{minipage}{0.245\linewidth}
        \includegraphics[width=\linewidth]{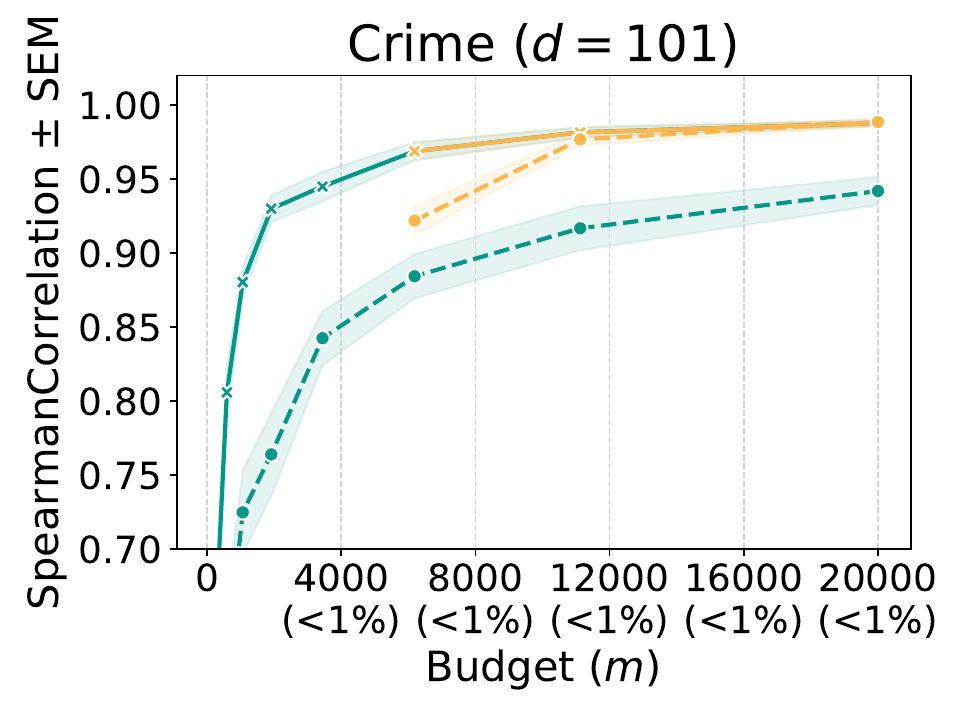}
    \end{minipage}
        \hfill
    \begin{minipage}{0.245\linewidth}
        \hfill
    \end{minipage}        
        \hfill
    \begin{minipage}{0.245\linewidth}
        \hfill
    \end{minipage}
    \caption{Approximation quality measured by SpearmanCorrelation ($\pm$ SEM) for standard (dotted) and paired (solid) sampling. Under paired sampling, 2-PolySHAP marginally improves, whereas KernelSHAP substantially improves due to its equivalence to 2-PolySHAP}
    \label{appx_fig_exp_paired_vs_standard_spearman}
\end{figure}

\begin{figure}
    \centering
    \includegraphics[width=.5\linewidth]{figures/legend_paired_baselines.pdf}
    \\
    \begin{minipage}{0.245\linewidth}
        \includegraphics[width=\linewidth]{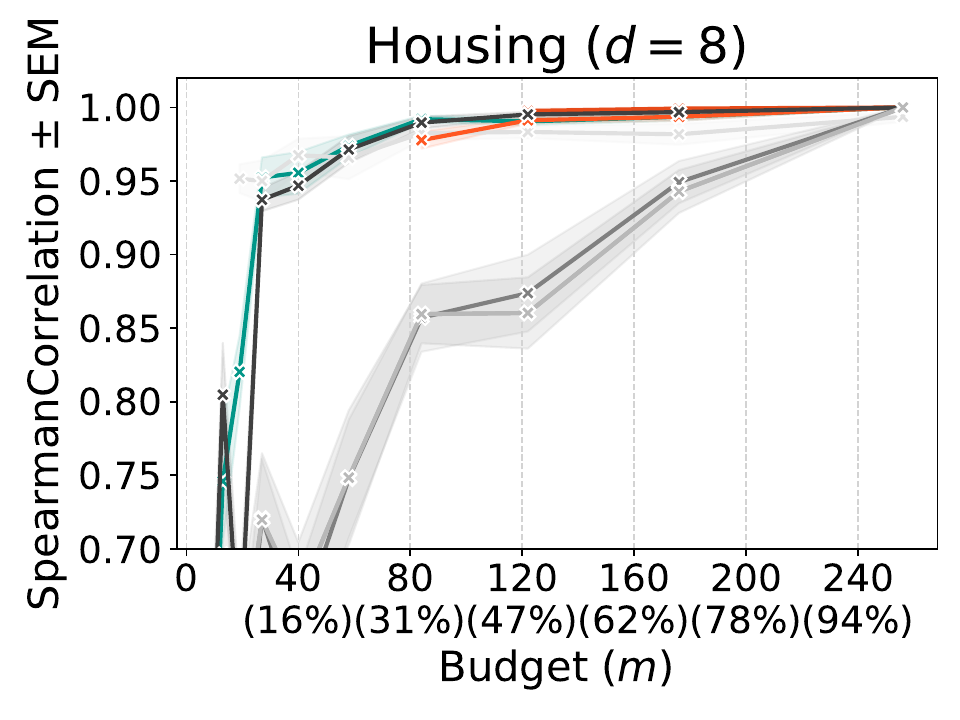}
    \end{minipage}
    \hfill
    \begin{minipage}{0.245\linewidth}
        \includegraphics[width=\linewidth]{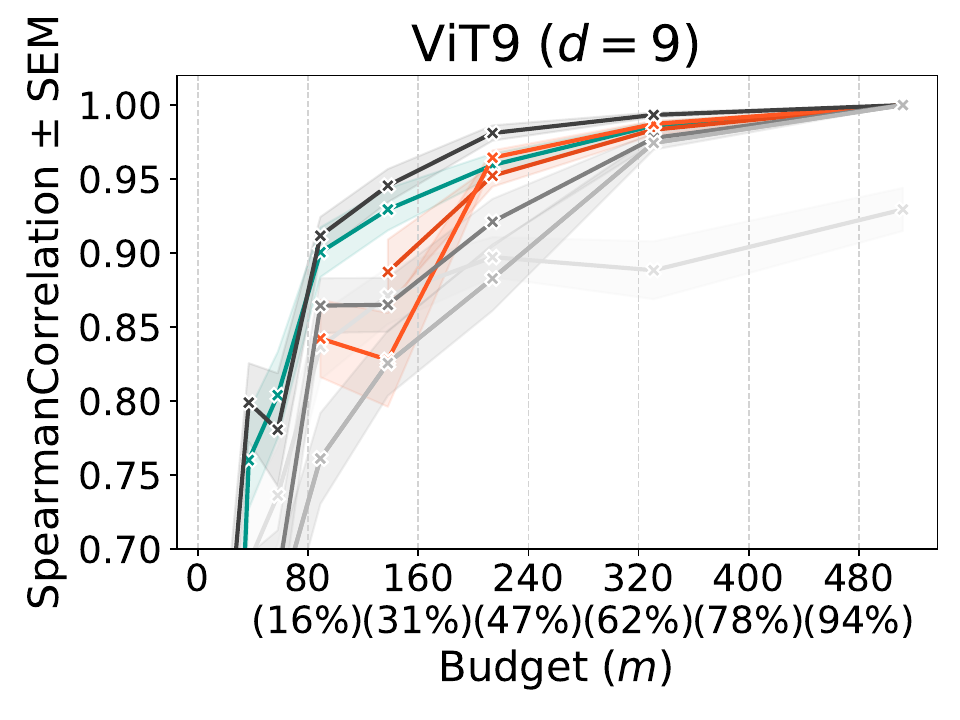}
    \end{minipage}
    \hfill
    \begin{minipage}{0.245\linewidth}
        \includegraphics[width=\linewidth]{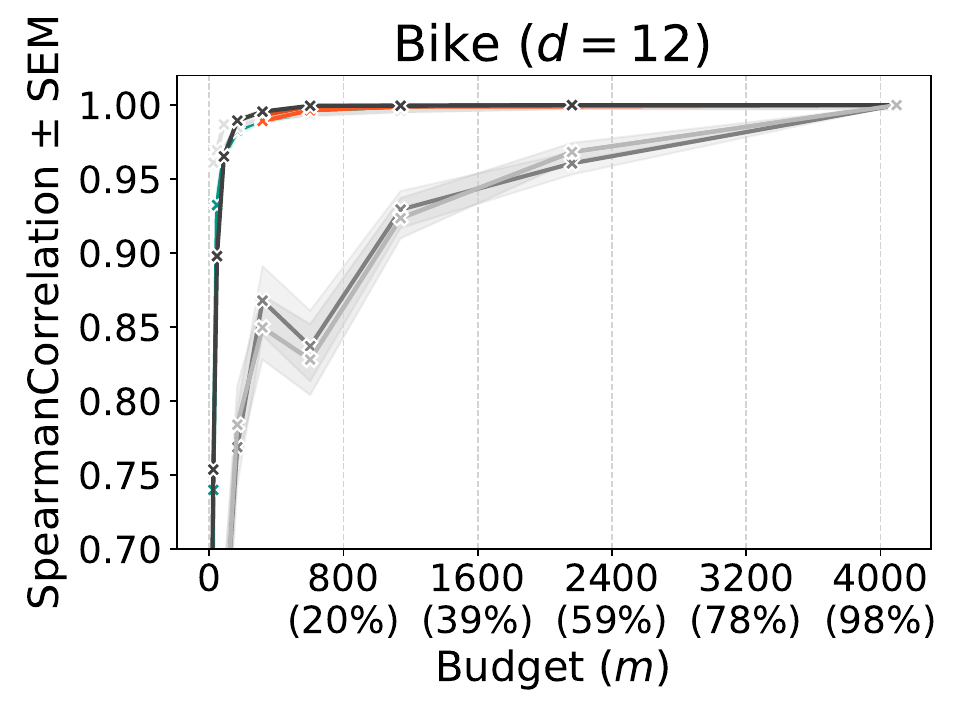}
    \end{minipage}
    \hfill
    \begin{minipage}{0.245\linewidth}
        \includegraphics[width=\linewidth]{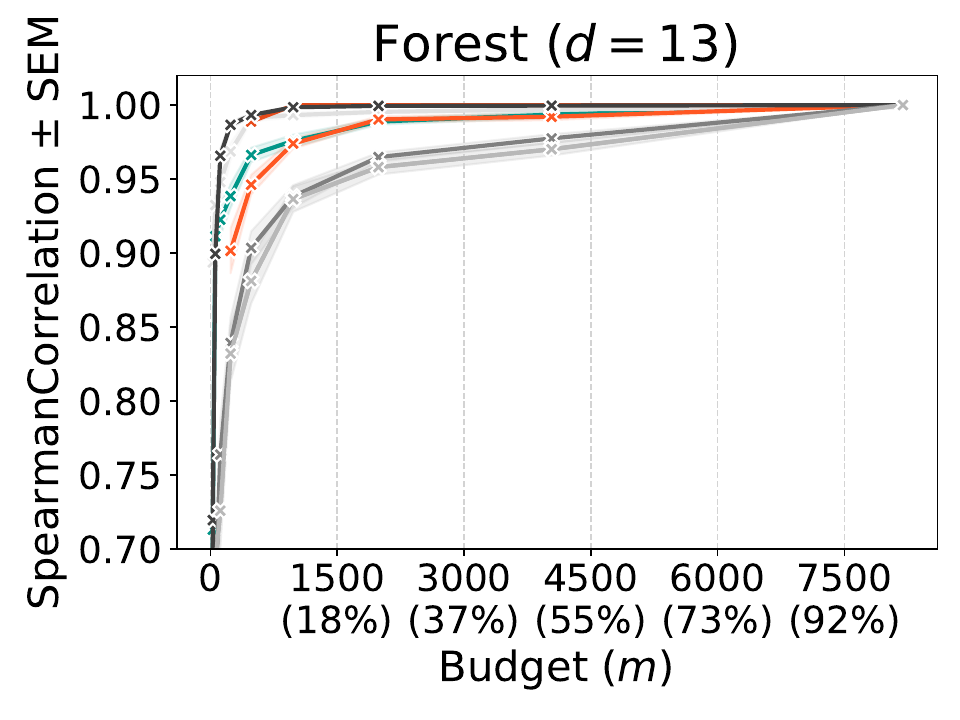}
    \end{minipage}
    \hfill
    \begin{minipage}{0.245\linewidth}
        \includegraphics[width=\linewidth]{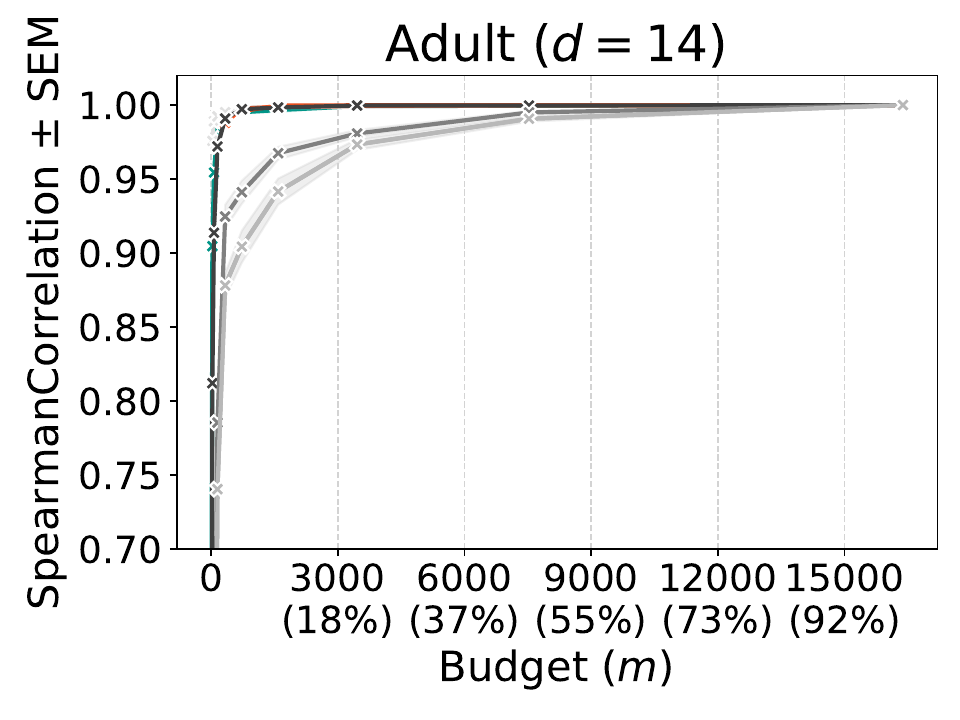}
    \end{minipage}
    \hfill
    \begin{minipage}{0.245\linewidth}
        \includegraphics[width=\linewidth]{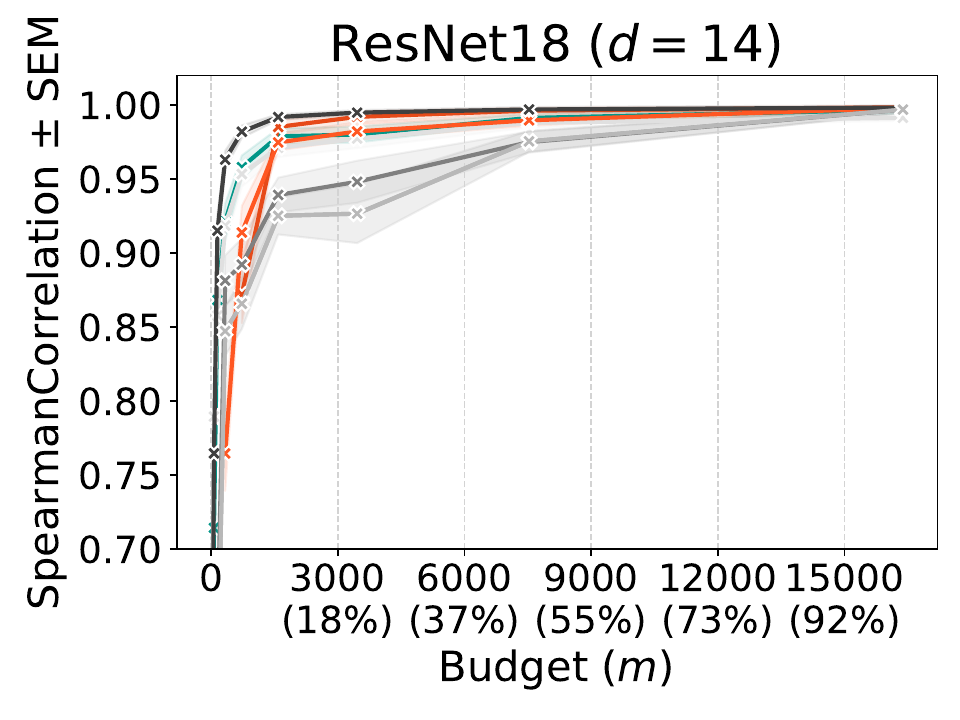}
    \end{minipage}
        \hfill
    \begin{minipage}{0.245\linewidth}
        \includegraphics[width=\linewidth]{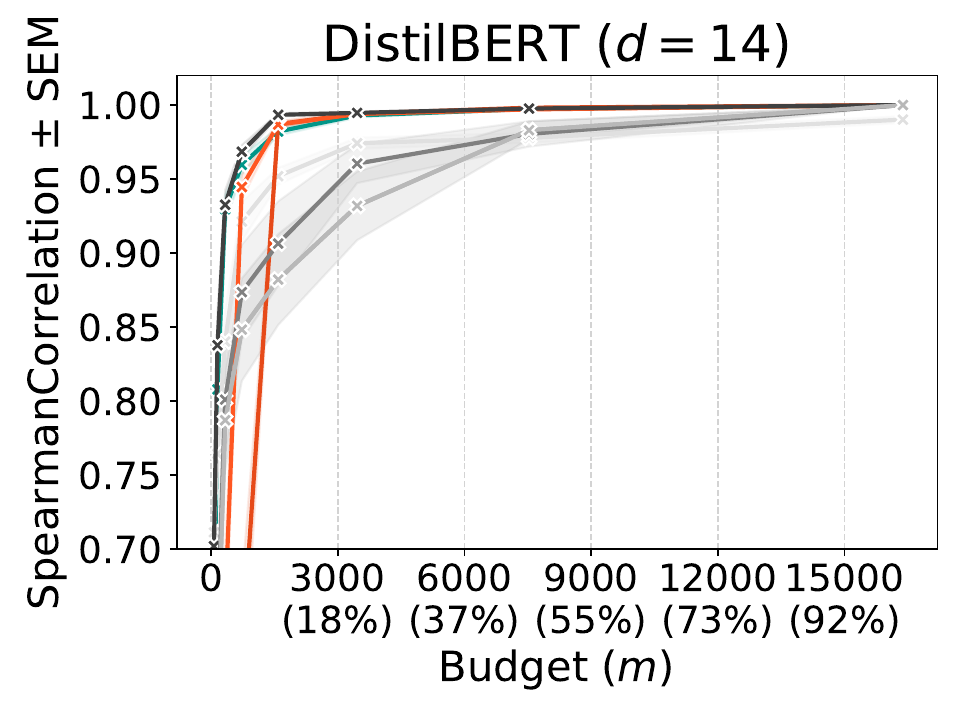}
    \end{minipage}
        \hfill
    \begin{minipage}{0.245\linewidth}
        \includegraphics[width=\linewidth]{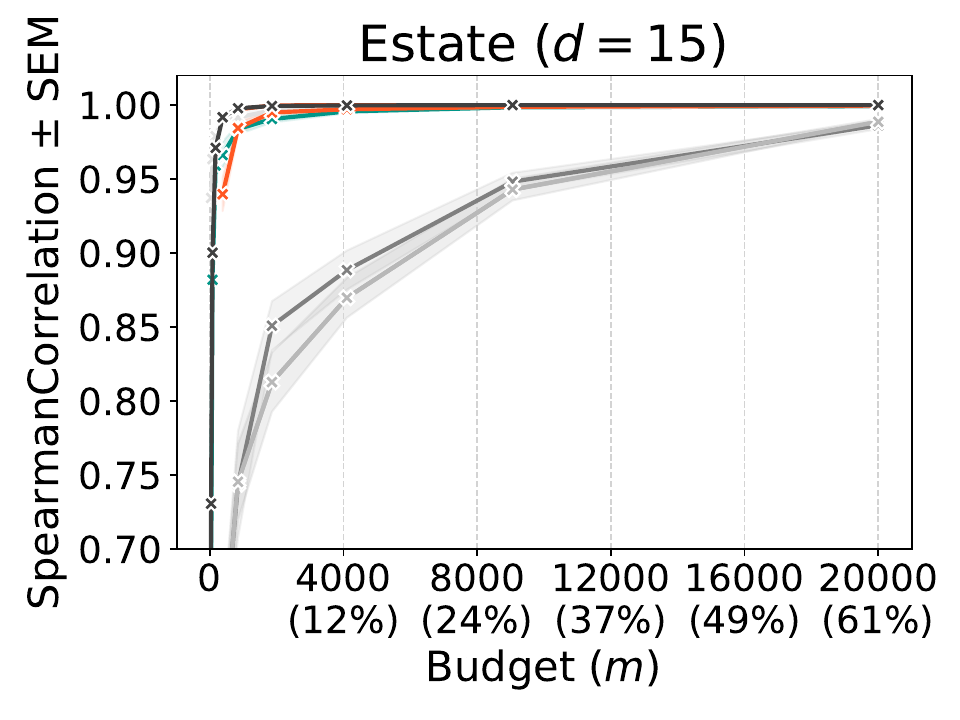}
    \end{minipage}
        \hfill
    \begin{minipage}{0.245\linewidth}
        \includegraphics[width=\linewidth]{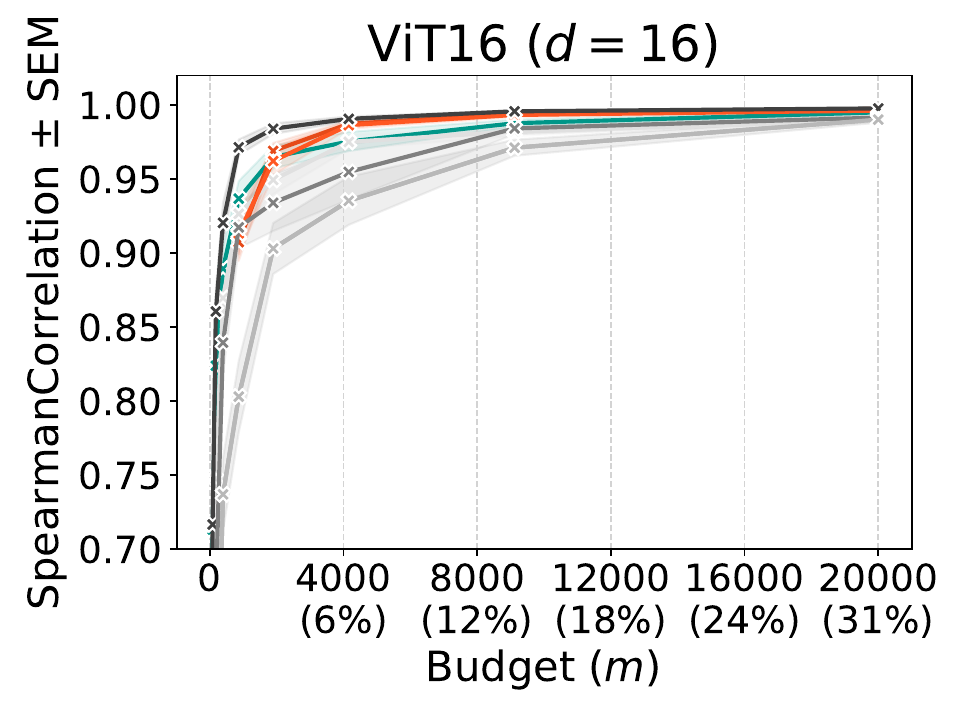}
    \end{minipage}
    \hfill
        \begin{minipage}{0.245\linewidth}
        \includegraphics[width=\linewidth]{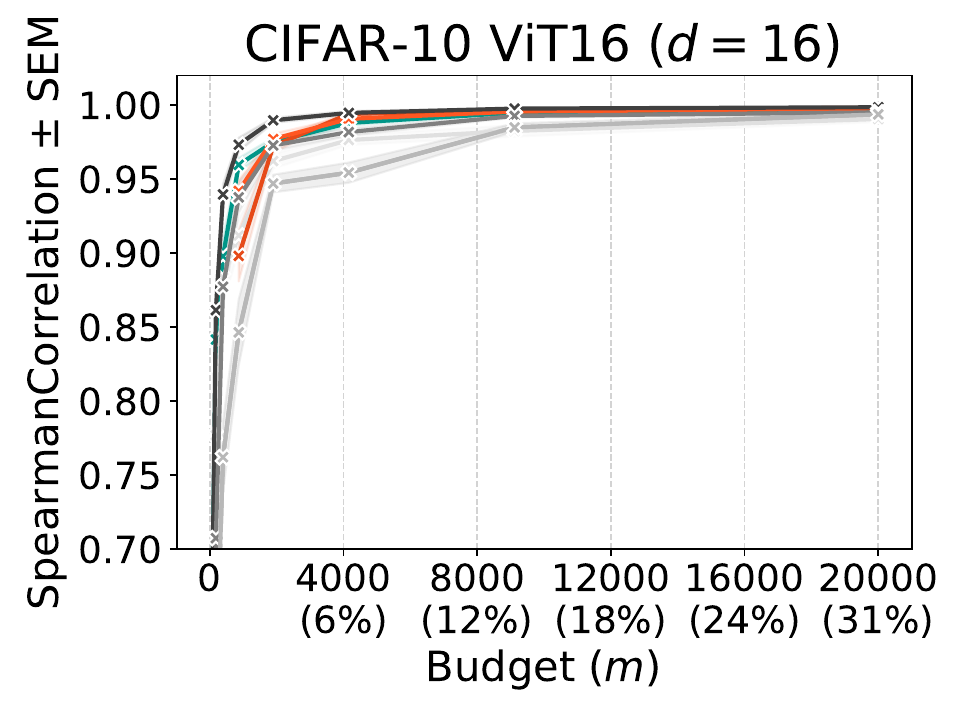}
    \end{minipage}
        \hfill
    \begin{minipage}{0.245\linewidth}
        \includegraphics[width=\linewidth]{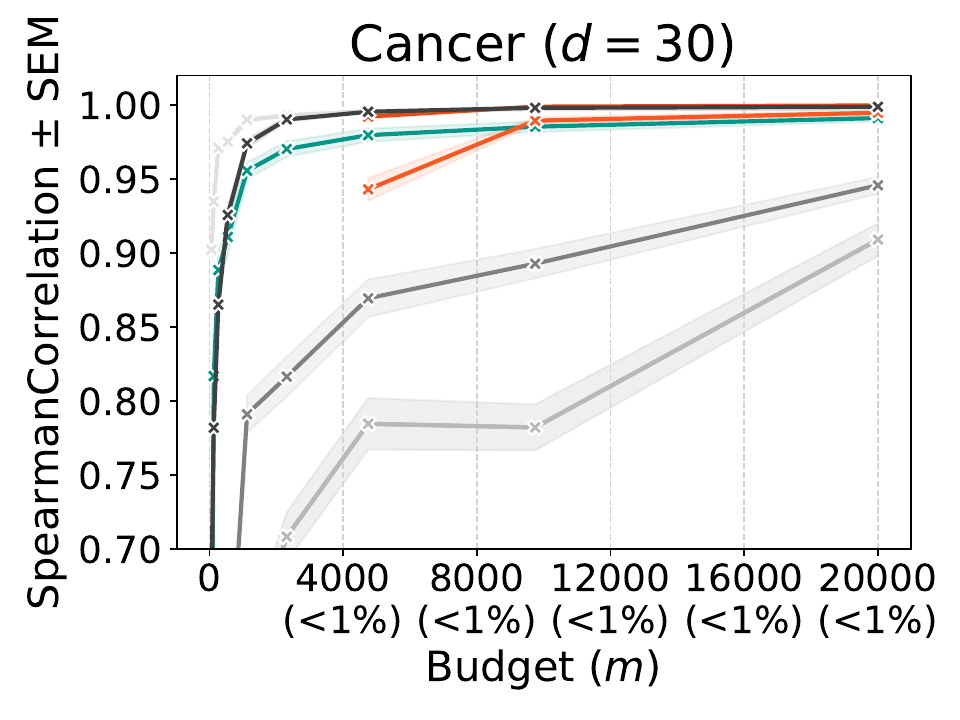}
    \end{minipage}
        \hfill
    \begin{minipage}{0.245\linewidth}
        \includegraphics[width=\linewidth]{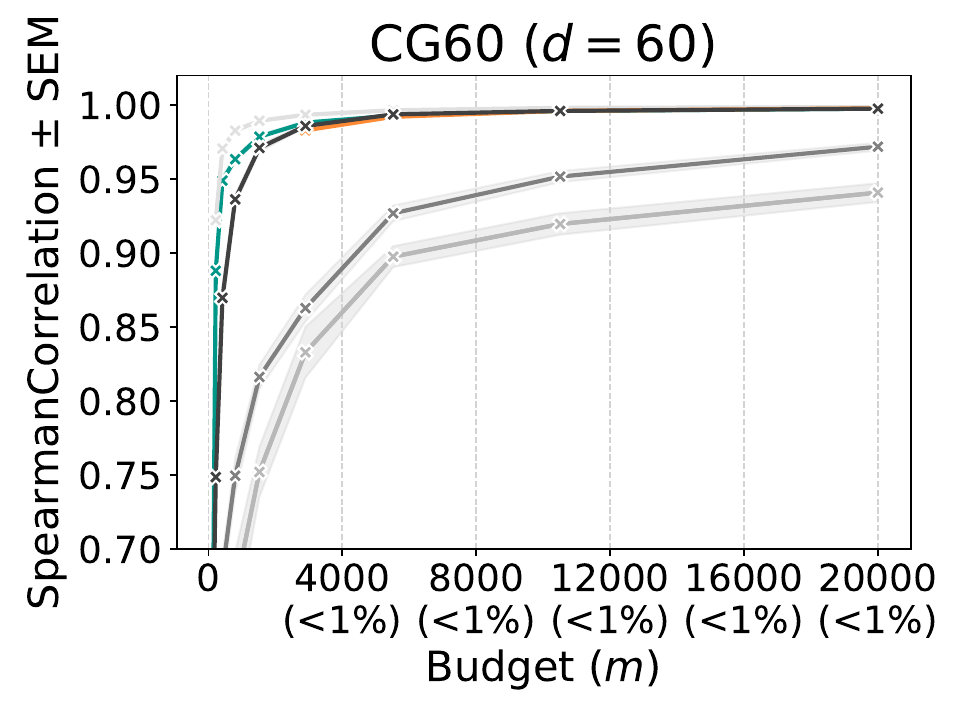}
    \end{minipage}
        \hfill
    \begin{minipage}{0.245\linewidth}
        \includegraphics[width=\linewidth]{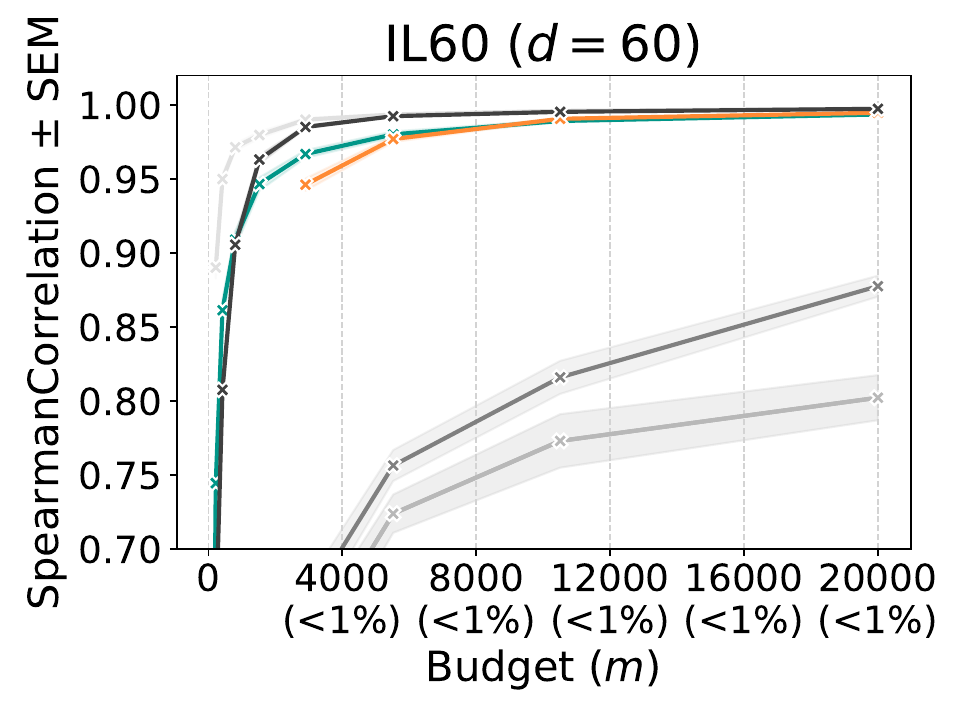}
    \end{minipage}
        \hfill
    \begin{minipage}{0.245\linewidth}
        \includegraphics[width=\linewidth]{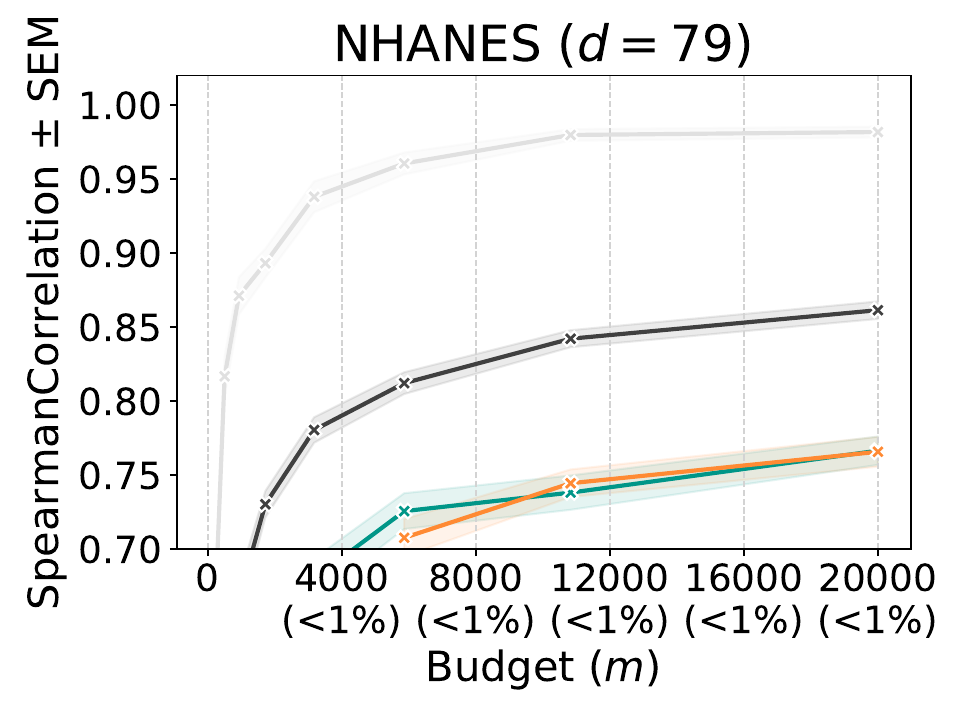}
    \end{minipage}
        \hfill
    \begin{minipage}{0.245\linewidth}
        \includegraphics[width=\linewidth]{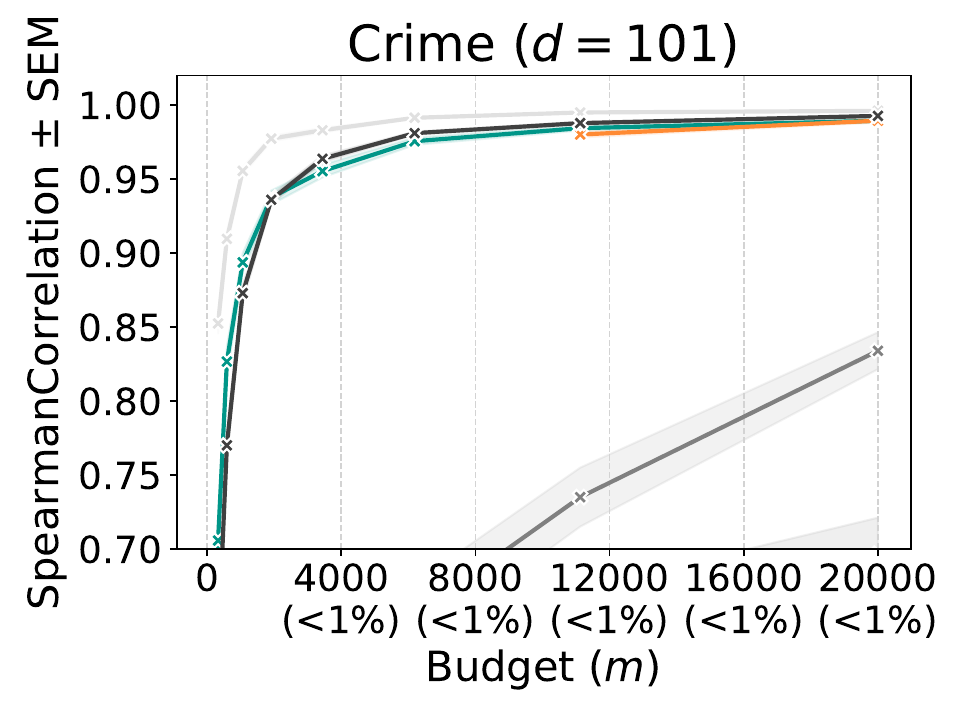}
    \end{minipage}
        \hfill
    \begin{minipage}{0.245\linewidth}
        \hfill
    \end{minipage}        
        \hfill
    \begin{minipage}{0.245\linewidth}
        \hfill
    \end{minipage}
    \caption{Approximation quality of PolySHAP variants and baselines measured by SpearmanCorrelation ($\pm$ SEM) for paired sampling}
    \label{appx_fig_exp_paired_competitors_spearman}
\end{figure}

\subsection{Runtime Analysis}\label{appx_sec_runtime}

In this section, we analyze the runtime of PolySHAP and the RegressionMSR baseline, since both methods approximate the game values, and subsequently extract Shapley value estimates.

\cref{appx_fig_exp_runtime} reports the runtime in seconds (log-scale) of PolySHAP and RegressionMSR for the spent budget on different explanation games.
As expected, we observe a \emph{linear} relationship between the budget $m$ and the computation time in PolySHAP, indicated by the overlapping linear fits (dashed lines).
Overall, the computational overhead of the computations executed in RegressionMSR and PolySHAP variants after game evaluations will be negligible in most application settings.

\paragraph{Complexity of Evaluations.}
In realistic application settings, the runtime for game evaluations should be considered a main driver of computational complexity of PolySHAP and RegressionMSR.
This is verified by the CIFAR10 ViT16 game in \cref{appx_fig_exp_runtime}, a), which requires one model call of the ViT16 for each game evaluation.
The computational difference between RegressionMSR and all PolySHAP variants are thereby negligible.

For the runtime of the path-dependent tree games, reported in \cref{appx_fig_exp_runtime}, b) the game evaluations require only a single pass through the random forests, which becomes negligible with increasing dimensionality.

\paragraph{Complexity of Computation.}
The computational overhead of RegressionMSR and PolySHAP variants besides the game evaluations is negligible in many application settings.
However, there is an impact on runtime for the higher-order $k$-PolySHAP variants, due to the increasing number of regression variables that yield a \emph{polynomial} increase of computation time.
For the tree-path dependent games in \cref{appx_fig_exp_runtime}, b) this effect is visible due to the very efficient computation of game values.

The RegressionMSR method utilizes the XGBoost library \citep{Chen.2016}, which scales well to high-dimensional problems, indicated by the low runtime observed in \cref{appx_fig_exp_runtime}.
The runtime of these computations is generally higher than $1$- and $2$-PolySHAP, but less than $3$- and $4$-PolySHAP for high-dimensional problems.

\begin{figure}
    \centering
    \includegraphics[width=0.8\linewidth]{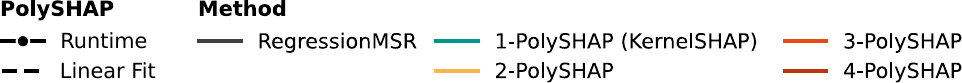}
    \\
    \begin{minipage}{0.49\linewidth}
\includegraphics[width=\linewidth]{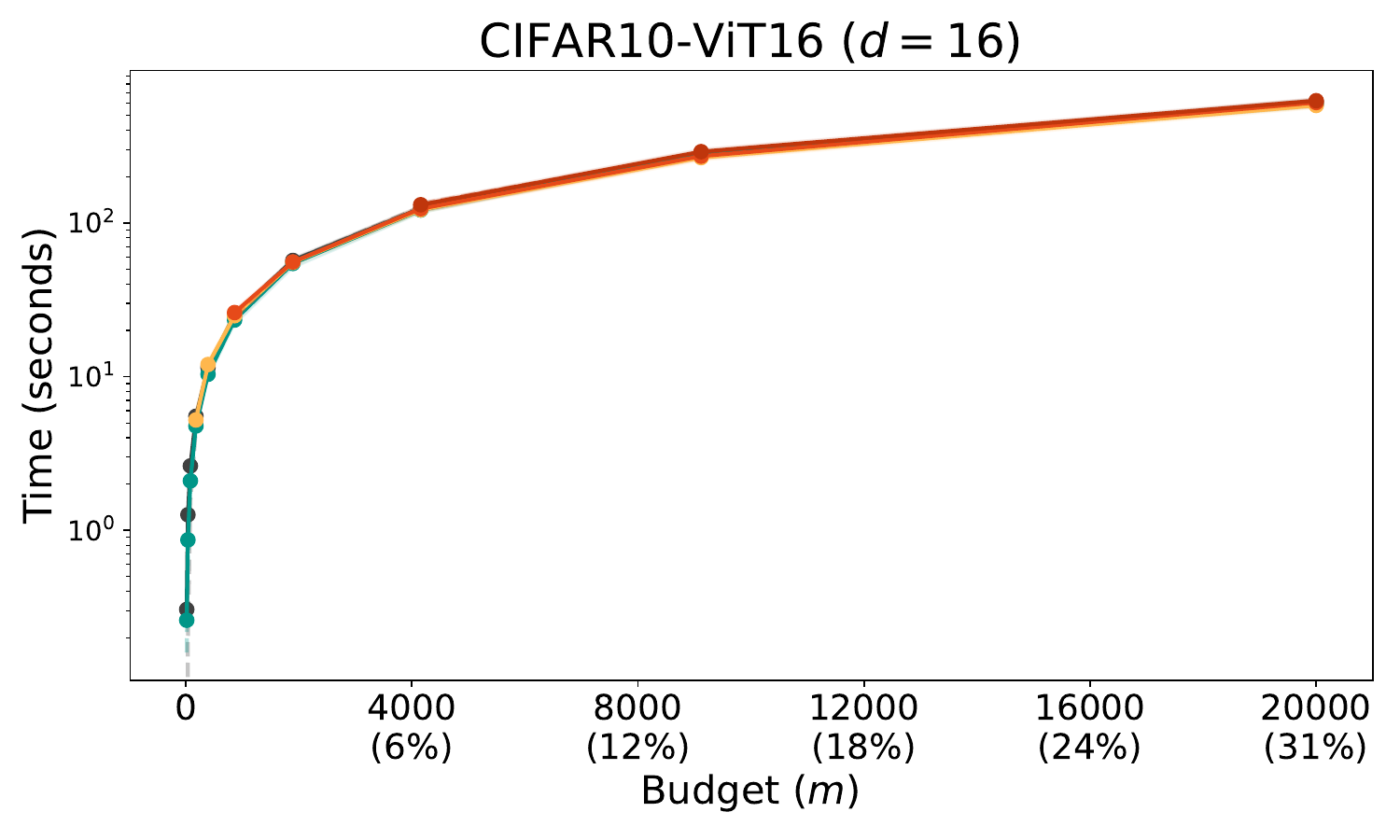}
    \centering \textbf{a) Real-World ViT Inference Game}
    \end{minipage}
    \begin{minipage}{0.49\linewidth}
    \begin{minipage}{0.49\linewidth}
\includegraphics[width=\linewidth]{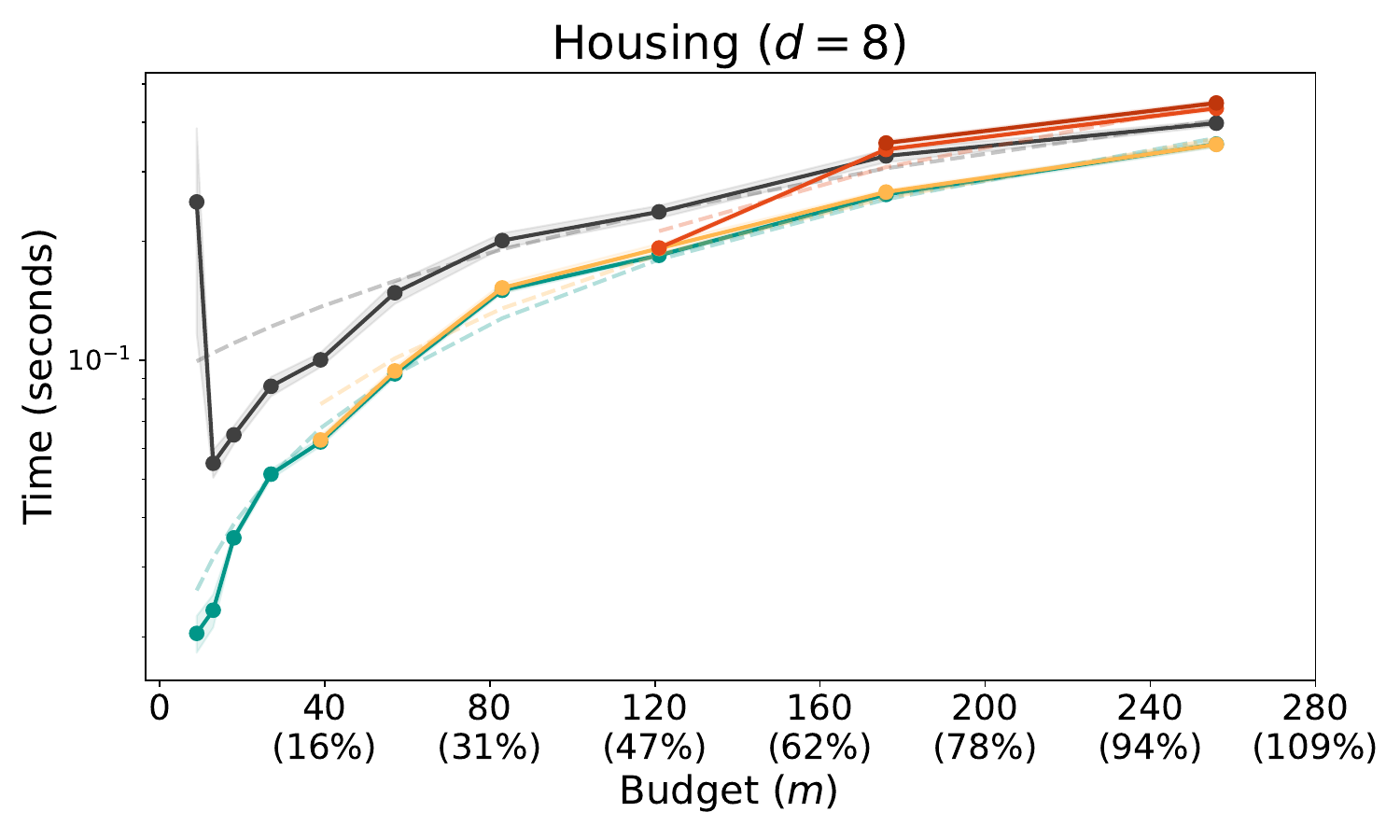}
    \end{minipage}
    \hfill
    \begin{minipage}{.49\linewidth}\includegraphics[width=\linewidth]{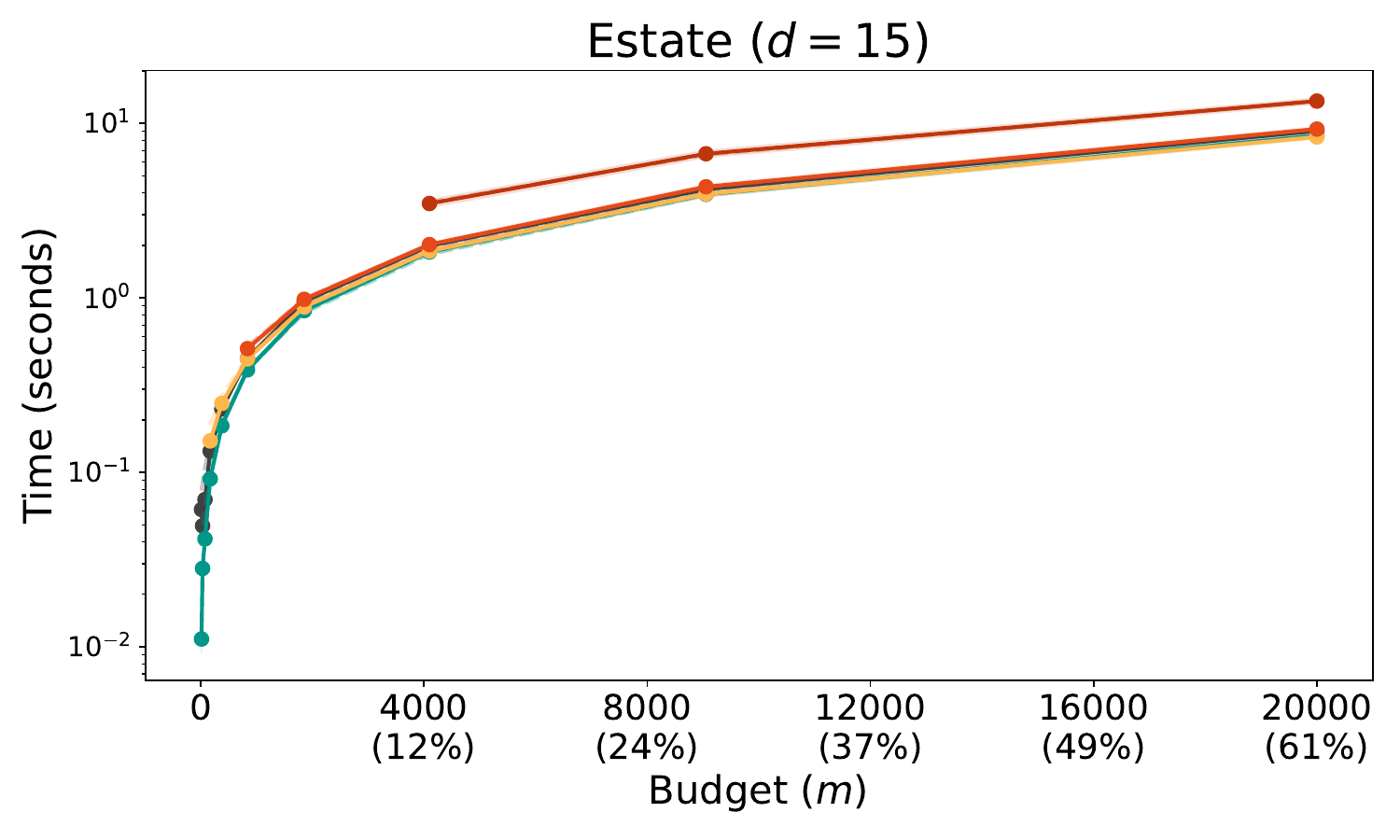}
    \end{minipage}
    \begin{minipage}{0.49\linewidth}
\includegraphics[width=\linewidth]{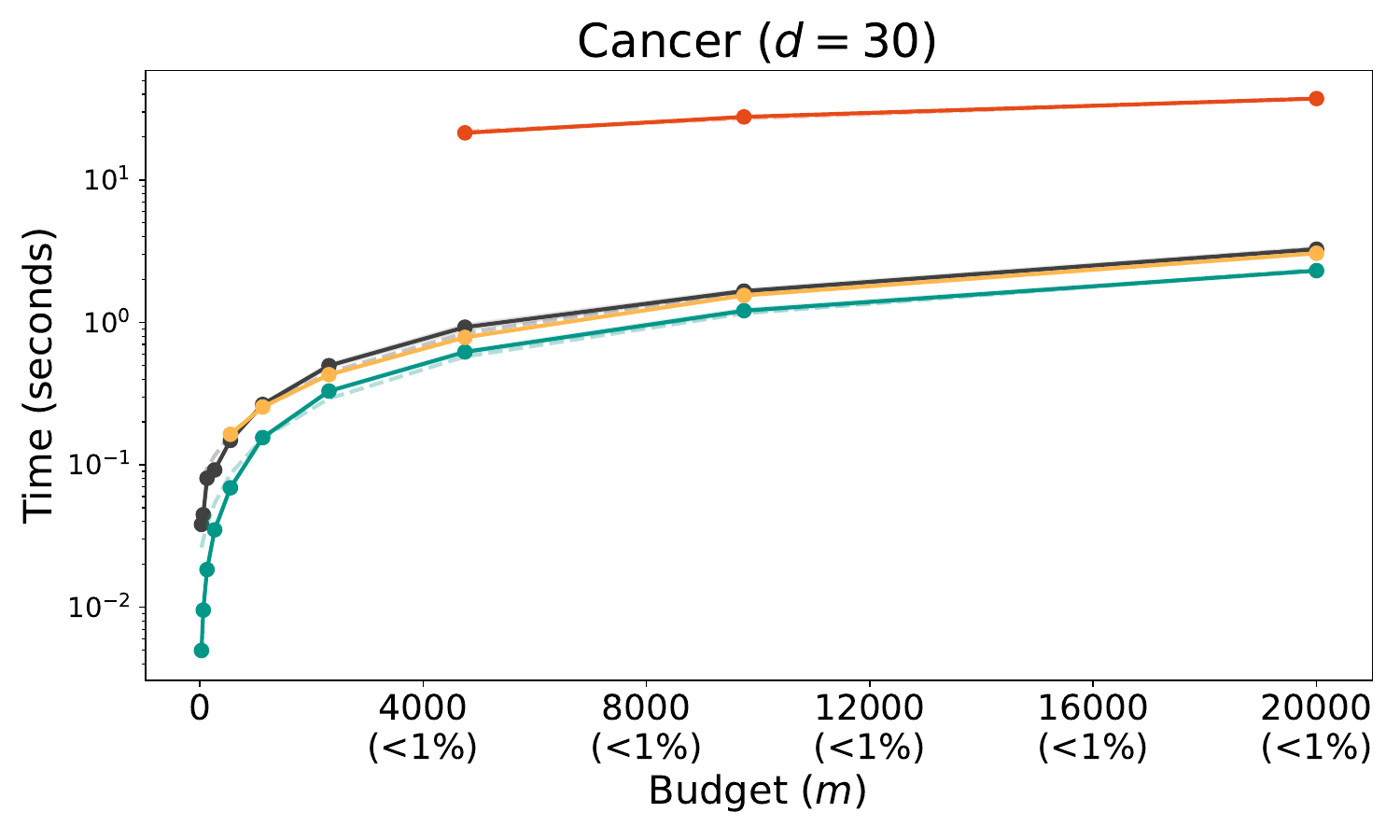}
    \end{minipage}
        \begin{minipage}{0.49\linewidth}
        \includegraphics[width=\linewidth]{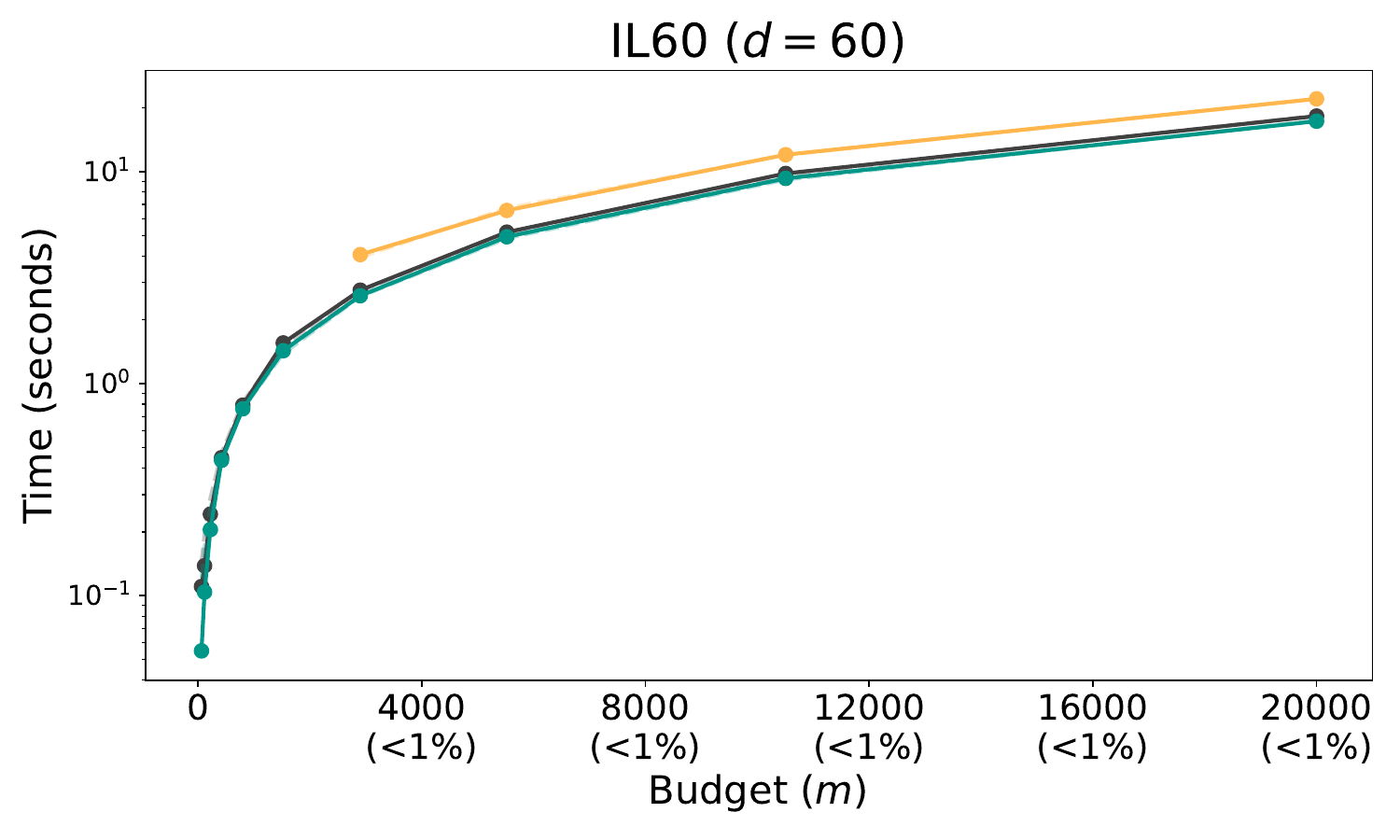}
    \end{minipage}
    \centering
    \textbf{b) Path-dependent Tree Games}
    \end{minipage}
    \caption{Runtime in seconds (log-scale) of PolySHAP and RegressionMSR for varying budgets ($m$) of \textbf{a)} a real-world ViT inference game on CIFAR10, and \textbf{b)} selected path-dependent tree games of varying dimensionality. The runtime increases linearly with the budget $m$, indicated by the linear fit (dashed line).
    As expected in practice, the runtime of the real-world ViT inference game on CIFAR10 is dominated by the model calls required for each budget. The computational differences of the approximators are negligible.
    For path-dependent tree games that require only a single tree traversal per game evaluation, the runtime increases for higher-order PolySHAP due to the increasing number of regression variables with stronger effects in high-dimensional games.}\label{appx_fig_exp_runtime}
\end{figure}

\clearpage

\subsection{Additional Tables}

\begin{table}[h!]
    \centering
    \caption{Summary statistics of the MSE error for ALL Shapley value estimators we consider with paired sampling. Increasing the degree of PolySHAP improves its performance, but $k$-PolySHAP requires budget $m \geq d_k = \mathcal{O}(k)$. RegressionMSR with XGBoost performs very well, except on games like CG60 or Crime where the decision tree struggles to approximate $\nu$.}
    \vspace{.5em}
    \resizebox{\linewidth}{!}{%
\begin{tabular}{lcccccccccccccc}
\hline
 & \textbf{Housing ($d=8$)} & \textbf{ViT9 ($d=9$)} & \textbf{Bike ($d=12$)} & \textbf{Forest ($d=13$)} & \textbf{Adult ($d=14$)} & \textbf{ResNet18 ($d=14$)} & \textbf{DistilBERT ($d=14$)} & \textbf{Estate ($d=15$)} & \textbf{ViT16 ($d=16$)} & \textbf{Cancer ($d=30$)} & \textbf{IL60 ($d=60$)} & \textbf{CG60 ($d=60$)} & \textbf{NHANES ($d=79$)} & \textbf{Crime ($d=101$)} \\
$m$ &  &  & $1140$ & $1988$ &  & $1590$ &  &  & $4156$ & $4749$ &  & $2900$ & $3174$ & $6188$ \\
\hline
\textbf{Permutation Sampling}  & & & & & & & & & & & & & & \\
\hspace{7pt}Mean & $1.7\times 10^{-4}$ & $7.6\times 10^{-4}$ & $7.4\times 10^{-1}$ & $1.6\times 10^{-2}$ & $3.3\times 10^{-7}$ & $1.0\times 10^{-4}$ & $5.7\times 10^{-4}$ & $2.0\times 10^{-4}$ & $3.7\times 10^{-5}$ & $7.2\times 10^{-7}$ & \cellcolor{bronze!60}$7.5\times 10^{-5}$ & \cellcolor{silver!60}$6.0\times 10^{-5}$ & \cellcolor{bronze!60}$3.1\times 10^{-3}$ & \cellcolor{bronze!60}$6.4\times 10^{-1}$ \\
\hspace{7pt}1st Quartile & $3.2\times 10^{-5}$ & $2.9\times 10^{-4}$ & $2.1\times 10^{-1}$ & $4.6\times 10^{-3}$ & $2.0\times 10^{-7}$ & $2.1\times 10^{-5}$ & $2.3\times 10^{-4}$ & $3.5\times 10^{-5}$ & $1.9\times 10^{-5}$ & $4.2\times 10^{-7}$ & \cellcolor{bronze!60}$5.1\times 10^{-5}$ & \cellcolor{silver!60}$5.0\times 10^{-5}$ & \cellcolor{silver!60}$1.1\times 10^{-3}$ & \cellcolor{silver!60}$1.8\times 10^{-1}$ \\
\hspace{7pt}2nd Quartile & $1.5\times 10^{-4}$ & $5.7\times 10^{-4}$ & $5.1\times 10^{-1}$ & $1.3\times 10^{-2}$ & $2.4\times 10^{-7}$ & $3.6\times 10^{-5}$ & $5.8\times 10^{-4}$ & $8.6\times 10^{-5}$ & $3.2\times 10^{-5}$ & $7.0\times 10^{-7}$ & \cellcolor{bronze!60}$7.1\times 10^{-5}$ & \cellcolor{silver!60}$5.7\times 10^{-5}$ & \cellcolor{bronze!60}$4.0\times 10^{-3}$ & \cellcolor{bronze!60}$3.4\times 10^{-1}$ \\
\hspace{7pt}3rd Quartile & $2.2\times 10^{-4}$ & $1.0\times 10^{-3}$ & $7.7\times 10^{-1}$ & $1.6\times 10^{-2}$ & $4.1\times 10^{-7}$ & $2.1\times 10^{-4}$ & $9.5\times 10^{-4}$ & $1.9\times 10^{-4}$ & $3.5\times 10^{-5}$ & $8.5\times 10^{-7}$ & \cellcolor{bronze!60}$9.8\times 10^{-5}$ & \cellcolor{silver!60}$7.6\times 10^{-5}$ & \cellcolor{bronze!60}$4.4\times 10^{-3}$ & \cellcolor{silver!60}$6.7\times 10^{-1}$ \\
\hline
\textbf{1-PolySHAP (KernelSHAP)}  & & & & & & & & & & & & & & \\
\hspace{7pt}Mean & $5.5\times 10^{-6}$ & \cellcolor{bronze!60}$3.3\times 10^{-5}$ & $4.0\times 10^{-2}$ & $4.9\times 10^{-3}$ & $1.2\times 10^{-7}$ & $1.9\times 10^{-5}$ & $1.0\times 10^{-4}$ & $5.2\times 10^{-5}$ & $1.5\times 10^{-5}$ & $5.5\times 10^{-7}$ & \cellcolor{gold!60}$4.7\times 10^{-5}$ & \cellcolor{gold!60}$4.3\times 10^{-5}$ & \cellcolor{silver!60}$2.6\times 10^{-3}$ & \cellcolor{gold!60}$5.7\times 10^{-1}$ \\
\hspace{7pt}1st Quartile & $1.3\times 10^{-6}$ & \cellcolor{bronze!60}$6.7\times 10^{-6}$ & $1.7\times 10^{-2}$ & $4.5\times 10^{-4}$ & $4.6\times 10^{-8}$ & $5.5\times 10^{-6}$ & \cellcolor{bronze!60}$3.8\times 10^{-5}$ & $2.2\times 10^{-5}$ & $7.5\times 10^{-6}$ & \cellcolor{silver!60}$9.0\times 10^{-8}$ & \cellcolor{gold!60}$3.6\times 10^{-5}$ & \cellcolor{gold!60}$3.0\times 10^{-5}$ & \cellcolor{silver!60}$1.1\times 10^{-3}$ & \cellcolor{gold!60}$1.1\times 10^{-1}$ \\
\hspace{7pt}2nd Quartile & $4.3\times 10^{-6}$ & \cellcolor{silver!60}$1.1\times 10^{-5}$ & $2.5\times 10^{-2}$ & $1.2\times 10^{-3}$ & $9.3\times 10^{-8}$ & $1.3\times 10^{-5}$ & $1.1\times 10^{-4}$ & $4.0\times 10^{-5}$ & $1.5\times 10^{-5}$ & \cellcolor{bronze!60}$2.1\times 10^{-7}$ & \cellcolor{gold!60}$4.0\times 10^{-5}$ & \cellcolor{gold!60}$3.7\times 10^{-5}$ & \cellcolor{silver!60}$2.3\times 10^{-3}$ & \cellcolor{gold!60}$1.9\times 10^{-1}$ \\
\hspace{7pt}3rd Quartile & $5.8\times 10^{-6}$ & $6.6\times 10^{-5}$ & $4.8\times 10^{-2}$ & $8.1\times 10^{-3}$ & $1.1\times 10^{-7}$ & $2.9\times 10^{-5}$ & $1.3\times 10^{-4}$ & $5.8\times 10^{-5}$ & $2.2\times 10^{-5}$ & \cellcolor{bronze!60}$7.6\times 10^{-7}$ & \cellcolor{gold!60}$5.3\times 10^{-5}$ & \cellcolor{gold!60}$4.2\times 10^{-5}$ & \cellcolor{silver!60}$4.2\times 10^{-3}$ & \cellcolor{gold!60}$6.1\times 10^{-1}$ \\
\hline
\textbf{2-PolySHAP (50\%)}  & & & & & & & & & & & & & & \\
\hspace{7pt}Mean & $5.5\times 10^{-6}$ & \cellcolor{bronze!60}$3.3\times 10^{-5}$ & $4.0\times 10^{-2}$ & $4.9\times 10^{-3}$ & $1.2\times 10^{-7}$ & $1.9\times 10^{-5}$ & $1.0\times 10^{-4}$ & $5.2\times 10^{-5}$ & $1.5\times 10^{-5}$ & \cellcolor{bronze!60}$5.4\times 10^{-7}$ & \cellcolor{gold!60}$4.7\times 10^{-5}$ & \cellcolor{gold!60}$4.3\times 10^{-5}$ & \cellcolor{silver!60}$2.6\times 10^{-3}$ & \cellcolor{gold!60}$5.7\times 10^{-1}$ \\
\hspace{7pt}1st Quartile & $1.3\times 10^{-6}$ & \cellcolor{bronze!60}$6.7\times 10^{-6}$ & $1.7\times 10^{-2}$ & $4.5\times 10^{-4}$ & $4.6\times 10^{-8}$ & $5.5\times 10^{-6}$ & \cellcolor{bronze!60}$3.8\times 10^{-5}$ & $2.2\times 10^{-5}$ & $7.5\times 10^{-6}$ & \cellcolor{bronze!60}$9.2\times 10^{-8}$ & \cellcolor{gold!60}$3.6\times 10^{-5}$ & \cellcolor{gold!60}$3.0\times 10^{-5}$ & \cellcolor{silver!60}$1.1\times 10^{-3}$ & \cellcolor{gold!60}$1.1\times 10^{-1}$ \\
\hspace{7pt}2nd Quartile & $4.3\times 10^{-6}$ & \cellcolor{silver!60}$1.1\times 10^{-5}$ & $2.5\times 10^{-2}$ & $1.2\times 10^{-3}$ & $9.3\times 10^{-8}$ & $1.3\times 10^{-5}$ & $1.1\times 10^{-4}$ & $4.0\times 10^{-5}$ & $1.5\times 10^{-5}$ & \cellcolor{bronze!60}$2.1\times 10^{-7}$ & \cellcolor{gold!60}$4.0\times 10^{-5}$ & \cellcolor{gold!60}$3.7\times 10^{-5}$ & \cellcolor{silver!60}$2.3\times 10^{-3}$ & \cellcolor{gold!60}$1.9\times 10^{-1}$ \\
\hspace{7pt}3rd Quartile & $5.8\times 10^{-6}$ & $6.6\times 10^{-5}$ & $4.8\times 10^{-2}$ & $8.1\times 10^{-3}$ & $1.1\times 10^{-7}$ & $2.9\times 10^{-5}$ & $1.3\times 10^{-4}$ & $5.8\times 10^{-5}$ & $2.2\times 10^{-5}$ & \cellcolor{bronze!60}$7.6\times 10^{-7}$ & \cellcolor{gold!60}$5.3\times 10^{-5}$ & \cellcolor{gold!60}$4.2\times 10^{-5}$ & \cellcolor{silver!60}$4.2\times 10^{-3}$ & \cellcolor{gold!60}$6.1\times 10^{-1}$ \\
\hline
\textbf{2-PolySHAP}  & & & & & & & & & & & & & & \\
\hspace{7pt}Mean & $5.5\times 10^{-6}$ & \cellcolor{bronze!60}$3.3\times 10^{-5}$ & $4.0\times 10^{-2}$ & $4.9\times 10^{-3}$ & $1.2\times 10^{-7}$ & $1.9\times 10^{-5}$ & $1.0\times 10^{-4}$ & $5.2\times 10^{-5}$ & $1.5\times 10^{-5}$ & \cellcolor{bronze!60}$5.4\times 10^{-7}$ & \cellcolor{gold!60}$4.7\times 10^{-5}$ & \cellcolor{gold!60}$4.3\times 10^{-5}$ & \cellcolor{silver!60}$2.6\times 10^{-3}$ & \cellcolor{gold!60}$5.7\times 10^{-1}$ \\
\hspace{7pt}1st Quartile & $1.3\times 10^{-6}$ & \cellcolor{bronze!60}$6.7\times 10^{-6}$ & $1.7\times 10^{-2}$ & $4.5\times 10^{-4}$ & $4.6\times 10^{-8}$ & $5.5\times 10^{-6}$ & \cellcolor{bronze!60}$3.8\times 10^{-5}$ & $2.2\times 10^{-5}$ & $7.5\times 10^{-6}$ & \cellcolor{bronze!60}$9.2\times 10^{-8}$ & \cellcolor{gold!60}$3.6\times 10^{-5}$ & \cellcolor{gold!60}$3.0\times 10^{-5}$ & \cellcolor{silver!60}$1.1\times 10^{-3}$ & \cellcolor{gold!60}$1.1\times 10^{-1}$ \\
\hspace{7pt}2nd Quartile & $4.3\times 10^{-6}$ & \cellcolor{silver!60}$1.1\times 10^{-5}$ & $2.5\times 10^{-2}$ & $1.2\times 10^{-3}$ & $9.3\times 10^{-8}$ & $1.3\times 10^{-5}$ & $1.1\times 10^{-4}$ & $4.0\times 10^{-5}$ & $1.5\times 10^{-5}$ & \cellcolor{bronze!60}$2.1\times 10^{-7}$ & \cellcolor{gold!60}$4.0\times 10^{-5}$ & \cellcolor{gold!60}$3.7\times 10^{-5}$ & \cellcolor{silver!60}$2.3\times 10^{-3}$ & \cellcolor{gold!60}$1.9\times 10^{-1}$ \\
\hspace{7pt}3rd Quartile & $5.8\times 10^{-6}$ & $6.6\times 10^{-5}$ & $4.8\times 10^{-2}$ & $8.1\times 10^{-3}$ & $1.1\times 10^{-7}$ & $2.9\times 10^{-5}$ & $1.3\times 10^{-4}$ & $5.8\times 10^{-5}$ & $2.2\times 10^{-5}$ & \cellcolor{bronze!60}$7.6\times 10^{-7}$ & \cellcolor{gold!60}$5.3\times 10^{-5}$ & \cellcolor{gold!60}$4.2\times 10^{-5}$ & \cellcolor{silver!60}$4.2\times 10^{-3}$ & \cellcolor{gold!60}$6.1\times 10^{-1}$ \\
\hline
\textbf{3-PolySHAP (50\%)}  & & & & & & & & & & & & & & \\
\hspace{7pt}Mean & \cellcolor{bronze!60}$4.1\times 10^{-6}$ & \cellcolor{silver!60}$2.7\times 10^{-5}$ & \cellcolor{bronze!60}$3.5\times 10^{-2}$ & \cellcolor{bronze!60}$1.7\times 10^{-3}$ & \cellcolor{bronze!60}$6.5\times 10^{-8}$ & \cellcolor{bronze!60}$9.1\times 10^{-6}$ & \cellcolor{silver!60}$5.3\times 10^{-5}$ & \cellcolor{bronze!60}$1.0\times 10^{-5}$ & \cellcolor{bronze!60}$7.9\times 10^{-6}$ & $2.0\times 10^{-6}$ &  &  &  &  \\
\hspace{7pt}1st Quartile & \cellcolor{bronze!60}$6.5\times 10^{-7}$ & $8.2\times 10^{-6}$ & \cellcolor{bronze!60}$7.6\times 10^{-3}$ & \cellcolor{bronze!60}$9.8\times 10^{-5}$ & \cellcolor{silver!60}$2.5\times 10^{-8}$ & \cellcolor{bronze!60}$2.4\times 10^{-6}$ & \cellcolor{silver!60}$2.9\times 10^{-5}$ & \cellcolor{bronze!60}$5.1\times 10^{-6}$ & \cellcolor{bronze!60}$3.3\times 10^{-6}$ & $4.8\times 10^{-7}$ &  &  &  &  \\
\hspace{7pt}2nd Quartile & \cellcolor{bronze!60}$1.4\times 10^{-6}$ & $1.9\times 10^{-5}$ & \cellcolor{bronze!60}$1.4\times 10^{-2}$ & \cellcolor{bronze!60}$2.1\times 10^{-4}$ & \cellcolor{silver!60}$3.9\times 10^{-8}$ & \cellcolor{bronze!60}$4.0\times 10^{-6}$ & \cellcolor{silver!60}$4.2\times 10^{-5}$ & \cellcolor{bronze!60}$8.5\times 10^{-6}$ & \cellcolor{bronze!60}$4.9\times 10^{-6}$ & $8.5\times 10^{-7}$ &  &  &  &  \\
\hspace{7pt}3rd Quartile & \cellcolor{bronze!60}$3.4\times 10^{-6}$ & \cellcolor{silver!60}$3.6\times 10^{-5}$ & \cellcolor{bronze!60}$3.5\times 10^{-2}$ & \cellcolor{bronze!60}$7.2\times 10^{-4}$ & \cellcolor{bronze!60}$8.8\times 10^{-8}$ & \cellcolor{bronze!60}$1.5\times 10^{-5}$ & \cellcolor{silver!60}$6.8\times 10^{-5}$ & \cellcolor{bronze!60}$1.1\times 10^{-5}$ & \cellcolor{bronze!60}$9.6\times 10^{-6}$ & $1.9\times 10^{-6}$ &  &  &  &  \\
\hline
\textbf{3-PolySHAP}  & & & & & & & & & & & & & & \\
\hspace{7pt}Mean & \cellcolor{gold!60}$4.0\times 10^{-8}$ & \cellcolor{silver!60}$2.7\times 10^{-5}$ & \cellcolor{gold!60}$8.0\times 10^{-4}$ & \cellcolor{gold!60}$4.3\times 10^{-7}$ & \cellcolor{gold!60}$1.9\times 10^{-9}$ & \cellcolor{silver!60}$6.9\times 10^{-6}$ & \cellcolor{bronze!60}$7.5\times 10^{-5}$ & \cellcolor{gold!60}$2.7\times 10^{-8}$ & \cellcolor{silver!60}$5.7\times 10^{-6}$ & \cellcolor{silver!60}$3.2\times 10^{-7}$ &  &  &  &  \\
\hspace{7pt}1st Quartile & \cellcolor{gold!60}$5.5\times 10^{-9}$ & \cellcolor{silver!60}$5.6\times 10^{-6}$ & \cellcolor{gold!60}$1.2\times 10^{-4}$ & \cellcolor{gold!60}$1.7\times 10^{-7}$ & \cellcolor{gold!60}$4.2\times 10^{-10}$ & \cellcolor{silver!60}$1.1\times 10^{-6}$ & $4.2\times 10^{-5}$ & \cellcolor{gold!60}$1.4\times 10^{-8}$ & \cellcolor{silver!60}$2.1\times 10^{-6}$ & $1.2\times 10^{-7}$ &  &  &  &  \\
\hspace{7pt}2nd Quartile & \cellcolor{gold!60}$1.5\times 10^{-8}$ & \cellcolor{bronze!60}$1.8\times 10^{-5}$ & \cellcolor{gold!60}$1.8\times 10^{-4}$ & \cellcolor{gold!60}$3.7\times 10^{-7}$ & \cellcolor{gold!60}$1.3\times 10^{-9}$ & \cellcolor{silver!60}$2.5\times 10^{-6}$ & \cellcolor{bronze!60}$6.6\times 10^{-5}$ & \cellcolor{gold!60}$2.1\times 10^{-8}$ & \cellcolor{silver!60}$3.0\times 10^{-6}$ & \cellcolor{silver!60}$1.6\times 10^{-7}$ &  &  &  &  \\
\hspace{7pt}3rd Quartile & \cellcolor{gold!60}$6.1\times 10^{-8}$ & \cellcolor{bronze!60}$4.0\times 10^{-5}$ & \cellcolor{gold!60}$8.5\times 10^{-4}$ & \cellcolor{gold!60}$7.1\times 10^{-7}$ & \cellcolor{gold!60}$2.2\times 10^{-9}$ & \cellcolor{silver!60}$8.2\times 10^{-6}$ & \cellcolor{bronze!60}$1.1\times 10^{-4}$ & \cellcolor{gold!60}$3.0\times 10^{-8}$ & \cellcolor{silver!60}$6.5\times 10^{-6}$ & \cellcolor{silver!60}$2.1\times 10^{-7}$ &  &  &  &  \\
\hline
\textbf{4-PolySHAP}  & & & & & & & & & & & & & & \\
\hspace{7pt}Mean & \cellcolor{gold!60}$4.0\times 10^{-8}$ & \cellcolor{silver!60}$2.7\times 10^{-5}$ & \cellcolor{gold!60}$8.0\times 10^{-4}$ & \cellcolor{gold!60}$4.3\times 10^{-7}$ & \cellcolor{gold!60}$1.9\times 10^{-9}$ & \cellcolor{silver!60}$6.9\times 10^{-6}$ & \cellcolor{bronze!60}$7.5\times 10^{-5}$ & \cellcolor{gold!60}$2.7\times 10^{-8}$ & \cellcolor{silver!60}$5.7\times 10^{-6}$ &  &  &  &  &  \\
\hspace{7pt}1st Quartile & \cellcolor{gold!60}$5.5\times 10^{-9}$ & \cellcolor{silver!60}$5.6\times 10^{-6}$ & \cellcolor{gold!60}$1.2\times 10^{-4}$ & \cellcolor{gold!60}$1.7\times 10^{-7}$ & \cellcolor{gold!60}$4.2\times 10^{-10}$ & \cellcolor{silver!60}$1.1\times 10^{-6}$ & $4.2\times 10^{-5}$ & \cellcolor{gold!60}$1.4\times 10^{-8}$ & \cellcolor{silver!60}$2.1\times 10^{-6}$ &  &  &  &  &  \\
\hspace{7pt}2nd Quartile & \cellcolor{gold!60}$1.5\times 10^{-8}$ & \cellcolor{bronze!60}$1.8\times 10^{-5}$ & \cellcolor{gold!60}$1.8\times 10^{-4}$ & \cellcolor{gold!60}$3.7\times 10^{-7}$ & \cellcolor{gold!60}$1.3\times 10^{-9}$ & \cellcolor{silver!60}$2.5\times 10^{-6}$ & \cellcolor{bronze!60}$6.6\times 10^{-5}$ & \cellcolor{gold!60}$2.1\times 10^{-8}$ & \cellcolor{silver!60}$3.0\times 10^{-6}$ &  &  &  &  &  \\
\hspace{7pt}3rd Quartile & \cellcolor{gold!60}$6.1\times 10^{-8}$ & \cellcolor{bronze!60}$4.0\times 10^{-5}$ & \cellcolor{gold!60}$8.5\times 10^{-4}$ & \cellcolor{gold!60}$7.1\times 10^{-7}$ & \cellcolor{gold!60}$2.2\times 10^{-9}$ & \cellcolor{silver!60}$8.2\times 10^{-6}$ & \cellcolor{bronze!60}$1.1\times 10^{-4}$ & \cellcolor{gold!60}$3.0\times 10^{-8}$ & \cellcolor{silver!60}$6.5\times 10^{-6}$ &  &  &  &  &  \\
\hline
\textbf{RegressionMSR}  & & & & & & & & & & & & & & \\
\hspace{7pt}Mean & \cellcolor{silver!60}$4.3\times 10^{-7}$ & \cellcolor{gold!60}$1.2\times 10^{-5}$ & \cellcolor{silver!60}$2.0\times 10^{-3}$ & \cellcolor{silver!60}$1.3\times 10^{-5}$ & \cellcolor{silver!60}$6.0\times 10^{-8}$ & \cellcolor{gold!60}$2.1\times 10^{-6}$ & \cellcolor{gold!60}$4.1\times 10^{-5}$ & \cellcolor{silver!60}$4.5\times 10^{-7}$ & \cellcolor{gold!60}$3.4\times 10^{-6}$ & \cellcolor{gold!60}$9.0\times 10^{-8}$ & \cellcolor{silver!60}$5.8\times 10^{-5}$ & \cellcolor{bronze!60}$2.5\times 10^{-4}$ & \cellcolor{gold!60}$5.3\times 10^{-4}$ & \cellcolor{silver!60}$6.0\times 10^{-1}$ \\
\hspace{7pt}1st Quartile & \cellcolor{silver!60}$9.3\times 10^{-8}$ & \cellcolor{gold!60}$4.2\times 10^{-6}$ & \cellcolor{silver!60}$6.4\times 10^{-4}$ & \cellcolor{silver!60}$9.4\times 10^{-6}$ & \cellcolor{bronze!60}$3.6\times 10^{-8}$ & \cellcolor{gold!60}$2.2\times 10^{-7}$ & \cellcolor{gold!60}$1.8\times 10^{-5}$ & \cellcolor{silver!60}$1.9\times 10^{-7}$ & \cellcolor{gold!60}$6.8\times 10^{-7}$ & \cellcolor{gold!60}$4.8\times 10^{-8}$ & \cellcolor{silver!60}$4.8\times 10^{-5}$ & \cellcolor{bronze!60}$2.0\times 10^{-4}$ & \cellcolor{gold!60}$2.0\times 10^{-4}$ & \cellcolor{bronze!60}$2.4\times 10^{-1}$ \\
\hspace{7pt}2nd Quartile & \cellcolor{silver!60}$1.4\times 10^{-7}$ & \cellcolor{gold!60}$8.8\times 10^{-6}$ & \cellcolor{silver!60}$1.2\times 10^{-3}$ & \cellcolor{silver!60}$1.1\times 10^{-5}$ & \cellcolor{bronze!60}$5.2\times 10^{-8}$ & \cellcolor{gold!60}$1.9\times 10^{-6}$ & \cellcolor{gold!60}$3.3\times 10^{-5}$ & \cellcolor{silver!60}$4.7\times 10^{-7}$ & \cellcolor{gold!60}$1.9\times 10^{-6}$ & \cellcolor{gold!60}$8.7\times 10^{-8}$ & \cellcolor{silver!60}$6.5\times 10^{-5}$ & \cellcolor{bronze!60}$2.5\times 10^{-4}$ & \cellcolor{gold!60}$4.2\times 10^{-4}$ & \cellcolor{silver!60}$3.3\times 10^{-1}$ \\
\hspace{7pt}3rd Quartile & \cellcolor{silver!60}$3.2\times 10^{-7}$ & \cellcolor{gold!60}$1.8\times 10^{-5}$ & \cellcolor{silver!60}$1.4\times 10^{-3}$ & \cellcolor{silver!60}$1.5\times 10^{-5}$ & \cellcolor{silver!60}$7.9\times 10^{-8}$ & \cellcolor{gold!60}$3.0\times 10^{-6}$ & \cellcolor{gold!60}$4.2\times 10^{-5}$ & \cellcolor{silver!60}$6.6\times 10^{-7}$ & \cellcolor{gold!60}$3.2\times 10^{-6}$ & \cellcolor{gold!60}$1.2\times 10^{-7}$ & \cellcolor{silver!60}$6.8\times 10^{-5}$ & \cellcolor{bronze!60}$2.6\times 10^{-4}$ & \cellcolor{gold!60}$5.9\times 10^{-4}$ & \cellcolor{silver!60}$6.7\times 10^{-1}$ \\
\hline
\end{tabular}}
    \label{tab:appendix_paired}
\end{table}

\begin{table}[h!]
    \centering
    \caption{Summary statistics of the MSE error for ALL Shapley value estimators we consider with standard (not paired) sampling. Increasing the degree of PolySHAP improves its performance, but $k$-PolySHAP requires budget $m \geq d_k = \mathcal{O}(k)$. RegressionMSR with XGBoost performs very well, except on games like CG60 or Crime where the decision tree struggles to approximate $\nu$.}
    \vspace{.5em}
    \resizebox{\linewidth}{!}{%
\begin{tabular}{lcccccccccccccc}
\hline
 & \textbf{Housing ($d=8$)} & \textbf{ViT9 ($d=9$)} & \textbf{Bike ($d=12$)} & \textbf{Forest ($d=13$)} & \textbf{Adult ($d=14$)} & \textbf{ResNet18 ($d=14$)} & \textbf{DistilBERT ($d=14$)} & \textbf{Estate ($d=15$)} & \textbf{ViT16 ($d=16$)} & \textbf{Cancer ($d=30$)} & \textbf{IL60 ($d=60$)} & \textbf{CG60 ($d=60$)} & \textbf{NHANES ($d=79$)} & \textbf{Crime ($d=101$)} \\
$m$ &  &  & $1140$ & $1988$ &  & $1590$ &  &  & $4156$ & $4749$ &  & $2900$ & $3174$ & $6188$ \\
\hline
\textbf{Permutation Sampling}  & & & & & & & & & & & & & & \\
\hspace{7pt}Mean & $1.2\times 10^{-3}$ & $1.2\times 10^{-3}$ & $5.6\times 10^{0}$ & $2.7\times 10^{-2}$ & $2.6\times 10^{-6}$ & $1.0\times 10^{-4}$ & $6.3\times 10^{-4}$ & $5.5\times 10^{-4}$ & $6.0\times 10^{-5}$ & $5.5\times 10^{-6}$ & \cellcolor{bronze!60}$1.0\times 10^{-4}$ & \cellcolor{bronze!60}$1.0\times 10^{-4}$ & \cellcolor{bronze!60}$7.6\times 10^{-3}$ & $3.9\times 10^{0}$ \\
\hspace{7pt}1st Quartile & $3.8\times 10^{-4}$ & $3.1\times 10^{-4}$ & $1.3\times 10^{0}$ & $6.3\times 10^{-3}$ & $6.9\times 10^{-7}$ & $4.7\times 10^{-5}$ & $3.1\times 10^{-4}$ & $1.0\times 10^{-4}$ & $1.6\times 10^{-5}$ & $9.5\times 10^{-7}$ & \cellcolor{bronze!60}$7.7\times 10^{-5}$ & \cellcolor{bronze!60}$8.0\times 10^{-5}$ & \cellcolor{bronze!60}$2.6\times 10^{-3}$ & $8.8\times 10^{-1}$ \\
\hspace{7pt}2nd Quartile & $9.0\times 10^{-4}$ & $9.9\times 10^{-4}$ & $1.4\times 10^{0}$ & $3.0\times 10^{-2}$ & $1.5\times 10^{-6}$ & $8.5\times 10^{-5}$ & $5.4\times 10^{-4}$ & $2.3\times 10^{-4}$ & $4.7\times 10^{-5}$ & $2.3\times 10^{-6}$ & \cellcolor{bronze!60}$9.7\times 10^{-5}$ & \cellcolor{bronze!60}$1.0\times 10^{-4}$ & \cellcolor{bronze!60}$5.2\times 10^{-3}$ & \cellcolor{bronze!60}$1.2\times 10^{0}$ \\
\hspace{7pt}3rd Quartile & $1.4\times 10^{-3}$ & $1.4\times 10^{-3}$ & $3.2\times 10^{0}$ & $3.9\times 10^{-2}$ & $3.2\times 10^{-6}$ & $1.6\times 10^{-4}$ & $7.2\times 10^{-4}$ & $1.1\times 10^{-3}$ & $8.6\times 10^{-5}$ & $3.7\times 10^{-6}$ & \cellcolor{bronze!60}$1.3\times 10^{-4}$ & \cellcolor{bronze!60}$1.2\times 10^{-4}$ & \cellcolor{bronze!60}$1.0\times 10^{-2}$ & \cellcolor{bronze!60}$1.4\times 10^{0}$ \\
\hline
\textbf{1-PolySHAP (KernelSHAP)}  & & & & & & & & & & & & & & \\
\hspace{7pt}Mean & $4.2\times 10^{-5}$ & $9.3\times 10^{-5}$ & $8.4\times 10^{-1}$ & $4.4\times 10^{-2}$ & $1.3\times 10^{-6}$ & $4.6\times 10^{-5}$ & $1.5\times 10^{-4}$ & $2.7\times 10^{-4}$ & $2.9\times 10^{-5}$ & $4.0\times 10^{-6}$ & $1.8\times 10^{-4}$ & $2.3\times 10^{-4}$ & $1.2\times 10^{-2}$ & \cellcolor{bronze!60}$3.5\times 10^{0}$ \\
\hspace{7pt}1st Quartile & $1.9\times 10^{-5}$ & $2.7\times 10^{-5}$ & $3.4\times 10^{-1}$ & $8.1\times 10^{-3}$ & $1.9\times 10^{-7}$ & $1.0\times 10^{-5}$ & $6.4\times 10^{-5}$ & $1.4\times 10^{-4}$ & $1.3\times 10^{-5}$ & $1.4\times 10^{-6}$ & $9.6\times 10^{-5}$ & $1.1\times 10^{-4}$ & $3.5\times 10^{-3}$ & $8.6\times 10^{-1}$ \\
\hspace{7pt}2nd Quartile & $3.9\times 10^{-5}$ & $5.1\times 10^{-5}$ & $5.6\times 10^{-1}$ & $2.5\times 10^{-2}$ & $7.1\times 10^{-7}$ & $2.5\times 10^{-5}$ & $1.2\times 10^{-4}$ & $2.0\times 10^{-4}$ & $2.7\times 10^{-5}$ & $2.0\times 10^{-6}$ & $1.4\times 10^{-4}$ & $1.4\times 10^{-4}$ & $7.6\times 10^{-3}$ & $1.4\times 10^{0}$ \\
\hspace{7pt}3rd Quartile & $6.2\times 10^{-5}$ & $1.5\times 10^{-4}$ & $9.7\times 10^{-1}$ & $3.5\times 10^{-2}$ & $1.6\times 10^{-6}$ & $8.8\times 10^{-5}$ & $1.8\times 10^{-4}$ & $2.7\times 10^{-4}$ & $4.8\times 10^{-5}$ & $4.1\times 10^{-6}$ & $2.3\times 10^{-4}$ & $1.8\times 10^{-4}$ & $2.2\times 10^{-2}$ & $1.8\times 10^{0}$ \\
\hline
\textbf{2-PolySHAP (50\%)}  & & & & & & & & & & & & & & \\
\hspace{7pt}Mean & $2.2\times 10^{-5}$ & $4.0\times 10^{-5}$ & $2.1\times 10^{-1}$ & $5.9\times 10^{-3}$ & $3.0\times 10^{-7}$ & $1.8\times 10^{-5}$ & $8.0\times 10^{-5}$ & $7.9\times 10^{-5}$ & $1.2\times 10^{-5}$ & $7.0\times 10^{-7}$ & \cellcolor{silver!60}$7.3\times 10^{-5}$ & \cellcolor{gold!60}$5.8\times 10^{-5}$ & \cellcolor{silver!60}$5.0\times 10^{-3}$ & \cellcolor{silver!60}$1.2\times 10^{0}$ \\
\hspace{7pt}1st Quartile & $7.5\times 10^{-6}$ & $1.0\times 10^{-5}$ & $7.5\times 10^{-2}$ & $1.4\times 10^{-3}$ & $2.0\times 10^{-7}$ & $6.7\times 10^{-6}$ & \cellcolor{bronze!60}$2.3\times 10^{-5}$ & $1.6\times 10^{-5}$ & $6.0\times 10^{-6}$ & $2.3\times 10^{-7}$ & \cellcolor{gold!60}$5.1\times 10^{-5}$ & \cellcolor{gold!60}$4.1\times 10^{-5}$ & \cellcolor{silver!60}$1.7\times 10^{-3}$ & \cellcolor{gold!60}$2.5\times 10^{-1}$ \\
\hspace{7pt}2nd Quartile & $1.6\times 10^{-5}$ & $2.1\times 10^{-5}$ & $1.7\times 10^{-1}$ & $3.7\times 10^{-3}$ & $3.1\times 10^{-7}$ & $1.3\times 10^{-5}$ & \cellcolor{silver!60}$4.9\times 10^{-5}$ & $3.5\times 10^{-5}$ & $1.0\times 10^{-5}$ & $3.2\times 10^{-7}$ & \cellcolor{silver!60}$6.7\times 10^{-5}$ & \cellcolor{gold!60}$6.3\times 10^{-5}$ & \cellcolor{silver!60}$3.9\times 10^{-3}$ & \cellcolor{silver!60}$3.8\times 10^{-1}$ \\
\hspace{7pt}3rd Quartile & $2.4\times 10^{-5}$ & \cellcolor{bronze!60}$3.2\times 10^{-5}$ & $3.5\times 10^{-1}$ & $7.0\times 10^{-3}$ & $3.8\times 10^{-7}$ & $2.6\times 10^{-5}$ & \cellcolor{bronze!60}$9.0\times 10^{-5}$ & $1.1\times 10^{-4}$ & $1.7\times 10^{-5}$ & $1.0\times 10^{-6}$ & \cellcolor{silver!60}$8.4\times 10^{-5}$ & \cellcolor{gold!60}$7.4\times 10^{-5}$ & \cellcolor{silver!60}$9.0\times 10^{-3}$ & \cellcolor{silver!60}$9.1\times 10^{-1}$ \\
\hline
\textbf{2-PolySHAP}  & & & & & & & & & & & & & & \\
\hspace{7pt}Mean & $1.9\times 10^{-5}$ & \cellcolor{bronze!60}$3.4\times 10^{-5}$ & $3.6\times 10^{-2}$ & $1.5\times 10^{-3}$ & $7.6\times 10^{-8}$ & \cellcolor{bronze!60}$1.4\times 10^{-5}$ & \cellcolor{gold!60}$5.3\times 10^{-5}$ & $2.7\times 10^{-5}$ & \cellcolor{bronze!60}$9.7\times 10^{-6}$ & \cellcolor{bronze!60}$2.9\times 10^{-7}$ & $1.1\times 10^{-4}$ & \cellcolor{silver!60}$9.5\times 10^{-5}$ & $2.4\times 10^{-1}$ & \cellcolor{bronze!60}$3.5\times 10^{0}$ \\
\hspace{7pt}1st Quartile & $2.3\times 10^{-6}$ & $1.3\times 10^{-5}$ & $1.7\times 10^{-2}$ & $4.0\times 10^{-4}$ & $4.0\times 10^{-8}$ & $2.5\times 10^{-6}$ & \cellcolor{silver!60}$1.5\times 10^{-5}$ & $1.3\times 10^{-5}$ & $4.6\times 10^{-6}$ & \cellcolor{bronze!60}$6.8\times 10^{-8}$ & $7.8\times 10^{-5}$ & \cellcolor{silver!60}$6.1\times 10^{-5}$ & $9.7\times 10^{-2}$ & \cellcolor{bronze!60}$5.7\times 10^{-1}$ \\
\hspace{7pt}2nd Quartile & $7.6\times 10^{-6}$ & \cellcolor{bronze!60}$2.0\times 10^{-5}$ & $3.9\times 10^{-2}$ & $1.2\times 10^{-3}$ & $5.9\times 10^{-8}$ & $1.0\times 10^{-5}$ & \cellcolor{gold!60}$3.8\times 10^{-5}$ & $2.6\times 10^{-5}$ & $8.7\times 10^{-6}$ & \cellcolor{bronze!60}$1.5\times 10^{-7}$ & $9.8\times 10^{-5}$ & \cellcolor{silver!60}$9.2\times 10^{-5}$ & $2.0\times 10^{-1}$ & \cellcolor{bronze!60}$1.2\times 10^{0}$ \\
\hspace{7pt}3rd Quartile & $1.6\times 10^{-5}$ & $4.0\times 10^{-5}$ & $4.8\times 10^{-2}$ & $1.6\times 10^{-3}$ & $1.2\times 10^{-7}$ & $2.0\times 10^{-5}$ & $9.1\times 10^{-5}$ & $3.2\times 10^{-5}$ & \cellcolor{bronze!60}$1.4\times 10^{-5}$ & \cellcolor{bronze!60}$3.9\times 10^{-7}$ & $1.5\times 10^{-4}$ & \cellcolor{silver!60}$1.1\times 10^{-4}$ & $2.5\times 10^{-1}$ & $3.9\times 10^{0}$ \\
\hline
\textbf{3-PolySHAP (50\%)}  & & & & & & & & & & & & & & \\
\hspace{7pt}Mean & $2.7\times 10^{-5}$ & $3.9\times 10^{-5}$ & $1.9\times 10^{-2}$ & $6.6\times 10^{-4}$ & \cellcolor{bronze!60}$4.4\times 10^{-8}$ & \cellcolor{silver!60}$1.3\times 10^{-5}$ & \cellcolor{silver!60}$5.5\times 10^{-5}$ & $5.8\times 10^{-6}$ & \cellcolor{silver!60}$7.9\times 10^{-6}$ & \cellcolor{silver!60}$2.3\times 10^{-7}$ &  &  &  &  \\
\hspace{7pt}1st Quartile & $1.1\times 10^{-6}$ & \cellcolor{silver!60}$5.9\times 10^{-6}$ & $2.7\times 10^{-3}$ & $6.6\times 10^{-5}$ & \cellcolor{bronze!60}$2.2\times 10^{-8}$ & \cellcolor{bronze!60}$1.4\times 10^{-6}$ & \cellcolor{gold!60}$1.4\times 10^{-5}$ & $1.1\times 10^{-6}$ & \cellcolor{bronze!60}$2.7\times 10^{-6}$ & \cellcolor{silver!60}$6.6\times 10^{-8}$ &  &  &  &  \\
\hspace{7pt}2nd Quartile & $2.3\times 10^{-6}$ & $2.1\times 10^{-5}$ & $8.1\times 10^{-3}$ & $2.9\times 10^{-4}$ & \cellcolor{bronze!60}$3.6\times 10^{-8}$ & \cellcolor{silver!60}$3.2\times 10^{-6}$ & \cellcolor{gold!60}$3.8\times 10^{-5}$ & $1.9\times 10^{-6}$ & \cellcolor{bronze!60}$4.5\times 10^{-6}$ & \cellcolor{silver!60}$1.1\times 10^{-7}$ &  &  &  &  \\
\hspace{7pt}3rd Quartile & $2.5\times 10^{-5}$ & $4.1\times 10^{-5}$ & $3.8\times 10^{-2}$ & $6.6\times 10^{-4}$ & \cellcolor{bronze!60}$6.4\times 10^{-8}$ & \cellcolor{bronze!60}$1.7\times 10^{-5}$ & \cellcolor{gold!60}$6.3\times 10^{-5}$ & $8.5\times 10^{-6}$ & \cellcolor{bronze!60}$1.4\times 10^{-5}$ & \cellcolor{silver!60}$2.4\times 10^{-7}$ &  &  &  &  \\
\hline
\textbf{3-PolySHAP}  & & & & & & & & & & & & & & \\
\hspace{7pt}Mean & \cellcolor{bronze!60}$9.7\times 10^{-7}$ & \cellcolor{silver!60}$2.3\times 10^{-5}$ & \cellcolor{bronze!60}$4.0\times 10^{-3}$ & \cellcolor{bronze!60}$1.8\times 10^{-5}$ & \cellcolor{gold!60}$9.1\times 10^{-9}$ & \cellcolor{silver!60}$1.3\times 10^{-5}$ & \cellcolor{bronze!60}$6.5\times 10^{-5}$ & \cellcolor{bronze!60}$1.1\times 10^{-6}$ & \cellcolor{gold!60}$6.5\times 10^{-6}$ & $2.0\times 10^{-6}$ &  &  &  &  \\
\hspace{7pt}1st Quartile & \cellcolor{silver!60}$1.1\times 10^{-7}$ & $1.1\times 10^{-5}$ & \cellcolor{bronze!60}$1.0\times 10^{-3}$ & \cellcolor{silver!60}$3.2\times 10^{-6}$ & \cellcolor{gold!60}$3.4\times 10^{-9}$ & \cellcolor{silver!60}$1.2\times 10^{-6}$ & $3.3\times 10^{-5}$ & \cellcolor{bronze!60}$4.7\times 10^{-7}$ & \cellcolor{silver!60}$2.0\times 10^{-6}$ & $2.1\times 10^{-7}$ &  &  &  &  \\
\hspace{7pt}2nd Quartile & \cellcolor{silver!60}$2.4\times 10^{-7}$ & \cellcolor{silver!60}$1.2\times 10^{-5}$ & \cellcolor{bronze!60}$1.4\times 10^{-3}$ & \cellcolor{silver!60}$6.9\times 10^{-6}$ & \cellcolor{gold!60}$6.0\times 10^{-9}$ & \cellcolor{bronze!60}$4.8\times 10^{-6}$ & \cellcolor{bronze!60}$5.2\times 10^{-5}$ & \cellcolor{bronze!60}$6.0\times 10^{-7}$ & \cellcolor{silver!60}$3.6\times 10^{-6}$ & $3.6\times 10^{-7}$ &  &  &  &  \\
\hspace{7pt}3rd Quartile & \cellcolor{silver!60}$8.0\times 10^{-7}$ & \cellcolor{silver!60}$2.9\times 10^{-5}$ & \cellcolor{bronze!60}$3.8\times 10^{-3}$ & \cellcolor{silver!60}$1.7\times 10^{-5}$ & \cellcolor{gold!60}$1.3\times 10^{-8}$ & \cellcolor{silver!60}$1.4\times 10^{-5}$ & \cellcolor{silver!60}$8.6\times 10^{-5}$ & \cellcolor{bronze!60}$1.3\times 10^{-6}$ & \cellcolor{silver!60}$1.0\times 10^{-5}$ & $6.4\times 10^{-7}$ &  &  &  &  \\
\hline
\textbf{4-PolySHAP}  & & & & & & & & & & & & & & \\
\hspace{7pt}Mean & \cellcolor{gold!60}$8.3\times 10^{-7}$ & $6.7\times 10^{-5}$ & \cellcolor{gold!60}$1.6\times 10^{-3}$ & \cellcolor{gold!60}$1.2\times 10^{-6}$ & \cellcolor{silver!60}$2.3\times 10^{-8}$ & $6.1\times 10^{-5}$ & $3.5\times 10^{-3}$ & \cellcolor{gold!60}$2.0\times 10^{-8}$ & $1.2\times 10^{-5}$ &  &  &  &  &  \\
\hspace{7pt}1st Quartile & \cellcolor{gold!60}$2.3\times 10^{-8}$ & \cellcolor{bronze!60}$8.4\times 10^{-6}$ & \cellcolor{gold!60}$4.7\times 10^{-4}$ & \cellcolor{gold!60}$1.6\times 10^{-7}$ & \cellcolor{silver!60}$3.8\times 10^{-9}$ & $9.1\times 10^{-6}$ & $1.4\times 10^{-3}$ & \cellcolor{gold!60}$8.8\times 10^{-9}$ & $5.0\times 10^{-6}$ &  &  &  &  &  \\
\hspace{7pt}2nd Quartile & \cellcolor{gold!60}$8.4\times 10^{-8}$ & $3.2\times 10^{-5}$ & \cellcolor{gold!60}$6.0\times 10^{-4}$ & \cellcolor{gold!60}$5.4\times 10^{-7}$ & \cellcolor{silver!60}$1.6\times 10^{-8}$ & $2.7\times 10^{-5}$ & $2.4\times 10^{-3}$ & \cellcolor{gold!60}$1.3\times 10^{-8}$ & $9.3\times 10^{-6}$ &  &  &  &  &  \\
\hspace{7pt}3rd Quartile & \cellcolor{gold!60}$5.8\times 10^{-7}$ & $7.2\times 10^{-5}$ & \cellcolor{gold!60}$9.8\times 10^{-4}$ & \cellcolor{gold!60}$9.1\times 10^{-7}$ & \cellcolor{silver!60}$3.0\times 10^{-8}$ & $7.7\times 10^{-5}$ & $3.9\times 10^{-3}$ & \cellcolor{gold!60}$2.3\times 10^{-8}$ & $1.6\times 10^{-5}$ &  &  &  &  &  \\
\hline
\textbf{RegressionMSR}  & & & & & & & & & & & & & & \\
\hspace{7pt}Mean & \cellcolor{silver!60}$8.6\times 10^{-7}$ & \cellcolor{gold!60}$1.3\times 10^{-5}$ & \cellcolor{silver!60}$2.6\times 10^{-3}$ & \cellcolor{silver!60}$1.5\times 10^{-5}$ & $7.0\times 10^{-8}$ & \cellcolor{gold!60}$3.5\times 10^{-6}$ & \cellcolor{bronze!60}$6.5\times 10^{-5}$ & \cellcolor{silver!60}$4.6\times 10^{-7}$ & $1.2\times 10^{-5}$ & \cellcolor{gold!60}$7.3\times 10^{-8}$ & \cellcolor{gold!60}$6.5\times 10^{-5}$ & $2.4\times 10^{-4}$ & \cellcolor{gold!60}$4.5\times 10^{-4}$ & \cellcolor{gold!60}$5.8\times 10^{-1}$ \\
\hspace{7pt}1st Quartile & \cellcolor{bronze!60}$4.0\times 10^{-7}$ & \cellcolor{gold!60}$4.0\times 10^{-6}$ & \cellcolor{silver!60}$8.7\times 10^{-4}$ & \cellcolor{bronze!60}$6.9\times 10^{-6}$ & $5.6\times 10^{-8}$ & \cellcolor{gold!60}$2.7\times 10^{-7}$ & $2.5\times 10^{-5}$ & \cellcolor{silver!60}$2.7\times 10^{-7}$ & \cellcolor{gold!60}$1.6\times 10^{-6}$ & \cellcolor{gold!60}$3.6\times 10^{-8}$ & \cellcolor{silver!60}$5.3\times 10^{-5}$ & $1.8\times 10^{-4}$ & \cellcolor{gold!60}$1.9\times 10^{-4}$ & \cellcolor{silver!60}$2.8\times 10^{-1}$ \\
\hspace{7pt}2nd Quartile & \cellcolor{bronze!60}$8.7\times 10^{-7}$ & \cellcolor{gold!60}$1.1\times 10^{-5}$ & \cellcolor{silver!60}$1.2\times 10^{-3}$ & \cellcolor{bronze!60}$1.2\times 10^{-5}$ & $6.2\times 10^{-8}$ & \cellcolor{gold!60}$1.6\times 10^{-6}$ & $6.3\times 10^{-5}$ & \cellcolor{silver!60}$3.9\times 10^{-7}$ & \cellcolor{gold!60}$2.4\times 10^{-6}$ & \cellcolor{gold!60}$6.7\times 10^{-8}$ & \cellcolor{gold!60}$6.6\times 10^{-5}$ & $2.4\times 10^{-4}$ & \cellcolor{gold!60}$4.4\times 10^{-4}$ & \cellcolor{gold!60}$3.4\times 10^{-1}$ \\
\hspace{7pt}3rd Quartile & \cellcolor{bronze!60}$1.1\times 10^{-6}$ & \cellcolor{gold!60}$2.0\times 10^{-5}$ & \cellcolor{silver!60}$2.8\times 10^{-3}$ & \cellcolor{bronze!60}$2.3\times 10^{-5}$ & $9.2\times 10^{-8}$ & \cellcolor{gold!60}$3.7\times 10^{-6}$ & $9.3\times 10^{-5}$ & \cellcolor{silver!60}$6.7\times 10^{-7}$ & \cellcolor{gold!60}$7.9\times 10^{-6}$ & \cellcolor{gold!60}$1.1\times 10^{-7}$ & \cellcolor{gold!60}$7.5\times 10^{-5}$ & $2.5\times 10^{-4}$ & \cellcolor{gold!60}$5.7\times 10^{-4}$ & \cellcolor{gold!60}$6.0\times 10^{-1}$ \\
\hline
\end{tabular}}
    \label{tab:appendix_standard}
\end{table}

\newpage 
\section{Usage of Large Language Models (LLMs)}
In this work, we used large language models (LLMs) for suggestions regarding refinement of writing, e.g. grammar, clarity and conciseness.

\end{document}